\let\footnote=\endnote
\tikzset{round/.style = { rounded corners=2mm }}
\crefname{section}{section}{sections}
\newcommand{\E}{\mathbb{E}}
\newcommand{\beq}{\begin{eqnarray}}
\newcommand{\eeq}{\end{eqnarray}}
\newcommand{\beqn}{\begin{eqnarray*}}
\newcommand{\eeqn}{\end{eqnarray*}}
\newtheorem{theorem}{Theorem}
\newtheorem{corollary}{Corollary}
\newtheorem{lemma}{Lemma}
\newtheorem{proposition}{Proposition}
\newtheorem{defn}{Definition}
\newtheorem{assumption}{Assumption}
\DeclarePairedDelimiter\floor{\lfloor}{\rfloor}
\begin{document}


\RUNAUTHOR{Jiang, Li and Zhang}

\RUNTITLE{Online Stochastic Optimization with Wasserstein Based Non-stationarity}

\TITLE{Online Stochastic Optimization with Wasserstein Based Non-stationarity}

\ARTICLEAUTHORS{%
\AUTHOR{$\text{Jiashuo Jiang}^\dagger$ \quad $\text{Xiaocheng Li}^\ddagger$ \quad $\text{Jiawei Zhang}^\dagger$}

\AFF{\  \\
$\dagger~$Department of Technology, Operations \& Statistics, Stern School of Business, New York University\\
$\ddagger~$Imperial College Business School\\
}
}

\ABSTRACT{\textbf{Abstract:}
We consider a general online stochastic optimization problem with multiple budget constraints over a horizon of finite time periods. In each time period, a reward function and multiple cost functions are revealed, and the decision maker needs to specify an action from a convex and compact action set to collect the reward and consume the budgets. Each cost function corresponds to the consumption of one budget constraint. The reward function and the cost functions of each time period are drawn from an unknown distribution, which is non-stationary across time. The objective of the decision maker is to maximize the cumulative reward subject to the budget constraints. This formulation captures a wide range of applications including online linear programming and network revenue management, among others. In this paper, we consider two settings: (i) a data-driven setting where the true distribution is unknown but a prior estimate (possibly inaccurate) is available; (ii) an uninformative setting where the true distribution is completely unknown. We propose a unified Wasserstein-distance based measure to quantify the inaccuracy of the prior estimate in Setting (i) and the non-stationarity of the environment in Setting (ii). We show that the proposed measure leads to a necessary and sufficient condition for the attainability of a sublinear regret in both settings. For Setting (i), we propose an \textit{informative gradient descent} algorithm. The algorithm takes a primal-dual perspective and it integrates the prior information of the underlying distributions into an online gradient descent procedure in the dual space. The algorithm also naturally extends to the uninformative setting (ii). Under both settings, we show the corresponding algorithm achieves a regret of optimal order. We illustrate the algorithm performance through numerical experiments.}



\maketitle

\section{Introduction}

In this paper, we study a general online stochastic optimization problem with $m$ budget constraints, each with an initial capacity, over a finite horizon of discrete time periods. At each time $t$, a reward function $f_t:\mathcal{X}\rightarrow \mathbb{R}$ and a cost function $\bm{g}_t:\mathcal{X}\rightarrow \mathbb{R}^m$ are drawn independently from a distribution. Upon the observation of the functions, the decision maker specifies a decision $\bm{x}_t\in\mathcal{X}$, where $\mathcal{X}$ is assumed to be a convex and compact set. Accordingly, a reward $f(\bm{x}_t)$ is collected, and each budget $i\in\{1,2,\dots,m\}$ is consumed by an amount of $g_{it}(\bm{x}_t)$, where $\bm{g}_t(\bm{x})=(g_{1t}(\bm{x}),...,g_{mt}(\bm{x}))^\top$. The decision maker's objective is to maximize the totally collected reward subject to the budget capacity constraints.

Our formulation generalizes several existing problems studied in the literature. When $f_t$ and $\bm{g}_t$ are linear functions for each $t$, our formulation reduces to the online linear programming (OLP) problem \citep{buchbinder2009online}. Our formulation also covers the network revenue management (NRM) problem \citep{talluri2006theory}, including the quantity-based model, the price-based model and the choice-based model \citep{talluri2004revenue}. Note that for the OLP literature, the reward function and cost functions are assumed to be drawn from an unknown distribution which is stationary over time (or from a random permutation), while the NRM literature always assumes a precise knowledge of the true distribution, though it can be nonidentical across time. In this paper, our focus is an unknown non-stationary setting where the functions $f_t$ and $\bm{g}_t$ are drawn from a distribution $\mathcal{P}_t$ that is nonidentical over time and unknown to the decision maker. Specifically, we mainly consider two settings: (i) a data-driven setting where there exists an available prior estimate $\hat{\mathcal{P}}_t$ (possibly inaccurate) for the true distribution $\mathcal{P}_t$ of each time period and (ii) an uninformative setting where the true distribution $\mathcal{P}_t$ is completely unknown. When the prior estimates are identical to the true distributions, the data-driven setting reduces to the known non-stationary setting considered in the NRM literature. When the distribution is identical over time, the uninformative setting reduces to the unknown stationary setting considered in the OLP literature.

For both settings, we assume that the true distribution falls into an uncertainty set, which controls the inaccuracy of the prior estimate in the data-driven setting and the non-stationarity of the distributions in the uninformative setting. Our goal is to derive near-optimal policies for both settings, which perform well over the entire uncertainty set. We compare the performances of our algorithms/policies to the ``offline'' optimization problem which maximizes the total reward with full information/knowledge of all the $f_t$'s and $\bm{g}_t$'s. We use \textit{regret} as the performance metric, which is defined as worst-case additive optimality gap (over the uncertainty set) between the total reward generated by an algorithm/policy and the offline optimal objective value.

\subsection{Main Results and Contributions}

For the data-driven setting (in Section \ref{WBNB_P}), the true distribution $\mathcal{P}_t$ is unknown, but we assume the availability of a prior estimate $\hat{\mathcal{P}}_t$. The prior estimate can be based on some history data. We propose a Wasserstein-based deviation measure, \textit{Wasserstein-based deviation budget} (WBDB), to quantify the deviation of the prior estimate from the true distribution. Based on WBDB, we introduce an uncertainty set driven by the notion of WBDB and a parameter $W_T$ (called  \textit{deviation budget}), and the uncertainty set encapsulates all the distributions $\mathcal{P}_t$'s that have WBDB no greater than $W_T.$ We illustrate the sharpness of WBDB by showing that if the variation budget $W_T$ is linear in $T$, sublinear regret could not be achieved by any admissible policy. Next, we develop a new \textit{Informative Gradient Descent} with prior estimate (IGDP) algorithm, which adaptively combines the prior distribution knowledge into an update in the dual space. Our algorithm is motivated by the traditional online gradient descent (OGD) algorithm \citep{hazan2016introduction}. The OGD algorithm applies a linear update rule according to the gradient information at the current period and has been shown to work well in the stationary setting, even when the distribution is unknown \citep{lu2020dual,sun2020near,li2020simple}. The OGD type algorithm has also been developed under stationary setting with unknown distributions in \citet{agrawal2014fast} by assuming further stronger conditions on the dual optimal solution, and with known distributions for service level problems \citep{jiang2019achieving}. However, the update in OGD for each time period only involves information gathered up to the current time period, but for the non-stationary setting, we also need to take advantage of the prior estimates of the future time periods. Specifically, based on a primal-dual convex relaxation of the underlying offline problem, we obtain a prescribed allocation of the budgets over the entire horizon based on the prior estimates. Then, the IGDP algorithm uses this allocation to adjust the gradient descent direction. This idea is new for the related literature in that the IGDP descent direction at each period does not simply come from the historical observations, but it is also informed by the distribution knowledge of the entire horizon. We show that the IGDP algorithm achieves the \textit{first} optimal regret upper bound $O(\max\{\sqrt{T}, W_T\})$.

A few recent works also study similar online decision making problems in a non-stationary environment where $\mathcal{P}_t$ may vary over time. \cite{devanur2019near} study the case of known distribution ($W_T=0$) and obtain a $1-O(1/\sqrt{c})$ competitive ratio, where $c$ denotes the minimal capacity of the budget constraints. It remains unclear how to generalize the method in \citep{devanur2019near} to the setting of $W_T>0$ where the distribution knowledge is inaccurate or absent. A line of works \citep{vera2020bayesian, fruend2020a, fruend2020b} study the known distribution setting ($W_T=0$) with an additional assumption that the underlying distribution takes a finite support. These works develop algorithms that achieve bounded regret that bears no dependency on $T$. Compared to this stream of literature, the main results of our paper do not assume the finite supportedness. In addition, when the underlying distribution is finite, we extend the previous algorithm and analysis for the case of $W_T>0.$ Another recent work \citep{cheung2020online} studies the non-stationary problem and proposes dual-based algorithms that utilize the trajectories sampled from the prior estimate distribution, which can be classified as a static policy (see \Cref{sec:staticpolicy}). We show a major disadvantage of the static policy is that when directly applied to the data-driven setting with estimation erros ($W_T>0$), it could incur linear regret even when $W_T$ is sublinear; and also the Wasserstein distance is in general tighter than the total variation distance used therein.

For the uninformative setting, we assume no prior knowledge on the true distribution. This setting is consistent with the unknown distribution setting in the literature of OLP problem \citep{molinaro2013geometry,agrawal2014dynamic,gupta2014experts} and the setting of blind NRM \citep{besbes2012blind,jasin2015performance}. We modify the WBDB by replacing the prior estimate of each distribution with their uniform mixture distribution to propose a new measure called \textit{Wasserstein-based non-stationarity measure} (WBNB). By its definition, the WBNB captures the cumulative deviation for all the distribution $\mathcal{P}_t$ from their centric distribution, and it thus reflects the intensity of the non-stationarity associated with $\mathcal{P}_t$'s. In this sense, the WBNB concerns the global change of the distributions, whereas the previous non-stationarity measures \citep{besbes2014stochastic,besbes2015non,cheung2019non} in an unconstrained setting characterize the local and temporal change of the distributions over time. In Section \ref{WBNonSta}, we illustrate by a simple example that such temporal change measures actually fail in a constrained setting. Thus it addresses the necessity of such a global measure and reveals the interaction between the constraints and the non-stationary environment. Note that a simultaneous and independent work \citep{balseiro2020best} also uses global change of the distributions to derive a measure of non-stationarity. However, their measure is based on the total variation metric between distributions. With the same example, we illustrate the advantage of using Wasserstein distance instead of total variation distance or KL-divergence. Specifically, the Wasserstein distance compares both the support and densities between two distributions, while total variation distance or KL-divergence compares only the densities. Therefore the Wasserstein-based measure is sharper and more proper for the general online stochastic optimization problem. We formulate the uncertainty set accordingly with the WBNB and propose \textit{Uninformative Gradient Descent} Algorithm (UGD) as a natural reduction of the IGD algorithm in the uninformative setting. We prove that UGD algorithm achieves a regret bound of optimal order.

As a probability distance metric, the Wasserstein distance has been widely used as a measure of the deviation between estimate and true distribution in the distributionally robust optimization literature (e.g. \cite{esfahani2018data}) to represent confidence set and it has demonstrated good performance both theoretically and empirically. To the best of our knowledge, we are the first to use the Wasserstein distance in an online optimization/learning context. From a modeling perspective, the two proposed measures WBDB and WBNB contribute to the study of non-stationary environment for online optimization/learning problem. Specifically, the data-driven setting relaxes the common assumption adopted in the NRM literature that the true distributions are known to the decision maker by allowing the prior estimates to deviate from the true distributions. This deviation can be interpreted as an estimation or model misspecification error, and WBDB establishes a connection between the deviation and algorithmic performance. The uninformative setting generalizes a stream of online learning literature (e.g. \cite{besbes2015non}), which mainly concerned with the unconstrained settings and includes bandits problem \citep{garivier2008upper, besbes2014stochastic} and reinforcement learning problem \citep{cheung2019non, lecarpentier2019non} as special cases. WBDB adds to the current dictionary of non-stationarity definitions and it specializes for a characterization of the constrained setting.

Finally, we conclude our discussion with several extensions. For the majority of this paper, we analyze algorithm performance under the above two settings without imposing additional structures on the random functions $f_t$ and $\bm{g}_t$'s. When these functions have a stronger structure such as taking a finite support, we show that a better regret bound can be obtained through a re-solving design. The result complements to the existing works on re-solving algorithms \citep{jasin2012re, bumpensanti2020re, vera2020bayesian} in that it provides a robustness analysis of the re-solving algorithm when the underlying distribution is misspecified. Also, throughout the paper, we provide a number of lower bound results. These lower bound results illustrate the following points: (i) our proposed algorithm has an optimal order of regret in the worst-case sense; (ii) the Wasserstein's distance is sharp; (iii) a dynamic algorithm is necessary to achieve sublinear regret in a non-stationary environment.


\subsection{Other Related Literature}\label{sec:literaturereview}

As mentioned above, our formulation of the online stochastic optimization problem roots in two major applications: the online linear programming problem and the network revenue management problem.  The online linear programming (OLP) problem \citep{molinaro2013geometry, agrawal2014dynamic, gupta2014experts} covers a wide range of applications through different ways of specifying the underlying LP, including secretary problem \citep{ferguson1989solved}, online knapsack problem \citep{arlotto2020logarithmic, Jiang2020OnlineRA}, resource allocation problem \citep{vanderbei2015linear, asadpour2020online}, quantity-based network revenue management problem \citep{jasin2012re, jasin2015performance}, network routing problem \citep{buchbinder2009online}, matching problem \citep{mehta2005adwords}, etc. Notably, the problem has been studied under either (i) the stochastic input model where the coefficient in the objective function, together with the corresponding column in the constraint matrix is drawn from an unknown distribution $\mathcal{P}$, or (ii) the random permutation model where they arrive in a random order. As noted in the paper \citep{li2020simple}, the random permutation model exhibits similar concentration behavior as the stochastic input model. The non-stationary setting of our paper relaxes the i.i.d. structure and it can be viewed as a third paradigm for analyzing the OLP problem.

The network revenue management (NRM) problem has been extensively studied in the literature and a main focus is to propose near-optimal policies with strong theoretical guarantees. One popular way is to construct a deterministic linear program as an upper bound of the optimal revenue and use its optimal solution to derive heuristic policies. Specifically, \cite{talluri1998analysis} propose a static bid-price policy based on the dual variable of the linear programming upper bound and proves that the revenue loss is $O(\sqrt{k})$ when each period is repeated $k$ times and the capacities are scaled by $k$. Subsequently, \cite{reiman2008asymptotically} show that by re-solving the linear programming upper bound once, one can obtain an $o(\sqrt{k})$ upper bound on the revenue loss. Then, \cite{jasin2012re} show that under a  non-degeneracy condition for the underlying LP, a policy which re-solves the linear programming upper bound at each time period will lead to an $O(1)$ revenue loss, which is independent of the scaling factor $k$. The relationship between the performances of the control policies and the number of times of re-solving the linear programming upper bound is further discussed in their later paper \citep{jasin2013analysis}. Recently, \cite{bumpensanti2020re} propose an infrequent re-solving policy and show that their policy achieves an $O(1)$ upper bound of the revenue loss even without the ``non-degeneracy'' assumption. With a different approach, \cite{vera2020bayesian} prove the same $O(1)$ upper bound for the NRM problem and their approach is further generalized in \citep{vera2019online, fruend2020a, fruend2020b} for other online decision making problems, including online stochastic knapsack, online probing, bin packing, and dynamic pricing. Note that all the approaches mentioned above are mainly developed for the stochastic/stationary setting. When the arrival process of customers is non-stationary over time, \cite{adelman2007dynamic} develops a strong heuristic based on a novel approximate dynamic programming (DP) approach. This approach is further investigated under various settings in the literature (for example \citep{zhang2009approximate, kunnumkal2016piecewise}). Remarkably, although the approximate DP heuristic is one of the strongest heuristics in practice, it does not feature for a theoretical bound. Finally, by using non-linear basis functions to approximate the value of the DP, \cite{ma2020approximation} develop a novel approximate DP policy and derive a constant competitiveness ratio dependent on the problem parameters. Compared to this line of works, our contribution is two-fold. First, the two main settings that we consider generalize the existing framework of network revenue management in two aspects: (i) we do not assume the knowledge of the underlying distribution; (ii) we do not require the distribution to take a finite support. Second, with the finite-support condition on the underlying distribution, we characterize the relationship between the algorithm performance and the misspecification error of the underlying distribution.

Besides, our problem is also related to the literature of Online Convex Optimization (OCO) and model predictive control. We will discuss the model predictive control literature here and leave the discussion on OCO after presenting our formulation in the next secton. Specifically, our problem can also be formulated as a nonlinear model predictive control (MPC) problem \citep{rawlings2000tutorial}. In MPC, the decision maker executes the online decisions based on some benchmarks. In our paper, we adopt the offline optimum as the benchmark and  develop online algorithms based on the offline optimum. Specifically, for the data-driven setting, we assume that the prior estimates deviate from the true distribution, which corresponds to the broad literature on \textit{robust} MPC \citep{bemporad1999robust}, where there are model uncertainties and noises. Compared t this line of works, our paper is the first to propose the use of Wasserstein distance to measure the model uncertainty, and we derive tight regret bound that relates the best achievable algorithm performance with the model uncertainty. Our algorithm also extends to the uninformative setting where there are no prior estimates, and the information about the underlying benchmark has to be learned on-the-fly. In this way, our algorithm for the uninformative setting establishes a connection between online learning and MPC for a time-varying (non-stationary) environment. We refer interested readers to \cite{wagener2019online} for a more detailed discussion over the connections between online learning and MPC.

\section{Problem Formulation}

Consider the following convex optimization problem
\begin{align}
 \tag{CP}  \max \ \ &  \sum_{t=1}^T f_t(\bm{x}_t)  \label{CP} \\
    \text{s.t. }\ & \sum_{t=1}^T g_{it}(\bm{x}_t) \le c_i, \ \ i=1,...,m, \nonumber  \\
    & \bm{x}_t \in  \mathcal{X}, \ \ t=1,...,T, \nonumber
\end{align}
where the decision variables are $\bm{x}_t\in \mathcal{X}$ for $t=1,...,T$ and $\mathcal{X}$ is a compact convex set in $\mathbb{R}^k$. The function $f_{t}$'s are functions in the space $\mathcal{F} = \mathcal{F}(\mathcal{X})$ of concave continuous functions and $g_{it}$'s are functions in the space $\mathcal{G} = \mathcal{G}(\mathcal{X})$ of convex continuous functions, both of which are supported on $\mathcal{X}.$ Compactly, we define the vector-value function $\bm{g}_t(\bm{x}) = (g_{1t}(\bm{x}),...,g_{mt}(\bm{x}))^\top: \mathbb{R}^k \rightarrow \mathbb{R}^m$. Throughout the paper, we use $i$ to index the constraint and $t$ (or sometimes $j$) to index the decision variables, and we use bold symbols to denote vectors/matrices and normal symbols to denote scalars.

In this paper, we study the \textit{online stochastic optimization} problem where the functions in the optimization problem \eqref{CP} are revealed in an online fashion and one needs to determine the value of decision variables sequentially. Specifically, at each time $t,$  the functions $(f_t, \bm{g}_t)$ are first revealed, and then the decision maker needs to decide the value of $\bm{x}_t$. Different from the offline setting, at each time $t$, we do not have the information of the future part of the optimization problem (from time $t+1$ to $T$). Given the history $\mathcal{H}_{t-1} = \{f_{j}, \bm{g}_{j}, \bm{x}_j\}_{j=1}^{t-1}$, the decision of $\bm{x}_t$ can be expressed as a policy function of the history and the observation at the current time period. That is,
\begin{equation}
    \bm{x}_t = \pi_t(f_{t}, \bm{g}_{t}, \mathcal{H}_{t-1})
    \label{policy}
\end{equation}
where the policy function $\pi_t$ can be time-dependent. We denote the policy $\bm{\pi} = (\pi_{1},...,\pi_{T}).$ The decision variables $\bm{x}_t$'s must conform to the constraints in \eqref{CP} throughout the procedure, and the objective is aligned with the maximization objective of the offline problem \eqref{CP}.

\subsection{Parameterized Form, Probability Space, and Assumptions}

Consider a parametric form of the underlying problem \eqref{CP} where the functions $(f_t,\bm{g}_t)$ are parameterized by a (random) vector $\bm{\theta}_t\in \Theta \subset \mathbb{R}^l$. Specifically,
$$f_{t}(\bm{x}_t) \coloneqq f(\bm{x}_t; \bm{\theta}_t),\ \  {g}_{it}(\bm{x}_t; \bm{\theta}_t) \coloneqq {g}_i(\bm{x}_t; \bm{\theta}_t)$$
for each $i=1,...,m$ and $t=1,...,T$. We denote the vector-valued constraint function by $\bm{g}(\bm{x};\bm{\theta}) = (g_{1}(\bm{x};\bm{\theta}),...,g_{m}(\bm{x};\bm{\theta}))^\top: \mathcal{X} \rightarrow \mathbb{R}^m$. Then the problem \eqref{CP} can be rewritten as the following parameterized convex program
\begin{align}
 \tag{PCP}  \max \ \ &  \sum_{t=1}^T f(\bm{x}_t;\bm{\theta}_t)  \label{PCP} \\
    \text{s.t. }\ & \sum_{t=1}^T g_{i}(\bm{x}_t;\bm{\theta}_t) \le c_i, \ \ i=1,...,m, \nonumber  \\
    & \bm{x}_t \in  \mathcal{X}, \ \ t=1,...,T, \nonumber
\end{align}
where the decision variables are $(\bm{x}_1,....,\bm{x}_T).$ We note that this parametric form \eqref{PCP} avoids the complication of dealing with probability measure in function space. It is introduced mainly for presentation purpose, and it will change the nature of the problem. Moreover, we assume the knowledge of $f$ and $\bm{g}$ a priori. Here and hereafter, we will use \eqref{PCP} as the underlying form of the online stochastic optimization problem.

The problem of online stochastic optimization, as its name refers, involves stochasticity on the functions for the underlying optimization problem. The parametric form \eqref{PCP} reduces the randomness from the function to the parameters $\bm{\theta}_t$'s, and therefore the probability measure can be defined in the parameter space of $\Theta$. First, we consider the following distance function between two parameters $\bm{\theta}, \bm{\theta}'\in\Theta$,
\begin{equation}
    \rho(\bm{\theta}, \bm{\theta}') \coloneqq \sup_{\bm{x}\in \mathcal{X}} \|(f(\bm{x}; \bm{\theta}),\bm{g}(\bm{x}; \bm{\theta}))-(f(\bm{x}; \bm{\theta}'),\bm{g}(\bm{x}; \bm{\theta}'))\|_\infty \label{rho_dist}
\end{equation}
where $\|\cdot\|_\infty$ is the supremum norm in $\mathbb{R}^{m+1}.$ Without loss of generality, let $\Theta$ be a set of class representatives, that is, for any $\bm{\theta}\neq \bm{\theta}'\in\Theta$, $\rho(\bm{\theta}, \bm{\theta}')>0.$ In this way, the parameter space $\Theta$ can be viewed as a metric space equipped with metric $\rho(\cdot, \cdot).$ {We choose supremum norm because in our model, one \textit{single} action $\bm{x}_t$ is made at each period $t$ and it can take any arbitrary value in $\mathcal{X}$. As a result, the comparison between $(f(\cdot;\bm{\theta}), \bm{g}(\cdot;\bm{\theta}))$ and $(f(\cdot;\bm{\theta}'), \bm{g}(\cdot;\bm{\theta}'))$ should be made at each point $\bm{x}$ and thus the supremum norm is a natural choice for defining $\rho(\bm{\theta}, \bm{\theta}')$.} In this way, the definition of $\rho$ is based on the vector-valued function $(f,\bm{g}): \mathcal{X} \rightarrow \mathbb{R}^{m+1}$. Thus it captures the effect of different parameters on the function value rather than the original Euclidean difference in the parameter space. Let $\mathcal{B}_\Theta$ be the smallest $\sigma$-algebra in $\Theta$ that contains all open subsets (under metric $\rho$) of $\Theta.$ We denote the distribution of $\bm{\theta}_t$ as $\mathcal{P}_t$ which can be viewed as a probability measure on $(\Theta, \mathcal{B}_\Theta).$

Throughout the paper, we make the following assumptions. Assumption \ref{assume} (a) and (b) impose boundedness on function $f$ and $g_i$'s. Assumption \ref{assume} (c) states the ratio between $f$ and $g_i$ is uniformly bounded by $q$ for all $\bm{x}$ and $\bm{\theta}$. Intuitively, it tells that for each unit consumption of resource, the maximum amount of revenue earned is upper bounded by $q$. In this paper, this condition will mainly be used to give an upper bound on the dual optimal solution. In Assumption \ref{assume} (d), we assume $\mathcal{P}_t$'s are independent of each other but we do not assume the exact knowledge of them. Also, there can be dependence between components in the vector-value functions $(f, \bm{g}).$ In Assumption \ref{assume} (e), we require some convexity structure for the underlying functions. In the rest of the paper, this assumption will only be used to ensure that the Lagrangian problem $\max_{\bm{x}\in\mathcal{X}}\{f(\bm{x};\bm{\theta})-\bm{p}^\top g(\bm{x};\bm{\theta})\}$ can be efficiently solved for any fixed $\bm{p}\ge \bm{0}.$

\begin{assumption}[Boundedness and Independence] We assume
\begin{itemize}
    \item[(a)] $|f(\bm{x}; \bm{\theta})| \le 1$ for all $\bm{x} \in\mathcal{X}, \bm{\theta}\in \Theta$.
    \item[(b)] $g_i(\bm{x}; \bm{\theta})\in[0,1]$ for all $\bm{x} \in\mathcal{X}, \bm{\theta}\in \Theta$ and $i =1,...,m.$ In particular, $\bm{0}\in \mathcal{X}$ and $g_i(\bm{0}; \bm{\theta})=0$ for all $\bm{\theta}\in \Theta.$
    \item[(c)] There exists a positive constant $q$ such that for any $\bm{\theta}\in\Theta$ and each $i$, we have that $f(\bm{x};\bm{\theta})\leq q\cdot g_i(\bm{x};\bm{\theta})$ holds for any $\bm{x}\in\mathcal{X}$ when $g_i(\bm{x};\bm{\theta})>0$.
    \item[(d)] $\bm{\theta}_t\sim \mathcal{P}_t$ and $\mathcal{P}_t$'s are independent with each others.
    \item[(e)] {The function $f(\bm{x};\bm{\theta})$ is concave over $\bm{x}$ and the function $g_i(\bm{x}; \bm{\theta})$ is convex over $\bm{x}$ for any $\bm{\theta}\in\Theta$ and $i=1,\dots,m$.}
\end{itemize}
\label{assume}
\end{assumption}

In the following, we illustrate the online formulation through two main application contexts: online linear programming and online network revenue management. We choose the more general convex formulation (\ref{PCP}) to uncover the key mathematical structures for this online optimization problem, but we will occasionally return to these two examples to generate intuitions throughout the paper.

\subsection{Examples}\label{sec:example}

\textbf{Online linear programming (LP)}: The online LP problem \citep{molinaro2013geometry, agrawal2014dynamic, gupta2014experts} can be viewed as an example of the online stochastic optimization formulation of \eqref{PCP}. Specifically, the decision variable $x_t\in \mathcal{X}=[0,1]$, the functions $f$ and $g_{i}$ are linear functions, and the parameter $\bm{\theta}_t=(r_t, \bm{a}_t)$ where $\bm{a}_t = (a_{1t},...,a_{mt})$. That is, $f(x_t; \bm{\theta}_t)=r_tx_t$ and $g_{i}(x_t; \bm{\theta}_t)=a_{it}x_t.$ At each time $t=1,...,T$, the coefficient in the objective $r_t$ together with the corresponding column $\bm{a}_t$ in the constraint matrix is revealed, and then one needs to determine the value of $x_t$ immediately. As mentioned earlier, the online LP problem covers a wide range of applications, through different specifications of $\bm{\theta}_t$, including secretary problem, online knapsack problem, resource allocation problem, generalized assignment problem, network routing problem, and matching problem.

\noindent \textbf{Price-based network revenue management (NRM)}: In the price-based NRM problem \citep{gallego1994optimal}, a seller is selling a given stock of products over a finite time horizon by posting a price at each time. The demand is price-sensitive and the firm's objective is to maximize the total collected revenue. This problem could be cast in the formulation \eqref{PCP} as follows. The parameter $\theta_t$ refers to the type of the $t$-th arriving customer. There is usually a finite number of customer types, so the parameter set $\Theta$ is finite. The type of each customer arrival can be based on the side information such as demographic features and purchasing history. Each different customer type specifies a different demand function between the posted price and the realized demand/revenue. Specifically, the decision variable $\bm{x}_t$ represents to the price posted by the decision maker at time $t$. The constraint function $\bm{g}(\bm{x}_t; {\theta}_t)$ denotes the resource consumption (demand) under the price $\bm{x}_t$ and $f(\bm{x}_t; {\theta}_t)$ denotes the collected revenue. In such context, there is usually an extra layer of randomness for the functions $f(\bm{x}_t; {\theta}_t)$ and $\bm{g}(\bm{x}_t; {\theta}_t)$ given the parameter $\theta_t$, different from our main setting where $f$ and $\bm{g}$ are deterministic and known. We will discuss this extension in \Cref{sec:stochastic}.

\noindent \textbf{Choice-based network revenue management}: In the choice-based NRM problem \citep{talluri2004revenue}, the seller offers an assortment of the products to the customer arriving in each time period, and the customer chooses a product from the assortment to purchase according to a given choice model. Now we discuss how our formulation \eqref{PCP} covers the choice-based NRM problem as a special case. As in the price-based NRM model, the parameter $\theta_t$ represents the customer type at time $t$, and the parameter set $\Theta$ is finite. Then we denote by $\mathcal{S}$ the set of all possible assortments the seller can offer to the customers. At each time $t$, the decision variable encodes the assortment decision, i.e., $\bm{x}_t=(x_{t,1},\dots,x_{t,|\mathcal{S}|})\in[0,1]^{|\mathcal{S}|}$ and $\sum_{s\in\mathcal{S}}x_{t,s}=1$. Here, $x_{t,s}$ denotes the probability that an assortment $s\in\mathcal{S}$ is offered at period $t$. The parameter $\theta_t$ specifies the choice model of the $t$-th customer arrival, and, together with the assortment decision $\bm{x}_t$, determines the revenue through the function $f(\bm{x}_t;{\theta}_t)$ and the resource consumption (demand) through the function $g(\bm{x}_t;\theta_t)$. As in the price-based NRM problem, there is an extra layer of randomness for the functions $f(\bm{x}_t;\bm{\theta}_t)$ and $\bm{g}(\bm{x}_t;\bm{\theta}_t)$ given the parameter $\theta_t$ and the assortment $\bm{x}_t$. We will also elaborate more in \Cref{sec:stochastic}.

\subsection{Performance Measure}

We denote the offline optimal solution of optimization problem (\ref{CP}) as $\bm{x}^* = (\bm{x}_1^*,...,\bm{x}_T^*)$, and the offline (online) objective value as $R_T^*$ (${R}_T$). Specifically,
\begin{align*}
R_T^* & \coloneqq  \sum_{t=1}^T f_t(\bm{x}_t^*)\\
R_T(\bm{\pi}) & \coloneqq  \sum_{t=1}^T f_t(\bm{x}_t).
\end{align*}
where the online objective value depends on the policy $\bm{\pi}$.  Aligned with general online learning/optimization problems, we focus on minimizing the gap between the online and offline objective values. Specifically, the \textit{optimality gap} is defined as follows:
$$\text{Reg}_T(\mathcal{H}, \bm{\pi}) \coloneqq  R_T^*-{R}_T(\bm{\pi})$$
where the problem profile $\mathcal{H}$ encapsulates the realization of the random parameters, i.e., $\mathcal{H}\coloneqq (\bm{\theta}_1,...,\bm{\theta}_T).$ Note that $R_T^*$, $R_T(\pi)$, $\bm{x}_t^*$ and $\bm{x}_t$ are all dependent on the problem profile $\mathcal{H}$, but we omit $\mathcal{H}$ in these terms for notation simplicity when there is no ambiguity. We define the performance measure of the online stochastic optimization problem formally as \textit{regret}
\begin{equation}
   \text{Reg}_T(\bm{\pi}) \coloneqq \max_{\mathcal{P} \in \Xi}\  \E_{\mathcal{H}\sim\mathcal{P}}[\text{Reg}_T(\mathcal{H}, \bm{\pi})] \label{regDef}
\end{equation}
where $\mathcal{P} =  (\mathcal{P}_1,...,\mathcal{P}_T)$ denotes the probability measure of all time periods and the expectation is taken with respect to the parameter $\bm{\theta}_t \sim \mathcal{P}_t$; compactly, we write the problem profile $\mathcal{H} \sim \mathcal{P}$. We consider the worst-case regret for all the distribution $\mathcal{P}$ in a certain set $\Xi$ where the set $\Xi$ will be specified in later sections.

The specification of the set $\Xi$ imposes more structure on the distributions of $(\mathcal{P}_1,...,\mathcal{P}_T)$ and this is one of the main themes of our paper. In the canonical setting of online stochastic learning problem, all the distributions $\mathcal{P}_t$'s are the same, i.e., $\mathcal{P}_t= \mathcal{P}_0$ for $t=1,...,T.$ Meanwhile, the adversarial setting of online learning problem refers to the case when $\mathcal{P}_t$'s are adversarially chosen. Our work aims to bridge these two ends of the spectrum with a novel notion of non-stationarity, and to relate the algorithm performance with certain structural property of $\mathcal{P} =  (\mathcal{P}_1,...,\mathcal{P}_T)$. In the same spirit, the work on non-stationary stochastic optimization \citep{besbes2015non} proposes an elegant notion of non-stationarity called variation budget. Subsequent works consider similar notions in the settings of bandits \citep{besbes2014stochastic, russac2019weighted} and reinforcement learning \citep{cheung2019non}. To the best of our knowledge, all the previous works along this line consider unconstrained setting and thus our work contributes to this line of work in illustrating how the constraints interact with the non-stationarity.

\textbf{Discussion on the (constrained) online learning/optimization literature.}

Now we discuss the positioning of our work against the vast literature on the (constrained) online learning/optimization problem. Generally speaking, the formulations on this topic fall into two categories: (i) first-observe-then-decide and (ii) first-decide-then-observe. Our formulation belongs to the first category in that at each time $t$, the decision maker first observes the parameter $\bm{\theta}_t$ (and hence the functions $(f(\bm{x};\bm{\theta}_t), \bm{g}(\bm{x};\bm{\theta}_t))$), and then determines the value of $\bm{x}_t$. This formulation covers many applications in operations research and operations management, such as online LP and NRM. In these application contexts, the observations represent customers/orders arriving sequentially to the system, and the decision variables capture accordingly the acceptance/rejection/pricing decisions of the customers. Technically, the nature of having the observation before making the decision enables the stronger dynamic benchmark that allows a different $\bm{x}_t$ across different time periods as the definition of $R_T^*$ in above.

One representative problem for the second category is the online convex optimization (OCO) problem. The OCO problem can be viewed as a first-decide-then-observe problem in that at each time $t$, the decision maker first chooses the decision variable $\bm{x}_t$ and then observes the function $f_t$ (incurring a loss of $f_t(\bm{x}_t)$). The OCO problem is mainly motivated from machine learning applications such as online linear regression or online support vector machine \citep{hazan2016introduction}. From an information perspective, the OCO problem can be viewed as a partial information setting, whereas our online stochastic optimization can be viewed as a full information setting \citep{lattimore2020bandit}. Accordingly, the standard OCO problem generally adopts the (weaker) static benchmark, i.e., $\bm{x}_t^*$'s need to be the same when defining $R_T^*$. This discrepancy in regret benchmark prevents direct applications of OCO algorithms and analyses to our context.
There are results that consider a dynamic or partially dynamic regret benchmark for OCO in a non-stationary environment \citep{besbes2015non} or an adversarial environment \citep{hall2013dynamical, jadbabaie2015online}, but all these works consider the unconstrained problem. A line of works study the problem of online convex optimization with constraints (OCOwC) under a static or stochastic generation of the constraint functions $\bm{g}_t$'s. Specifically, \cite{jenatton2016adaptive,yuan2018online,yi2021regret} all consider the OCOwC problem with static constraint ($\bm{g}_t=\bm{g}$ for some $\bm{g}$), while \cite{neely2017online} mainly study a setting where $\bm{g}_t$ is i.i.d. generated from some distribution. To the best of our knowledge, no existing work along this line of literature allows non-stationarily generated constraint functions.

Another representative first-decide-then-observe problem is the bandits with knapsacks (BwK) problem which can also be viewed as a constrained online learning problem. The existing BwK results consider either a stochastic setting \citep{badanidiyuru2013bandits, agrawal2014bandits} or an adversarial setting \citep{rangi2018unifying, immorlica2019adversarial}. The algorithms along this line of literature are mainly based on the underlying primal and dual LPs. Specifically, \cite{agrawal2014fast} develop a fast dual-based algorithm for the online stochastic optimization problem (first-observe-then-decide) and analyzes the algorithm performance under further stronger conditions on the dual optimal solution.
\cite{agrawal2014bandits} then extend to the first-decide-then-observe problem of BwK. The idea has been recently further applied to online learning in revenue management problems in \citep{miao2021general}. But these learning models and algorithms are developed in the stationary (stochastic) environment, which cannot be applied to the non-stationary setting.

We remark that both the first-decide-then-observe and the first-observe-then-decide frameworks can be useful in modeling some application context. For example, an Adwords problem under pay for conversions can be modeled by OCOwO or BwK problems, while an Adwords problem under pay for impressions is usually modeled by our online stochastic optimization framework or OLP problem \citep{mehta2005adwords}.


\section{Known Distribution and Informative Gradient Descent}

\label{known_distr}

We begin our discussion with the case when the distributions $\mathcal{P}_t$'s are all known a priori. We use this case to motivate and present our prototypical algorithm -- \textit{informative gradient descent} which incorporates the prior information of $\mathcal{P}_t$'s with the online gradient descent algorithm. In the following sections, we will discuss the case when the distributions $\mathcal{P}_t$'s are not known precisely and analyze the algorithm performance accordingly.

\subsection{Deterministic Upper Bound and Dual Problem}
\label{firstAlgo}

We first introduce the standard deterministic upper bound for the regret benchmark -- the ``offline'' optimum $\mathbb{E}[R^*_T]$. We define the following expectation for a function $u(\bm{x};\bm{\theta}):\mathcal{X}\rightarrow \mathbb{R}$ and a probability measure $\mathcal{P}$ in the parameter space $\Theta,$
\[
\mathcal{P}u(\bm{x}(\bm{\theta});\bm{\theta}) \coloneqq \int_{\bm{\theta}'\in\Theta} u(\bm{x}(\bm{\theta}');\bm{\theta}')d\mathcal{P}(\bm{\theta}')
\]
where $\bm{x}(\bm{\theta}): \Theta \rightarrow \mathcal{X}$ is a measurable function. Thus $\mathcal{P}u(\cdot)$ can be viewed as a deterministic functional that maps function $\bm{x}(\bm{\theta})$ to a real value and it is obtained by taking expectation with respect to the parameter $\bm{\theta}\sim\mathcal{P}$.

Then, consider the following optimization problem
\begin{equation}\label{PUB}
\begin{aligned}
R_T^{\text{UB}}=&  \max \ \ &&  \sum_{t=1}^T \mathcal{P}_t f(\bm{x}_t(\bm{\theta}_t);\bm{\theta}_t)  \\
    &\text{s.t. }\ && \sum_{t=1}^T \mathcal{P}_t g_i(\bm{x}_t(\bm{\theta}_t);\bm{\theta}_t) \le c_i, \ \ i=1,...,m,  \\
    &&& \bm{x}_t(\bm{\theta}_t): \Theta \rightarrow \mathcal{X} \text{ is a measurable function for }  t=1,...,T.
\end{aligned}
\end{equation}
where $\bm{\theta}_t$ follows the distribution $\mathcal{P}_t.$ The optimization problem \eqref{PUB} can be viewed as a convex relaxation of \eqref{PCP} where the objective and constraints are all replaced with their expected counterparts. Here $\bm{x}_{1:T}=(\bm{x}_1(\bm{\theta}_1),...,\bm{x}_T(\bm{\theta}_T))$ encapsulates all the primal decision variables. The primal variables are expressed in a function form of $\bm{\theta}_t$ in that for each different $\bm{\theta}_t$, we allow a different choice of the primal variables. This is aligned with the ``first-observe-then-decide'' setting where the decision maker first observes the realization of $\bm{\theta}_t\sim \mathcal{P}_t$ and then decide the value of $\bm{x}_t$. As a standard result in literature \citep{gallego1994optimal}, Lemma \ref{upper1} establishes the optimal objective value $R_T^{\text{UB}}$ as an upper bound for $\mathbb{E}[R^*_T]$.

\begin{lemma}\label{upper1}
It holds that $R_T^{\text{UB}}\geq\mathbb{E}[R^*_T]$.
\end{lemma}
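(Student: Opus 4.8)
The plan is to exhibit a feasible solution of the deterministic program \eqref{PUB} whose objective value is no smaller than $\E[R_T^*]$, which immediately yields $R_T^{\text{UB}}\ge\E[R_T^*]$. Let $\bm{x}^*=(\bm{x}_1^*,\dots,\bm{x}_T^*)$ denote the offline optimal solution of \eqref{PCP}. The one point to be careful about is that each $\bm{x}_t^*$ is a random vector that may depend on the entire profile $\mathcal{H}=(\bm{\theta}_1,\dots,\bm{\theta}_T)$, whereas in \eqref{PUB} the $t$-th decision function is only allowed to depend on $\bm{\theta}_t$. To bridge this gap I would set, for each $t$,
\[
\bar{\bm{x}}_t(\bm{\theta}) \coloneqq \E\big[\bm{x}_t^* \,\big|\, \bm{\theta}_t=\bm{\theta}\big],\qquad \bm{\theta}\in\Theta,
\]
the conditional expectation of the offline decision given the current parameter. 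Since $\mathcal{X}$ is convex and compact and $\bm{x}_t^*\in\mathcal{X}$ almost surely, $\bar{\bm{x}}_t$ is a well-defined $\mathcal{B}_\Theta$-measurable map taking values in $\mathcal{X}$, so $\bar{\bm{x}}_{1:T}=(\bar{\bm{x}}_1,\dots,\bar{\bm{x}}_T)$ is an admissible choice of decision functions for \eqref{PUB}.

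I would then verify feasibility and the objective bound via two applications of Jensen's inequality, in which the parameter is held fixed inside the conditional expectation. For each constraint $i$, using the tower property and the convexity of $g_i(\cdot;\bm{\theta})$ from Assumption \ref{assume}(e),
\[
\sum_{t=1}^T \mathcal{P}_t\, g_i(\bar{\bm{x}}_t(\bm{\theta}_t);\bm{\theta}_t) = \sum_{t=1}^T \E\big[g_i(\bar{\bm{x}}_t(\bm{\theta}_t);\bm{\theta}_t)\big] \le \E\Big[\sum_{t=1}^T g_i(\bm{x}_t^*;\bm{\theta}_t)\Big] \le c_i,
\]
where the last step is the feasibility of $\bm{x}^*$ in \eqref{PCP} on every realization of $\mathcal{H}$. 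Symmetrically, the concavity of $f(\cdot;\bm{\theta})$ gives $\sum_{t=1}^T \mathcal{P}_t\, f(\bar{\bm{x}}_t(\bm{\theta}_t);\bm{\theta}_t) \ge \sum_{t=1}^T \E\big[f(\bm{x}_t^*;\bm{\theta}_t)\big] = \E[R_T^*]$. Combining these, $\bar{\bm{x}}_{1:T}$ is feasible for \eqref{PUB} with objective value at least $\E[R_T^*]$, which completes the argument.

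The two Jensen steps and the feasibility chain are routine once Assumption \ref{assume}(e) is invoked, so I do not expect difficulty there. The only delicate point — and the main obstacle, though a minor one — is the construction of $\bar{\bm{x}}_t$: one needs the standard fact that the conditional expectation of an $\mathcal{X}$-valued random vector is again $\mathcal{X}$-valued (using convexity and closedness of $\mathcal{X}$) and defines a measurable function of the conditioning variable. I would also note in passing that independence of the $\mathcal{P}_t$'s (Assumption \ref{assume}(d)) is not actually used here, since conditioning on $\bm{\theta}_t$ alone suffices.
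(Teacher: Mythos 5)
Your proposal is correct and follows essentially the same route as the paper's proof: both construct the conditional-expectation decision rule $\bm{x}_t(\bm{\theta})=\E[\bm{x}_t^*\mid\bm{\theta}_t=\bm{\theta}]$, verify feasibility in \eqref{PUB} via Jensen's inequality using the convexity of $g_i(\cdot;\bm{\theta})$, and lower-bound the objective via Jensen with the concavity of $f(\cdot;\bm{\theta})$, then invoke the optimality of $R_T^{\text{UB}}$. Your added remarks — that the conditional expectation stays in the convex compact set $\mathcal{X}$ and that independence of the $\mathcal{P}_t$'s is not needed — are accurate refinements of the same argument.
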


The deterministic upper bound and the optimization problem \eqref{PUB} are commonly used to design algorithms in the literature. To proceed, we introduce the Lagrangian of \eqref{PUB},
\[
L(\bm{p},\bm{x}_{1:T})\coloneqq\bm{c}^\top \bm{p}+ \sum_{t=1}^T \mathcal{P}_t\left( f(\bm{x}_t(\bm{\theta}_t);\bm{\theta}_t) - \bm{p}^\top \bm{g}(\bm{x}_t(\bm{\theta}_t);\bm{\theta}_t)\right)
\]
where $\bm{\theta}_t$ follows the distribution $\mathcal{P}_t.$
The (Lagrange multipliers) vector $\bm{p}=(p_1,...,p_m)^\top$ conveys a meaning of shadow price for each budget, and $p_i\ge0$ is the multiplier/dual variable associated with the $i$-th constraint. Furthermore, we define the following function based on a point-wise optimization for the primal variables,
$$h(\bm{p};\bm{\theta}) \coloneqq \max_{\bm{x}(\bm{\theta})\in\mathcal{X}}\left\{f(\bm{x}(\bm{\theta}); \bm{\theta})-\bm{p}^\top \bm{g}(\bm{x}(\bm{\theta});\bm{\theta}) \right\}.$$
Here the point-wise optimization emphasizes that the primal variables can be dependent on (and as a measurable function of) the parameter $\bm{\theta}_t$. For example, the pricing and assortment decisions can be made upon the observation of the customer type. Then the dual problem of \eqref{PUB} becomes
\[
\min_{\bm{p}\ge \bm{0}} L(\bm{p}):=\bm{c}^\top\bm{p}+\sum_{t=1}^{T}\mathcal{P}_th(\bm{p};\bm{\theta}_t)
\]
where $\bm{\theta}_t$ follows the distribution $\mathcal{P}_t$ as before.

Let $\bm{p}^*$ denote an optimal dual solution, i.e.,
\begin{equation}
\bm{p}^* \in\text{argmin}_{\bm{p}\geq\bm{0}}L(\bm{p})\label{new p_star}
\end{equation}
and for each $t$,
\begin{equation}\label{new2007}
\bm{\gamma}_t\coloneqq \mathcal{P}_t\bm{g}(\bm{x}^*(\bm{\theta});\bm{\theta})  \text{~~where~} \bm{x}^*(\bm{\theta})=\text{argmax}_{\bm{x}\in\mathcal{X}}\{f(\bm{x};\bm{\theta})-(\bm{p}^*)^\top\cdot\bm{g}(\bm{x};\bm{\theta})\}.
\end{equation}
Here, $\bm{x}^*(\bm{\theta})$ is the associated primal (optimal) solution under the dual optimal solution $\bm{p}^*$, and $\bm{\gamma}_t$ can be interpreted as the expected budget consumption in the $t$-th time period under the optimal primal-dual pair $(\bm{x}^*(\bm{\theta}), \bm{p}^*)$. Accordingly, for each $t$, we define
\begin{equation*}
L_t(\bm{p})\coloneqq \bm{\gamma}_t^\top\bm{p}+\mathcal{P}_t h(\bm{p};\bm{\theta}).
\end{equation*}
The following proposition states the relation between $L(\cdot)$ and $L_t(\cdot)$ and establishes an upper bound for the benchmark using the dual problem.

\begin{proposition}\label{uppernew2}
For each $t=1,...,T$, it holds that
\begin{equation*}
\bm{p}^* \in \mathrm{argmin}_{\bm{p}\geq0}L_t(\bm{p})
\end{equation*}
where $\bm{p}^*$ is defined in \eqref{new p_star} as one dual optimal solution. Moreover, we have
\begin{equation*}
\mathbb{E}[R^*_T]\leq\min_{\bm{p}\geq0}\sum_{t=1}^{T}L_t(\bm{p})=\sum_{t=1}^{T}\min_{\bm{p}_t\geq0}L_t(\bm{p}_t).
\end{equation*}
\end{proposition}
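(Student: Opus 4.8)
The plan is to establish the two assertions in order. The first step is to show that the single vector $\bm{p}^*$ minimizes \emph{every} $L_t(\cdot)$ over $\bm{p}\geq\bm{0}$ --- in fact over all of $\mathbb{R}^m$. Fix $t$ and any $\bm{p}\geq\bm{0}$. Since $h(\bm{p};\bm{\theta})=\max_{\bm{x}\in\mathcal{X}}\{f(\bm{x};\bm{\theta})-\bm{p}^\top\bm{g}(\bm{x};\bm{\theta})\}$ while $\bm{x}^*(\bm{\theta})$ attains this maximum when $\bm{p}=\bm{p}^*$ by \eqref{new2007}, we have the pointwise inequality $h(\bm{p};\bm{\theta})\geq f(\bm{x}^*(\bm{\theta});\bm{\theta})-\bm{p}^\top\bm{g}(\bm{x}^*(\bm{\theta});\bm{\theta})$ and the pointwise identity $h(\bm{p}^*;\bm{\theta})=f(\bm{x}^*(\bm{\theta});\bm{\theta})-(\bm{p}^*)^\top\bm{g}(\bm{x}^*(\bm{\theta});\bm{\theta})$. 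Subtracting, taking $\mathcal{P}_t$-expectation, and using $\bm{\gamma}_t=\mathcal{P}_t\bm{g}(\bm{x}^*(\bm{\theta});\bm{\theta})$ yields $\mathcal{P}_t h(\bm{p};\bm{\theta})-\mathcal{P}_t h(\bm{p}^*;\bm{\theta})\geq-(\bm{p}-\bm{p}^*)^\top\bm{\gamma}_t$; adding $\bm{\gamma}_t^\top(\bm{p}-\bm{p}^*)$ to both sides gives exactly $L_t(\bm{p})-L_t(\bm{p}^*)\geq0$, hence $\bm{p}^*\in\mathrm{argmin}_{\bm{p}\geq\bm{0}}L_t(\bm{p})$. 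Because $\bm{p}^*$ is then a common minimizer of $L_1,\dots,L_T$ it also minimizes the sum, so $\min_{\bm{p}\geq0}\sum_t L_t(\bm{p})=\sum_t L_t(\bm{p}^*)=\sum_t\min_{\bm{p}_t\geq0}L_t(\bm{p}_t)$, which is the stated interchange of minimum and sum.

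It then remains to show $\sum_{t}L_t(\bm{p}^*)\geq\mathbb{E}[R^*_T]$. I would first record the weak-duality bound $\mathbb{E}[R^*_T]\leq L(\bm{p}^*)$: for any realization, since the offline optimum $\bm{x}^*_{1:T}$ is feasible for \eqref{PCP} and $\bm{p}^*\geq\bm{0}$, one has $\sum_t f(\bm{x}^*_t;\bm{\theta}_t)\leq\bm{c}^\top\bm{p}^*+\sum_t\bigl(f(\bm{x}^*_t;\bm{\theta}_t)-(\bm{p}^*)^\top\bm{g}(\bm{x}^*_t;\bm{\theta}_t)\bigr)\leq\bm{c}^\top\bm{p}^*+\sum_t h(\bm{p}^*;\bm{\theta}_t)$, and taking expectations gives $\mathbb{E}[R^*_T]\leq\bm{c}^\top\bm{p}^*+\sum_t\mathcal{P}_t h(\bm{p}^*;\bm{\theta})=L(\bm{p}^*)$ (equivalently, combine Lemma~\ref{upper1} with strong duality $R_T^{\text{UB}}=L(\bm{p}^*)$ for \eqref{PUB}). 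Next I would invoke complementary slackness for the dual-optimal $\bm{p}^*$ of \eqref{PUB} --- standard for this deterministic upper bound \citep{gallego1994optimal} --- namely $p_i^*\bigl(c_i-\sum_t\gamma_{t,i}\bigr)=0$ for every $i$, which summed over $i$ gives $(\bm{p}^*)^\top\sum_t\bm{\gamma}_t=\bm{c}^\top\bm{p}^*$. Therefore $\sum_t L_t(\bm{p}^*)=\bigl(\sum_t\bm{\gamma}_t\bigr)^\top\bm{p}^*+\sum_t\mathcal{P}_t h(\bm{p}^*;\bm{\theta})=\bm{c}^\top\bm{p}^*+\sum_t\mathcal{P}_t h(\bm{p}^*;\bm{\theta})=L(\bm{p}^*)\geq\mathbb{E}[R^*_T]$, which together with the first part is the assertion. (One in fact only needs the one-sided inequality $(\bm{p}^*)^\top\sum_t\bm{\gamma}_t\geq\bm{c}^\top\bm{p}^*$, which follows from optimality of $\bm{p}^*$ for $L$ alone by perturbing downward each coordinate with $p_i^*>0$.)

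The main obstacle is not the first part --- that is just the elementary Lagrangian (weak-duality) inequality, requiring no more than $h(\cdot;\bm{\theta})$ being a supremum of affine functions in $\bm{p}$. The delicate work is in the second part: one must interchange the $\mathcal{P}_t$-expectation with the pointwise Lagrangian maximizer, ensure $\bm{x}^*(\cdot)$ admits a measurable selection, and --- most importantly --- justify complementary slackness for the infinite-dimensional program \eqref{PUB}, which in full rigor needs a Slater-type constraint qualification (available since $\bm{x}_t\equiv\bm{0}$ is feasible with $g_i(\bm{0};\bm{\theta})=0$, and interior when $c_i>0$) together with existence of a primal optimizer of \eqref{PUB} that coincides with a Lagrangian maximizer at $\bm{p}^*$. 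If ties in $\mathrm{argmax}_{\bm{x}}\{f(\bm{x};\bm{\theta})-(\bm{p}^*)^\top\bm{g}(\bm{x};\bm{\theta})\}$ are a concern, I would route the whole argument through the single identity $p_i^*(c_i-\sum_t\gamma_{t,i})=0$, since that is all the second part actually uses.
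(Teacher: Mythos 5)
Your proof is correct, and it reaches the paper's conclusion by a route that differs in two instructive ways. For the first claim, the paper argues via the first-order condition: it computes $\nabla L_t(\bm{p}^*)=\bm{\gamma}_t-\mathcal{P}_t\bm{g}(\bm{x}^*(\bm{\theta});\bm{\theta})=\bm{0}$ (a Danskin-type gradient formula for $h$) and invokes convexity of $L_t$; you instead use the affine-minorant inequality $h(\bm{p};\bm{\theta})\geq f(\bm{x}^*(\bm{\theta});\bm{\theta})-\bm{p}^\top\bm{g}(\bm{x}^*(\bm{\theta});\bm{\theta})$ with equality at $\bm{p}^*$, which gives $L_t(\bm{p})\geq L_t(\bm{p}^*)$ directly --- this is more elementary and needs no differentiability of $h$, so it is arguably the cleaner argument. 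For the second claim, the paper only proves the structural identity $\bm{c}^\top\bm{p}^*=\sum_t\bm{\gamma}_t^\top\bm{p}^*$ (from $\nabla_iL(\bm{p}^*)=0$ at coordinates with $p_i^*>0$, i.e.\ complementary slackness in disguise) and leaves the chain $\mathbb{E}[R_T^*]\leq R_T^{\text{UB}}\leq\min_{\bm{p}\geq\bm{0}}L(\bm{p})$ to Lemma \ref{upper1} plus weak duality for \eqref{PUB}; you bypass $R_T^{\text{UB}}$ entirely with a pathwise weak-duality bound on the realized offline optimum, which is self-contained and avoids Lemma \ref{upper1}. Your closing caveats are well placed: the complementary-slackness identity you need, $p_i^*(c_i-\sum_t\gamma_{t,i})=0$, is exactly what the paper derives, and the paper's gradient formula carries the same implicit dependence on the particular measurable selection $\bm{x}^*(\bm{\theta})$ in \eqref{new2007} when the Lagrangian maximizer is not unique; your observation that only the one-sided inequality $(\bm{p}^*)^\top\sum_t\bm{\gamma}_t\geq\bm{c}^\top\bm{p}^*$ is actually needed is a nice simplification, though establishing it by perturbation runs into the same argmax-continuity/selection subtlety, so it does not fully escape the issue --- it matches, rather than falls below, the paper's level of rigor.
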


The first part of the proposition says that all the $L_t$'s share the same minimizer of $\bm{p}^*$ as the dual problem. In fact, this is the reason why we introduce $\bm{\gamma}_t$'s to define $L_t$'s and the key how the prior knowledge of non-stationarity can be used for algorithm design. This property of a shared minimizer can be useful for online optimization in a non-stationary environment. First, we note that the primal optimal solution can be largely determined by the dual optimal solution $\bm{p}^*.$ At each time $t$, the decision maker only has observations of the (random) realizations of $L_s$ for $s=1,...,t$. Intuitively, the proposition implies that these past observations can be effectively used to estimate the dual optimal solution $\bm{p}^*$ as all the $L_t$'s share the same optimal solution $\bm{p}^*$. Another advantage of this dual-based representation is that, through the adjustment of $\bm{\gamma}_t$'s, the dual optimal solution (of $L_t(\cdot)$) for each time period is the same, whereas the primal optimal solution $\bm{x}_t$ to \eqref{PCP} may be different from each $t$. As mentioned earlier, there are two types of regret benchmarks in the literature of online optimization: static oracle and dynamic oracle. The static oracle refers to the case where we compare to an offline decision maker adopting a common optimal solution throughout the entire horizon, and the dynamic oracle allows the offline decision maker to take an individual optimal solution for each time period. Proposition \ref{uppernew2} states that in our setting, the static oracle and the dynamic oracle connect with each other in the dual space through a careful construction of $L_t(\cdot)$: the primal optimal solution is dynamic (different over time) while the dual optimal solution is static after the adjustment of $\bm{\gamma}_t$'s. This connection makes it possible to apply the gradient descent-based algorithms from OCO literature for the dual space in the non-stationary setting. \cite{besbes2015non} derive a similar argument to connect the dynamic oracle and static oracle for the unconstrained setting as a backbone for the algorithm design and regret analysis therein.

\subsection{Main Algorithm and Regret Analysis}

\label{sec_IGD_algo}

Now we present our main algorithm -- \textit{Informative Gradient Descent} -- fully described as Algorithm \ref{alg:SOA}. {The algorithm is described as a meta algorithm with an input $\bm{\gamma}$. In the following sections, we will discuss how to apply the algorithm to different settings with different specifications of $\bm{\gamma}$.} When the distributions $\mathcal{P}_t$'s are known, the algorithm is motivated from the dual-based representation in Proposition \ref{uppernew2} and the input $\bm{\gamma}$ is accordingly defined by \eqref{new2007}. Specifically, the algorithm maintains a dual vector/price $\bm{p}_t$, and at each time $t$, it performs a stochastic gradient descent update for $\bm{p}_t$ with respect to the function $L_t(\cdot)$. To see that the expectation of the dual gradient update \eqref{003} is the gradient with respect to the function $L_t(\cdot)$ evaluated at $\bm{p}_t$,
\begin{align*}
    \E\left[\bm{g}(\tilde{\bm{x}}_t;\bm{\theta}_t) -\bm{\gamma}_t\right] & = -\bm{\gamma}_t +\mathcal{P}_t\bm{g}(\tilde{\bm{x}}_t;\bm{\theta}) \\
    & = -\frac{\partial}{\partial \bm{p}}\left(\bm{\gamma}_t^\top \bm{p}+\mathcal{P}_th(\bm{p};\bm{\theta})\right)\Bigg\vert_{\bm{p}=\bm{p}_t}
\end{align*}
where the first line comes from taking expectation with respect to $\bm{\theta}_t$ and the second line comes from the definition of $\tilde{\bm{x}}_t$ in the algorithm. At each time $t$, the primal decision variable $\bm{x}_t$ is determined based on the dual price $\bm{p}_t$ and the observation $\bm{\theta}_t$ jointly, in the same manner as the definition of the function $h(\bm{p};\bm{\theta})$. Assumption \ref{assume} (e) ensures the optimization problem that defines $h(\bm{p};\bm{\theta})$ can be solved efficiently (See further discussion in Section \ref{discussion_h}).

\begin{algorithm}[ht!]
\caption{Informative Gradient Descent Algorithm (IGD$(\bm{\gamma})$)}
\label{alg:SOA}
\begin{algorithmic}[1]
\State Input: parameters $\bm{\gamma}=(\bm{\gamma}_1,\dots,\bm{\gamma}_T)$.
\State Initialize the initial dual price $\bm{p}_1 = \bm{0}$ and initial constraint capacity $\bm{c}_{1}=\bm{c}$
\For {$t=1,..., T$}
\State Observe $\bm{\theta}_t$ and solve
\[
\tilde{\bm{x}}_t=\text{argmax}_{\bm{x}\in\mathcal{X}}\{ f(\bm{x};\bm{\theta}_t)-\bm{p}_t^\top \cdot \bm{g}(\bm{x};\bm{\theta}_t) \}
\]
where $\bm{g}(\bm{x},\bm{\theta}_t)=(g_1(\bm{x},\bm{\theta}_t),...,g_m(\bm{x},\bm{\theta}_t))^\top$
\State Set
\[
\bm{x}_t=\left\{\begin{aligned}
&\tilde{\bm{x}}_t,&&\text{if $\bm{c}_{t}$ permits a consumption of~}\bm{g}(\tilde{\bm{x}}_t;\bm{\theta}_t) \\
&\bm{0},&&\text{otherwise}
\end{aligned}\right.
\]
\State Update the dual price
\begin{equation}\label{003}
\bm{p}_{t+1} =\left(\bm{p}_t + \frac{1}{\sqrt{T}} \left(\bm{g}(\tilde{\bm{x}}_t;\bm{\theta}_t) - \bm{\gamma}_t\right)\right)\vee \bm{0}
\end{equation}
where the element-wise maximum operator $u\vee v = \max \{v,u\}$
\State Update the remaining capacity
\[
\bm{c}_{t+1}=\bm{c}_{t}-\bm{g}(\bm{x}_t;\bm{\theta}_t)
\]
\EndFor
\State Output: $\bm{x} = (\bm{x}_1,...,\bm{x}_T)$
\end{algorithmic}
\end{algorithm}

We now provide an alternative perspective to interpret Algorithm \ref{alg:SOA} for the case when the distributions are known. Note that by definition \eqref{new2007}, $\bm{\gamma}_t$'s represent the ``optimal'' way to allocate the resource budget over time according to the dual optimal solution $\bm{p}^*$. Specifically, a larger (resp. smaller) value of $\gamma_{i,t}$, where $\gamma_{i,t}$ denotes the $i$-th component of $\bm{\gamma}_t$, indicates that more (resp. less) budget should be allocated to time period $t$ for constraint $i$. In Algorithm \ref{alg:SOA}, from the update rule \eqref{003} of the dual variable at time period $t$, we know that if the budget consumption of constraint $i$ is larger (resp. smaller) than $\gamma_{i,t}$, i.e., $g_i(\tilde{\bm{x}}_t;\bm{\theta}_t)>\gamma_{i,t}$ (resp. $g_i(\tilde{\bm{x}}_t;\bm{\theta}_t)<\gamma_{i,t}$), then we have that $p_{i,t+1}\le p_{i,t}$ (resp. $p_{i,t+1}>p_{i,t}$), where $p_{i,t}$ denotes the $i$-th component of $\bm{p}_t$. That is, if more (resp. less) budget is consumed in the earlier periods, then the dual price will be more likely to increase (resp. decrease), and consequently, less (resp. more) budget will be consumed in the future periods. In this way, the dual variable $\bm{p}_t$ dynamically balances the budget consumption: it ensures that for each $t$, the cumulative budget consumption of Algorithm \ref{alg:SOA} during the first $t$ time periods always stay ``close'' to the optimal scheme of $\sum_{j=1}^{t}\bm{\gamma}_{j}$. Later in Section \ref{sec:staticpolicy}, we will show that a dynamic policy that incorporates the resource consumption process into the online decisions is necessary in a non-stationary environment, and any static policy can incur a linear regret even when the underlying non-stationarity intensity is sublinear.

Now we analyze the performance of Algorithm \ref{alg:SOA} for the known distribution setting. The following lemma says that the dual price vector $\bm{p}_t$ remains bounded throughout the procedure. Its proof largely relies on Assumption \ref{assume} (c), and also, the main usage of Assumption \ref{assume} (c) throughout our analysis is to ensure the boundedness of the dual vector.

\begin{lemma}\label{newupperlemma}
Under Assumption \ref{assume}, the dual price vector satisfies $\|\bm{p}_t\|_{\infty}\leq q+1$ for $t=1,2,\dots,T$. Here $\bm{p}_t$ is computed by \eqref{003} of IGD($\bm{\gamma}$) in Algorithm \ref{alg:SOA} with $\bm{\gamma}$ specified by \eqref{new2007}, and the constant $q$ is defined in Assumption \ref{assume} (c).
\end{lemma}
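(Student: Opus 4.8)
The plan is to prove the bound one coordinate at a time, by induction on $t$, using the update rule \eqref{003} together with the structural fact that as soon as a dual price exceeds $q$, the Lagrangian-maximizing action stops consuming the corresponding resource, so that price cannot keep growing.

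Fix a constraint index $i$ and write $p_{i,t}$ for the $i$-th coordinate of $\bm{p}_t$; since $\|\bm{p}_t\|_\infty=\max_i p_{i,t}$, it suffices to show $p_{i,t}\le q+1$ for every $t$. The base case is immediate from $\bm{p}_1=\bm{0}$. For the inductive step, assume $p_{i,t}\le q+1$. By Assumption~\ref{assume}(b), $g_i(\,\cdot\,;\,\cdot\,)\ge 0$, so the $i$-th coordinate $\gamma_{i,t}$ of $\bm{\gamma}_t$ in \eqref{new2007} is an expectation of a nonnegative quantity and hence $\gamma_{i,t}\ge 0$; combining this with $g_i\le 1$, $1/\sqrt{T}\le 1$, and the $(\cdot)\vee 0$ in \eqref{003}, we get $p_{i,t+1}\le\max\{0,\, p_{i,t}+\tfrac{1}{\sqrt{T}}\,g_i(\tilde{\bm{x}}_t;\bm{\theta}_t)\}\le p_{i,t}+\tfrac{1}{\sqrt{T}}$. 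In particular, if $p_{i,t}\le q$ then $p_{i,t+1}\le q+1$, and this case is finished.

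The remaining case is $q<p_{i,t}\le q+1$, and here I claim $g_i(\tilde{\bm{x}}_t;\bm{\theta}_t)=0$; granting this, \eqref{003} together with $\gamma_{i,t}\ge 0$ gives $p_{i,t+1}\le p_{i,t}\le q+1$, which closes the induction. To prove the claim, recall that $\tilde{\bm{x}}_t$ maximizes $\phi(\bm{x}):=f(\bm{x};\bm{\theta}_t)-\bm{p}_t^\top\bm{g}(\bm{x};\bm{\theta}_t)$ over $\bm{x}\in\mathcal{X}$, and suppose for contradiction that $g_i(\tilde{\bm{x}}_t;\bm{\theta}_t)>0$. Since $\bm{p}_t\ge\bm{0}$ and every $g_j\ge 0$, we have $\bm{p}_t^\top\bm{g}(\tilde{\bm{x}}_t;\bm{\theta}_t)\ge p_{i,t}\,g_i(\tilde{\bm{x}}_t;\bm{\theta}_t)$, and by Assumption~\ref{assume}(c), $f(\tilde{\bm{x}}_t;\bm{\theta}_t)\le q\,g_i(\tilde{\bm{x}}_t;\bm{\theta}_t)$; hence $\phi(\tilde{\bm{x}}_t)\le(q-p_{i,t})\,g_i(\tilde{\bm{x}}_t;\bm{\theta}_t)<0$ because $p_{i,t}>q$. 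On the other hand, $\bm{0}\in\mathcal{X}$ with $\bm{g}(\bm{0};\bm{\theta}_t)=\bm{0}$ by Assumption~\ref{assume}(b), and the null action earns no reward, so $\phi(\bm{0})=f(\bm{0};\bm{\theta}_t)=0>\phi(\tilde{\bm{x}}_t)$, contradicting the optimality of $\tilde{\bm{x}}_t$. Combining the two cases gives $p_{i,t+1}\le q+1$, and maximizing over $i$ completes the proof.

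The only substantive ingredient is the claim that a dual price exceeding $q$ shuts off consumption of that resource; this is exactly where Assumption~\ref{assume}(c) enters — bounding the reward by $q$ times the consumption of any single resource — together with the availability of the zero action to certify that the optimal Lagrangian value is non-negative. Everything else (the one-sided telescoping estimate from \eqref{003} and the split according to whether $p_{i,t}\le q$) is routine, so I do not anticipate any real difficulty there.
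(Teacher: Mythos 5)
Your proof is correct and follows essentially the same route as the paper's: the key step in both is that Assumption \ref{assume}(c), together with the availability of the zero action (which both you and the paper take to have zero Lagrangian value), forces $g_i(\tilde{\bm{x}}_t;\bm{\theta}_t)=0$ whenever $p_{i,t}>q$, so the price cannot rise above $q+1$. Your coordinate-wise induction with the non-increase observation is just a slightly more carefully organized version of the paper's two-property argument (the paper even claims strict decrease, which is unnecessary), so no further changes are needed.
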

The following theorem builds upon Proposition \ref{uppernew2} and Lemma \ref{newupperlemma} and it states that the regret of Algorithm \ref{alg:SOA} is upper bounded by $O(\sqrt{T})$.
\begin{theorem}
\label{newnonstationtheorem}
Under Assumption \ref{assume}, if we consider the set $\Xi=\{\mathcal{P}:\mathcal{P}=(\mathcal{P}_1,...,\mathcal{P}_T), \forall \mathcal{P}_1,\dots,\forall \mathcal{P}_T\}$, then the regret of IGD($\bm{\gamma}$) in Algorithm \ref{alg:SOA}, with $\bm{\gamma}$ defined by \eqref{new2007}, has the following upper bound
$$\text{Reg}_T(\pi_{\text{IGD}}) \le O(\sqrt{T})$$
where $\pi_{\text{IGD}}$ stands for the policy specified by IGD($\bm{\gamma}$) in Algorithm \ref{alg:SOA}.
\end{theorem}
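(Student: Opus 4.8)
The plan is to bound the regret via the standard primal-dual argument: decompose the optimality gap into a ``dual regret'' term coming from online gradient descent in the price space and a ``stopping'' term accounting for periods in which the algorithm is forced to play $\bm{0}$ because some budget is exhausted. By Proposition~\ref{uppernew2}, the benchmark $\mathbb{E}[R_T^*]$ is upper bounded by $\sum_{t=1}^T \min_{\bm{p}\geq 0} L_t(\bm{p}) = \sum_{t=1}^T L_t(\bm{p}^*)$, so it suffices to lower bound the expected online reward $\mathbb{E}[R_T(\pi_{\text{IGD}})]$ by $\sum_{t=1}^T L_t(\bm{p}^*) - O(\sqrt{T})$. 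First I would write, for each $t$, the instantaneous reward collected on the ``unconstrained'' decision $\tilde{\bm{x}}_t$ as $f(\tilde{\bm{x}}_t;\bm{\theta}_t) = \bigl(f(\tilde{\bm{x}}_t;\bm{\theta}_t) - \bm{p}_t^\top \bm{g}(\tilde{\bm{x}}_t;\bm{\theta}_t)\bigr) + \bm{p}_t^\top \bm{g}(\tilde{\bm{x}}_t;\bm{\theta}_t) = h(\bm{p}_t;\bm{\theta}_t) + \bm{p}_t^\top \bm{g}(\tilde{\bm{x}}_t;\bm{\theta}_t)$, using that $\tilde{\bm{x}}_t$ is exactly the Lagrangian maximizer at price $\bm{p}_t$. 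Adding and subtracting $\bm{\gamma}_t^\top \bm{p}_t$ turns this into $L_t(\bm{p}_t)$-like terms plus the inner product $\bm{p}_t^\top(\bm{g}(\tilde{\bm{x}}_t;\bm{\theta}_t) - \bm{\gamma}_t)$, which is precisely (up to the $1/\sqrt{T}$ step size) the gradient used in the update~\eqref{003}.

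Second, I would invoke the standard OGD regret bound for the sequence of convex functions $L_t(\cdot)$: since each $L_t$ is convex in $\bm{p}$, the iterates $\bm{p}_t$ stay in the box $[0,q+1]^m$ by Lemma~\ref{newupperlemma}, the stochastic gradients $\bm{g}(\tilde{\bm{x}}_t;\bm{\theta}_t)-\bm{\gamma}_t$ are bounded in $\ell_\infty$ by Assumption~\ref{assume}(b) and the bound on $\bm{\gamma}_t$, and the step size is $1/\sqrt{T}$, the cumulative comparison against the fixed point $\bm{p}^*$ contributes at most $O(\sqrt{T})$ in expectation. Here the key point — supplied by the first part of Proposition~\ref{uppernew2} — is that $\bm{p}^*$ is simultaneously a minimizer of every $L_t$, so comparing the OGD trajectory against the \emph{single} fixed comparator $\bm{p}^*$ already recovers $\sum_t L_t(\bm{p}^*)$; there is no need for a time-varying comparator and hence no dynamic-regret penalty. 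Taking expectations, using that $\mathbb{E}[\bm{g}(\tilde{\bm{x}}_t;\bm{\theta}_t)-\bm{\gamma}_t \mid \bm{p}_t]$ is the subgradient of $L_t$ at $\bm{p}_t$ (the martingale-difference computation shown in the text), and telescoping the OGD potential $\|\bm{p}_t - \bm{p}^*\|_2^2$, yields $\mathbb{E}\bigl[\sum_t f(\tilde{\bm{x}}_t;\bm{\theta}_t)\bigr] \geq \sum_t L_t(\bm{p}^*) - O(\sqrt{T})$.

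Third, I would account for the gap between $\sum_t f(\tilde{\bm{x}}_t;\bm{\theta}_t)$ (reward if we always played $\tilde{\bm{x}}_t$) and the actual reward $R_T(\pi_{\text{IGD}}) = \sum_t f(\bm{x}_t;\bm{\theta}_t)$, which differ only on periods where $\bm{x}_t = \bm{0}$, i.e.\ where the remaining capacity $\bm{c}_t$ cannot absorb $\bm{g}(\tilde{\bm{x}}_t;\bm{\theta}_t)$. On such a period the reward loss is at most $1$ by Assumption~\ref{assume}(a), so I need to bound the expected number of ``null'' periods. The standard device is to relate the first time some constraint $i$ binds to the accumulated discrepancy $\sum_{t}\bigl(g_i(\tilde{\bm{x}}_t;\bm{\theta}_t) - \gamma_{i,t}\bigr)$: by the dual update~\eqref{003}, $p_{i,t+1} \geq p_{i,t} + \tfrac{1}{\sqrt T}\sum_{s\le t}(g_i(\tilde{\bm x}_s;\bm\theta_s)-\gamma_{i,s})$ whenever the projection is inactive, combined with $\sum_t \gamma_{i,t}\le c_i$ (feasibility of the fluid solution) and the boundedness $p_{i,t}\le q+1$ from Lemma~\ref{newupperlemma}, one shows the total overconsumption $\sum_t g_i(\tilde{\bm x}_t;\bm\theta_t)$ cannot exceed $c_i$ by more than $O(\sqrt T)$, so the capacity is essentially never violated until the last $O(\sqrt T)$ periods. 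Hence the total null-period loss is $O(\sqrt T)$ in expectation. Combining the three parts gives $\text{Reg}_T(\pi_{\text{IGD}}) = \mathbb{E}[R_T^*] - \mathbb{E}[R_T(\pi_{\text{IGD}})] \le O(\sqrt T)$.

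The main obstacle I expect is the third step — controlling the null periods — because one must carefully translate the $\ell_\infty$ boundedness of the dual iterates (Lemma~\ref{newupperlemma}) together with the feasibility $\sum_t \bm{\gamma}_t \le \bm{c}$ into a high-probability (or in-expectation) bound on how much the realized consumption $\sum_t \bm{g}(\tilde{\bm{x}}_t;\bm{\theta}_t)$ can overshoot $\bm{c}$, handling the projection onto $\bm{p}\ge\bm{0}$ (which can ``reset'' the accumulated-discrepancy bookkeeping) and the martingale fluctuations around $\sum_t \bm{\gamma}_t$ simultaneously. The OGD regret part (second step) is essentially textbook once the gradient-boundedness and the shared-minimizer property are in hand, and the first step is purely algebraic.
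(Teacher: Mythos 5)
Your skeleton is the paper's: bound $\mathbb{E}[R_T^*]$ by $\sum_{t}\min_{\bm{p}\ge\bm{0}}L_t(\bm{p})$ via Proposition \ref{uppernew2}, use the identity $f(\tilde{\bm{x}}_t;\bm{\theta}_t)=h(\bm{p}_t;\bm{\theta}_t)+\bm{p}_t^\top\bm{g}(\tilde{\bm{x}}_t;\bm{\theta}_t)$ with a step-size-$1/\sqrt{T}$ telescoping argument and Lemma \ref{newupperlemma}, and separately charge the periods where the budget forces $\bm{x}_t=\bm{0}$. But two of your steps do not go through as stated. In the dual step, the quantity that actually has to be controlled is $\sum_t\mathbb{E}\bigl[\bm{p}_t^\top\bigl(\bm{\gamma}_t-\mathcal{P}_t\bm{g}(\tilde{\bm{x}}_t;\bm{\theta})\bigr)\bigr]$; the $L_t$ part needs no comparator regret at all, since $\min_{\bm{p}\ge\bm{0}}L_t(\bm{p})\le\mathbb{E}[L_t(\bm{p}_t)]$ because $\bm{p}_t\ge\bm{0}$ is feasible. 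The potential that telescopes this cross term is $\|\bm{p}_t\|_2^2$ with $\bm{p}_1=\bm{0}$ (effectively the comparator $\bm{0}$), which is what the paper uses. If you expand the update \eqref{003} against the comparator $\bm{p}^*$ with potential $\|\bm{p}_t-\bm{p}^*\|_2^2$, you are left with the non-telescoping term $(\bm{p}^*)^\top\sum_t\bigl(\bm{\gamma}_t-\bm{g}(\tilde{\bm{x}}_t;\bm{\theta}_t)\bigr)$, i.e.\ the $\bm{p}^*$-weighted underconsumption, and nothing in your argument shows this is $O(\sqrt{T})$; convexity inequalities also run the wrong way for lower-bounding the reward. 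So "compare against $\bm{p}^*$" must be replaced by "compare against $\bm{0}$" (or the leftover term must be bounded by a separate argument).

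The second gap is in the null-period accounting. Charging "at most $1$ per null period by Assumption \ref{assume}(a)" and then bounding the \emph{number} of null periods by the $O(\sqrt{T})$ overconsumption fails: the per-period request $g_i(\tilde{\bm{x}}_t;\bm{\theta}_t)$ has no positive lower bound, so after a resource is essentially exhausted there can be $\Theta(T)$ periods with positive but arbitrarily small requests, i.e.\ $\Theta(T)$ null periods with only $O(1)$ overshoot. The paper converts lost reward into consumption via Assumption \ref{assume}(c): on a null period some violated constraint $i$ has $g_i(\tilde{\bm{x}}_t;\bm{\theta}_t)>0$, hence $f(\tilde{\bm{x}}_t;\bm{\theta}_t)\le q\,g_i(\tilde{\bm{x}}_t;\bm{\theta}_t)$, so the total loss is at most $q\sum_{i}\bigl[\sum_t g_i(\tilde{\bm{x}}_t;\bm{\theta}_t)-(c_i-1)\bigr]^{+}$, and this consumption-weighted quantity is what the overshoot bound controls. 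Finally, the ingredient you flagged as the main obstacle is simpler than you expect: since the projection $\vee\,\bm{0}$ only increases the iterate, $\sqrt{T}\bm{p}_{t+1}\ge\sqrt{T}\bm{p}_t+\bm{g}(\tilde{\bm{x}}_t;\bm{\theta}_t)-\bm{\gamma}_t$ holds pathwise, so summing and using $\sum_t\bm{\gamma}_t\le\bm{c}$ and Lemma \ref{newupperlemma} gives $\sum_t\bm{g}(\tilde{\bm{x}}_t;\bm{\theta}_t)-\bm{c}\le\sqrt{T}\bm{p}_{T+1}\le(q+1)\sqrt{T}\,\bm{1}$ deterministically; no martingale or high-probability argument is needed.
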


The $O(\sqrt{T})$ regret upper bound of Algorithm \ref{alg:SOA} under a known non-stationary environment complements to several existing results in the literature. For the NRM problem, \cite{talluri1998analysis} derives a $O(\sqrt{k})$ regret bound when the system size is scaled by $k$ times, i.e., each period in the original problem is split into $k$ statistically independent and identical periods and the capacities are scaled up $k$ times. Following works subsequently improves this bound to $O(1)$ (e.g. \citep{reiman2008asymptotically, jasin2012re, bumpensanti2020re, vera2020bayesian}). However, these methods are developed for a stationary setting under the finite support assumption over the distributions and do not account for the achievability of a sublinear regret under a non-stationary setting where the distribution at each period could be arbitrarily different from each other. Along this line, a recent work \citep{fruend2020a} considers a setting that the length horizon $T$ is random, but the paper still requires the underlying distribution to be finite-support and a ``pseudo-stationary'' environment where a concentration over time holds. The most similar result to Theorem \ref{newnonstationtheorem} is from \citep{devanur2019near} where the authors derives a $1-1/\sqrt{c_{\min}}$ competitive ratio under the non-stationary setting, where $c_{\min}=\min\{c_1,c_2,\dots,c_m\}$ is the minimal budget and they assume that $f_t(\cdot)$ and $\bm{g}_t(\cdot)$ are all linear functions for each $t$. However, the competitive ratio result cannot be translated into a regret bound in our setting since we do not assume any relationship between the horizon length $T$ and the initial budget $\bm{c}$. Specifically, the method therein is based on showing that the arrivals possess a concentration property and applying Chernoff-type inequality to derive high probability bounds on the event that all the constraints are not violated. In contrast, our method is based on using the dual variable $\bm{p}_t$ to balance the budget consumption on every sample path and showing that the dual variable is bounded over time according to our update rule \eqref{003}. A recent work \citep{cheung2020online} also derives an $O(\sqrt{T})$ regret bound for the setting of non-stationary environment with known distribution. The analysis therein builds upon a static dual-based algorithm; we defer more discussions to Section \ref{sec:staticpolicy} and Section \ref{sec_numeric}. In the following sections, we continue our pursuit and investigate how the IGD algorithm rolls out in a non-stationary environment when the true distribution is unknown.

\begin{proposition}\label{prop:infinitesupportlower}
Under Assumption \ref{assume}, there is no algorithm that can achieve a regret better than $\tilde{\Omega}(\sqrt{T})$ where $\tilde{\Omega}$ hides a logarithmic term.
\end{proposition}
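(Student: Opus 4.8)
The plan is to construct a hard instance showing that, even with full knowledge of the (possibly non-stationary) distributions $\mathcal{P}_t$, any online policy must incur regret $\tilde\Omega(\sqrt{T})$. The natural candidate is a single-resource online-LP-type instance: take $m=1$, $\mathcal{X}=[0,1]$, $f(x;\theta)=rx$ and $g(x;\theta)=x$, with capacity $c = T/2$. The parameter $\theta_t = r_t$ is an i.i.d.\ reward that takes value $1$ with probability $1/2$ and value $1/2$ (or some other value strictly between $0$ and $1$) with probability $1/2$. Since the distributions are identical across time, this instance lies in every uncertainty set $\Xi$ that the paper considers (in particular the trivial one in Theorem \ref{newnonstationtheorem} and any WBDB/WBNB ball, since the non-stationarity measure is zero), so a lower bound here is a lower bound for all settings.

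The key steps are as follows. First, I would observe that the offline optimum simply accepts (sets $x_t=1$) on the $c$ largest-reward arrivals; since roughly $T/2$ of the arrivals have reward $1$ and the capacity is $T/2$, $R_T^*$ concentrates around $T/2 \cdot 1 + O(\sqrt{T})$, and more precisely $\E[R_T^*] = \tfrac{T}{2} + \tfrac12\,\E[(N-\tfrac T2)^+]\cdot(1-\tfrac12) \pm O(1)$ where $N = \#\{t: r_t = 1\}$, so $\E[R_T^*] = \tfrac T2 + \Theta(\sqrt T)$ by the fact that $\E[(N-\tfrac T2)^+] = \Theta(\sqrt T)$ for a $\mathrm{Binomial}(T,1/2)$ variable. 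Second, I would argue that no online policy can track this: at the moment a high-reward arrival ($r_t=1$) appears, the policy does not know how many more high-reward arrivals remain, so it faces a genuine trade-off between spending capacity now on a sure reward-$1$ unit versus saving it. Formally, I would compare the online policy against a fixed threshold argument — either accept all $r_t=1$ and a deterministic fraction of $r_t=1/2$, or not — and show via an anticoncentration (Berry--Esseen / reverse Chernoff) bound on $N$ that the policy's expected value is at most $\E[R_T^*] - \Omega(\sqrt T)$. The cleanest route is a two-point prior / information-theoretic argument: make the reward-$1$ probability itself $\tfrac12 + \tfrac{\epsilon}{\sqrt T}$ or $\tfrac12 - \tfrac{\epsilon}{\sqrt T}$ depending on a hidden bit, so the optimal acceptance level differs by $\Theta(\sqrt T)$ units between the two worlds; the policy cannot distinguish the worlds until it has seen $\Omega(T)$ arrivals, by which point it has irrevocably committed $\Omega(\sqrt T)$ of the reward gap. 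A Le Cam / KL-divergence bound between the two induced trajectory laws (the KL per step is $O(1/T)$, so total KL is $O(1)$) gives constant total-variation distance, hence $\Omega(\sqrt T)$ regret in at least one of the two worlds.

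The main obstacle is handling the adaptivity and irrevocability of the online decisions cleanly — i.e.\ making rigorous the claim that ``the policy must commit before it can learn.'' In the unconstrained non-stationary bandit literature this is standard, but here the budget constraint couples the decisions across time, so one has to be careful that the coupling does not actually help the learner. I would address this by noting that the per-step reward is bounded by $1$ and that the relevant statistic is just the count $N_t = \#\{s\le t: r_s=1\}$ of high arrivals so far; the policy's total acceptance of high arrivals is a stopping-time-type quantity measurable w.r.t.\ the filtration generated by $(r_1,\dots,r_t)$, and the two hypotheses force any such quantity to be wrong by $\Omega(\sqrt T)$ in expectation under at least one hypothesis. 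The logarithmic slack in $\tilde\Omega$ presumably comes from either a union bound over a discretization of the hidden parameter or from a mild concentration inequality used to control the capacity-exhaustion event; I would not expect it to be essential and would flag that a more careful argument removes it, but $\tilde\Omega(\sqrt T)$ suffices to match the $O(\sqrt T)$ upper bound of Theorem \ref{newnonstationtheorem} up to logs.
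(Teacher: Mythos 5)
There is a genuine gap, and it is structural rather than a fixable detail. Your primary instance (a known, stationary two-point reward distribution with $c=T/2$) cannot yield an $\Omega(\sqrt{T})$ lower bound against the hindsight optimum: for a known distribution with \emph{finite} support, an adaptive re-solving/Bayes-selector policy achieves $O(1)$ regret uniformly in $T$ (Arlotto--Gurvich; Vera--Banerjee), a fact the paper itself states immediately after the proposition and uses to explain why the $O(\sqrt T)$ vs.\ $O(1)$ dichotomy hinges on infinite support. The intuition that the policy ``cannot track'' the binomial count $N$ of high-reward arrivals overlooks that the offline optimum also degrades when $N$ exceeds $c$ and that an adaptive policy which compares remaining capacity to the remaining expected number of high arrivals compensates for the fluctuations; the detailed accounting shows the gap is bounded, not $\Theta(\sqrt T)$. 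Your fallback Le Cam/two-hypothesis argument (reward-$1$ probability $\tfrac12\pm\epsilon/\sqrt T$) proves a different statement: it lower-bounds the regret of policies that do \emph{not} know which distribution is in force. But Proposition \ref{prop:infinitesupportlower} is needed (to match Theorem \ref{newnonstationtheorem}, and as the $\Omega(\sqrt T)$ part of Theorem \ref{newlowertheorem} even when $W_T=0$, i.e., when the prior estimate is exact) as a lower bound that holds for a \emph{single known} stationary distribution, where an indistinguishability argument gives nothing because the algorithm may use full knowledge of $\mathcal{P}$.

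The paper's proof instead builds one $T$-independent distribution with countably infinite support: reward-per-unit levels $b_j\le\cdot\le a_j$ nested and accumulating at $1$, with geometric masses $2^{-(j+1)}$, plus a low-value type of mass $1/2$, and capacity $T/2$. For a given $T$ the ``critical'' type $k$ has mass $\asymp 1/\sqrt T$ and its reward gap to the adjacent types is $\asymp 1/k^2\asymp 1/(\log T)^2$; an Arlotto--Gurvich-style argument over events on the arrival counts shows that with constant probability any online policy misallocates $\Omega(\sqrt T)$ units of capacity between the critical type and its neighbors, costing $\Omega(\sqrt T/(\log T)^2)$. In particular the logarithmic loss in $\tilde\Omega$ is intrinsic to this construction (the gaps must shrink precisely because finite support with bounded-away gaps would admit $O(1)$ regret), not an artifact of a union bound as you conjectured. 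To repair your proposal you would need to abandon both the finite-support instance and the two-world reduction and introduce an infinite-support (or $T$-dependent, vanishing-gap) construction of this kind.
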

Proposition \ref{prop:infinitesupportlower} constructs a problem instance showing that even for a stationary setting where $\mathcal{P}_t$ for each $t$ is identical to each other and known a priori, the lower bound of any online policy is $\tilde{\Omega}(\sqrt{T})$. For the problem instance, the underlying distribution $\mathcal{P}_t$ takes an infinite support. We note that when the parameter distribution has a finite support, an $O(1)$ regret bound can be derived following the approach in \cite{vera2020bayesian, fruend2020a, fruend2020b}, which implies that the gap between $O(\sqrt{T})$ and $O(1)$ is caused by whether the support of the parameter distribution is infinite or finite. In Section \ref{sec_bounded}, we will further exploit the finite support structure and achieve better regret bound. The proof of Proposition \ref{prop:infinitesupportlower} builds upon the analysis of Lemma $1$ in \citet{arlotto2019uniformly}. Different from the existing lower bound examples \citep{arlotto2019uniformly, bumpensanti2020re, balseiro2020best}, the distribution of our problem instance bears no dependence on the horizon $T$; that is, the same static problem instance establishes the lower bound for all $T$. Theorem \ref{newnonstationtheorem} and Proposition \ref{prop:infinitesupportlower} altogether state that Algorithm \ref{alg:SOA} is optimal in a worst-case sense when no additional structure is imposed on $\mathcal{P}_t$'s.


\section{Non-stationary Environment with Prior Estimate: Wasserstein Based Ambiguity and Analysis}

\label{WBNB_P}

In this section, we consider a ``data-driven'' setting where the true distribution is unknown, but a prior estimate of the true distribution is available. The setting relaxes the assumption on the exact knowledge of the true distribution in the last section. In practice, the availability of the prior estimate may characterize the predictable patterns of the non-stationarity in various application contexts. For example, the decision maker may not be able to foresee the future demand (distribution), but (s)he can construct some estimate based on history data or domain expertise of demand seasonality, the day-of-week effect, and demand surge due to pre-scheduled promotions or shopping festivals. When such prior estimate is accurate (the same as the true distribution), the setting of prior estimate in this section reduces to the discussion in the last section. However, when the prior estimate deviates from the true distribution, as is often the case in reality, then two natural questions are: (i) how to properly measure the inaccuracy of the prior estimate from the true distribution, (ii) how to design and analyze algorithm with such prior estimate. We answer these two questions in this section.

\subsection{Wasserstein-Based Measure of Deviation}

\label{sec_WBDB}
Consider the decision maker has a prior estimate/prediction $\hat{\mathcal{P}}_t$ for the true distribution $\mathcal{P}_t$ for each $t$, and all the predictions $\{\hat{\mathcal{P}}_t\}_{t=1}^T$ are made available at the very beginning of the procedure.  We use the Wasserstein distance between $\hat{\mathcal{P}}_t$ and $\mathcal{P}_t$ to measure the deviation of the prior estimate from the true distribution. In following, we first formalize the definition and then discuss the suitability of the proposed Wasserstein-based measure.

The Wasserstein distance, also known as Kantorovich-Rubinstein metric or optimal transport distance  \citep{villani2008optimal, galichon2018optimal}, is a distance function defined between probability distributions on a metric space. Its notion has a long history dating back over decades ago and gains increasingly popularity in recent years with a wide range of applications including generative modeling \citep{arjovsky2017wasserstein}, robust optimization \citep{esfahani2018data}, statistical estimation \citep{blanchet2019robust}, etc.
In our context, the Wasserstein distance for two probability distributions $\mathcal{Q}_1$ and $\mathcal{Q}_2$ on the metric parameter space $(\Theta, \mathcal{B}_{\Theta})$ is defined as follows,
\begin{equation}\label{wasserstein}
\mathcal{W}(\mathcal{Q}_1, \mathcal{Q}_2) \coloneqq \inf_{\mathcal{Q}_{1,2} \in \mathcal{J}(\mathcal{Q}_1, \mathcal{Q}_2)} \int \rho(\bm{\theta_1},\bm{\theta_2}) d\mathcal{Q}_{1,2}(\bm{\theta_1},\bm{\theta_2})
\end{equation}
where $\mathcal{J}(\mathcal{Q}_1, \mathcal{Q}_2)$ denotes all the joint distributions $\mathcal{Q}_{1,2}$ for $(\bm{\theta_1},\bm{\theta_2})$ that have marginals $\mathcal{Q}_1$ and $\mathcal{Q}_2$. The distance function $\rho(\cdot,\cdot)$ is defined earlier in \eqref{rho_dist}.

We define the following Wasserstein-based deviation budget (WBDB) to measure the cumulative deviation of the prior estimate,
\[
\mathcal{W}_T(\mathcal{P},\hat{\mathcal{P}})\coloneqq\sum_{t=1}^{T}\mathcal{W}(\mathcal{P}_t,\hat{\mathcal{P}}_t)
\label{def_WBDB}
\]
where $\mathcal{P}=(\mathcal{P}_1,...,\mathcal{P}_T)$ denotes the true distribution and $\hat{\mathcal{P}}=(\hat{\mathcal{P}}_1,...,\hat{\mathcal{P}}_T)$ denotes the prior estimate. 

Based on the notion of WBDB, we define a set of distributions
\[
   \Xi_P(W_T) \coloneqq \{\mathcal{P}: \mathcal{W}_T(\mathcal{P},\hat{\mathcal{P}})\le W_T,\mathcal{P}=(\mathcal{P}_1,...,\mathcal{P}_T)\} \label{defXi1}
\]
for a non-negative constant $W_T$, which we call as \textit{deviation budget}. In this section, we consider a regret based on the set $\Xi_P$ as defined in \eqref{regDef}. In this way, the regret characterizes a worst-case performance of a certain algorithm for all the distributions $\mathcal{P}=(\mathcal{P}_1,...,\mathcal{P}_T)$ within the set $\Xi_P$ prescribed by some $W_T$. Specifically, the deviation budget $W_T$ defines the set $\Xi_P$ by inducing an upper bound for the deviation of prior estimate. Our next theorem provides an intuitive result that $W_T$ is an  inevitable loss (in terms of the algorithm regret) as a result of the inaccuracy of the prior estimate.
\begin{theorem}
\label{newlowertheorem}
Under Assumption \ref{assume}, if we consider the set $\Xi_P(W_T) \coloneqq \{\mathcal{P}: \mathcal{W}_T(\mathcal{P},\hat{\mathcal{P}})\le W_T,\mathcal{P}=(\mathcal{P}_1,...,\mathcal{P}_T)\}$, there is no algorithm that can achieve a regret better than $\Omega(\max\{\sqrt{T},W_T\})$.
\end{theorem}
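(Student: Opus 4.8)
The plan is to prove the two terms of $\max\{\sqrt{T},W_T\}$ by two separate instance families and then glue them together on disjoint blocks of resources so that a single family witnesses both obstructions. The $\sqrt{T}$ term needs no new work: take the prior $\hat{\mathcal P}$ to be the stationary, infinite‑support instance used in Proposition~\ref{prop:infinitesupportlower} (so $\hat{\mathcal P}_t$ is the same for every $t$). Then the true distribution $\mathcal P=\hat{\mathcal P}$ has WBDB $\mathcal W_T(\hat{\mathcal P},\hat{\mathcal P})=0\le W_T$, so it lies in $\Xi_P(W_T)$, and hence the worst‑case regret over $\Xi_P(W_T)$ is at least the regret incurred on this one instance, which is $\tilde\Omega(\sqrt{T})$ by Proposition~\ref{prop:infinitesupportlower}.

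For the $W_T$ term the idea is to exhibit a family of instances, all within WBDB $W_T$ of a common prior, for which any policy must make a ``commit now versus save for later'' decision \emph{before} it can tell which instance it faces. Concretely, in the single‑resource online LP form ($\mathcal X=[0,1]$, linear $f,g$, one budget $c$), split the horizon into an initial block of $K\asymp W_T$ periods in which a deterministic ``cheap'' item ($f=\tfrac12 x$, $g=x$) arrives, followed by a block of $M\asymp W_T$ periods in which a ``valuable'' item ($f=x$, $g=x$) arrives independently with probability $p$ and a null item ($f\equiv g\equiv 0$) arrives otherwise; set $c\asymp M/2$, the prior‑expected number of valuable items. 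The prior $\hat{\mathcal P}$ takes $p=\tfrac12$ on the second block; the true instances take $p\in\{\tfrac12-\epsilon,\tfrac12+\epsilon\}$, chosen adversarially. Since $\rho$ between the valuable and null parameters equals $\sup_{x\in[0,1]}\max\{x,x\}=1$, a one‑line optimal‑coupling computation gives $\mathcal W(\mathcal P_t,\hat{\mathcal P}_t)=\epsilon$ on each second‑block period and $0$ elsewhere, so $\mathcal W_T=\epsilon M$; choosing $M$ a suitable constant multiple of $W_T$ (say $M=4W_T$, $\epsilon=\tfrac14$) makes $\mathcal W_T=W_T$, so both true instances lie in $\Xi_P(W_T)$, and $\epsilon$ stays bounded away from $0$ and $\tfrac12$ (so the second‑block items are genuinely stochastic).

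It then remains to compute values. The offline optimum is $\asymp M$: it spends the whole budget on valuable items when $p=\tfrac12+\epsilon$, and takes all valuable items plus leftover cheap items when $p=\tfrac12-\epsilon$. Any policy's first‑block behavior is a total cheap‑item amount $A\le c$ that is \emph{statistically independent of $p$}, because the first block carries no information about $p$; in the second block the optimal continuation is just to accept valuable items greedily until the budget runs out. A direct bookkeeping then shows the policy's regret is $\tfrac12 A$ when $p=\tfrac12+\epsilon$ and $\tfrac12|W_T-A|$ when $p=\tfrac12-\epsilon$, up to $O(\sqrt{M})=O(\sqrt{W_T})$ sampling fluctuations in the number of valuable items. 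Since $A+|W_T-A|\ge W_T$ deterministically, the adversarial choice of $p$ forces $\max\big(\tfrac12\E[A],\tfrac12\E|W_T-A|\big)\ge\tfrac14 W_T=\Omega(W_T)$ for every policy. Finally I would concatenate the two constructions on disjoint resources and disjoint sub‑horizons of length $\Theta(T)$ each, so offline and online values add across the two parts and the total WBDB is still $W_T$, giving a single family with regret $\Omega(\sqrt{T}+W_T)=\Omega(\max\{\sqrt{T},W_T\})$.

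The main technical obstacle is the $W_T$ construction: one must choose the block lengths, the two item rewards, the budget, and $\epsilon$ \emph{simultaneously} so that (i) the WBDB is exactly $W_T$ via the coupling computation, (ii) the cheap‑versus‑valuable allocation is a real dilemma — the budget must bind, and both ``save everything'' ($A=0$) and ``spend a lot early'' must be strictly suboptimal for one of the two true instances — and (iii) the $O(\sqrt{W_T})$ fluctuations are lower order than the $\Omega(W_T)$ gap, which holds exactly in the regime $W_T\gtrsim\sqrt{T}$ that matters for this term (the complementary regime $W_T\lesssim\sqrt{T}$ is covered by the $\sqrt{T}$ bound, and $W_T\gtrsim T$ is trivial since regret is always $O(T)$). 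Checking that all of Assumption~\ref{assume} holds for the glued instance (boundedness, the ratio bound with $q=1$, independence of the $\mathcal P_t$'s, and convexity) is routine.
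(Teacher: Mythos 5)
Your proposal is correct in its essentials and is built on the same mechanism as the paper's proof: a construction whose first phase looks identical under the prior and under both candidate true distributions (so any policy's first-phase resource commitment is statistically independent of the scenario), followed by two continuations within WBDB $W_T$ of the prior that punish opposite commitments, concluded by the elementary inequality $\max\{a,\,W_T-a\}\ge W_T/2$; the $\sqrt{T}$ term is, exactly as you say, inherited from Proposition \ref{prop:infinitesupportlower} by taking $\mathcal{P}=\hat{\mathcal{P}}$. Where you differ is the concrete instance, and the paper's choice is simpler: the prior is the deterministic all-ones LP with $c=T/2$, and the two true instances shift the second-half rewards to $1\pm W_T/T$ (point masses), so each per-period Wasserstein distance is exactly $W_T/T$, the regret bookkeeping ($\tfrac{W_T}{T}T_1$ versus roughly $W_T-\tfrac{W_T}{T}T_1$) is exact with no concentration argument, and the construction scales to any $W_T$ of order up to $T$ without case analysis. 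Your stochastic valuable/null mixture with perturbed arrival probability works, but it incurs costs the paper avoids: you must control the $O(\sqrt{W_T})$ binomial fluctuations; the phrase ``the optimal continuation is to accept valuable items greedily'' should really be the upper bound $\tfrac12 A+\min\{c-A,N\}$ on \emph{any} policy's value ($N$ the number of valuable arrivals), which is what your bookkeeping implicitly uses; and your fixed choice $\epsilon=\tfrac14$, $M=4W_T$ only fits the horizon when $W_T\lesssim T/5$ — for larger $W_T$ the remark that ``$W_T\gtrsim T$ is trivial since regret is always $O(T)$'' is not a lower-bound argument; the right observation is that $\rho\le 2$ caps the WBDB at $O(T)$, so in that regime the claim reduces to $\Omega(T)$, which your construction (with $\epsilon$ a constant) or the paper's scaled reward perturbation delivers. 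None of these is a fatal gap. Finally, the gluing onto disjoint resources and sub-horizons is a nice touch but unnecessary for the stated theorem: the paper simply presents the two constructions separately and takes the maximum of the two lower bounds.
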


Theorem \ref{newlowertheorem} states that the lower bound of the regret is $\Omega(\max\{\sqrt{T},W_T\})$. The theorem characterizes the best achievable algorithm performance under an inaccurate prior estimate, and precisely, the lower bound is linear in respect with the deviation of the prior estimate from the true distribution. The $\Omega(\sqrt{T})$ part inherits the result in Proposition \ref{prop:infinitesupportlower} and it captures the intrinsic uncertainty of the underlying stochastic process over a time horizon $T$. The $\Omega(W_T)$ part captures the uncertainty arising from the inaccurate prior estimate.

\subsection{Informative Gradient Descent Algorithm with Prior Estimate}

Now we apply our informative gradient descent algorithm to the setting of prior estimate. A natural idea is to pretend that the prior estimate $\hat{\mathcal{P}}_t$ is indeed the true distribution $\mathcal{P}_t$. To implement the idea, we define
\[
\hat{L}(\bm{p})=\bm{c}^\top\bm{p}+\sum_{t=1}^{T}\hat{\mathcal{P}}_t h(\bm{p};\bm{\theta})
\]
where the true distribution $\mathcal{P}_t$ is replaced by its estimate $\hat{\mathcal{P}}_t$ for each component in function $L(\cdot)$. Thus it can be viewed as an approximation for the true dual function $L(\cdot)$ based on prior estimate. Let $\hat{\bm{p}}^*$ denote an optimal solution to $\hat{L}(\cdot)$,
\begin{equation}
\hat{\bm{p}}^* \in\text{argmin}_{\bm{p}\geq0}\hat{L}(\bm{p})\label{p_star}
\end{equation} and for each $t$, define
\begin{equation}\label{2007}
\hat{\bm{\gamma}}_t\coloneqq \hat{\mathcal{P}}_t\bm{g}(\hat{\bm{x}}(\bm{\theta});\bm{\theta})  \text{~~where~} \hat{\bm{x}}(\bm{\theta})=\text{argmax}_{\bm{x}\in\mathcal{X}}\{f(\bm{x};\bm{\theta})-(\hat{\bm{p}}^*)^\top\cdot\bm{g}(\bm{x};\bm{\theta})\}.
\end{equation}
Here, $\hat{\bm{\gamma}}_t$ denotes the ``optimal'' expected budget consumption in the $t$-th time under the prior estimate.
In the setting of prior estimate, we do not have the exact knowledge of the true distributions $\mathcal{P}_t$'s and therefore $\bm{\gamma}_t$'s, so we alternatively use $\hat{\bm{\gamma}}_t$ as a substitute. Thus, the algorithm for the prior estimate setting, denoted by IGD$(\hat{\bm{\gamma}})$, implements \Cref{alg:SOA} with the input $\hat{\bm{\gamma}}$ defined by \eqref{2007}. Specifically, the dual update step will become
\begin{equation}\label{2003}
\bm{p}_{t+1} =\left(\bm{p}_t + \frac{1}{\sqrt{T}} \left(\bm{g}(\tilde{\bm{x}}_t;\bm{\theta}_t) - \hat{\bm{\gamma}}_t\right)\right)\vee \bm{0}
\end{equation}
in \Cref{alg:SOA}.

\subsection{Regret Analysis}

The analysis of IGD($\hat{\bm{\gamma}}$) is slightly more complicated than that of IGD($\bm{\gamma}$) in \cref{newnonstationtheorem} because the algorithm is built upon the function $\hat{L}(\cdot)$ defined by the prior estimate instead of the true distribution. So we first study how to bound the difference between the function $\hat{L}(\cdot)$ and $L(\cdot)$ using the deviation between prior estimate and true distribution. For a probability measure $\mathcal{Q}$ over the metric parameter space $(\Theta, \mathcal{B}_{\Theta})$, we denote
\begin{equation*}\label{01}
L_{\mathcal{Q}}(\bm{p})\coloneqq \mathcal{Q}h(\bm{p};\bm{\theta})=\int_{\bm{\theta}'\in\Theta} h(\bm{p};\bm{\theta}')d\mathcal{Q}(\bm{\theta}').
\end{equation*}
Then the function $\hat{L}(\cdot)$ and $L(\cdot)$ can be expressed as
\[
\hat{L}(\bm{p})=\bm{p}^\top\bm{c}+\sum_{t=1}^T L_{\hat{\mathcal{P}}_t}(\bm{p})\text{~~and~~}   L(\bm{p}) =\bm{p}^\top\bm{c}+ \sum_{t=1}^T L_{\mathcal{P}_t}(\bm{p})
\]
Lemma \ref{wasserlemma} states that the function $L_{\mathcal{Q}}(\bm{p})$ has certain ``Lipschitz continuity'' in regard with the underlying distribution $\mathcal{Q}$. Specifically, the supremum norm between two functions $L_{\mathcal{Q}_1}(\bm{p})$ and $L_{\mathcal{Q}_2}(\bm{p})$ is bounded by the Wasserstein distance between two distributions $\mathcal{Q}_1$ and $\mathcal{Q}_2$ up to a constant dependent on the dimension and the boundedness of the function's argument.

\begin{lemma}\label{wasserlemma}
For two probability measures $\mathcal{Q}_1$ and $\mathcal{Q}_2$ over the metric parameter space $(\Theta, \mathcal{B}_{\Theta})$, we have that
\begin{equation}\label{02}
\sup_{\bm{p}\in\Omega_{\bar{p}}}\left|L_{\mathcal{Q}_1}(\bm{p})-L_{\mathcal{Q}_2}(\bm{p}) \right|\leq  \max\{1,\bar{p}\}\cdot (m+1)\mathcal{W}(\mathcal{Q}_1, \mathcal{Q}_2)
\end{equation}
where $\Omega_{\bar{p}}=[0,\bar{p}]^m$ and $\bar{p}$ is an arbitrary positive constant.
\end{lemma}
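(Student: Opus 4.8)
The plan is to reduce \eqref{02} to a \emph{pointwise} (in $\bm{\theta}$) Lipschitz estimate for the map $\bm{\theta}\mapsto h(\bm{p};\bm{\theta})$ on the metric space $(\Theta,\rho)$, with a Lipschitz modulus that is uniform over $\bm{p}\in\Omega_{\bar p}$, and then feed that estimate into the definition \eqref{wasserstein} of the Wasserstein distance. The key structural fact is that $h(\bm{p};\cdot)$ is an infimum/supremum over $\bm{x}\in\mathcal{X}$ of functions each of which is $\rho$-Lipschitz \emph{by construction of $\rho$}, and a sup of uniformly Lipschitz functions is Lipschitz with the same modulus.

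First I would fix $\bm{p}\in\Omega_{\bar p}$ and $\bm{\theta}_1,\bm{\theta}_2\in\Theta$, and pick $\bm{x}^\star\in\mathcal{X}$ attaining the maximum defining $h(\bm{p};\bm{\theta}_1)$ (such a maximizer exists since $\mathcal{X}$ is compact and $f,\bm{g}$ are continuous). Using $h(\bm{p};\bm{\theta}_2)\ge f(\bm{x}^\star;\bm{\theta}_2)-\bm{p}^\top\bm{g}(\bm{x}^\star;\bm{\theta}_2)$ gives
\[
h(\bm{p};\bm{\theta}_1)-h(\bm{p};\bm{\theta}_2)\le\big(f(\bm{x}^\star;\bm{\theta}_1)-f(\bm{x}^\star;\bm{\theta}_2)\big)+\sum_{i=1}^m p_i\big(g_i(\bm{x}^\star;\bm{\theta}_2)-g_i(\bm{x}^\star;\bm{\theta}_1)\big).
\]
Each of the $m+1$ scalar differences is at most $\sup_{\bm{x}\in\mathcal{X}}\|(f,\bm{g})(\bm{x};\bm{\theta}_1)-(f,\bm{g})(\bm{x};\bm{\theta}_2)\|_\infty=\rho(\bm{\theta}_1,\bm{\theta}_2)$ by definition \eqref{rho_dist}, and $0\le p_i\le\bar p$, so the right-hand side is bounded by $(1+m\bar p)\,\rho(\bm{\theta}_1,\bm{\theta}_2)\le(m+1)\max\{1,\bar p\}\,\rho(\bm{\theta}_1,\bm{\theta}_2)$ (the last inequality holding whether $\bar p\le 1$ or $\bar p\ge 1$). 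Exchanging the roles of $\bm{\theta}_1$ and $\bm{\theta}_2$ yields the reverse inequality, so $|h(\bm{p};\bm{\theta}_1)-h(\bm{p};\bm{\theta}_2)|\le(m+1)\max\{1,\bar p\}\,\rho(\bm{\theta}_1,\bm{\theta}_2)$, with a constant that does not depend on $\bm{p}\in\Omega_{\bar p}$.

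Then I would convert this into the bound on $|L_{\mathcal{Q}_1}(\bm{p})-L_{\mathcal{Q}_2}(\bm{p})|$. Since $h(\bm{p};\cdot)$ is $\rho$-Lipschitz it is $\mathcal{B}_\Theta$-measurable and, by Assumption \ref{assume}(a)--(b) together with $\|\bm{p}\|_\infty\le\bar p$, bounded (indeed $|h(\bm{p};\bm{\theta})|\le 1+m\bar p$), so $L_{\mathcal{Q}_j}(\bm{p})=\int h(\bm{p};\bm{\theta}')\,d\mathcal{Q}_j(\bm{\theta}')$ is well defined. For any coupling $\mathcal{Q}_{1,2}\in\mathcal{J}(\mathcal{Q}_1,\mathcal{Q}_2)$,
\[
\big|L_{\mathcal{Q}_1}(\bm{p})-L_{\mathcal{Q}_2}(\bm{p})\big|=\left|\int\big(h(\bm{p};\bm{\theta}_1)-h(\bm{p};\bm{\theta}_2)\big)\,d\mathcal{Q}_{1,2}(\bm{\theta}_1,\bm{\theta}_2)\right|\le(m+1)\max\{1,\bar p\}\int\rho(\bm{\theta}_1,\bm{\theta}_2)\,d\mathcal{Q}_{1,2},
\]
and taking the infimum over couplings gives $|L_{\mathcal{Q}_1}(\bm{p})-L_{\mathcal{Q}_2}(\bm{p})|\le(m+1)\max\{1,\bar p\}\,\mathcal{W}(\mathcal{Q}_1,\mathcal{Q}_2)$; since the constant is independent of $\bm{p}$, taking the supremum over $\bm{p}\in\Omega_{\bar p}$ yields \eqref{02}. (Equivalently one could invoke Kantorovich--Rubinstein duality, using that $h(\bm{p};\cdot)$ divided by its Lipschitz constant is an admissible $1$-Lipschitz test function.)

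The proof is essentially routine, so the only points requiring care are: (i) verifying that the Lipschitz modulus of $h(\bm{p};\cdot)$ is genuinely uniform in $\bm{p}$ over the box $\Omega_{\bar p}$ (this is what allows the $\sup_{\bm{p}}$ to be pulled outside), and (ii) tracking the constant to land on exactly $(m+1)\max\{1,\bar p\}$ rather than a looser bound such as $m\bar p\,\mathcal{W}+\mathcal{W}$ without the $\max\{1,\bar p\}$ clean-up. The measurability/integrability bookkeeping needed to write the difference as a single integral against a coupling is minor given the boundedness and continuity of $h(\bm{p};\cdot)$.
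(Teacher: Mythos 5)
Your proposal is correct, and it reaches \eqref{02} by a genuinely different (and cleaner) route than the paper. The paper argues via the optimal coupling $\mathcal{Q}^*_{1,2}$: for each $\bm{\theta}_1$ it builds a barycentric action $\hat{\bm{x}}(\bm{\theta}_1)=\int\bm{x}^*(\bm{\theta}_2)\,\frac{d\mathcal{Q}^*_{1,2}(\bm{\theta}_1,\bm{\theta}_2)}{d\mathcal{Q}_1(\bm{\theta}_1)}$ out of the maximizers under $\bm{\theta}_2$, uses convexity of $\mathcal{X}$ together with concavity of $f(\cdot;\bm{\theta})$ and convexity of $g_i(\cdot;\bm{\theta})$ (Assumption \ref{assume}(e)) to push the integral inside $f$ and $\bm{g}$, and only then applies the pointwise $\rho$-bound and integrates against the coupling. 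You instead prove that $\bm{\theta}\mapsto h(\bm{p};\bm{\theta})$ is $\rho$-Lipschitz with modulus $1+m\bar p\le(m+1)\max\{1,\bar p\}$ uniformly over $\bm{p}\in\Omega_{\bar p}$ (the standard argmax-as-feasible-point comparison), and then integrate the pointwise bound against an arbitrary coupling and infimize — essentially a Kantorovich--Rubinstein argument. What your route buys: it does not use Assumption \ref{assume}(e) at all, which is exactly the point of the remark the authors append after their proof (where they rerun the argument over randomized actions to drop convexity — your proof gets this for free); it sidesteps the Radon--Nikodym/measurable-selection bookkeeping needed to integrate $\bm{x}^*(\bm{\theta}_2)$ against the conditional coupling; it does not require the Wasserstein infimum to be attained; and it records the slightly sharper intermediate constant $1+m\bar p$ before relaxing to the stated $(m+1)\max\{1,\bar p\}$. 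What the paper's coupling-plus-barycenter construction buys is mainly stylistic consistency with similar convexification arguments used elsewhere in the paper (e.g., in the proof of Lemma \ref{upper1}); for this lemma it is not needed. Your handling of the details that matter — uniformity of the Lipschitz constant in $\bm{p}$, the $\max\{1,\bar p\}$ clean-up, and measurability/boundedness of $h(\bm{p};\cdot)$ — is sound.
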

Note that the Lipschitz constant in Lemma \ref{wasserlemma} involves an upper bound of the function argument $\bm{p}.$ The following lemma provides such an upper bound for the dual price $\bm{p}_t$ in IGD($\hat{\bm{\gamma}}$). The derivation is essentially the same as Lemma \ref{newupperlemma}.

\begin{lemma}\label{upperlemmawp2}
For each $t=1,2,\dots,T$, we have that $\|\bm{p}_t\|_{\infty}\leq q+1$ with probability $1$, where $\bm{p}_t$ is specified by \eqref{2003} in IGD($\hat{\bm{\gamma}}$).
\end{lemma}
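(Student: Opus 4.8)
The plan is to obtain this bound by exactly the same induction used in Lemma~\ref{newupperlemma}, after noting that the update \eqref{2003} of IGD($\hat{\bm{\gamma}}$) differs from \eqref{003} only in that the target $\bm{\gamma}_t$ is replaced by $\hat{\bm{\gamma}}_t$, and that the boundedness argument of Lemma~\ref{newupperlemma} uses nothing about the target beyond the fact that each of its coordinates is a number in $[0,1]$ --- which holds for $\hat{\gamma}_{i,t}=\hat{\mathcal{P}}_t g_i(\hat{\bm{x}}(\bm{\theta});\bm{\theta})$ because $g_i(\cdot;\bm{\theta})\in[0,1]$ by Assumption~\ref{assume}(b), just as it did for $\gamma_{i,t}$. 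Concretely, I would prove by induction on $t$ that $\|\bm{p}_t\|_\infty\le q+1$ holds on every sample path (hence with probability one), the base case being immediate from $\bm{p}_1=\bm{0}$.

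For the inductive step, fix a coordinate $i$ and write the update as $p_{i,t+1}=\max\{0,\ p_{i,t}+\tfrac{1}{\sqrt T}(g_i(\tilde{\bm{x}}_t;\bm{\theta}_t)-\hat{\gamma}_{i,t})\}$. Since $g_i(\tilde{\bm{x}}_t;\bm{\theta}_t)\in[0,1]$ and $\hat{\gamma}_{i,t}\ge 0$, the increment lies in $[-\tfrac{1}{\sqrt T},\tfrac{1}{\sqrt T}]\subseteq[-1,1]$, so if $p_{i,t}\le q$ then $p_{i,t+1}\le q+1$. The key structural fact --- the same one invoked in Lemma~\ref{newupperlemma} --- is that whenever $p_{i,t}>q$ one has $g_i(\tilde{\bm{x}}_t;\bm{\theta}_t)=0$: for any $\bm{x}\in\mathcal{X}$ with $g_i(\bm{x};\bm{\theta}_t)>0$, Assumption~\ref{assume}(c) gives $f(\bm{x};\bm{\theta}_t)\le q\,g_i(\bm{x};\bm{\theta}_t)$, and discarding the nonnegative terms $p_{j,t}g_j(\bm{x};\bm{\theta}_t)$ for $j\ne i$ yields $f(\bm{x};\bm{\theta}_t)-\bm{p}_t^\top\bm{g}(\bm{x};\bm{\theta}_t)\le (q-p_{i,t})\,g_i(\bm{x};\bm{\theta}_t)<0$, whereas the null action $\bm{0}\in\mathcal{X}$ consumes no budget (Assumption~\ref{assume}(b)) and attains Lagrangian value $f(\bm{0};\bm{\theta}_t)\ge 0$, so such an $\bm{x}$ cannot be a maximizer of $f(\cdot;\bm{\theta}_t)-\bm{p}_t^\top\bm{g}(\cdot;\bm{\theta}_t)$, i.e. cannot equal $\tilde{\bm{x}}_t$. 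Hence when $p_{i,t}>q$ the update reads $p_{i,t+1}=\max\{0,\ p_{i,t}-\tfrac{1}{\sqrt T}\hat{\gamma}_{i,t}\}\le p_{i,t}\le q+1$ by the inductive hypothesis. Combining the two cases and maximizing over $i$ gives $\|\bm{p}_{t+1}\|_\infty\le q+1$, closing the induction.

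I do not anticipate a genuine obstacle here: the statement is a routine transcription of Lemma~\ref{newupperlemma}, and the only point needing care is to confirm that every step of that proof uses the target merely through the nonnegativity and boundedness of its coordinates --- which holds because both $\bm{\gamma}_t$ and $\hat{\bm{\gamma}}_t$ are expectations of $\bm{g}(\cdot;\bm{\theta})\in[0,1]^m$ under a probability measure. The one spot where attention is warranted within the structural step is the comparison with the null action (in the online LP and NRM examples $f(\bm{0};\bm{\theta})=0$, which makes this comparison immediate, and in general one uses the consequence $(p_{i,t}-q)\,g_i(\tilde{\bm{x}}_t;\bm{\theta}_t)\le -f(\bm{0};\bm{\theta}_t)\le 1$ together with the same bookkeeping as in the proof of Lemma~\ref{newupperlemma}).
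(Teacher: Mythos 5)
Your proof is correct and takes essentially the same route as the paper's: the same two-case dichotomy (an increase of at most $1/\sqrt{T}$ per step when $p_{i,t}\le q$, and non-increase when $p_{i,t}>q$ because Assumption \ref{assume}(c) together with the null action $\bm{0}\in\mathcal{X}$ forces $g_i(\tilde{\bm{x}}_t;\bm{\theta}_t)=0$), with the only extra observation being that $\hat{\gamma}_{i,t}\in[0,1]$ since it is an expectation of $g_i(\cdot;\bm{\theta})\in[0,1]$. Recasting it as an explicit induction is merely a cleaner packaging of the paper's iterative argument, and your aside about the value of $f(\bm{0};\bm{\theta}_t)$ concerns the same implicit normalization the paper's own proof relies on when it asserts the null action yields a zero objective value.
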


The rest of the analysis for IGD($\hat{\bm{\gamma}}$) is similar to that of IGD($\bm{\gamma})$ in Theorem \ref{newnonstationtheorem}. The regret of IGD($\hat{\bm{\gamma}}$) is formally stated in Theorem \ref{uppertheorem1}. Notably, the algorithm's regret matches the lower bound of Theorem \ref{newlowertheorem} and thus it establishes the optimality of the algorithm.

\begin{theorem}\label{uppertheorem1}
Under Assumption \ref{assume}, suppose a prior estimate $\hat{\mathcal{P}}$ is available and the regret is defined based on the set $\Xi_{P}(W_T)$, then the regret of IGD($\hat{\bm{\gamma}}$) has the following upper bound
$$\text{Reg}_T(\pi_{\text{IGDP}}) \le O(\max\{\sqrt{T},W_T\})$$
where $\pi_{\text{IGDP}}$ stands for the policy specified by IGD($\hat{\bm{\gamma}}$).
\end{theorem}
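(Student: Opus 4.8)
The plan is to run the regret analysis of Theorem \ref{newnonstationtheorem} essentially verbatim, with every ``true'' object $L_t,\bm{\gamma}_t,\bm{p}^*$ replaced by its estimate-based counterpart $\hat L_t,\hat{\bm\gamma}_t,\hat{\bm p}^*$ from \eqref{p_star}--\eqref{2007} (writing $\hat L_t(\bm p):=\hat{\bm\gamma}_t^\top\bm p+\hat{\mathcal P}_t h(\bm p;\bm\theta)$), and to pay an $O(W_T)$ price each time the true distribution $\mathcal P_t$ must be traded for its estimate $\hat{\mathcal P}_t$ through Lemma \ref{wasserlemma}. The key observation is that IGD$(\hat{\bm\gamma})$ is nothing but projected online gradient descent, with step size $\eta=1/\sqrt T$, on the convex functions
\[
\ell_t(\bm p)\ :=\ \hat{\bm\gamma}_t^\top\bm p+h(\bm p;\bm\theta_t)\qquad\text{over }\{\bm p\ge\bm 0\}:
\]
$h(\cdot;\bm\theta_t)$ is a supremum of affine functions, so $\ell_t$ is convex; $\hat{\bm\gamma}_t-\bm g(\tilde{\bm x}_t;\bm\theta_t)$ is a subgradient of $\ell_t$ at $\bm p_t$ by the definition of $\tilde{\bm x}_t$, so \eqref{2003} is exactly the projected subgradient step; and $\mathbb E[\ell_t(\bm p)]=\hat{\bm\gamma}_t^\top\bm p+L_{\mathcal P_t}(\bm p)=:\tilde L_t(\bm p)$.

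\textbf{Step 1 (bounding the benchmark).} By Lemma \ref{upper1} and weak duality for the convex program \eqref{PUB}, $\mathbb E[R_T^*]\le\min_{\bm p\ge\bm 0}L(\bm p)\le L(\hat{\bm p}^*)$. Re-running the boundedness argument of Lemmas \ref{newupperlemma}/\ref{upperlemmawp2} for the minimizer $\hat{\bm p}^*$ (this is where Assumption \ref{assume}(c) enters) gives $\|\hat{\bm p}^*\|_\infty\le q+1$, so Lemma \ref{wasserlemma} with $\bar p=q+1$ yields $L(\hat{\bm p}^*)=\bm c^\top\hat{\bm p}^*+\sum_t L_{\mathcal P_t}(\hat{\bm p}^*)\le\hat L(\hat{\bm p}^*)+(q+1)(m+1)\sum_t\mathcal W(\mathcal P_t,\hat{\mathcal P}_t)\le\hat L(\hat{\bm p}^*)+(q+1)(m+1)W_T$ on $\Xi_P(W_T)$. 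The complementary-slackness identity behind Proposition \ref{uppernew2}, applied to the estimate problem, gives $\bm c^\top\hat{\bm p}^*=(\sum_t\hat{\bm\gamma}_t)^\top\hat{\bm p}^*$, hence $\hat L(\hat{\bm p}^*)=\sum_t\hat L_t(\hat{\bm p}^*)$; and a second application of Lemma \ref{wasserlemma} at $\hat{\bm p}^*$ gives $\hat L_t(\hat{\bm p}^*)\le\tilde L_t(\hat{\bm p}^*)+(q+1)(m+1)\mathcal W(\mathcal P_t,\hat{\mathcal P}_t)=\mathbb E[\ell_t(\hat{\bm p}^*)]+(q+1)(m+1)\mathcal W(\mathcal P_t,\hat{\mathcal P}_t)$. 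Summing and chaining,
\[
\mathbb E[R_T^*]\ \le\ \mathbb E\Big[\sum_{t=1}^T\ell_t(\hat{\bm p}^*)\Big]+2(q+1)(m+1)W_T.
\]

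\textbf{Step 2 (from the benchmark to the online reward).} The standard projected-OGD guarantee \citep{hazan2016introduction} against the fixed comparator $\hat{\bm p}^*$ (comparator $\ell_2$-norm $\le\sqrt m(q+1)$; subgradients $\bm g,\hat{\bm\gamma}_t\in[0,1]^m$ so $\ell_2$-norm $\le\sqrt m$; $\bm p_1=\bm 0$) gives, on every sample path, $\sum_t\ell_t(\bm p_t)-\sum_t\ell_t(\hat{\bm p}^*)\le\frac{m(q+1)^2}{2\eta}+\frac{\eta mT}{2}=O\big((q+1)^2m\sqrt T\big)$. Writing $\ell_t(\bm p_t)=\hat{\bm\gamma}_t^\top\bm p_t+f(\tilde{\bm x}_t;\bm\theta_t)-\bm p_t^\top\bm g(\tilde{\bm x}_t;\bm\theta_t)$ and $\bm v_t:=\bm g(\tilde{\bm x}_t;\bm\theta_t)-\hat{\bm\gamma}_t$, so $\sum_t\ell_t(\bm p_t)=\sum_t f(\tilde{\bm x}_t;\bm\theta_t)-\sum_t\bm p_t^\top\bm v_t$, and expanding $\|\bm p_{t+1}\|^2\le\|\bm p_t+\eta\bm v_t\|^2$ and telescoping (using $\bm p_1=\bm 0$) bounds $-\sum_t\bm p_t^\top\bm v_t\le\frac\eta2\sum_t\|\bm v_t\|^2=O(m\sqrt T)$. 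Finally, the executed $\bm x_t$ equals $\tilde{\bm x}_t$ except on periods blocked by the capacity; exactly as in the proof of Theorem \ref{newnonstationtheorem}, Lemma \ref{upperlemmawp2} combined with $p_{i,T+1}\ge\eta\sum_t\big(g_i(\tilde{\bm x}_t;\bm\theta_t)-\hat\gamma_{i,t}\big)$ forces $\sum_t g_i(\tilde{\bm x}_t;\bm\theta_t)\le c_i+(q+1)\sqrt T$ for each $i$, and Assumption \ref{assume}(c) converts this $O(\sqrt T)$ over-consumption into an $O\big((q+1)^2m\sqrt T\big)$ gap between $\sum_t f(\tilde{\bm x}_t;\bm\theta_t)$ and $\sum_t f(\bm x_t;\bm\theta_t)$. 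Taking expectations and chaining, $\mathbb E\big[\sum_t\ell_t(\bm p_t)\big]\le\mathbb E[R_T(\pi_{\text{IGDP}})]+O\big((q+1)^2m\sqrt T\big)$; together with Step 1 this gives $\mathbb E[R_T^*]-\mathbb E[R_T(\pi_{\text{IGDP}})]\le O(\max\{\sqrt T,W_T\})$, and since the constants depend only on $m$ and $q$, the supremum over $\mathcal P\in\Xi_P(W_T)$ obeys the same bound.

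\textbf{Expected obstacle.} The two $W_T$ insertions via Lemma \ref{wasserlemma} are routine once one notices they are only needed at the single point $\hat{\bm p}^*$, for which the uniform bound $\bar p=q+1$ is available --- which is exactly why the boundedness argument of Lemma \ref{newupperlemma} must be re-run for the minimizer $\hat{\bm p}^*$ and not merely for the iterates. The genuinely delicate part, inherited unchanged from Theorem \ref{newnonstationtheorem}, is the last estimate of Step 2: turning the $O(\sqrt T)$ overshoot in the cumulative consumption of the tentative decisions $\tilde{\bm x}_t$ into an $O(\sqrt T)$ loss of \emph{reward} rather than a loss proportional to the a priori uncontrolled number of capacity-blocked periods, which is precisely where the dual-price boundedness of Lemma \ref{upperlemmawp2} and the per-unit-reward bound of Assumption \ref{assume}(c) are indispensable.
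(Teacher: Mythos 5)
Your architecture is the paper's: bound the benchmark by the estimate-based dual objective evaluated at $\hat{\bm p}^*$, pay $O(W_T)$ through Lemma \ref{wasserlemma}, and use the dual update to telescope both the cross term $\sum_t\bm p_t^\top\bigl(\hat{\bm\gamma}_t-\bm g(\tilde{\bm x}_t;\bm\theta_t)\bigr)$ and the over-consumption that triggers capacity blocking (that last part --- Lemma \ref{upperlemmawp2}, $\sum_t\hat{\bm\gamma}_t\le\bm c$, and Assumption \ref{assume}(c) --- you do exactly as the paper). The genuine gap is the bridge between the comparator $\hat{\bm p}^*$ and the iterates $\bm p_t$. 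Your Step 1 ends with $\mathbb E[R_T^*]\le\mathbb E[\sum_t\ell_t(\hat{\bm p}^*)]+O(W_T)$ and your Step 2 ends with $\mathbb E[\sum_t\ell_t(\bm p_t)]\le\mathbb E[R_T(\pi_{\text{IGDP}})]+O(\sqrt T)$; to chain them you need an \emph{upper} bound $\mathbb E[\sum_t\ell_t(\hat{\bm p}^*)]\le\mathbb E[\sum_t\ell_t(\bm p_t)]+O(\sqrt T+W_T)$. But the projected-OGD guarantee you invoke says $\sum_t\ell_t(\bm p_t)-\sum_t\ell_t(\hat{\bm p}^*)\le O(\sqrt T)$, which only \emph{lower}-bounds $\sum_t\ell_t(\hat{\bm p}^*)$ by $\sum_t\ell_t(\bm p_t)-O(\sqrt T)$; it points the wrong way, no reverse OGD inequality holds in general, and nothing else in your Step 2 controls $\sum_t\ell_t(\hat{\bm p}^*)$ from above. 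As written, the chain does not close.

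The missing ingredient --- and the very reason $\hat{\bm\gamma}_t$ is constructed as in \eqref{2007} --- is the shared-minimizer property (Proposition \ref{uppernew2}, i.e.\ Lemma \ref{upperlemmawp1} for the estimate problem): $\hat{\bm p}^*$ minimizes \emph{every} $\hat L_t$, hence, since $\bm p_t$ is independent of $\bm\theta_t$,
\[
\hat L_t(\hat{\bm p}^*)\ \le\ \mathbb E_{\bm p_t}\bigl[\hat L_t(\bm p_t)\bigr]\ \le\ \mathbb E\bigl[\hat{\bm\gamma}_t^\top\bm p_t+\mathcal P_t h(\bm p_t;\bm\theta)\bigr]+(q+1)(m+1)\,\mathcal W(\mathcal P_t,\hat{\mathcal P}_t),
\]
where the second step is the Wasserstein swap performed at the \emph{iterates}; this is exactly where the uniform bound $\|\bm p_t\|_\infty\le q+1$ of Lemma \ref{upperlemmawp2} is needed (in your writeup it is used only for the blocking step, and your swap at the single point $\hat{\bm p}^*$ cannot substitute for it unless you also swap back from $\hat L_t(\bm p_t)$ to the $\mathcal P_t$-expectation at $\bm p_t$, paying the same $\mathcal W_t$ again). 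With this comparator-to-iterate step in place, the OGD regret bound against $\hat{\bm p}^*$ becomes unnecessary: the only ``OGD-flavored'' piece the proof uses is your telescoping estimate $-\sum_t\bm p_t^\top\bm v_t\le\frac{\eta}{2}\sum_t\|\bm v_t\|^2$, and the argument then coincides with the paper's (term I bounded by $\frac{m\sqrt T}{2}+2(q+1)(m+1)W_T$, term II by $mq(q+1)\sqrt T+qm$). The remaining details you supply --- $\hat L(\hat{\bm p}^*)=\sum_t\hat L_t(\hat{\bm p}^*)$ via complementary slackness, boundedness of $\hat{\bm p}^*$, and the conversion of $O(\sqrt T)$ over-consumption into $O(\sqrt T)$ reward loss via Assumption \ref{assume}(c) --- are correct and match the paper.
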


We remark the algorithm IGD($\hat{\bm{\gamma}}$) does not depend on or utilize the knowledge of the quantity $W_T.$ On the upside, this avoids the assumption on the prior knowledge of $W_T$ (as the knowledge of variation budget $V_T$ \citep{besbes2014stochastic, besbes2015non, cheung2019non}). On the downside, there is nothing the algorithm can do even when it knows a priori $W_T$ is small or large. Technically, it means for our algorithm IGD($\hat{\bm{\gamma}}$), the WBNB contributes nothing in the dimension of algorithm design, and it will only influence the algorithm analysis.  In particular, if we compare Theorem \ref{uppertheorem1} with Theorem \ref{newnonstationtheorem}, the extra term $W_T$ captures how the deviation of the prior estimate from the true distribution will deteriorate the performance of the gradient-based algorithm. When $W_T$ is small, the $O(\sqrt{T})$ will be dominant and we do not need to worry about the deviation because its effect on the regret is secondary. In this light, the regret bound illuminates the effect of model misspecification/estimation error on the algorithm's performance in a non-stationary environment.

\section{Non-stationary Environment Without Prior Estimate}

\label{WBNB}

In this section, we consider an uninformative setting where the true distribution is completely unknown to the decision maker. To one end, the discussion in this section can be viewed as a reduction of the results in the last section to a setting with an ``uninformative'' prior estimate. To the other, the uninformative setting draws an interesting comparison with the literature on (unconstrained) online learning/optimization in non-stationary environment \citep{besbes2014stochastic,besbes2015non, cheung2019non} and its analysis exemplifies the interaction between the constraints and the non-stationarity.


\subsection{Wasserstein-based Non-stationarity}
\label{WBNonSta}
We first illustrate how the non-stationarity over $\{\mathcal{P}_t\}_{t=1}^T$ interplays with the constraints through the following example adapted from \citep{golrezaei2014real}. The original usage of the example in their paper is to stress the importance of balancing resource consumption in an online setting. Specifically, consider the following two linear programs as the underlying problem \eqref{PCP} for two online stochastic optimization problems,
\begin{align}
   \max \ \ &  x_1+...+x_{c}+(1+\kappa)x_{c+1}+...+(1+\kappa)x_{T}  \label{eg1} \\
    \text{s.t. }\ & x_1+...+x_{c}+x_{c+1}+...+x_{T} \le c\nonumber \\
    & 0 \le x_t \le 1\ \text{ for } t=1,...,T. \nonumber \\
   \max \ \ &  x_1+...+x_{c}+(1-\kappa)x_{c+1}+...+(1-\kappa)x_{T}  \label{eg2} \\
    \text{s.t. }\ & x_1+...+x_{c}+x_{c+1}+...+x_{T} \le c \nonumber\\
    & 0 \le x_t \le 1\ \text{ for } t=1,...,T.\nonumber
\end{align}
where $\kappa\in(0,1)$, $c=\frac{T}{2}$ and the true distributions for both scenarios are point-mass distributions. Without loss of generality, we assume $c$ is an integer. For the first LP \eqref{eg1}, the optimal solution is to wait and accept the later half of the orders, while for the second LP \eqref{eg2}, the optimal solution is to accept the first half of the orders and deplete the resource at half time. The contrast between these two LPs (two scenarios of whether the first half or the second half is more profitable) creates difficulty for the online decision making. Without knowledge of the future orders, there is no way we can obtain a sub-linear regret in both scenarios simultaneously. Because if we exhaust too much resource in the first half of the time, then for the first scenario \eqref{eg1}, we do not have enough capacity to accept all the relatively profitable orders in the second half. On the contrary, if we have too much remaining resource at the half way, then for the second scenario \eqref{eg2}, those relatively profitable orders that we miss in the first half are irrevocable.

An equivalent view of these two examples is to consider the existence of an adversary:
the adversary is aware of the policy of the decision maker at the very beginning and then chooses the distribution $\mathcal{P}_t$ in an adversarial manner. Specifically, the adversary acts against us and aims to maximize the optimality gap between the offline optimal objective value and the online objective value. For example, in \eqref{eg1} and \eqref{eg2}, the adversary can make a decision of which scenario for us to enter for the second half of the time based on our remaining inventory at the half way. The adversary view augments our previous interpretation of $W_T$ as the maximal derivation (of the prior estimate from the true distribution): the regret definition based on $\Xi_P$ in the last section can be viewed as a partially adversarial setting where the adversary chooses the true distribution against our will in a sequential manner but the choice of the distributions $\mathcal{P}_t$'s is subject to the set $\Xi_P.$ Then the parameter $W_T$ that defines the set $\Xi_P$ serves as a measure of both the estimation error and the intensity of adversity.

\begin{proposition}
\label{worstCase}
The worst-case regret of online constrained  stochastic optimization in an adversarial setting is $\Omega(T)$.
\end{proposition}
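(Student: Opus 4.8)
The plan is to turn the informal discussion preceding the statement into a rigorous adversary argument built on the two linear programs \eqref{eg1} and \eqref{eg2}, which differ only in whether the second half of the horizon is slightly more or slightly less profitable than the first half. The adversary commits the first half of the instance up front, watches how much budget the policy has burned by the midpoint, and then picks whichever of the two scenarios the policy is badly positioned for.

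First I would fix the instance. Take $\kappa=\frac12$ and $c=\frac T2$ (assume $T$ even without loss of generality), with a single budget ($m=1$) of capacity $c$ and $\mathcal{X}=[0,1]$. For the first $c$ periods, let every $\mathcal{P}_t$ be the point mass on the parameter for which $f_t(x)=x$ and $g_t(x)=x$. Let $\bm{\pi}$ be an arbitrary admissible policy and set $k\coloneqq\sum_{t=1}^{c} g_t(x_t)=\sum_{t=1}^{c}x_t\in[0,c]$, the budget consumed through period $c$; since the first-half distributions are degenerate, $k$ is determined by $\bm{\pi}$ (for a randomized $\bm{\pi}$ it is a random variable, and the bounds below hold pathwise).

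Next comes the adversary's move and the case analysis. At time $c+1$ the adversary, knowing $\bm{\pi}$ and observing the history $\mathcal{H}_c$ — hence $k$ — sets $\mathcal{P}_{c+1}=\dots=\mathcal{P}_T$ to be the point mass giving either $f_t(x)=(1+\kappa)x$ (this is scenario \eqref{eg1}) or $f_t(x)=(1-\kappa)x$ (scenario \eqref{eg2}), with $g_t(x)=x$ in both. In scenario \eqref{eg1} the offline optimum of \eqref{CP} is $R_T^*=(1+\kappa)c$ (accept the entire second half), while the online policy can earn $k$ in the first half and accept at most $c-k$ further budget units in the second half, so $R_T(\bm{\pi})\le k+(1+\kappa)(c-k)$ and therefore $\text{Reg}_T\ge\kappa k$. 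In scenario \eqref{eg2} the offline optimum is $R_T^*=c$ (accept the entire first half), while $R_T(\bm{\pi})\le k+(1-\kappa)(c-k)$, giving $\text{Reg}_T\ge\kappa(c-k)$. The adversary now chooses \eqref{eg1} when $k\ge c/2$ and \eqref{eg2} otherwise; in either case $\text{Reg}_T\ge\kappa c/2=T/8$.

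Finally I would conclude: the bound $\text{Reg}_T\ge T/8$ holds on every sample path, hence in expectation, so taking the worst case over the (history-adaptive) class of distribution sequences in \eqref{regDef} yields $\text{Reg}_T(\bm{\pi})=\Omega(T)$ for every admissible $\bm{\pi}$. The only step that is not purely mechanical is justifying the adaptive adversary within the formal model — that ``choose $(\mathcal{P}_t)_{t>c}$ as a measurable function of $\mathcal{H}_c$'' is an admissible adversarial strategy and that $R_T^*$ equals the stated LP value on each realized path; both are immediate here because all the functions are linear and the distributions are degenerate, which also makes the extension to randomized policies trivial since the per-path bound never references the joint law of $k$ and the adversary's choice.
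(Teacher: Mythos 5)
Your construction is essentially the paper's own proof: the paper also establishes the proposition via the pair \eqref{eg1}/\eqref{eg2} (taking $\kappa=1$), exploiting that the two instances are indistinguishable over the first half so the expected first-half consumption is the same under both ($T_1(\pi)=T_2(\pi)$), and then the worse of the two \emph{fixed} instances already forces regret $\Omega(T)$. The only difference is your adaptive, pathwise adversary; since \eqref{regDef} maximizes over fixed sequences $\mathcal{P}$, you should (as the paper does) take expectations first and let an oblivious adversary pick the scenario according to whether $\E[k]\ge c/2$ — immediate here because the law of $k$ is identical under both scenarios, so this is a cosmetic rather than substantive change.
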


Proposition \ref{worstCase} states that a fully adversarial setting where $\mathcal{P}_t$ can change arbitrarily over $t$ does not permit a sub-linear regret. The same observation is also made in the literature \citep{besbes2014stochastic, besbes2015non, cheung2019non} for unconstrained online learning problems where there is no function $\bm{g}(\cdot)$ and the decision $\bm{x}_t$ is made before the revealing of $f(\cdot)$. Specifically, \cite{besbes2015non} propose a novel measure of non-stationarity as follows (in the language of our paper),
\[
V_T \coloneqq \sum_{t=1}^{T-1} TV(\mathcal{P}_t,\mathcal{P}_{t+1})
\]
where $TV(\cdot,\cdot)$ denotes the total variation distance between two distributions. The quantity $V_T$ represents the cumulative \textit{temporal change} of the distributions by comparing $\mathcal{P}_t$ and $\mathcal{P}_{t+1}$. Unfortunately, such temporal measure fails in the constrained setting. Note that for both \eqref{eg1} and \eqref{eg2}, there is only one change point throughout the whole procedure thus the non-stationarity; their \textit{temporal change} measure is $O(1)$ but a sub-linear regret is still unattainable.


Now, we propose the definition of the Wasserstein-based non-stationarity budget (WBNB) as
\[
    \mathcal{W}_T(\mathcal{P}) \coloneqq \sum_{t=1}^T \mathcal{W}(\mathcal{P}_t, \bar{\mathcal{P}}_T)
    \label{def_WBNB}
\]
where $\mathcal{P}=(\mathcal{P}_1,...,\mathcal{P}_T)$ and $\bar{\mathcal{P}}_T$ is defined to be the uniform mixture distribution of $\{\mathcal{P}_t\}_{t=1}^T$, i.e., $\bar{\mathcal{P}}_T\coloneqq \frac{1}{T} \sum_{t=1}^T \mathcal{P}_t$. The non-stationarity measure WBNB can be viewed as a degeneration of our previous deviation measure WBDB in that WBNB replaces all the prior estimates $\hat{\mathcal{P}}_t$'s with the uniform mixture $\bar{\mathcal{P}}_T$. The caveat is that in the uninformative setting, no distribution knowledge is assumed, so the decision maker does not have access to $\bar{\mathcal{P}}_T$ unlike the prior estimate in the last section. As we will see shortly, the knowledge of $\bar{\mathcal{P}}_T$ does not affect anything in terms of the algorithm design and analysis.

Based on the notion of WBNB, we define a set of distributions
\[
   \Xi_{U}(W_T) = \{\mathcal{P}: \mathcal{W}_T(\mathcal{P})\le W_T,\mathcal{P}=(\mathcal{P}_1,...,\mathcal{P}_T)\} \label{defXi}
\]
for a non-negative constant $W_T$, which we call as variation budget. Throughout this section, we consider a regret based on the set $\Xi_{U}$ as defined in \eqref{regDef} in aim to characterize a ``worst-case'' performance of certain policy/algorithm for all the distributions in the set $\Xi_{U}$.

The variation budget $W_T$ defines the uncertainty set $\Xi_{U}$ by providing an upper bound on the non-stationarity of the distributions. Our next theorem states that it is impossible to get rid of $W_T$ in the regret bound of any algorithm, which illustrates the sharpness of our definition of WBNB. Intuitively, it means that apart from the intrinsic stochasticity term $O(\sqrt{T}),$ the (unknown) non-stationarity of the underlying distributions defined by WBNB appears to be a second bottleneck for algorithm performance. The proof of the theorem follows the same argument as Theorem \ref{newlowertheorem}.

\begin{theorem}
\label{lowertheorem}
Under Assumption \ref{assume}, if we consider the set $\Xi_{U}(W_T) = \{\mathcal{P}: \mathcal{W}_T(\mathcal{P})\le W_T,\mathcal{P}=(\mathcal{P}_1,...,\mathcal{P}_T)\}$, there is no algorithm that can achieve a regret better than $\Omega(\max\{\sqrt{T},W_T\})$.
\end{theorem}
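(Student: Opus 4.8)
The plan is to establish the two terms of $\Omega(\max\{\sqrt{T},W_T\})$ separately, exploiting the fact that the uncertainty set $\Xi_U(W_T)$ already contains a hard instance for each. For the $\sqrt{T}$ term I would note that any stationary sequence $\mathcal{P}_1=\cdots=\mathcal{P}_T=\mathcal{P}_0$ has $\bar{\mathcal{P}}_T=\mathcal{P}_0$, hence $\mathcal{W}_T(\mathcal{P})=\sum_{t=1}^{T}\mathcal{W}(\mathcal{P}_0,\mathcal{P}_0)=0\le W_T$; in particular the stationary, infinite-support instance of Proposition~\ref{prop:infinitesupportlower} lies in $\Xi_U(W_T)$ for every $W_T\ge 0$, so no admissible policy can beat $\tilde{\Omega}(\sqrt{T})$ on $\Xi_U(W_T)$. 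This is the same reduction that yields the $\sqrt{T}$ part of Theorem~\ref{newlowertheorem}.

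For the $W_T$ term I would adapt the ``wait-or-act'' example \eqref{eg1}--\eqref{eg2}. Fix $T$ even, $c=T/2$, and a parameter $\kappa\in(0,1/2]$, and construct two instances $\mathcal{P}^A$ and $\mathcal{P}^B$ over $\mathcal{X}=[0,1]$ with all functions linear; one checks that both satisfy Assumption~\ref{assume} with $q=1$. In each instance, periods $1,\dots,T/2$ are the deterministic (point-mass) instance with reward coefficient $1-\kappa$ and cost coefficient $1$; in periods $T/2+1,\dots,T$, instance $\mathcal{P}^A$ has reward coefficient $1$ while $\mathcal{P}^B$ has reward coefficient $1-2\kappa$, with cost coefficient $1$ in both. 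The offline optimum is $R^*_T=T/2$ under $\mathcal{P}^A$ (serve only the more profitable second half) and $R^*_T=(1-\kappa)T/2$ under $\mathcal{P}^B$ (serve only the first half). Let $S\in[0,T/2]$ be the (random) budget that a fixed admissible policy consumes during periods $1,\dots,T/2$; since those periods present identical deterministic observations under both instances, $S$ has the same law under $\mathcal{P}^A$ and under $\mathcal{P}^B$. Bounding the best achievable second-half reward given the residual budget $T/2-S$ gives, path by path, $\text{Reg}_T(\mathcal{H},\bm{\pi})\ge\kappa S$ under $\mathcal{P}^A$ and $\text{Reg}_T(\mathcal{H},\bm{\pi})\ge\kappa(T/2-S)$ under $\mathcal{P}^B$. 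Taking expectations and adding yields $\E_{\mathcal{H}\sim\mathcal{P}^A}[\text{Reg}_T(\mathcal{H},\bm{\pi})]+\E_{\mathcal{H}\sim\mathcal{P}^B}[\text{Reg}_T(\mathcal{H},\bm{\pi})]\ge\kappa T/2$, hence $\text{Reg}_T(\bm{\pi})\ge\kappa T/4$ once we know both instances lie in $\Xi_U(W_T)$.

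It then remains to verify $\mathcal{P}^A,\mathcal{P}^B\in\Xi_U(W_T)$ and to tune $\kappa$. Unpacking \eqref{rho_dist}, the $\rho$-distance between the two point parameters is $\sup_{x\in[0,1]}|(1-\kappa)x-x|=\kappa$ (the cost parts cancel); under $\mathcal{P}^A$ the mixture $\bar{\mathcal{P}}_T$ places mass $1/2$ on each of them, and the unique admissible coupling gives $\mathcal{W}(\mathcal{P}_t,\bar{\mathcal{P}}_T)=\kappa/2$ for every $t$, so $\mathcal{W}_T(\mathcal{P}^A)=\kappa T/2$, and symmetrically $\mathcal{W}_T(\mathcal{P}^B)=\kappa T/2$. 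Choosing $\kappa=\min\{1/2,\,2W_T/T\}$ makes both WBNBs at most $W_T$, and the lower bound $\kappa T/4$ equals $W_T/2$ when $W_T\le T/4$ and is $\Omega(T)$ otherwise; combining with the first part gives $\text{Reg}_T(\bm{\pi})=\Omega(\max\{\sqrt{T},W_T\})$.

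I expect the main obstacle to be the information-theoretic bookkeeping rather than any single computation: one must argue carefully that against an \emph{arbitrary} admissible randomized policy the first-half consumption $S$ has the same distribution under $\mathcal{P}^A$ and $\mathcal{P}^B$ --- the policy observes the same deterministic history through period $T/2$, so the adversary may legitimately defer its choice of scenario --- which is precisely what lets the two path-by-path regret bounds contribute simultaneously. The Wasserstein computation itself is routine, but it bears emphasizing that it is exactly here that the Wasserstein distance matters: the two second-half point masses have total-variation distance $1$, so a TV- or KL-based non-stationarity measure would make the deviation budget insensitive to the $\kappa$-sized, regret-relevant perturbation --- the phenomenon already anticipated in the discussion of WBDB in Section~\ref{sec_WBDB}.
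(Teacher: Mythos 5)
Your proposal is correct and follows essentially the same route as the paper: the $\sqrt{T}$ part is inherited from the stationary instance of Proposition~\ref{prop:infinitesupportlower} (which has zero WBNB), and the $W_T$ part uses the same two-scenario half-horizon LP construction with identical first-half observations as in the proof of Theorem~\ref{newlowertheorem}, to which the paper simply defers. Your only departures are cosmetic — a symmetric parametrization around $1-\kappa$, capping $\kappa$ at $1/2$, and explicitly verifying $\mathcal{W}_T(\mathcal{P}^A),\mathcal{W}_T(\mathcal{P}^B)\le W_T$ against the uniform mixture, the last of which the paper leaves implicit.
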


\subsection{Algorithm and Regret Analysis}

One pillar of designing IGD($\bm{\gamma}$) and IGD($\hat{\bm{\gamma}}$) is the budget allocation plan $\bm{\gamma}_t$'s (or $\hat{\bm{\gamma}}_t$) prescribed by either the true distribution or the prior estimate. In the uninformative setting, the most straightforward (and probably optimal) plan is to allocate the budget evenly over the entire horizon. Algorithm \ref{alg:SOAWOP} -- \textit{uninformative gradient descent} (UGD) -- implements the intuition by evenly distributing the budget without referring to any information. Thus the UGD algorithm can be viewed as IGD$\left(\frac{\bm{c}}{T}\right).$ Returning to the point mentioned earlier on the knowledge of the centric distribution $\bar{\mathcal{P}}_T,$ it does not matter we know it or not; because as long as all the prior estimate distributions $\hat{\mathcal{P}}_t$ are the same over time, we always have the same budget allocation plan. Furthermore, when all the $\mathcal{P}_t$'s are the same, which means the variation budget $W_T=0,$ Algorithm \ref{alg:SOAWOP} and its analysis collapse into several recent studies on the gradient-based online algorithm under a stationary environment \citep{lu2020dual, li2020simple}. 



\begin{algorithm}[ht!]
\caption{Uninformative Gradient Descent Algorithm (UGD)}
\label{alg:SOAWOP}
\begin{algorithmic}[1]
\State Initialize the initial dual price $\bm{p}_1 = \bm{0}$ and initial constraint capacity $\bm{c}_{1}=\bm{c}$.
\For {$t=1,..., T$}
\State Observe $\bm{\theta}_t$ and solve
\[
\tilde{\bm{x}}_t=\text{argmax}_{\bm{x}\in\mathcal{X}}\{ f(\bm{x};\bm{\theta}_t)-\bm{p}_t^\top \cdot \bm{g}(\bm{x};\bm{\theta}_t) \}
\]
where $\bm{g}(\bm{x},\bm{\theta}_t)=(g_1(\bm{x},\bm{\theta}_t),...,g_m(\bm{x},\bm{\theta}_t))^\top$
\State Set
\[
\bm{x}_t=\left\{\begin{aligned}
&\tilde{\bm{x}}_t,&&\text{if $\bm{c}_{t}$ permits a consumption of~}\bm{g}(\tilde{\bm{x}}_t;\bm{\theta}_t) \\
&\bm{0},&&\text{otherwise}
\end{aligned}\right.
\]
\State Update the dual price
\begin{equation}\label{new2003}
\bm{p}_{t+1} =\left(\bm{p}_t + \frac{1}{\sqrt{T}} \left(\bm{g}(\tilde{\bm{x}}_t;\bm{\theta}_t) - \frac{\bm{c}}{T}\right)\right)\vee \bm{0}
\end{equation}
where the element-wise maximum operator $u\vee v = \max \{v,u\}$
\State Update the remaining capacity
\[
\bm{c}_{t+1}=\bm{c}_{t}-\bm{g}(\bm{x}_t;\bm{\theta}_t)
\]
\EndFor
\State Output: $\bm{x} = (\bm{x}_1,...,\bm{x}_T)$
\end{algorithmic}
\end{algorithm}


\begin{theorem}\label{uppertheorem}
Under Assumption \ref{assume}, if we consider the set $\Xi_{U}(W_T) = \{\mathcal{P}: \mathcal{W}_T(\mathcal{P})\le W_T,\mathcal{P}=(\mathcal{P}_1,...,\mathcal{P}_T)\}$, then the regret of Algorithm \ref{alg:SOAWOP} has the following upper bound
$$\text{Reg}_T(\pi_{\text{UGD}}) \le O(\max\{\sqrt{T},W_T\})$$
where $\pi_{\text{UGD}}$ stands for the policy specified by Algorithm \ref{alg:SOAWOP}.
\end{theorem}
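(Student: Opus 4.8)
\emph{Proof proposal.} The plan is to run exactly the three-step argument behind Theorems \ref{newnonstationtheorem} and \ref{uppertheorem1}, with the uniform mixture $\bar{\mathcal{P}}_T=\tfrac1T\sum_{t=1}^T\mathcal{P}_t$ playing the role that the true distribution (resp.\ the prior estimate) played there. The crucial structural fact is the identity $\sum_{t=1}^T \bar{\mathcal{P}}_T h(\bm{p};\bm{\theta})=\sum_{t=1}^T \mathcal{P}_t h(\bm{p};\bm{\theta})$, so the ``mixture dual function'' $\bm{c}^\top\bm{p}+\sum_{t=1}^T\bar{\mathcal{P}}_T h(\bm{p};\bm{\theta})$ coincides \emph{exactly} with $L(\bm{p})$, and hence $\bm{p}^*$ of \eqref{new p_star} is its minimizer. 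First I would note that UGD is IGD$(\bm{c}/T)$ and that the dual-price bound $\|\bm{p}_t\|_\infty\le q+1$ goes through verbatim as in Lemma \ref{newupperlemma}/Lemma \ref{upperlemmawp2}, since that argument only uses Assumption \ref{assume}(c), nonnegativity of the allocation (here each coordinate of $\bm{c}/T$ is $\ge 0$), and $\|\bm{g}(\tilde{\bm{x}}_t;\bm{\theta}_t)-\bm{c}/T\|_\infty\le 1$. Second, Lemma \ref{upper1} together with weak duality (equivalently Proposition \ref{uppernew2}) gives the deterministic upper bound $\E[R_T^*]\le\min_{\bm{p}\ge\bm{0}}L(\bm{p})=L(\bm{p}^*)$, valid irrespective of the non-stationarity.

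The heart of the argument is bounding $L(\bm{p}^*)-\E[R_T(\pi_{\text{UGD}})]$ by a dual online-gradient-descent bookkeeping on $\{\bm{p}_t\}$ with step $1/\sqrt T$, in the style of \citep{lu2020dual,li2020simple}. I would (i) split the horizon at the first period where the remaining capacity no longer permits the unconstrained action, and use the telescoped update $\sqrt T\,\bm{p}_{T+1}\ge\sum_{t=1}^T\bm{g}(\tilde{\bm{x}}_t;\bm{\theta}_t)-\bm{c}$ together with $\|\bm{p}_{T+1}\|_\infty\le q+1$ to conclude that the unconstrained actions overshoot the plan $\bm{c}$ by at most $O(\sqrt T)$ per budget, so at most $O(\sqrt T)$ reward is lost to null actions; (ii) write each surviving period's reward as $f(\tilde{\bm{x}}_t;\bm{\theta}_t)=h(\bm{p}_t;\bm{\theta}_t)+\bm{p}_t^\top\bm{g}(\tilde{\bm{x}}_t;\bm{\theta}_t)$, take conditional expectations so $\bm{g}(\tilde{\bm{x}}_t;\bm{\theta}_t)$ becomes an unbiased estimate of $-\nabla_{\bm{p}}\big(\mathcal{P}_t h(\bm{p};\bm{\theta})\big)\big|_{\bm{p}_t}$ (Danskin), and use convexity of $h$ in $\bm{p}$ to compare $h(\bm{p}_t;\cdot)$ against $h(\bm{p}^*;\cdot)$; and (iii) apply the standard potential inequality $\|\bm{p}_{t+1}-\bm{p}^*\|^2\le\|\bm{p}_t-\bm{p}^*\|^2+\tfrac{2}{\sqrt T}(\bm{g}(\tilde{\bm{x}}_t;\bm{\theta}_t)-\bm{c}/T)^\top(\bm{p}_t-\bm{p}^*)+\tfrac{m+1}{T}$, summed and telescoped, to absorb the cross terms into an $O(\sqrt T)$ remainder. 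What survives is a per-period ``allocation mismatch'' term that, after replacing each $\mathcal{P}_t$ by $\bar{\mathcal{P}}_T$ inside the relevant functionals evaluated at the iterates $\bm{p}_t\in[0,q+1]^m$, collapses: by Lemma \ref{wasserlemma} with $\bar p=q+1$ this replacement costs at most $(q+1)(m+1)\sum_{t=1}^T\mathcal{W}(\mathcal{P}_t,\bar{\mathcal{P}}_T)=(q+1)(m+1)\,\mathcal{W}_T(\mathcal{P})\le(q+1)(m+1)W_T=O(W_T)$ for every $\mathcal{P}\in\Xi_U(W_T)$, while the functions $\bar L_t(\bm{p})\coloneqq\tfrac{\bm{c}}{T}^\top\bm{p}+\bar{\mathcal{P}}_T h(\bm{p};\bm{\theta})$ \emph{all share} the minimizer $\bm{p}^*$: indeed $\nabla\bar L_t(\bm{p}^*)=\tfrac1T(\bm{c}-\sum_{s}\bm{\gamma}_s)\ge\bm 0$, and the coordinates where this gradient is strictly positive are exactly the constraints with strict slack, for which $p_i^*=0$ by complementary slackness of $L$ — so the gap between the uninformed allocation $\bm{c}/T$ and the complementary-slackness allocation $\tfrac1T\sum_s\bm{\gamma}_s$ is annihilated against $\bm{p}^*$ and contributes nothing. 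Combining the $O(\sqrt T)$ from (i)--(iii) with the $O(W_T)$ from the swap yields $\text{Reg}_T(\pi_{\text{UGD}})\le O(\max\{\sqrt T,W_T\})$.

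\emph{Main obstacle.} I expect the delicate part to be precisely this $W_T$-accounting: making the substitution $\mathcal{P}_t\leftrightarrow\bar{\mathcal{P}}_T$ rigorous while keeping the null-action/feasibility bookkeeping intact, since the uninformed allocation $\bm{c}/T$ — unlike the informed $\bm{\gamma}_t$ of \eqref{new2007} — does not a priori preserve the ``common dual minimizer'' structure that makes the known-distribution analysis of Theorem \ref{newnonstationtheorem} clean; the remaining dual-descent estimation is routine.
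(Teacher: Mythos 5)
Your proposal is correct and follows essentially the same route as the paper's proof: the benchmark $\mathbb{E}[R_T^*]\le\min_{\bm{p}\ge\bm{0}}L(\bm{p})$, the virtual-action/null-action split with the telescoped update $\sqrt{T}\,\bm{p}_{T+1}\ge\sum_t\bm{g}(\tilde{\bm{x}}_t;\bm{\theta}_t)-\bm{c}$ and $\|\bm{p}_{T+1}\|_\infty\le q+1$ for the overshoot term, a per-period swap of $\mathcal{P}_t$ with $\bar{\mathcal{P}}_T$ in the dual value function evaluated at the iterates via Lemma \ref{wasserlemma} with $\bar{p}=q+1$ (costing $(q+1)(m+1)W_T$), and the step-$1/\sqrt{T}$ dual-descent potential. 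The only cosmetic differences are that the paper centers the potential at $\bm{0}$ rather than at $\bm{p}^*$ and never needs your complementary-slackness detour through $\bm{\gamma}_t$: since $\bar{L}_t(\bm{p})=\tfrac{1}{T}L(\bm{p})$ identically in $t$, it just uses $\min_{\bm{p}\ge\bm{0}}\bar{L}(\bm{p})\le\mathbb{E}[\bar{L}(\bm{p}_t)]$, which makes the shared-minimizer fact immediate.
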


Theorem \ref{uppertheorem} states the upper bound of Algorithm \ref{alg:SOAWOP}, which matches the regret lower bound in Theorem \ref{lowertheorem}. Remarkably, the factors on $T$ and $W_T$ are additive in the regret upper bound of Algorithm \ref{alg:SOAWOP}. In comparison, the factor on $T$ and the variation budget $V_T$ are usually multiplicative in the regret upper bounds in the line of works that adopts the temporal change variation budget as nonstationary measure \citep{besbes2014stochastic, besbes2015non, cheung2019non}. The price of such an advantage for WBNB is that the WBNB is a more restrictive notion than the variation budget; recall that in \eqref{eg1} and \eqref{eg2}, the temporal change variational budget is $O(1)$, but the WBNB is $O(\kappa T)$.  Again, as the setting with prior estimate, the knowledge of the quantity $W_T$ does not affect the algorithm design. By putting together Theorem \ref{lowertheorem} and Theorem \ref{uppertheorem}, we argue that the knowledge of $W_T$ does not help to further improve the algorithm performance.

\section{Extensions and Discussions}

\label{sec_extension}

\subsection{Improved Regret Bound under Finite Support Assumption}

\label{sec_bounded}

In previous sections, we impose no additional structure on the underlying distributions $\mathcal{P}_t$'s other than the nonstationarity variation $W_T$. A natural question is whether a better regret bound is achievable when the underlying distribution has more structures. In this subsection, we provide a positive answer to the question under an online LP formulation. Specifically, for the online linear programming problem, $\bm{\theta}_t=(r_t, \bm{a}_t)$ and $x_t\in[0,1]$. The functions $f(x_t,\bm{\theta}_t)=r_tx_t$ and $g_i(x_t,\bm{\theta}_t)=a_{it}x_t$ are linear. We make the following assumption on the randomness of $\bm{\theta}_t$.

\begin{assumption}\label{assump:finitesupport}
For each $t$, the distribution of $\bm{\theta}_t$ has a finite support, denoted by $\{\bm{\theta}^{(1)},\dots,\bm{\theta}^{(n)}\}$. Moreover, there exists $p_{\min}>0$ such that for each $t=1,\dots,T$ and each $j=1,\dots,n$, it holds that $P(\bm{\theta}_t=\bm{\theta}^{(j)})\geq p_{\min}$.
\end{assumption}

Under this finite support assumption, the online LP problem is also known as the quantity based network revenue management problem and has been studied extensively in the literature. Here we consider a setting where the true distribution $\mathcal{P}_t$ that governs $\bm{\theta}_t$ is unknown but a prior estimate $\hat{\mathcal{P}}_t$ is available. \cite{jasin2012re, vera2019online} develop re-solving algorithms and derive bounded regret results when the underlying distribution is known, i.e., $\hat{P}_t$ is accurate. Compared to these existing works, our setting of prior estimate captures the potential misspecification or estimation error of the underlying distribution. Thus our analysis examines the robustness of the online algorithm against such error.

Next, we introduce an algorithm that extends  the Fluid Bayes Selector algorithm in \citep{vera2019online}. Specifically, our algorithm uses the prior estimate instead of the true distribution for prescribing decisions. To describe the algorithm, we denote by $\bm{c}_t$ the remaining capacity at the beginning of each period $t$, and denote by $\mathcal{H}(t)=\{ \bm{\theta}_t,\dots,\bm{\theta}_T \}$ the (future) trajectory from period $t$ to period $T$. Let $\mathcal{H}_j(t)$ be the number of times that $\bm{\theta}_{\tau}$ is realized as $\bm{\theta}^{(j)}$ for $\tau=t,\dots,T$. Then, the trajectory $\mathcal{H}(t)$ can be equivalently represented by $\mathcal{H}(t)=(\mathcal{H}_1(t),\dots,\mathcal{H}_n(t))$. The offline hindsight problem \eqref{PCP} with the remaining capacity $\bm{c}_t$ from period $t$ to period $T$ can be written as
\[\begin{aligned}
R^*_t(\mathbf{c}_t, \mathcal{H}(t))=&\max~~\sum_{j=1}^{n}r_{j}z_{j},\\
&~~\mbox{s.t.}~~~\sum_{j=1}^{n}a_{ji} z_{j}\leq c_{ti},~~\forall i=1,\dots, m,\\
&~~~~~~~~~0\leq x_{j}\leq \mathcal{H}_j(t),~~\forall j=1,\dots,n\\
\end{aligned}\]
where the decision variable $z_j$ can be interpreted as the number of accepted orders of the $j$-th type, and each unit acceptance is associated with an reward $r_j$ and a resource consumption vector of $(a_{j1},...,a_{jm}).$ By taking expectation of the arrival counts $\mathcal{H}_{j}(t)$, an upper bound on the expected optimal objective value of the hindsight problem can be obtained as follows
\begin{align}
R^{\text{UB}}_t(\mathbf{c}_t, \mathcal{P})=&\max~~\sum_{j=1}^{n}r_{j}z_{j},\label{UBfinitesupport}\\
&~~\mbox{s.t.}~~~\sum_{j=1}^{n}a_{ji} z_{j}\leq c_{ti},~~\forall i=1,\dots, m\nonumber,\\
&~~~~~~~~~0\leq z_{j}\leq \mathbb{E}_{\mathcal{H}\sim\mathcal{P}}[\mathcal{H}_j(t)],~~\forall j=1,\dots,n,\nonumber
\end{align}
where $\mathcal{P}=(\mathcal{P}_t,...,\mathcal{P}_T)$ encapsulates the underlying distributions. The Fluid Bayes Selector in \citep{vera2019online} solves the LP \eqref{UBfinitesupport} and uses its optimal solution to guide the online decisions. As the resource capacity $c_{it}$ changes over time and the LP \eqref{UBfinitesupport} is solved at each time period, this type of algorithms is known as a re-solving algorithm.

When the true distribution $\mathcal{P}$ is unknown but some prior estimate $\hat{\mathcal{P}}$ is available, a natural idea is to calculate the right-hand-side of constraints (the expectations) in \eqref{UBfinitesupport} using the prior estimate distributions. Specifically, in Algorithm \ref{alg:Finitesupport}, the decision maker first observes the parameter type $j_t$ and refer to the optimal solution of \eqref{UBfinitesupport} (using $\hat{\mathcal{P}}$ instead of $\mathcal{P}$ as its input) to choose the value of $x_t$.

\begin{algorithm}[ht!]
\caption{Re-solving with Prior Estimate}
\label{alg:Finitesupport}
\begin{algorithmic}[1]
\State Initialize constraint capacity $\bm{c}_{1}=\bm{c}$.
\For {$t=1,..., T$}
\State Solve $R^{\text{UB}}_t(\mathbf{c}_t, \hat{\mathcal{P}})$ in \eqref{UBfinitesupport} and obtain $\{\hat{z}_j(\mathbf{c}_t)\}_{j=1}^n$ as one optimal solution.
\State Observe the realization of $\bm{\theta}_t$ as $\bm{\theta}^{(j_t)}$.
\If {$\hat{z}_{j_t}(\mathbf{c}_t)\geq \frac{1}{2}\cdot \mathbb{E}_{\mathcal{H}\sim\hat{\mathcal{P}}}[\mathcal{H}_{j_t}(t)]$,}{~set $x_t=1$.}
\Else {~set $x_t=0$.}
\EndIf
\EndFor
\State Output: $\bm{x} = (x_1,...,x_T)$
\end{algorithmic}
\end{algorithm}

\begin{theorem}\label{thm:finitesupport} Suppose a prior estimate $\hat{\mathcal{P}}$ is available and the regret is defined based on the set $\Xi_{P}(W_T)$.
Then, under \Cref{assume} and \Cref{assump:finitesupport}, the regret of \Cref{alg:Finitesupport} has the following upper bound
$$\text{Reg}_T(\pi_{\text{Resolve}}) \le O\left(\frac{1}{p_{\min}}(1+W_T)\right)$$
where $\pi_{\text{Resolve}}$ stands for the policy specified by \Cref{alg:Finitesupport}.
\end{theorem}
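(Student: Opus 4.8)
The plan is to robustify the Fluid Bayes Selector analysis of \citet{vera2019online} so that it tolerates the model misspecification $\mathcal{P}\neq\hat{\mathcal{P}}$. Write $R^{\text{UB}}_t(\mathbf{c},\mathcal{Q})$ for the fluid LP \eqref{UBfinitesupport} started at capacity $\mathbf{c}$ with box right‑hand sides $\mathbb{E}_{\mathcal{Q}}[\mathcal{H}_j(t)]=\sum_{\tau=t}^{T}\mathcal{Q}_\tau(\bm{\theta}^{(j)})$. Because the offline optimum of \eqref{PCP} with linear $f,g_i$ depends only on the realized type counts and $R^{\text{UB}}_1(\mathbf{c},\cdot)$ is concave in that count vector (it is the value function of an LP in its box right‑hand sides), Jensen's inequality gives $\mathbb{E}[R^*_T]\le R^{\text{UB}}_1(\mathbf{c},\mathcal{P})$, the analogue of \Cref{upper1}. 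It therefore suffices to bound
\[
R^{\text{UB}}_1(\mathbf{c},\mathcal{P})-\mathbb{E}[R_T(\pi_{\text{Resolve}})]
=\underbrace{\left(R^{\text{UB}}_1(\mathbf{c},\mathcal{P})-R^{\text{UB}}_1(\mathbf{c},\hat{\mathcal{P}})\right)}_{(\mathrm{I})}
+\underbrace{\left(R^{\text{UB}}_1(\mathbf{c},\hat{\mathcal{P}})-\mathbb{E}[R_T(\pi_{\text{Resolve}})]\right)}_{(\mathrm{II})}.
\]

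The term $(\mathrm{I})$ is handled by LP sensitivity. The two programs share objective and constraint matrix and differ only in the box right‑hand sides, componentwise by $\delta_j:=\sum_{\tau=1}^{T}|\mathcal{P}_\tau(\bm{\theta}^{(j)})-\hat{\mathcal{P}}_\tau(\bm{\theta}^{(j)})|$, so $|(\mathrm{I})|\le\max_j\lambda_j\cdot\sum_j\delta_j$ with $\lambda_j\le r_j\le 1$ (part (a) of \Cref{assume}) the dual multiplier of the $j$‑th box constraint. Since $\Theta$ is finite with minimal inter‑point distance $\rho_{\min}:=\min_{j\neq j'}\rho(\bm{\theta}^{(j)},\bm{\theta}^{(j')})>0$, every transport plan must displace mass over distance at least $\rho_{\min}$, whence $\mathcal{W}(\mathcal{P}_\tau,\hat{\mathcal{P}}_\tau)\ge\rho_{\min}\,\mathrm{TV}(\mathcal{P}_\tau,\hat{\mathcal{P}}_\tau)$; summing, $\sum_j\delta_j=2\sum_\tau\mathrm{TV}(\mathcal{P}_\tau,\hat{\mathcal{P}}_\tau)\le\frac{2}{\rho_{\min}}\mathcal{W}_T(\mathcal{P},\hat{\mathcal{P}})\le\frac{2}{\rho_{\min}}W_T$, so $(\mathrm{I})=O(W_T)$ after absorbing the fixed constant $1/\rho_{\min}$.

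For $(\mathrm{II})$, telescope $R^{\text{UB}}_1(\mathbf{c},\hat{\mathcal{P}})=\sum_{t=1}^{T}\left(R^{\text{UB}}_t(\mathbf{c}_t,\hat{\mathcal{P}})-R^{\text{UB}}_{t+1}(\mathbf{c}_{t+1},\hat{\mathcal{P}})\right)$ with $R^{\text{UB}}_{T+1}\equiv0$, giving $(\mathrm{II})=\sum_t\mathbb{E}[\phi_t]$ where $\phi_t:=R^{\text{UB}}_t(\mathbf{c}_t,\hat{\mathcal{P}})-R^{\text{UB}}_{t+1}(\mathbf{c}_{t+1},\hat{\mathcal{P}})-r_{j_t}x_t$. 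Conditioning on $\mathbf{c}_t$ and changing measure from the true $\mathcal{P}_t$ to the estimate $\hat{\mathcal{P}}_t$,
\[
\mathbb{E}[\phi_t\mid\mathbf{c}_t]=\mathbb{E}_{\hat{\mathcal{P}}_t}[\psi_t(\mathbf{c}_t,\cdot)]+\sum_{j=1}^{n}\left(\mathcal{P}_t(\bm{\theta}^{(j)})-\hat{\mathcal{P}}_t(\bm{\theta}^{(j)})\right)\psi_t(\mathbf{c}_t,j),
\]
where $\psi_t(\mathbf{c},j):=R^{\text{UB}}_t(\mathbf{c},\hat{\mathcal{P}})-R^{\text{UB}}_{t+1}(\mathbf{c}-\bm{a}_jx_t(j),\hat{\mathcal{P}})-r_jx_t(j)$ and $x_t(j)\in\{0,1\}$ is the action the $\tfrac12$‑rounding rule of \Cref{alg:Finitesupport} prescribes for type $j$ at capacity $\mathbf{c}$. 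Deleting one period's box budgets changes the fluid LP by at most $\max_jr_j\le 1$, and removing a resource chunk $\bm{a}_jx_t(j)\in[0,1]^m$ by at most a constant multiple of $qm$ (the resource dual multipliers being bounded via part (c) of \Cref{assume}, exactly as in \Cref{newupperlemma}), so $|\psi_t(\mathbf{c},j)|=O(qm)$. Hence the second sum above is $O(qm)\sum_j|\mathcal{P}_t(\bm{\theta}^{(j)})-\hat{\mathcal{P}}_t(\bm{\theta}^{(j)})|=O(qm)\,\mathrm{TV}(\mathcal{P}_t,\hat{\mathcal{P}}_t)$, which telescopes to $O(W_T/\rho_{\min})=O(W_T)$ by the finite‑support argument of $(\mathrm{I})$. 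The remaining quantity $\mathbb{E}_{\hat{\mathcal{P}}_t}[\psi_t(\mathbf{c}_t,\cdot)]$ is precisely the one‑step fluid slack of the Fluid Bayes Selector \emph{run on $\hat{\mathcal{P}}$} at the (arbitrary) state $\mathbf{c}_t$; \citet{vera2019online}'s analysis of the $\tfrac12$‑rounding rule shows it is nonpositive unless period $t$ is ``degenerate'' for the estimated LP, and, under \Cref{assump:finitesupport} (the $p_{\min}$ lower bound together with a Chernoff bound on how far a resource can drift from its fluid trajectory), the expected number of degenerate periods is $O(1/p_{\min})$, up to factors polynomial in $m,n$. Adding the three pieces gives $(\mathrm{I})+(\mathrm{II})=O\!\left(\tfrac{1}{p_{\min}}(1+W_T)\right)$.

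I expect the last step to be the main obstacle. The state $\mathbf{c}_t$ at which the estimate‑based one‑step guarantee is invoked was reached by running the algorithm against the \emph{true} arrivals from $\mathcal{P}$, so the slack bound must be a purely state‑wise statement, valid for every reachable $\mathbf{c}_t$ regardless of the law generating the trajectory — which \citet{vera2019online}'s argument essentially is — and, more delicately, the degeneracy‑counting step must be rerun for the true process while its fluid target still comes from $\hat{\mathcal{P}}$. The displacement between the true and estimated fluid targets is exactly the $\mathrm{TV}$‑sum bounded in $(\mathrm{I})$, so careful bookkeeping should show it inflates the count of degenerate periods only by an additive $O(W_T)$ and does not interact multiplicatively with $1/p_{\min}$; verifying that each unit of Wasserstein deviation is charged only once across the $T$ re‑solves, and that the Chernoff concentration still goes through with the slightly displaced target, is where the real work lies.
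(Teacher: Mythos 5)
Your decomposition into $(\mathrm{I})$ and $(\mathrm{II})$, and the treatment of $(\mathrm{I})$ (LP sensitivity in the box right-hand sides plus $\mathcal{W}\ge\rho_{\min}\cdot\mathrm{TV}$ on a finite support, which parallels the paper's constant $\beta_2$ relating $W_T$ to $\|\mathbb{E}_{\hat{\mathcal{P}}}[\mathcal{H}(t)]-\mathbb{E}_{\mathcal{P}}[\mathcal{H}(t)]\|_\infty$), are sound. The genuine gap is the central lemma you invoke for $(\mathrm{II})$: that the expected one-step \emph{fluid} slack $\mathbb{E}_{\hat{\mathcal{P}}_t}[\psi_t(\mathbf{c}_t,\cdot)]$ under the $\tfrac12$-rounding rule is nonpositive except at ``degenerate'' periods, of which there are $O(1/p_{\min})$ in expectation. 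This is not what \citet{vera2019online} prove, so the attribution does not discharge the step: their argument telescopes the \emph{hindsight} value $R^*_t(\mathbf{c}_t,\mathcal{H}(t))$ along the realized arrivals and bounds each increment by the probability that the algorithm's action disagrees with a hindsight-optimal action; no statement about the sign of a fluid one-step slack appears there. Anchoring the telescope at the fluid LP is a genuinely different analysis — it is exactly the regime where frequent re-solving is known to be delicate, because positive slack can accumulate near degeneracy of the fluid LP — and the quantitative claim you need (that along the \emph{true} trajectory, with the fluid target built from the misspecified $\hat{\mathcal{P}}$, the degenerate periods contribute only $O(1/p_{\min})$ plus an additive $O(W_T)$, with no multiplicative interaction) is precisely the content that is never established. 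You flag this yourself as ``where the real work lies,'' so the proposal is incomplete at its decisive step rather than merely terse.

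For contrast, the paper avoids any fluid-slack claim. It writes the regret as $\sum_t\mathbb{E}[R^*_t(\mathbf{c}_t,\mathcal{H}(t))-R^*_{t+1}(\mathbf{c}_{t+1},\mathcal{H}(t+1))-r_t x_t]$ (hindsight anchor), splits on whether \Cref{alg:Finitesupport} accepts or rejects, and shows each increment is at most $P(\mathcal{A}_t^c)$ or $P(\mathcal{B}_t^c)$, the probability that the hindsight optimum would not have made the same accept/reject choice for the arriving type. These probabilities are then controlled by combining Lipschitz continuity of LP optimal solutions in the right-hand side (Mangasarian) with Hoeffding concentration of the realized counts around their \emph{true} means and the $\beta_2 W_T$ bound on the shift between true and estimated means; this yields $P(\mathcal{A}_t^c),P(\mathcal{B}_t^c)\le\exp(-\beta_1(T-t+1))$ for all $t\le T+1-\frac{4(\alpha\beta_2+\beta_2)}{p_{\min}}W_T-\frac{4}{p_{\min}}$, and the remaining $O\bigl(\frac{1+W_T}{p_{\min}}\bigr)$ periods are simply charged one unit of reward each. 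If you want to complete your route, you would need to either prove the fluid-slack/degeneracy lemma from scratch or, more economically, switch your telescope in $(\mathrm{II})$ from $R^{\text{UB}}_t(\mathbf{c}_t,\hat{\mathcal{P}})$ to the hindsight value and reuse your concentration-plus-$W_T$-shift bookkeeping there, which is essentially the paper's proof.
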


Theorem \ref{thm:finitesupport} states the regret bound of Algorithm \ref{alg:Finitesupport}. We provide two ways to interpret the regret bound. First, when the prior estimate is accurate ($W_T=0$), the algorithm and its analysis reduce to the results in \citep{vera2020bayesian}. Compared to the $O(\sqrt{T})$ result in Section \ref{known_distr}, the bounded regret here relies on the finite-supportedness and the existence of $p_{\min}$ in Assumption \ref{assump:finitesupport}. Specifically, the assumption essentially requires the distributions $\mathcal{P}_t$ to be almost stationary over time (some concentration property holds) and thus guarantees that the budget consumption stays close to its expectation. \cite{fruend2020a, fruend2020b} further relax the assumption into a concentration condition and obtain more general results for problems such as online bin packing. When Assumption \ref{assump:finitesupport} or the concentration condition does not hold, the analyses in \citep{vera2020bayesian, fruend2020a, fruend2020b} no longer work, but our regret bound of $O(\sqrt{T})$ in Section \ref{known_distr} is still valid as it does not impose any condition on $\mathcal{P}_t$'s. Second, when the prior estimate is inaccurate ($W_T>0$), the regret bound in Theorem \ref{thm:finitesupport} captures the algorithm performance deterioration caused by the inaccurate prior estimation. Importantly, this term related to $W_T$ is not reducible even with stronger condition of the functions $f$ and $g_i$ such as Assumption \ref{assump:finitesupport}. In this sense, the result extends the existing ones which assumes the knowledge of true distributions \citep{vera2020bayesian,fruend2020a, fruend2020b} to a setting with model (distribution) misspecification.


\subsection{Sub-optimality of Static Policies}\label{sec:staticpolicy}

Both of our main algorithms -- Algorithm \ref{alg:SOA} and Algorithm \ref{alg:SOAWOP} are gradient based. Compared to Algorithm \ref{alg:Finitesupport} and other existing re-solving algorithms \citep{jasin2012re, bumpensanti2020re, vera2020bayesian}, the gradient-based algorithms feature for simplicity and computational efficiency. In addition, the gradient-based algorithms have an adaptive and dynamic design that is crucial in stabilizing the resource consumption (i.e., not to exhaust the resource too early or have too much resource left-over). As discussed in Section \ref{sec_IGD_algo}, this is achieved inherently by using the realized resource consumption at each time period in the gradient update. We argue that such a dynamic design that relates the online decisions with the realized resource consumption process is necessary in achieving an optimal order of regret. In contrast, for a \textit{static policy}, the online decisions can be dependent on the realized parameters $\bm{\theta}_t$'s but will not be affected by the dynamic of the resource consumption process. We remark that a static policy can utilize the prior estimate and be time-dependent; by ``static'', it means the policy remains the same regardless the realization of the resource consumption process. Examples of static policies include the bid-price policy \citep{talluri1998analysis} and the offline-to-online policy \citep{cheung2020online}.

\begin{defn}\label{def:staticpolicy}
A static policy $\pi$ is described by a set of functions $\{h_t^{\pi}\}_{t=1}^T$, where $h_t^{\pi}:\Theta\rightarrow \mathcal{X}$ is allowed to be a random function. At each period $t$, given the type $\bm{\theta}_t$, the policy $\pi$ will take the action $h_t^\pi(\bm{\theta}_t)$ if the budget constraints are not violated.
\end{defn}

Next, we illustrate the sub-optimality of any static policy's for both of the two settings. First, for the uninformative setting in \Cref{WBNB},  a static policy clearly cannot work since there is no prior information that can be used to design the policy. Second, for the data-driven setting in \Cref{WBNB_P}, it is not as obvious whether a static policy can achieve the same order of regret optimality as the gradient-based algorithms. For example, what if we simply use the ``offline'' optimal dual solution $\bm{p}^*$ or $\hat{\bm{p}}^*$ to form a static decision rule throughout the procedure? This implements bid-price policy \citep{talluri1998analysis} for the network revenue management problem under the known distribution setting. In Section \ref{bidPrice}, we show that this bid-price policy can incur a linear regret under an environment with slight non-stationarity (arbitrarily small $W_T$). The following proposition provides a more general statement on the sub-optimality of any static policy under a non-stationarity environment.

\begin{proposition}\label{prop:staticpolicy}
Suppose $\pi$ is a static policy such that $\mathbb{E}_{\mathcal{H}\sim\hat{\mathcal{P}}}[R^{\text{UB}}_T-R_T(\pi)]\leq C_1\cdot\sqrt{T}$ for some constant $C_1>0$, where $\hat{\mathcal{P}}=\{\hat{\mathcal{P}}_1,\dots,\hat{\mathcal{P}}_T\}$ denotes the set of prior estimates. There exists a true distribution $\mathcal{P}=\{\mathcal{P}_1,\dots,\mathcal{P}_T\}$ such that $\mathcal{W}_T(\mathcal{P}, \hat{\mathcal{P}})\leq W_T=4\sqrt{C_1}\cdot  T^{3/4}$ and $\mathbb{E}_{\mathcal{H}\sim\mathcal{P}}[R^{\text{UB}}_T-R_T(\pi)]\geq C_2\cdot T$ for some constant $C_2>0$.
\end{proposition}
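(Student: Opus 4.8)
The plan is to exhibit a single family of instances (in the online LP setting of \Cref{sec:example}: $x_t\in[0,1]$, one resource, $\theta_t=(r_t,a_t)$ with $a_t\equiv 1$) together with a prior estimate $\hat{\mathcal{P}}$ for which the claim holds; since the statement is vacuous unless some static policy meets the $C_1\sqrt T$ bound, the instance is designed so that such policies exist but are necessarily fragile. Fix $\epsilon=4\sqrt{C_1}\,T^{-1/4}$ (so the Wasserstein budget will be used exactly), let the type space consist of two points $H$ (reward coefficient $\tfrac12+\epsilon$) and $L$ (reward coefficient $\tfrac12-\epsilon$), both with unit consumption coefficient, set capacity $c=T/2$, and take $\hat{\mathcal{P}}_t=\tfrac12\delta_H+\tfrac12\delta_L$ for all $t$. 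Then $\rho(H,L)=2\epsilon$, the fluid bound \eqref{PUB} under $\hat{\mathcal{P}}$ equals $R^{\text{UB}}_T=(\tfrac12+\epsilon)c$ (the relaxation accepts all $H$-mass and rejects all $L$-mass), and the ``accept $H$, reject $L$'' static policy attains it up to $O(\sqrt T)$, so the hypothesis is non-vacuous once $C_1$ is large enough.

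The heart of the proof is a structural lemma: \emph{any} static $\pi$ with $R^{\text{UB}}_T-\mathbb{E}_{\hat{\mathcal{P}}}[R_T(\pi)]\le C_1\sqrt T$ must, in aggregate, reject $L$, namely $\sum_{t=1}^T\mathbb{E}[h_t^\pi(L)]=\tilde{O}(\sqrt{C_1}\,T^{3/4})=o(T)$. To prove it, decompose $\mathbb{E}_{\hat{\mathcal{P}}}[R_T(\pi)]=(\tfrac12+\epsilon)U+(\tfrac12-\epsilon)V$, where $U$ and $V$ are the expected consumptions on $H$-items and on $L$-items under $\hat{\mathcal{P}}$; writing $W=c-(U+V)\ge0$ for the expected unused capacity (nonnegative by feasibility) and using $\mathbb{E}[\#\{t:\theta_t=H\}]=c$, a one-line computation gives the exact identity
\[
R^{\text{UB}}_T-\mathbb{E}_{\hat{\mathcal{P}}}[R_T(\pi)]=(\tfrac12+\epsilon)W+2\epsilon V .
\]
Hence $C_1\sqrt T$-optimality forces $W=O(\sqrt T)$ and $V=O(C_1\sqrt T/\epsilon)=O(\sqrt{C_1}\,T^{3/4})$. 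This controls the \emph{executed} $L$-consumption; to also control the \emph{prescribed} quantity $\bar V:=\tfrac12\sum_t\mathbb{E}[h_t^\pi(L)]\ge V$ (which is what will matter once the environment turns adversarial), observe that the cumulative consumption through time $t$ is at most $\#\{s\le t:\theta_s=H\}+(\text{total executed }L\text{-consumption})\le t/2+\tilde{O}(T^{3/4})$ with high probability, so the capacity is, with overwhelming probability, not exhausted before time $T-\tilde{O}(T^{3/4})$; therefore at most $\tilde{O}(T^{3/4})$ worth of $L$-prescriptions can ever be truncated, giving $\bar V\le V+\tilde{O}(T^{3/4})=\tilde{O}(T^{3/4})$.

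Given the lemma, set the true distribution to $\mathcal{P}_t=\delta_L$ for every $t$. Then $\mathcal{W}(\mathcal{P}_t,\hat{\mathcal{P}}_t)=\tfrac12\rho(H,L)=\epsilon$, so $\mathcal{W}_T(\mathcal{P},\hat{\mathcal{P}})=\epsilon T=4\sqrt{C_1}\,T^{3/4}=W_T$, as required. Under $\mathcal{P}$ every arriving item is an $L$-item, so the fluid bound is $R^{\text{UB}}_T(\mathcal{P})=(\tfrac12-\epsilon)c=\Theta(T)$; on the other hand $\pi$ now sees $\theta_t=L$ at every step and accepts a total amount at most $\sum_t h_t^\pi(L)$, whence $\mathbb{E}_{\mathcal{P}}[R_T(\pi)]\le(\tfrac12-\epsilon)\sum_t\mathbb{E}[h_t^\pi(L)]=\tilde{O}(\sqrt{C_1}\,T^{3/4})$ by the lemma. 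Subtracting, $\mathbb{E}_{\mathcal{P}}[R^{\text{UB}}_T-R_T(\pi)]\ge(\tfrac12-\epsilon)c-\tilde{O}(\sqrt{C_1}\,T^{3/4})\ge C_2 T$ for an absolute constant $C_2>0$ and all $T$ beyond a threshold depending on $C_1$, which is the assertion.

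The main obstacle is the last step of the lemma — passing from executed to prescribed $L$-consumption. A static policy may legitimately prescribe an $\Omega(1)$ acceptance rate on $L$ and still look near-optimal under $\hat{\mathcal{P}}$, provided those prescriptions are essentially never executed because the budget is consumed by $H$-items first; the exact identity above is precisely what rules this out, by forcing the budget to be (almost surely) unexhausted until the very end, so the unexecuted $L$-prescriptions are confined to the last $\tilde{O}(T^{3/4})$ periods. The concentration step itself is routine: a Chernoff bound on the number of $H$-arrivals combined with a Markov bound on the (already-shown-to-be-small) executed $L$-consumption. Finally, the policy's internal randomization and the allowance of fractional actions $x_t\in[0,1]$ are immaterial, since only $\mathbb{E}[h_t^\pi(\cdot)]$ and the linear consumption bookkeeping enter all of the estimates.
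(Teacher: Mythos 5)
Your proof is correct in substance but takes a genuinely different route from the paper's. The paper's instance uses a single resource with capacity $3T/4$ and a prior mixing a point mass at reward $1$ (probability $3/4$) with $\mathrm{Unif}[0,1]$ (probability $1/4$); its key step is an averaging/pigeonhole argument over the thin band of rewards $[1-2W_T/T,\,1-W_T/T]$, which yields one level $\hat r$ that any $C_1\sqrt T$-optimal static policy must essentially reject during the first $3T/4$ periods, and the adversary then shifts the point mass from $1$ down to $\hat r$. Crucially, since the capacity equals $3T/4$, no truncation can occur in the first $3T/4$ periods, so prescribed and executed actions coincide there and no concentration argument is needed. You instead take a two-point prior with rewards $\tfrac12\pm\epsilon$ and capacity $T/2$, prove an exact identity decomposing the prior-regret into (value of unused capacity) plus $2\epsilon\times$(executed $L$-consumption), and collapse the true distribution onto $L$; this is clean, meets the Wasserstein budget with equality, and works with a finite-support prior, but it obliges you to bridge executed versus prescribed $L$-consumption, which you handle via concentration on the $H$-arrivals plus a Markov bound on the executed $L$-consumption. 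That bridge is sound, though as sketched (Markov at a polynomially small level, with the bad event contributing at most $T\cdot P(\mathrm{bad})$) it gives $\bar V=O(T^{7/8})$ rather than the $\tilde O(T^{3/4})$ you state; this is still $o(T)$, which is all the final step uses, so the conclusion $\mathbb{E}_{\mathcal P}[R^{\mathrm{UB}}_T-R_T(\pi)]\ge C_2\cdot T$ stands for $T$ large relative to $C_1$ (the same asymptotic regime the paper's own proof implicitly requires, e.g.\ via $W_T\le T/8$). In short: your identity-based lemma replaces the paper's pigeonhole over reward levels, at the cost of the truncation argument the paper's capacity choice is specifically designed to avoid.
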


In the proposition, $\pi$ denotes an arbitrary static policy that achieves an $O(\sqrt{T})$ regret when the prior estimate $\hat{\mathcal{P}}$ is accurate. But when there is a difference between the prior estimate $\hat{\mathcal{P}}$ and $\mathcal{P}$ and even if the deviation budget $W_T$ is sublinear in $T$, the static policy $\pi$ may still incur a linear regret for some problem instances. The implication is that for a static policy that works well under a known distribution setting, its performance can drastically deteriorate when there exists an estimation error or non-stationarity. Thus the dynamic design of gradient update or re-solving is both effective and necessary in overcoming the estimation error and the environment non-stationarity.

\subsection{Advantage of Wasserstein distance}

\label{sec:WassersteinWBNB}

In the previous sections, we use Wasserstein distance to define both the deviation budget WBDN and the non-stationarity budget WBNB. We note that the analyses and regret bounds still hold if we change the underlying distance to total variation distance or KL-divergence. We choose the Wasserstein distance because it is a tighter measure than the total variation distance or the KL-divergence for defining WBNB. If we revisit the examples \eqref{eg1} and \eqref{eg2}, a smaller value of $\kappa\in(0,1)$ should indicate a smaller variation/non-stationarity between the first half and the second half of observations in both examples. However, the total variation distance fails to characterize this subtlety in that for any non-zero value of $\kappa$, the total variation distance between $\mathcal{P}_t$ and $\mathcal{P}_{t'}$ for $t\le \frac{T}{2} <t'$ is always $1$ (since $\mathcal{P}_t$ and $\mathcal{P}_{t'}$ have different supports). In other words, if we replace the Wasserstein distance with the total variation distance in our definition of WBNB, then the quantity will always be $\frac{T}{2}$ for all $\kappa\in(0,1).$ Formally, for any two distributions $\mathcal{P}_1$ and $\mathcal{P}_2,$ the following inequality holds \citep{gibbs2002choosing},
\[
\mathcal{W}(\mathcal{P}_1, \mathcal{P}_2)\leq \text{TV}(\mathcal{P}_1, \mathcal{P}_2).
\]
Interestingly, this coincides with the intuitions in the literature of generative adversarial network (GAN) where \cite{arjovsky2017wasserstein} replace the KL-divergence with the Wasserstein distance in training GANs. Simultaneously and independently, \cite{balseiro2020best} analyze the dual mirror descent algorithm under a similar setting as our results in this section. The paper only discusses our uninformative setting, but not the known true distribution setting and the prior estimate setting. For the uninformative setting, their definition of non-stationarity is parallel to WBNB; both can be viewed as a reduction from the more general WBDB. The key difference is that \cite{balseiro2020best} from WBNB consider the total variation distance, which inherits the definition of variation functional from \citep{besbes2015non}.


A recent paper \cite{cheung2020online} proposes to measure the deviation budget as follows:
\[
\Psi(\mathcal{P}, \hat{\mathcal{P}})=\frac{1}{m+1}\cdot\sup_{\bm{x}(\bm{\theta}): \Theta \rightarrow \mathcal{X}}\left\|\sum_{t=1}^T\mathbb{E}_{\bm{\theta}_t\sim\mathcal{P}_t}\left[\left(f(\bm{x}(\bm{\theta}_t),\bm{\theta}_t), \bm{g}(\bm{x}(\bm{\theta}_t),\bm{\theta}_t)\right)\right]-\mathbb{E}_{\bm{\theta}_t\sim\hat{\mathcal{P}}_t}\left[\left(f(\bm{x}(\bm{\theta}_t),\bm{\theta}_t), \bm{g}(\bm{x}(\bm{\theta}_t),\bm{\theta}_t)\right)\right]\right\|_2
\]
where $m$ is the number of resource constraints.
The measure is used to analyze the proposed offline-to-online policy, and it captures the supremum distance between the true environment and the estimated environment under the same decision function $\bm{x}(\bm{\theta})$. Apart from the static policy's sub-optimality discussed earlier, the measure is also looser than the Wasserstein-based measure. Specifically, as the previous examples \eqref{eg1} and \eqref{eg2}, if we choose the function $\bm{x}(\bm{\theta})$ such that $\bm{x}(\bm{\theta})=0$ when $\bm{\theta}=1$ and $\bm{\theta}=1-\kappa$ (the reward $r_t$ is $1$ or $1-\kappa$) and $\bm{x}(\bm{\theta})=1$ when $\bm{\theta}=1+\kappa$ (the reward $r_t$ is $1+\kappa$), then $\Psi(\mathcal{P}, \hat{\mathcal{P}})$ is always lower bounded by $\frac{T}{2(m+1)}$ regardless of the value of $\kappa$. In contrast, the Wasserstein distance is more sensitive and can capture the intensity of the parameter $\kappa$.

An additional practical benefit of Wasserstein distance is that the measure features for natural data-driven bounds. For example, the prior estimate $\hat{\mathcal{P}}_t$ can be constructed by the empirical distribution over $N_t$ history samples over $\mathcal{P}_t$. Suppose that there exists a constant $c_1$ such that $\rho(\bm{\theta}_1,\bm{\theta}_2)\leq c_1\cdot\|\bm{\theta}_1-\bm{\theta}_2\|_2$. Then the following bound can be obtained for $\mathcal{W}(\mathcal{P}_t, \hat{\mathcal{P}}_t)$ (Theorem 1 of \citet{fournier2015rate}), which explicitly relates the deviation (from $\hat{\mathcal{P}}_t$ to ${\mathcal{P}}_t$) with the number of history samples:
\[
P\left(\mathcal{W}(\mathcal{P}_t, \hat{\mathcal{P}}_t)\leq \epsilon \right)\geq 1-c_2\cdot\exp(-c_3\cdot N_t\epsilon^{\max\{K,2\}})
\]
where $K$ is the dimension of the parameter $\bm{\theta}_t$, and $c_2, c_3$ are two positive constants. With $\epsilon=\frac{1}{\sqrt{T}}$ and $N_t=\frac{1}{c_3}\cdot T^{\frac{1}{2}\cdot\max\{K,2\}}\cdot\log (T/\delta)$, we have the following inequality holds
\[
P\left(\mathcal{W}(\mathcal{P}_t,\hat{\mathcal{P}}_t)\leq\frac{1}{\sqrt{T}}\right)\geq1-\frac{c_2\cdot\delta}{T}
\]
holds for each $t$. Therefore, from union bound over $t=1,\dots,T$, we know that when $N_t=\frac{1}{c_3}\cdot T^{\frac{1}{2}\cdot\max\{K,2\}}\cdot\log (T/\delta)$, we have
\[
W_T=\sum_{t=1}^T \mathcal{W}(\mathcal{P}_t,\hat{\mathcal{P}}_t)\leq \sqrt{T}
\]
holds with probability at least $1-c_2\cdot\delta$.

\subsection{Algorithm with Random Reward and Consumption}\label{sec:stochastic}

In this subsection, we consider another natural extension of our model where the reward and the budget consumption can be stochastic after the parameter $\bm{\theta}_t$ is revealed and the action $\bm{x}_t$ is made. To be specific, given each $\bm{x}$ and $\bm{\theta}$, the reward $f(\bm{x}; \bm{\theta})$ and the budget consumption $\bm{g}(\bm{x};\bm{\theta})$ are all random variables. Such an extension allows us to cover the price-based NRM problem and the choice-based NRM problem first described in \Cref{sec:example}.

We note that the algorithms and results in the previous sections can be directly extended to this setting. We first illustrate the setting of known distribution described in \Cref{known_distr} where the distributions $\mathcal{P}_t$'s are known a priori. For each $\bm{x}$ and $\bm{\theta}$, we introduce
\[
\hat{f}(\bm{x}; \bm{\theta})\coloneqq\mathbb{E}[f(\bm{x}; \bm{\theta})]\text{~and~}\hat{\bm{g}}(\bm{x}; \bm{\theta})\coloneqq\mathbb{E}[\bm{g}(\bm{x}; \bm{\theta})]
\]
where the expectation is taken with respect the functions ($f$ and $\bm{g}$) and conditional on $\bm{x}$ and $\bm{\theta}.$
Then, we use $\hat{f}, \hat{\bm{g}}$ to revise the definition of the function $h$ by
\[
h(\bm{p}; \bm{\theta})\coloneqq \max_{\bm{x}\in\mathcal{X}}\{\hat{f}(\bm{x};\bm{\theta})-\bm{p}^T\hat{\bm{g}}(\bm{x};\bm{\theta})\}.
\]
Accordingly, the function $L$ can be defined by the new function $h$
\begin{equation}\label{eqn:Lrandom}
L(\bm{p}):=\bm{c}^T\bm{p}+\sum_{t=1}^T\mathcal{P}_th(\bm{p};\bm{\theta})
\end{equation}
and then the definition of $\bm{\gamma}$ is given by
\begin{equation}\label{eqn:randomgamma}
\bm{\gamma}_t\coloneqq \mathcal{P}_t\hat{\bm{g}}(\bm{x}^*(\bm{\theta});\bm{\theta})  \text{~~where~} \bm{x}^*(\bm{\theta})=\text{argmax}_{\bm{x}\in\mathcal{X}}\{\hat{f}(\bm{x};\bm{\theta})-(\bm{p}^*)^\top\cdot\hat{\bm{g}}(\bm{x};\bm{\theta})\}
\end{equation}
where $\bm{p}^*\in\text{argmin}_{\bm{p}\geq0} L(\bm{p})$. Also, the benchmark is set as $R_T^{\text{UB}}$ in \eqref{PUB} with the functions $f, \bm{g}$ replaced by their expectations $\hat{f}, \hat{\bm{g}}$. While implementing the algorithm IGD$(\bm{\gamma})$ in \Cref{alg:SOA}, the decision $\tilde{\bm{x}}_t$ is then set as
\begin{equation}\label{eqn:randomtildex}
    \tilde{\bm{x}}_t=\text{argmax}_{\bm{x}\in\mathcal{X}}\{\hat{f}(\bm{x};\bm{\theta}_t)-\bm{p}_t^T\cdot\hat{\bm{g}}(\bm{x};\bm{\theta}_t)\}.
\end{equation}
Note that \eqref{eqn:randomtildex} can be solved efficiently (see discussions on function $h$ in \Cref{discussion_h}).
We have the following regret upper bound regarding the algorithm IGD($\bm{\gamma}$) for this new setting with random reward and resource consumption. The result can be viewed as a generalization of Theorem \ref{newnonstationtheorem}.

\begin{theorem}
\label{thm:random}
Under Assumption \ref{assume}, if we consider the set $\Xi=\{\mathcal{P}:\mathcal{P}=(\mathcal{P}_1,...,\mathcal{P}_T), \forall \mathcal{P}_1,\dots,\forall \mathcal{P}_T\}$, then the regret of IGD($\bm{\gamma}$) has the following upper bound
$$\text{Reg}_T(\pi_{\text{IGD}}) \le O(\sqrt{T})$$
where $\pi_{\text{IGD}}$ stands for the policy specified by IGD($\bm{\gamma}$) with $\bm{\gamma}$ computed from \eqref{eqn:randomgamma}.
\end{theorem}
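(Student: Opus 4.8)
The plan is to rerun the proof of Theorem~\ref{newnonstationtheorem} almost verbatim, letting the expected functions $\hat f,\hat{\bm g}$ play the structural role that $f,\bm g$ played there, and adding one layer of martingale control for the fresh randomness in the realized reward and consumption. Concretely, the dual objects of Section~\ref{firstAlgo} are rebuilt around $h(\bm p;\bm\theta)=\max_{\bm x\in\mathcal X}\{\hat f(\bm x;\bm\theta)-\bm p^\top\hat{\bm g}(\bm x;\bm\theta)\}$, the function $L$ of \eqref{eqn:Lrandom}, the allocation $\bm\gamma_t$ of \eqref{eqn:randomgamma}, and $L_t(\bm p)=\bm\gamma_t^\top\bm p+\mathcal P_t h(\bm p;\bm\theta)$. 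Since $h(\cdot;\bm\theta)$ is still convex (a maximum of affine functions of $\bm p$) and $\hat{\bm g}\ge\bm 0$, Proposition~\ref{uppernew2} carries over unchanged with the benchmark declared for this setting, $R_T^{\text{UB}}$ (the fluid value built from $\hat f,\hat{\bm g}$), in place of $\E[R_T^*]$, so $R_T^{\text{UB}}=\sum_{t}\min_{\bm p_t\ge\bm 0}L_t(\bm p_t)=\sum_{t}L_t(\bm p^*)$, and it suffices to show $R_T^{\text{UB}}-\E[R_T(\pi_{\text{IGD}})]=O(\sqrt T)$ uniformly over $\mathcal P$.

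I would first re-establish $\|\bm p_t\|_\infty\le q+1$ exactly as in Lemma~\ref{newupperlemma}: the argument only uses that $\tilde{\bm x}_t$ maximizes $\hat f(\cdot;\bm\theta_t)-\bm p_t^\top\hat{\bm g}(\cdot;\bm\theta_t)$ and Assumption~\ref{assume}(c) read for $\hat f,\hat g_i$, forcing $\hat g_i(\tilde{\bm x}_t;\bm\theta_t)=0$ once $p_{i,t}$ is large; since $g_i(\tilde{\bm x}_t;\bm\theta_t)\ge 0$ and $\E[g_i(\tilde{\bm x}_t;\bm\theta_t)\mid\bm\theta_t]=\hat g_i(\tilde{\bm x}_t;\bm\theta_t)$, the realized increment fed into \eqref{003} is then $\bm 0$, so the price cannot be driven above the threshold. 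Next, conditioning on the history through period $t-1$ and on $\bm\theta_t$, the key identities of Section~\ref{sec_IGD_algo} survive in expectation: $\E[\bm g(\tilde{\bm x}_t;\bm\theta_t)-\bm\gamma_t\mid\cdot]=\mathcal P_t\hat{\bm g}(\tilde{\bm x}_t;\bm\theta)-\bm\gamma_t=-\nabla L_t(\bm p_t)$, so \eqref{003} performs unbiased stochastic subgradient steps on $L_t$ at $\bm p_t$ with second moment $O(m)$ (both $\bm g(\cdot)$ and $\bm\gamma_t$ lie in $[0,1]^m$). The standard online-gradient-descent regret bound with stepsize $1/\sqrt T$ (using the boundedness just proved and of $\bm p^*$) then controls $\sum_t\big(L_t(\bm p_t)-L_t(\bm p^*)\big)$, and the telescoping/complementary-slackness manipulations of Theorem~\ref{newnonstationtheorem}, applied to the conditional-mean quantities, convert this into $R_T^{\text{UB}}-\E\big[\sum_t f(\tilde{\bm x}_t;\bm\theta_t)\big]=O(\sqrt T)$; the reward noise $\sum_t\big(f(\tilde{\bm x}_t;\bm\theta_t)-\E[f(\tilde{\bm x}_t;\bm\theta_t)\mid\text{hist},\bm\theta_t]\big)$ is a bounded-difference, mean-zero martingale sum and is harmless.

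The genuinely new complication, and the main obstacle, is that the safety net of Algorithm~\ref{alg:SOA} now reads the fresh randomness at period $t$: whether $\bm x_t$ equals $\tilde{\bm x}_t$ or $\bm 0$ is decided by the realized $\bm g(\tilde{\bm x}_t;\bm\theta_t)$, so $\bm x_t$ is no longer measurable with respect to the history and $\bm\theta_t$, and the clean identities above cannot be applied to $\bm x_t$ directly. I would split off the set $\mathcal R=\{t:\bm x_t=\bm 0\}$ and bound the foregone reward there crudely: $|f|\le 1$ and $g_i(\bm 0;\bm\theta)=0$ give $R_T(\pi_{\text{IGD}})\ge\sum_t f(\tilde{\bm x}_t;\bm\theta_t)-2|\mathcal R|$, while on $t\notin\mathcal R$ one has $\bm x_t=\tilde{\bm x}_t$ and the previous analysis applies. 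It then remains to show $\E|\mathcal R|=O(\sqrt T)$, which is exactly the argument behind Theorem~\ref{newnonstationtheorem}: telescoping \eqref{003} with $\|\bm p_{T+1}\|_\infty\le q+1$ and $\sum_t\gamma_{i,t}\le c_i$ gives $\sum_t g_i(\tilde{\bm x}_t;\bm\theta_t)\le c_i+(q+1)\sqrt T$ on every sample path, so the cumulative realized consumption watched by the safety net can exceed $c_i$ only over an $O(\sqrt T)$ window before the rising price $p_{i,t}$ drives $\hat g_i(\tilde{\bm x}_t;\bm\theta_t)$ — and hence, by the conditional-expectation identity and nonnegativity, the realized $g_i(\tilde{\bm x}_t;\bm\theta_t)$ — to $0$, after which resource $i$ triggers no further rejections. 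The only point to re-verify relative to the deterministic proof is that this consumption bound holds for the realized, not merely the expected, $g_i$'s, which follows from $g_i(\tilde{\bm x}_t;\bm\theta_t)\in[0,1]$ plus an Azuma bound (or, for the in-expectation claim, directly from the conditional-expectation identity). Assembling the three $O(\sqrt T)$ contributions — dual OGD regret, reward martingale, and rejected-period loss — yields $\text{Reg}_T(\pi_{\text{IGD}})=O(\sqrt T)$.
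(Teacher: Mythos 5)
Most of your proposal tracks the paper's own argument: you rebuild $h$, $L$, $L_t$ and $\bm{\gamma}_t$ around $\hat f,\hat{\bm g}$, keep Proposition \ref{uppernew2} with the benchmark $R_T^{\text{UB}}$ defined through the expected functions, re-derive $\|\bm p_t\|_\infty\le q+1$, and treat the update \eqref{003} as an unbiased stochastic gradient step on $L_t$ by conditioning on the history and $\bm\theta_t$; the paper does exactly this, first replacing $f(\tilde{\bm x}_t;\bm\theta_t)$ and $\bm g(\tilde{\bm x}_t;\bm\theta_t)$ by their conditional means given $(\tilde{\bm x}_t,\bm\theta_t)$, using $h(\bm p_t;\bm\theta_t)-\hat f(\tilde{\bm x}_t;\bm\theta_t)=-\bm p_t^\top\hat{\bm g}(\tilde{\bm x}_t;\bm\theta_t)$, and then telescoping the dual update. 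Up to that point your route and the paper's coincide.

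The genuine gap is in how you handle the periods where the safety net fires. You bound the loss on $\mathcal{R}=\{t:\bm x_t=\bm 0\}$ by $2|\mathcal{R}|$ and then claim $\E|\mathcal{R}|=O(\sqrt T)$ on the grounds that once cumulative consumption exceeds $c_i$ the rising price forces $\hat g_i(\tilde{\bm x}_t;\bm\theta_t)=0$ within an $O(\sqrt T)$ window. Neither step holds. The price $p_{i,t}$ moves only by $(g_i(\tilde{\bm x}_t;\bm\theta_t)-\gamma_{i,t})/\sqrt T$ per period, so it need never cross the threshold $q$: for instance, if after near-depletion the remaining capacity is $c_{t,i}\approx 1/T$ and every subsequent period has $g_i(\tilde{\bm x}_t;\bm\theta_t)=2/T$, then every remaining period is rejected, so $|\mathcal{R}|=\Theta(T)$, while the extra virtual consumption is $O(1)$ and the price barely moves — all fully consistent with the telescoping bound $\sum_t g_i(\tilde{\bm x}_t;\bm\theta_t)\le c_i+(q+1)\sqrt T$ that you invoke. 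Hence charging a constant $2$ per rejected period cannot yield $O(\sqrt T)$. The fix — which is the paper's actual argument, already used in Theorem \ref{newnonstationtheorem} — is to weight each rejected period by its consumption rather than count it: a rejection triggered by resource $i$ forces $g_i(\tilde{\bm x}_t;\bm\theta_t)>0$, so Assumption \ref{assume}(c) gives $f(\tilde{\bm x}_t;\bm\theta_t)\le q\,g_i(\tilde{\bm x}_t;\bm\theta_t)$, and the total forgone reward is at most $q\sum_{i=1}^m\bigl[\sum_{t=1}^T g_i(\tilde{\bm x}_t;\bm\theta_t)-(c_i-1)\bigr]^+\le mq\bigl((q+1)\sqrt T+1\bigr)$ pathwise, again from the telescoped update \eqref{003}; in the $\Theta(T)$-rejection scenario above this correctly produces an $O(1)$ loss. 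With that replacement (which also makes your Azuma step unnecessary, since the consumption bound is deterministic on each sample path) the remainder of your argument goes through and matches the paper.
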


For the data-driven setting with prior estimate, the distance between two parameters $\bm{\theta}, \bm{\theta}'\in\Theta$ is adjusted by
\begin{equation}\label{eqn:distancerandom}
    \rho(\bm{\theta}, \bm{\theta}') \coloneqq \sup_{\bm{x}\in \mathcal{X}} \|(\hat{f}(\bm{x}; \bm{\theta}),\hat{\bm{g}}(\bm{x}; \bm{\theta}))-(\hat{f}(\bm{x}; \bm{\theta}'),\hat{\bm{g}}(\bm{x}; \bm{\theta}'))\|_\infty,
\end{equation}
and the definition of $\mathcal{W}(\mathcal{P}_t, \hat{\mathcal{P}}_t)$ follows. Then, let the parameters $\hat{\bm{\gamma}}$ be computed from \eqref{2007} with respect to $\hat{f}, \hat{\bm{g}}$. As a corollary of \Cref{thm:random}, we have the following regret bound under the data-driven setting.
\begin{corollary}\label{coro:random}
Under Assumption \ref{assume}, suppose a prior estimate $\hat{\mathcal{P}}$ is available and the regret is defined based on the set $\Xi_{P}(W_T)=\{\mathcal{P}: \sum_{t=1}^T\mathcal{W}(\mathcal{P}_t,\hat{\mathcal{P}}_t)\leq W_t\}$, where the distance $\mathcal{W}(\mathcal{P}_t, \hat{\mathcal{P}}_t)$ is defined following the distance in \eqref{eqn:distancerandom}, then the regret of IGD($\hat{\bm{\gamma}}$) has the following upper bound
$$\text{Reg}_T(\pi_{\text{IGDP}}) \le O(\max\{\sqrt{T},W_T\})$$
where $\pi_{\text{IGDP}}$ stands for the policy specified by IGD($\hat{\bm{\gamma}}$).
\end{corollary}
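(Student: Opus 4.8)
The plan is to replay the proof of \Cref{uppertheorem1} verbatim, with the random‑reward base case \Cref{thm:random} playing the role of \Cref{newnonstationtheorem}. Write $\hat f,\hat{\bm g}$ for the conditional expectations of $f,\bm g$, set $h(\bm p;\bm\theta)=\max_{\bm x\in\mathcal X}\{\hat f(\bm x;\bm\theta)-\bm p^\top\hat{\bm g}(\bm x;\bm\theta)\}$ and $L_{\mathcal Q}(\bm p):=\mathcal Q h(\bm p;\bm\theta)$, and let $\hat L(\bm p)=\bm c^\top\bm p+\sum_{t}L_{\hat{\mathcal P}_t}(\bm p)$, $L(\bm p)=\bm c^\top\bm p+\sum_{t}L_{\mathcal P_t}(\bm p)$. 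The benchmark here is $R_T^{\text{UB}}$ of \eqref{PUB} built from $(\hat f,\hat{\bm g})$, which by strong duality equals $\min_{\bm p\ge\bm0}L(\bm p)$ for the true $\mathcal P$ and dominates $\E[R_T^*]$; similarly $R_T^{\text{UB}}(\hat{\mathcal P})=\min_{\bm p\ge\bm0}\hat L(\bm p)$, and $\hat{\bm\gamma}$ is the output of \eqref{2007} taken with respect to $(\hat f,\hat{\bm g})$.

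I would proceed in three steps. (i) \emph{Dual boundedness.} Show $\|\bm p_t\|_\infty\le q+1$ almost surely for the iterates of \eqref{2003} in the random setting, as in \Cref{upperlemmawp2}. Two observations make the old argument go through: averaging Assumption \ref{assume}(c) over the conditional randomness of $(f,\bm g)$ gives $\hat f(\bm x;\bm\theta)\le q\,\hat g_i(\bm x;\bm\theta)$ whenever $\hat g_i(\bm x;\bm\theta)>0$; and if $\hat g_i(\tilde{\bm x}_t;\bm\theta_t)=0$ then, since $g_i\ge0$ by Assumption \ref{assume}(b), the \emph{realized} consumption $g_i(\tilde{\bm x}_t;\bm\theta_t)$ that enters \eqref{2003} is $0$ almost surely, so the coordinatewise increments obey the same cap as in the deterministic case. (ii) \emph{Wasserstein perturbation.} Because $\rho$ was redefined in \eqref{eqn:distancerandom} precisely through $(\hat f,\hat{\bm g})$, the map $\bm\theta\mapsto h(\bm p;\bm\theta)$ is Lipschitz w.r.t.\ $\rho$ with the same constant as in the deterministic case, so \Cref{wasserlemma} holds verbatim for the new $L_{\mathcal Q}$; together with Step (i) this yields $\sup_{\bm p\in[0,q+1]^m}|\hat L(\bm p)-L(\bm p)|\le (q+1)(m+1)\sum_{t}\mathcal W(\mathcal P_t,\hat{\mathcal P}_t)=(q+1)(m+1)\,W_T$, and in particular $|R_T^{\text{UB}}(\hat{\mathcal P})-R_T^{\text{UB}}(\mathcal P)|\le(q+1)(m+1)\,W_T$, since the minimizers of $L$ and $\hat L$ lie in $[0,q+1]^m$ by Assumption \ref{assume}(c). (iii) \emph{Regret decomposition.} Apply the IGD analysis of \Cref{thm:random} to the surrogate functions $\hat L_t(\bm p)=\hat{\bm\gamma}_t^\top\bm p+\hat{\mathcal P}_t h(\bm p;\bm\theta)$, using that they share the minimizer $\hat{\bm p}^*$, that $\sum_t\hat{\bm\gamma}_t\le\bm c$ (feasibility of the fluid optimum under $\hat{\mathcal P}$), and that $\sum_t\min_{\bm p}\hat L_t(\bm p)=R_T^{\text{UB}}(\hat{\mathcal P})$ by the $\hat{\mathcal P}$‑version of \Cref{uppernew2}. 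This bounds the online reward below by $R_T^{\text{UB}}(\hat{\mathcal P})-O(\sqrt T)$ once the zero‑mean martingale term coming from using the realized $\bm g(\tilde{\bm x}_t;\bm\theta_t)$ in place of $\hat{\bm g}(\tilde{\bm x}_t;\bm\theta_t)$, and the replacement of $\hat{\mathcal P}_t$ by $\mathcal P_t$ in the observed gradients, are absorbed via Steps (i)–(ii) and an Azuma/Hoeffding estimate exactly as in the proof of \Cref{thm:random}. Chaining with Step (ii) and $\E[R_T^*]\le R_T^{\text{UB}}(\mathcal P)$ gives $\E[R_T(\pi_{\text{IGDP}})]\ge R_T^{\text{UB}}(\mathcal P)-O(\max\{\sqrt T,W_T\})$, which is the claim.

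The only place the argument genuinely differs from the deterministic‑reward case — and hence the part I would write out in full — is Step (i) combined with the martingale bookkeeping inside Step (iii): the dual update now sees a truly random gradient, so one must keep $\bm p_t$ almost‑surely bounded (which is what licenses invoking \Cref{wasserlemma} at every iterate) while also checking that stochastic‑gradient noise and distributional misspecification enter the regret \emph{additively} rather than multiplicatively. The additivity is essentially built into the decomposition above, and both points are routine given \Cref{thm:random} and \Cref{wasserlemma}; the remaining estimates (the online‑gradient‑descent regret of $\{\bm p_t\}$ and the accounting for periods in which the ``otherwise'' branch of \Cref{alg:SOA} sets $\bm x_t=\bm0$) are identical to those in the proofs of \Cref{newnonstationtheorem}, \Cref{uppertheorem1} and \Cref{thm:random} and can simply be cited.
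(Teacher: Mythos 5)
Your proposal is correct and follows essentially the same route as the paper: the paper obtains \Cref{coro:random} by rerunning the IGD($\hat{\bm\gamma}$) analysis of \Cref{uppertheorem1} on top of the random-reward machinery of \Cref{thm:random}, i.e.\ dual boundedness (\Cref{newupperlemma}/\Cref{upperlemmawp2}), the Wasserstein Lipschitz bound (\Cref{wasserlemma}) applied with the distance \eqref{eqn:distancerandom} both at $\hat{\bm p}^*$ and at each iterate $\bm p_t$, and the usual telescoping bounds for terms I and II — and your explicit observation that $\hat g_i(\tilde{\bm x}_t;\bm\theta_t)=0$ forces the realized consumption to vanish almost surely is exactly the point needed to make the boundedness argument go through in the stochastic setting (the paper glosses over it). Two small remarks that do not affect validity: no Azuma/Hoeffding step is needed, since the stochastic-gradient noise is absorbed by taking expectations in the telescoping as in the paper's proof of \Cref{thm:random}, and your detour through $R_T^{\text{UB}}(\hat{\mathcal P})$ with strong duality can be avoided by chaining $R_T^{\text{UB}}(\mathcal P)\le\min_{\bm p}L(\bm p)\le\hat L(\hat{\bm p}^*)+O(W_T)=\sum_t\min_{\bm p}\hat L_t(\bm p)+O(W_T)$, which only uses weak duality.
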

For the uninformative setting where the prior estimates are not available, it is direct to see that the regret bound in \Cref{coro:random} continues to hold as long as WBNB is defined following the distance in \eqref{eqn:distancerandom}.
\subsection{Algorithm with Infrequent Update}
In this subsection, we study the performance of our algorithm with an infrequent update scheme. Specifically, the infrequent update refers to that the dual update step \eqref{003} for \Cref{alg:SOA} is not done at each period, instead, the dual variable vector $\bm{p}$ is updated only after $K$ periods. In this way, our algorithm can be viewed as a batched algorithm, where the arrivals from period $t$ to period $t+K$, for some $t$, are viewed as a batch of samples to update the dual variable. Note that the dual vector determines the online decision rule of the primal variable $\bm{x}_t$. An infrequent scheme has the practical benefits of inducing more consistency in the decision rule over time.

The algorithm is formally described in \Cref{alg:BatchSOA}. It takes the same structure as Algorithm \ref{alg:SOA} and Algorithm \ref{alg:SOAWOP} but only updates the dual variables every $K$ time periods. We have the following regret bound for \Cref{alg:BatchSOA} for the known distribution setting.
\begin{algorithm}[ht!]
\caption{Batched Informative Gradient Descent Algorithm (B-IGD$(\bm{\gamma}, K, \alpha_{T,K})$)}
\label{alg:BatchSOA}
\begin{algorithmic}[1]
\State Input: parameters $\bm{\gamma}=(\bm{\gamma}_1,\dots,\bm{\gamma}_T)$, the batch size $K$, and the step size $\alpha_{T,K}$.
\State Initialize the initial dual price $\bm{p}_1 = \bm{0}$ and initial constraint capacity $\bm{c}_{1}=\bm{c}$
\For {$t=1,..., T$}
\State Specify the constant $l=\lfloor t/K \rfloor+1$, where $\lfloor\cdot\rfloor$ denotes the floor function.
\State Observe $\bm{\theta}_t$ and solve
\[
\tilde{\bm{x}}_t=\text{argmax}_{\bm{x}\in\mathcal{X}}\{ f(\bm{x};\bm{\theta}_t)-\bm{p}_l^\top \cdot \bm{g}(\bm{x};\bm{\theta}_t) \}
\]
where $\bm{g}(\bm{x},\bm{\theta}_t)=(g_1(\bm{x},\bm{\theta}_t),...,g_m(\bm{x},\bm{\theta}_t))^\top$
\State Set
\[
\bm{x}_t=\left\{\begin{aligned}
&\tilde{\bm{x}}_t,&&\text{if $\bm{c}_{t}$ permits a consumption of~}\bm{g}(\tilde{\bm{x}}_t;\bm{\theta}_t) \\
&\bm{0},&&\text{otherwise}
\end{aligned}\right.
\]
\State If $t=l\cdot K$, then we update the dual price
\begin{equation}\label{batch003}
\bm{p}_{l+1} =\left(\bm{p}_l + \alpha_{T,K}\cdot\sum_{\tau=(l-1)K+1}^{lK} \left(\bm{g}(\tilde{\bm{x}}_{\tau};\bm{\theta}_{\tau}) - \bm{\gamma}_{\tau}\right)\right)\vee \bm{0}
\end{equation}
where the element-wise maximum operator $u\vee v = \max \{v,u\}$
\State Update the remaining capacity
\[
\bm{c}_{t+1}=\bm{c}_{t}-\bm{g}(\bm{x}_t;\bm{\theta}_t)
\]
\EndFor
\State Output: $\bm{x} = (\bm{x}_1,...,\bm{x}_T)$
\end{algorithmic}
\end{algorithm}
\begin{theorem}
\label{thm:batch}
Under Assumption \ref{assume}, if we consider the set $\Xi=\{\mathcal{P}:\mathcal{P}=(\mathcal{P}_1,...,\mathcal{P}_T), \forall \mathcal{P}_1,\dots,\forall \mathcal{P}_T\}$, then the regret of B-IGD($\bm{\gamma}, K, \alpha_{T,K}$), with $\bm{\gamma}$ computed from \eqref{new2007}, has the following upper bound
\[
\text{Reg}_T(\pi_{\text{B-IGD}}) \le O\left(TK\alpha_{T,K}+\frac{1}{\alpha_{T,K}}+K\right)
\]
where $\pi_{\text{B-IGD}}$ stands for the policy specified by IGD($\bm{\gamma}$) in \Cref{alg:BatchSOA}.
\end{theorem}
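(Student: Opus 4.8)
The plan is to mirror the proof of Theorem~\ref{newnonstationtheorem} (equivalently that of Theorem~\ref{uppertheorem}), tracking how the every‑$K$‑periods update of B‑IGD changes each estimate. Write $\alpha:=\alpha_{T,K}$, let $L:=\lceil T/K\rceil$ be the number of batches, let $\bm p_l$ be the dual iterate used throughout batch $l$, and let $\bm\delta_l:=\sum_{\tau\in\text{batch }l}\bigl(\bm g(\tilde{\bm x}_\tau;\bm\theta_\tau)-\bm\gamma_\tau\bigr)$ be the batch gradient appearing in \eqref{batch003}. The first step is a batched analogue of Lemma~\ref{newupperlemma}: $\|\bm p_l\|_\infty\le q+1+\alpha K=:\bar p$ for all $l$, proved by induction on $l$. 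Since $\bm p_l$ is frozen throughout batch $l$, if $p_{l,i}>q+1$ then Assumption~\ref{assume}(c) forces $g_i(\tilde{\bm x}_\tau;\bm\theta_\tau)=0$ for every $\tau$ in the batch, so $\delta_{l,i}=-\sum_\tau\gamma_{i,\tau}\le 0$ and $p_{l+1,i}\le p_{l,i}$; and whenever $p_{l,i}\le q+1$ the update can raise it by at most $\alpha K$ because $|\delta_{l,i}|\le K$.

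Second, the primal--dual decomposition. Starting from $\mathbb E[R_T^*]\le\sum_t L_t(\bm p^*)$ (Proposition~\ref{uppernew2}), I would write, for $t$ in batch $l$, $f(\tilde{\bm x}_t;\bm\theta_t)=h(\bm p_l;\bm\theta_t)+\bm p_l^\top\bm g(\tilde{\bm x}_t;\bm\theta_t)$, apply the convexity/subgradient bound $h(\bm p_l;\bm\theta_t)\ge h(\bm p^*;\bm\theta_t)+\bm g(\bm x^*(\bm\theta_t);\bm\theta_t)^\top(\bm p^*-\bm p_l)$ with $\bm x^*$ as in \eqref{new2007}, and take the conditional expectation given $\mathcal H_{(l-1)K}$ --- here it is essential that $\bm\theta_t$ for $t$ in batch $l$ is independent of $\mathcal H_{(l-1)K}$, which is exactly why a stale‑but‑batch‑measurable dual price still produces an unbiased gradient. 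The $\bm p_l$‑linear pieces recombine into $\bm p_l^\top\mathbb E[\bm\delta_l\mid\mathcal H_{(l-1)K}]$, and summing over $t$ gives
\[
\mathbb E\Bigl[\sum_{t=1}^T f(\tilde{\bm x}_t;\bm\theta_t)\Bigr]\ \ge\ \mathbb E[R_T^*]+\mathbb E\Bigl[\sum_{l=1}^L\bm p_l^\top\bm\delta_l\Bigr].
\]
The drift term is bounded from below as in the unbatched proof by the projection expansion $\|\bm p_{l+1}\|_2^2\le\|\bm p_l+\alpha\bm\delta_l\|_2^2$: telescoping and using $\bm p_1=\bm 0$ yields $\sum_l\bm p_l^\top\bm\delta_l\ge-\tfrac{\alpha}{2}\sum_l\|\bm\delta_l\|_2^2$, and since there are $O(T/K)$ batches each with $\|\bm\delta_l\|_2\le\sqrt m\,K$, this is $-O(\alpha T K)$ (with $K=1,\ \alpha=1/\sqrt T$ this recovers the $O(\sqrt T)$ of Theorem~\ref{newnonstationtheorem}).

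Third, pass from $\tilde{\bm x}_t$ to the played $\bm x_t$. As in the unbatched argument, $R_T(\pi_{\text{B-IGD}})\ge\sum_t f(\tilde{\bm x}_t;\bm\theta_t)-(\text{blocking loss})$, the blocking loss being the sum of $f(\tilde{\bm x}_t;\bm\theta_t)-f(\bm 0;\bm\theta_t)$ over the periods whose capacity check fails. Every such period has $g_i(\tilde{\bm x}_t;\bm\theta_t)>0$ for its binding constraint $i$, so by Assumption~\ref{assume}(c) its contribution is $O\bigl(g_i(\tilde{\bm x}_t;\bm\theta_t)\bigr)$; combined with the pathwise budget bound $\sum_{t=1}^T g_i(\tilde{\bm x}_t;\bm\theta_t)\le c_i+\bar p/\alpha+K$ --- obtained from $\alpha\sum_t(g_i(\tilde{\bm x}_t;\bm\theta_t)-\gamma_{i,t})\le p_{L+1,i}\le\bar p$, $\sum_t\gamma_{i,t}\le c_i$, and an $O(K)$ within‑batch slack --- the total blocking loss is $O(\bar p/\alpha+K)=O(\alpha^{-1}+K)$ (using $\bar p=O(1+\alpha K)$). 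Adding the $O(K)$ from the possibly incomplete last batch and assembling the three steps gives $\text{Reg}_T(\pi_{\text{B-IGD}})\le O(\alpha TK+\alpha^{-1}+K)$.

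The main obstacle is the blocking‑loss estimate, which (as stressed after Theorem~\ref{newnonstationtheorem}) has to be done pathwise with the dual‑price bound as the only lever: one must argue that once a resource is nearly depleted its dual coordinate is driven above $q$ quickly enough to zero out $g_i(\tilde{\bm x}_t;\bm\theta_t)$, so that the overshoot charged to each constraint is genuinely $O(\bar p/\alpha)$. Batching complicates exactly this bookkeeping --- a full batch is executed at a stale price before any correction --- so it inflates $\bar p$ from $q+1$ to $q+1+\alpha K$ and inserts $O(K)$‑size slacks into the budget‑accounting inequalities; one then has to verify all these extra terms are absorbed into $O(\alpha TK+\alpha^{-1}+K)$, which they are since $\alpha K^2\le\alpha TK$ and $\alpha K\le\alpha TK$ whenever $K\le T$. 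Everything else is a routine transcription of the unbatched proof.
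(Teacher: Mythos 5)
Your proposal is correct and follows essentially the same route as the paper's proof: a batched analogue of Lemma \ref{newupperlemma} giving $\|\bm{p}_l\|_\infty\le q+O(\alpha_{T,K}K)$, a stochastic-gradient drift/telescoping bound on the Lagrangian gap that yields the $O(TK\alpha_{T,K})$ term, and a pathwise budget-overshoot bound from the update rule plus Assumption \ref{assume}(c) that yields the $O(\alpha_{T,K}^{-1}+K)$ blocking loss. The only cosmetic difference is that you handle the first term via the subgradient inequality for $h$ at $\bm{p}^*$, whereas the paper uses Proposition \ref{uppernew2}'s shared-minimizer property to write $\min_{\bm{p}\ge 0}L_t(\bm{p})\le \mathbb{E}[L_t(\bm{p}_l)]$; the two are equivalent.
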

With a choice of $\alpha_{T,K}=\frac{1}{\sqrt{TK}}$, we obtain an regret upper bound of $O(\sqrt{TK})$ for the known distribution setting. Following the same spirit, for the data-driven setting, we obtain the following regret bound for \Cref{alg:BatchSOA}.

\begin{corollary}\label{coro:batch}
Suppose a prior estimate $\hat{\mathcal{P}}$ is available and the regret is defined based on the set $\Xi_{P}(W_T)$. Then, under Assumption \ref{assume}, the regret of B-IGD($\hat{\bm{\gamma}}, K, \alpha_{T,K}$), with $\hat{\bm{\gamma}}$ computed from \eqref{2007}, has the following upper bound
$$\text{Reg}_T(\pi_{\text{B-IGD}}) \le O(TK\alpha_{T,K}+\frac{1}{\alpha_{T,K}}+K+W_T)$$
where $\pi_{\text{B-IGD}}$ stands for the policy specified by B-IGD($\hat{\bm{\gamma}}, K, \alpha_{T,K}$).
\end{corollary}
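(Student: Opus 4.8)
The plan is to reproduce, in the batched regime, the reduction that carries the known-distribution bound of \Cref{newnonstationtheorem} to the prior-estimate bound of \Cref{uppertheorem1}, but starting instead from the batched known-distribution bound of \Cref{thm:batch}. The key observation is that B-IGD($\hat{\bm{\gamma}},K,\alpha_{T,K}$) is literally \Cref{alg:BatchSOA} fed the budget plan $\hat{\bm{\gamma}}$ derived from the prior estimate $\hat{\mathcal{P}}$ via \eqref{2007} rather than the true-distribution plan $\bm{\gamma}$ from \eqref{new2007}; so the only thing to control, on top of the $O(TK\alpha_{T,K}+1/\alpha_{T,K}+K)$ terms already delivered by \Cref{thm:batch}, is how the mismatch between $\hat{\mathcal{P}}$ and $\mathcal{P}$ propagates, and to show that its contribution is additive of order $W_T$.

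First I would record that the dual iterates stay bounded: since $\hat{\bm{\gamma}}_\tau\in[0,1]^m$ exactly as $\bm{\gamma}_\tau$, the argument behind \Cref{newupperlemma} and \Cref{upperlemmawp2} --- which rests only on Assumption \ref{assume}(c) --- applies to the batched update \eqref{batch003}, giving $\|\bm{p}_l\|_\infty\le\bar{p}$ for a constant $\bar{p}$ depending only on $q$ (and, harmlessly, on $K\alpha_{T,K}$, which is $O(1)$ for the natural choice $\alpha_{T,K}=1/\sqrt{TK}$). This uniform bound is needed precisely so that \Cref{wasserlemma} can be invoked with a fixed argument radius $\bar{p}$ everywhere below.

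Next I would split the regret into a benchmark-shift term and an algorithm term. For the benchmark shift, weak duality applied to the offline problem gives $\mathbb{E}[R^*_T]\le\bm{c}^\top\hat{\bm{p}}^*+\sum_{t=1}^T L_{\mathcal{P}_t}(\hat{\bm{p}}^*)$, and comparing this with $\hat{L}(\hat{\bm{p}}^*)=\bm{c}^\top\hat{\bm{p}}^*+\sum_{t=1}^T L_{\hat{\mathcal{P}}_t}(\hat{\bm{p}}^*)$ via \Cref{wasserlemma} term by term on $\Omega_{\bar{p}}$ yields $\mathbb{E}[R^*_T]\le\min_{\bm{p}\ge\bm{0}}\hat{L}(\bm{p})+\max\{1,\bar{p}\}(m+1)\sum_{t=1}^T\mathcal{W}(\mathcal{P}_t,\hat{\mathcal{P}}_t)=\min_{\bm{p}\ge\bm{0}}\hat{L}(\bm{p})+O(W_T)$, using $\mathcal{P}\in\Xi_P(W_T)$. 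For the algorithm term, I would run the batched stochastic-gradient accounting of \Cref{thm:batch} with $\hat{L}$ and $\hat{\bm{\gamma}}$ in place of $L$ and $\bm{\gamma}$: the Fenchel identity $f(\tilde{\bm{x}}_\tau;\bm{\theta}_\tau)=h(\bm{p}_l;\bm{\theta}_\tau)+\bm{p}_l^\top\bm{g}(\tilde{\bm{x}}_\tau;\bm{\theta}_\tau)$, the telescoping of $\|\bm{p}_l-\hat{\bm{p}}^*\|_2^2$ over the $\lceil T/K\rceil$ batches with per-batch gradient norm $O(K)$, the within-batch staleness correction, and the budget-depletion stopping-time argument all go through exactly as there and produce $O(TK\alpha_{T,K}+1/\alpha_{T,K}+K)$. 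The one change is that the realized batch increment $\sum_{\tau}(\bm{g}(\tilde{\bm{x}}_\tau;\bm{\theta}_\tau)-\hat{\bm{\gamma}}_\tau)$ has conditional mean governed by $\mathcal{P}_\tau$ rather than $\hat{\mathcal{P}}_\tau$; replacing each $\mathcal{P}_\tau$-expectation that appears (both in the primal reward sum $\sum_\tau L_{\mathcal{P}_\tau}(\bm{p}_l)$ and in the drift of the consumption process relative to $\sum_\tau\hat{\bm{\gamma}}_\tau$) by its $\hat{\mathcal{P}}_\tau$-counterpart costs, by \Cref{wasserlemma} and the $\bar{p}$-bound, at most $\max\{1,\bar{p}\}(m+1)\mathcal{W}(\mathcal{P}_\tau,\hat{\mathcal{P}}_\tau)$ per period, hence $O(W_T)$ in total. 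Adding the two terms gives $\text{Reg}_T(\pi_{\text{B-IGD}})\le O(TK\alpha_{T,K}+1/\alpha_{T,K}+K+W_T)$.

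The main obstacle is the bookkeeping in the last step: one must verify that the prior-estimate error enters only through the cumulative Wasserstein budget $W_T$ and only additively with the $\sqrt{TK}$-type terms --- in particular, that batching (updating $\bm{p}$ only every $K$ periods, so that within a batch the same stale price $\bm{p}_l$ is reused) does not cause the distributional bias $\sum_\tau\mathcal{W}(\mathcal{P}_\tau,\hat{\mathcal{P}}_\tau)$ to compound with either the within-batch drift $O(K)$ or the $O(1/\alpha_{T,K})$ telescoping term. This is exactly where the uniform dual bound $\|\bm{p}_l\|_\infty\le\bar{p}$ of the first step is used --- it keeps the Lipschitz-in-distribution constant of \Cref{wasserlemma} under control uniformly over all iterates and all batches --- and, once it is in hand, the remainder is a routine merge of the proofs of \Cref{thm:batch} and \Cref{uppertheorem1}.
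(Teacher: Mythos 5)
Your proposal is correct and follows exactly the route the paper intends: the corollary is stated without a separate proof precisely because it is the merge of the batched analysis of \Cref{thm:batch} with the prior-estimate accounting of \Cref{uppertheorem1} (dual boundedness as in \Cref{lem:batch}, the Wasserstein conversion via \Cref{wasserlemma} at both the benchmark and the per-period gradient-mean step, and the unchanged budget-depletion bound), which is what you carry out. The only cosmetic caveat is that the dual bound $q+\alpha_{T,K}K$ inflates the Lipschitz constant in \Cref{wasserlemma} by $\alpha_{T,K}K$, but since each per-period Wasserstein distance is bounded by a constant (so $W_T=O(T)$), the extra $\alpha_{T,K}K\,W_T$ is absorbed into the $O(TK\alpha_{T,K})$ term, so the stated bound holds for general step sizes, not only the natural choice.
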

With a choice of $\alpha_{T,K}=\frac{1}{\sqrt{TK}}$, we obtain the regret bound $O(\max\{\sqrt{TK}, W_T\})$ in \Cref{coro:batch} under the data-driven setting. For the uninformative setting, it is clear to see that the regret bound in \Cref{coro:batch} still holds when $W_T$ refers to the WBNB. For both regret bounds in Theorem \ref{thm:batch} and Corollary \ref{coro:batch}, the additional factor of $\sqrt{K}$ is the price paid for the infrequent update.

\section{Numerical Experiments}

\label{sec_numeric}

\subsection{Experiment I: Online Linear Programming}

We first present some synthetic experiments for the setting of online LP where both the reward and cost function are linear, i.e., $f_t(x)=r_tx$ and $g_{it}(x)=a_{it}x$ for $i=1,...,m$ and $t=1,...,T$. Suppose that both the true distribution and the prior estimate distribution of $a_{it}$ follow Uniform$[0.1,1.1]$ throughout the entire horizon. We consider three different settings for the true distribution and the prior estimate distribution of $r_t$ (summarized in Table \ref{experimentinput}). The parameters $\alpha$ and $\beta$ are to be specified: $\alpha$ reflects the intensity of the non-stationarity over time and $\beta$ represents the error of the prior estimate. Specifically, the deviation budget WBDB (in Section \ref{WBNB_P}) grows linearly with $\beta$, while the non-stationarity budget WBNB (in Section \ref{WBNB}) grows linearly with $\alpha$.

The first setting considers a uniform distribution for $r_t$: the true distribution of $r_t$ follows Uniform$[0,1]$ for the first half of the time and Uniform$[0,\alpha]$ for the second half, while the prior estimate distribution follows Uniform$[0,1+\beta]$ for the first half of the time and Uniform$[0,\alpha+\beta]$ for the second half.

The second setting considers a truncated normal distribution for $r_t$: the true distribution of $r_t$ follows Normal$(1,1)$ for the first half of the time and Normal$(\alpha,1)$ for the second half, while the prior estimate distribution follows Normal$(1+\beta,1)$ for the first half of the time and Normal$(\alpha+\beta,1)$ for the second half. All the normal distributions are made non-negative by truncating at 0.

The third setting considers a mixture of the previous two distributions of uniform and truncated normal: the true distribution of $r_t$ follows a uniform mixture of Uniform$[0,1]$ and Normal$(1,1)$ for the first half of the time and a uniform mixture of Uniform$[0,\alpha]$ and Normal$(\alpha,1)$ for the second half, while the prior estimate distribution follows a uniform mixture of Uniform$[0,1+\beta]$ and Normal$(1+\beta,1)$ for the first half of the time and a uniform mixture of Uniform$[0,\alpha+\beta]$ and Normal$(\alpha+\beta,1)$ for the second half. As the second setting, all the normal distributions are made non-negative by truncating at 0.
\begin{table}[ht!]
  \centering
  \begin{tabular}{c|c|cc|cc}
    \toprule
\multirow{2}{*}{}&    & \multicolumn{2}{c|}{True Distribution}& \multicolumn{2}{c}{Prior Estimate} \\
&    Periods & $r_t$ & $a_{it}$ & $\hat{r}_t$   & $\hat{a}_{it}$ \\
   \hline
\multirow{2}{*}{The uniform setting} &   $t=1,\dots,\floor{\frac{T}{2}}$ & $\text{Unif}[0,1]$  & $\text{Unif}[0.1,1.1]$ & $\text{Unif}[0,1+\beta]$ & $\text{Unif}[0.1,1.1]$ \\

 &   $t=\floor{\frac{T}{2}}+1,\dots,T$& $\text{Unif}[0,\alpha]$  & $\text{Unif}[0.1,1.1]$ & $\text{Unif}[0,\alpha+\beta]$ &$\text{Unif}[0.1,1.1]$ \\
 \midrule
 \multirow{2}{*}{The normal setting} &   $t=1,\dots,\floor{\frac{T}{2}}$ & $\text{Norm}(1,1)$  & $\text{Unif}[0.1,1.1]$ & $\text{Norm}(1+\beta,1)$ & $\text{Unif}[0.1,1.1]$ \\

 &   $t=\floor{\frac{T}{2}}+1,\dots,T$& $\text{Norm}(\alpha,1)$  & $\text{Unif}[0.1,1.1]$ & $\text{Norm}(\alpha+\beta,1)$ &$\text{Unif}[0.1,1.1]$ \\
 \midrule
 \multirow{2}{*}{The mixed setting} &   $t=1,\dots,\floor{\frac{T}{2}}$ & $\text{Mix}(1)$  & $\text{Unif}[0.1,1.1]$ & $\text{Mix}(\beta)$ & $\text{Unif}[0.1,1.1]$ \\

 &   $t=\floor{\frac{T}{2}}+1,\dots,T$& $\text{Mix}(\alpha)$  & $\text{Unif}[0.1,1.1]$ & $\text{Mix}(\alpha+\beta)$ &$\text{Unif}[0.1,1.1]$ \\
    \bottomrule
  \end{tabular}
  \caption{Input of Experiment I: Unif$[a,b]$ denotes the uniform distribution over the interval $[a,b]$. Norm$(a,b)$ denotes a normal distribution with mean $a$ and standard deviation $b$, truncated at 0. Mix($a$) denotes a uniform mixture of Unif$[0,a]$ and Norm$(a,1)$.}\label{experimentinput}
\end{table}
\OneAndAHalfSpacedXI

For the numerical experiments, we implement Algorithm \ref{alg:SOA} -- IGD($\hat{\bm{\gamma}}$) and Algorithm \ref{alg:SOAWOP} (UGD) under different choices of $\alpha$ and $\beta$ to study the algorithm performance under both settings in our paper. Besides, we implement the classic fixed bid price control heuristics (FBP) proposed in \cite{talluri1998analysis}. The FBP method uses $\hat{\bm{p}}^*$ computed from the prior estimate \eqref{p_star} as the bid price. It then accepts the order (setting $x_t=1$) when $r_t\geq\bm{a}_t^\top\hat{\bm{p}}^*$ and there is enough resource; otherwise, it will reject the order (setting $x_t=0$). We also compare with the offline-to-online (O2O) algorithm proposed in \citet{cheung2020online}. The O2O algorithm is a dual-based online algorithm for a non-stationary setting. It first runs an offline procedure to construct a set of dual variables using prior estimates $\{\hat{\mathcal{P}}_t\}_{t=1}^T$. Then, at each period $t$, after observing the realized parameter $\bm{\theta}_t$, the O2O algorithm draws a dual variable $\bm{p}$ uniformly randomly from the set, and then implements $\tilde{\bm{x}}_t=\text{argmax}_{\bm{x}\in\mathcal{X}}\{ f(\bm{x};\bm{\theta}_t)-\bm{p}^\top \cdot \bm{g}(\bm{x};\bm{\theta}_t) \}$. Note that the O2O algorithm is a static policy discussed in Section \ref{sec:staticpolicy}. In our implementation, we select different step sizes to generate multiple sets of dual variables. Then, we apply each set of dual variables to the online problem and we select the best reward to report the performance of the O2O algorithm.

Figure \ref{numericalfigure1} and Tables \ref{tablenumerical2}-\ref{tablenumericalmixed}  report the performances of the four algorithms with the horizon $T=1000$, number of constraints $m=10,$ and initial resource capacity $c_i=200$ for each $i=1,...,m$.

\begin{figure*}[ht!]
    \centering
    \begin{subfigure}[b]{0.5\textwidth}
        \centering
        \includegraphics[width=0.9\textwidth]{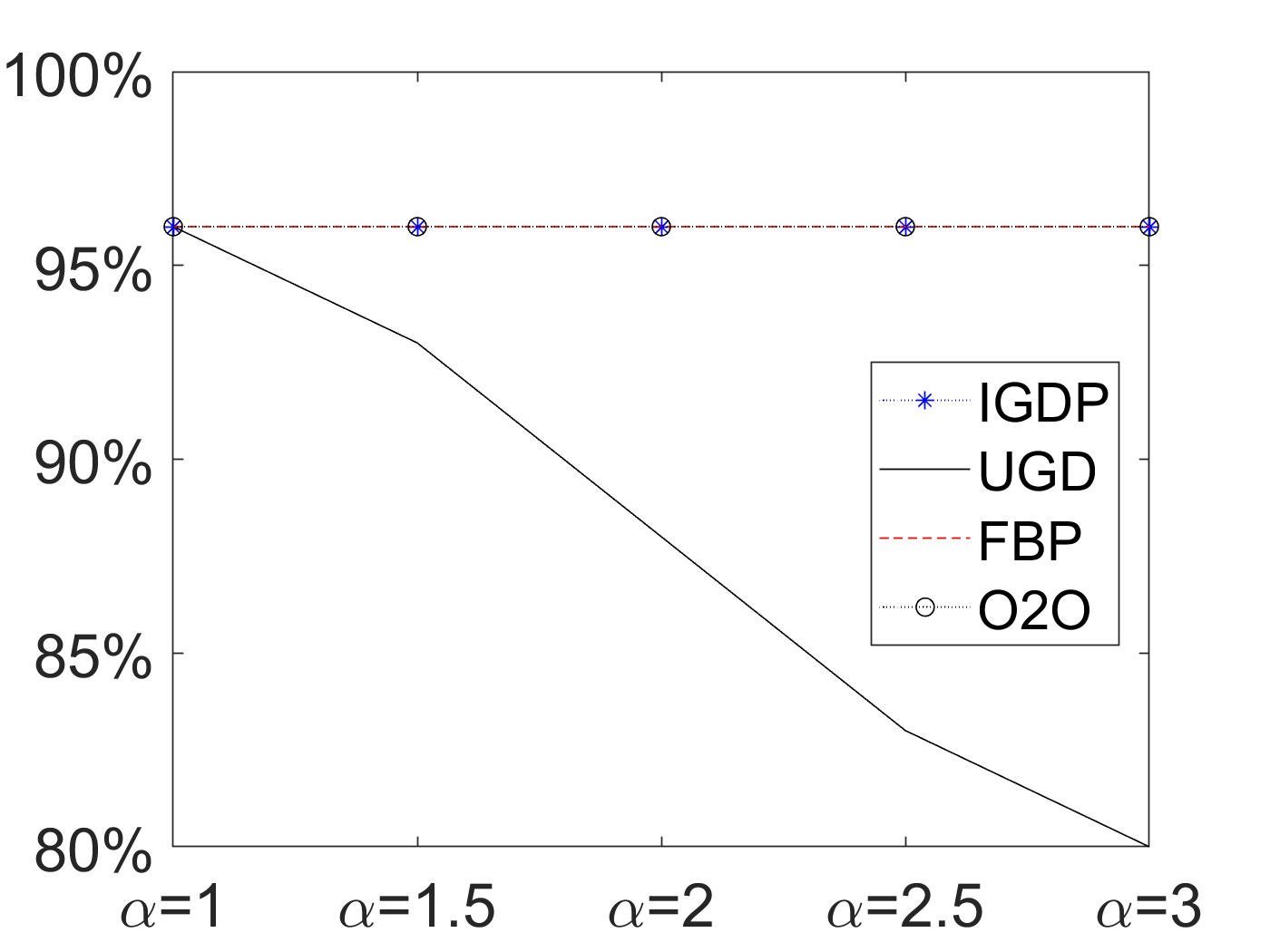}
        \caption{$\beta=0$ for the uniform setting}
    \end{subfigure}%
    ~
    \begin{subfigure}[b]{0.5\textwidth}
        \centering
        \includegraphics[width=0.9\textwidth]{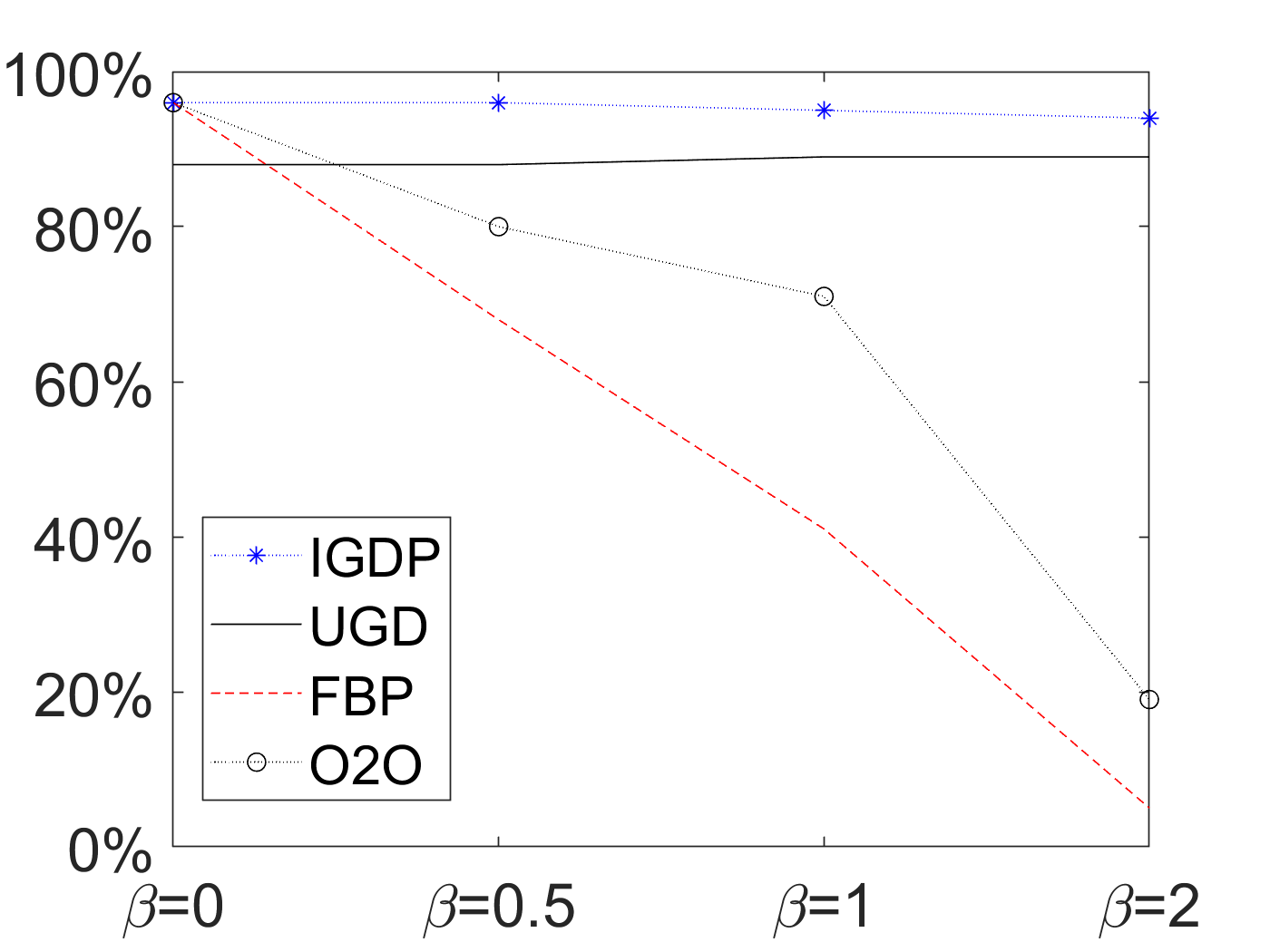}
        \caption{$\alpha=2$ for the uniform setting}
    \end{subfigure}
    \begin{subfigure}[b]{0.5\textwidth}
        \centering
        \includegraphics[width=0.9\textwidth]{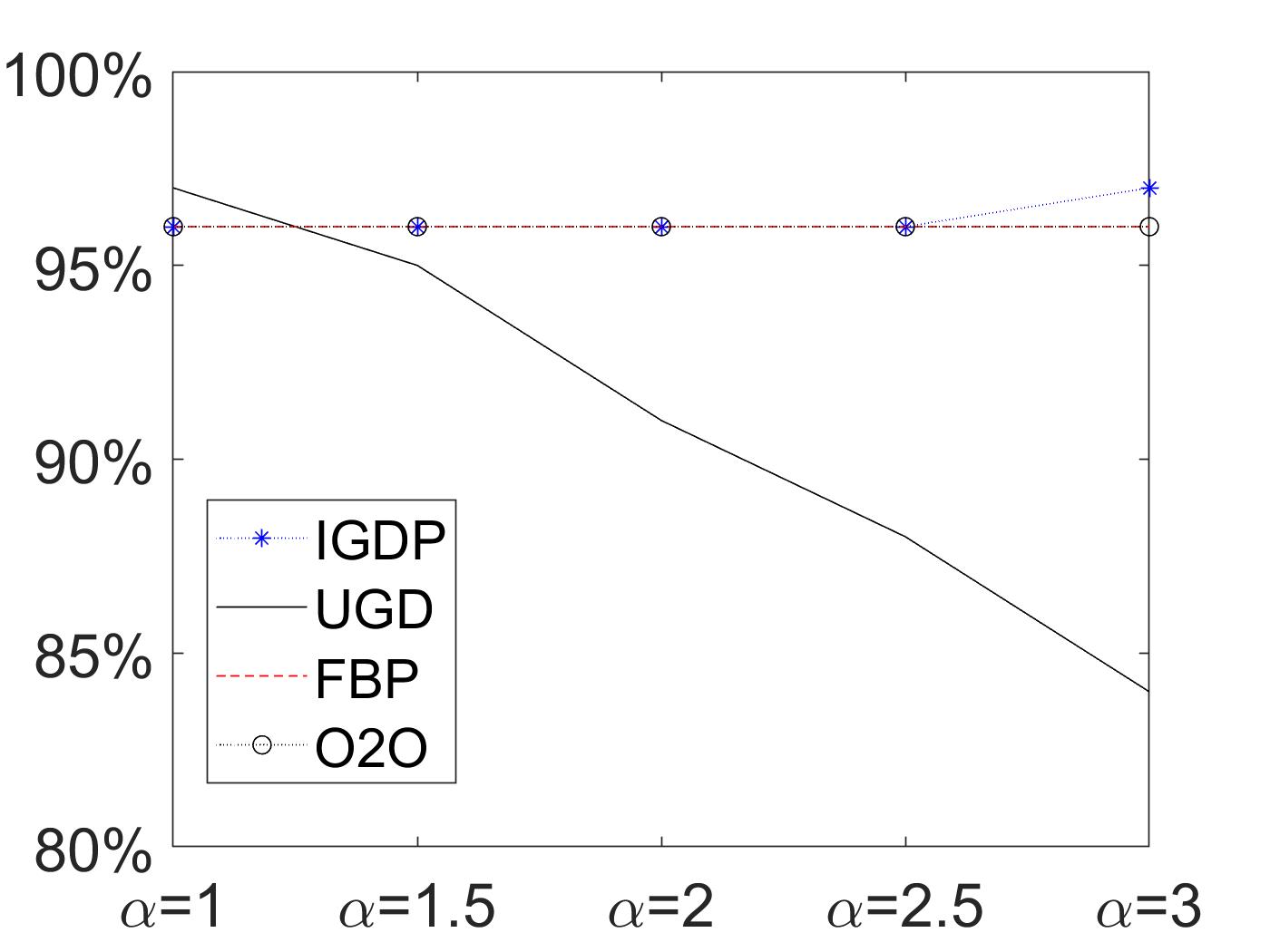}
        \caption{$\beta=0$ for the normal setting}
    \end{subfigure}%
    ~
    \begin{subfigure}[b]{0.5\textwidth}
        \centering
        \includegraphics[width=0.9\textwidth]{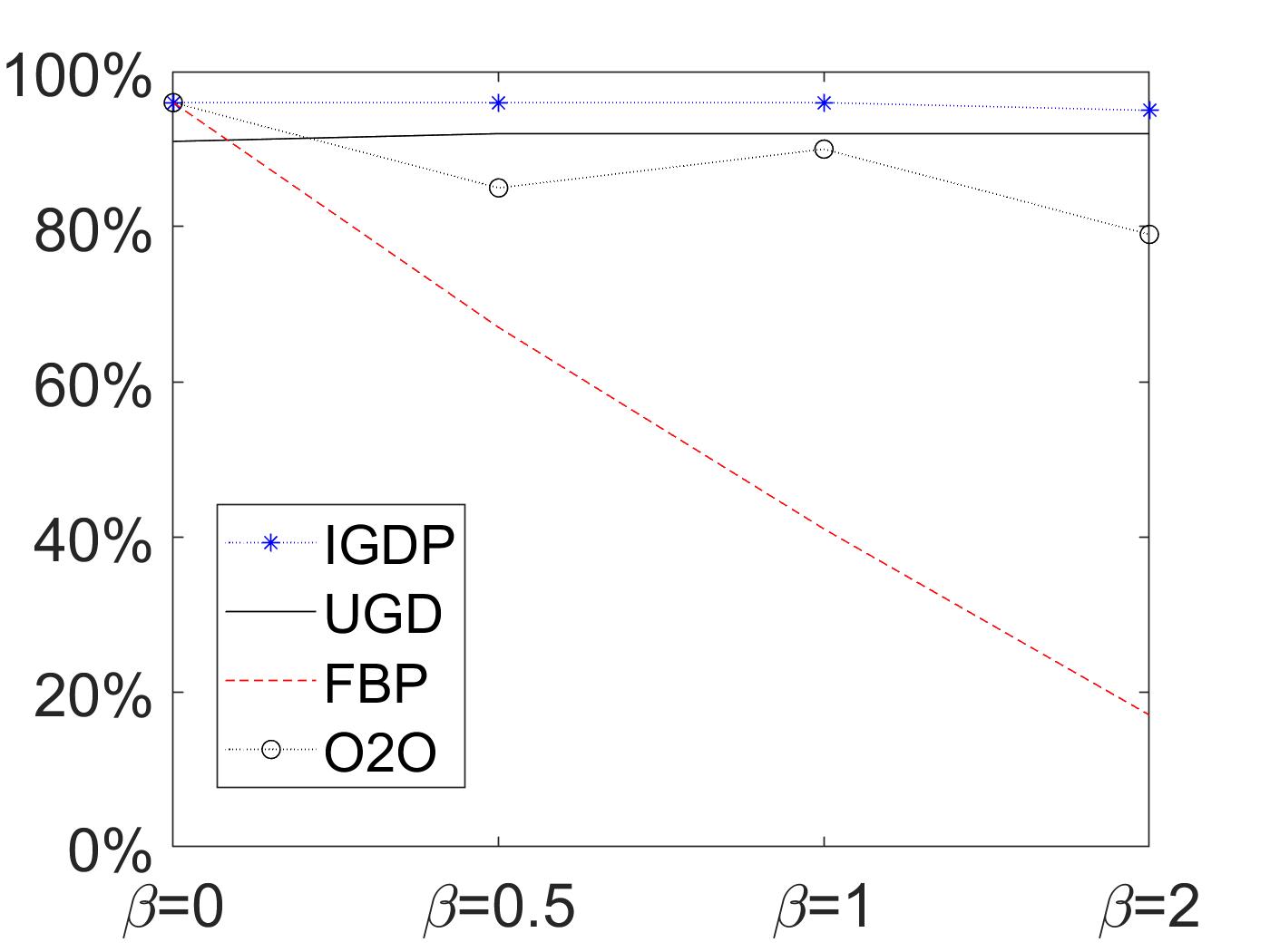}
        \caption{$\alpha=2$ for the normal setting}
    \end{subfigure}
    \begin{subfigure}[b]{0.5\textwidth}
        \centering
        \includegraphics[width=0.9\textwidth]{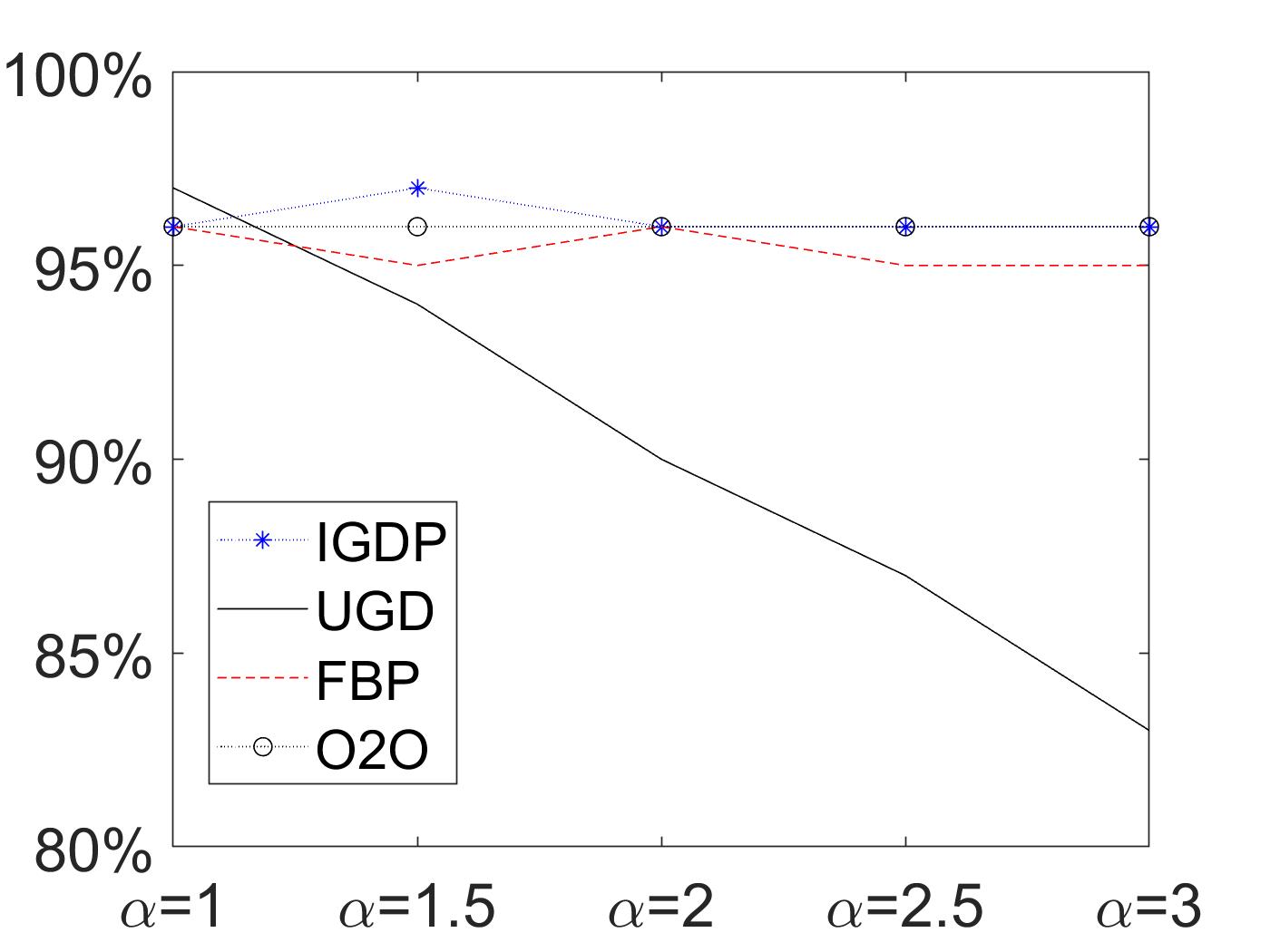}
        \caption{$\beta=0$ for the mixed setting}
    \end{subfigure}%
    ~
    \begin{subfigure}[b]{0.5\textwidth}
        \centering
        \includegraphics[width=0.9\textwidth]{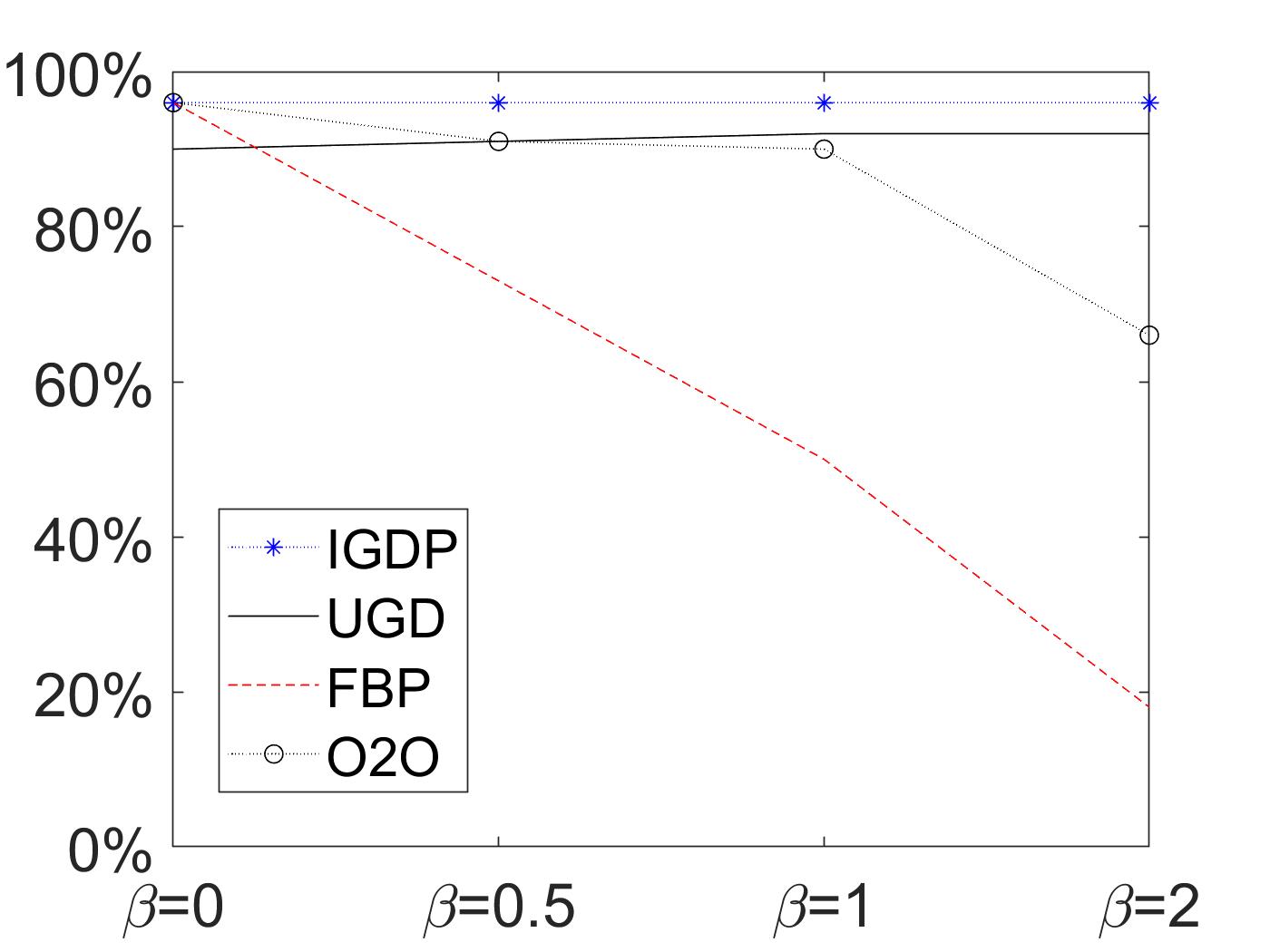}
        \caption{$\alpha=2$ for the mixed setting}
    \end{subfigure}
    \caption{The percentage of the total reward collected by IGDP, UGD, O2O and FBP over the upper bound for different $\alpha$ and different $\beta$ under different setting.}\label{numericalfigure1}
\end{figure*}

In Figure \ref{numericalfigure1}, the case of fixed $\alpha$ and varying $\beta$ examines the robustness of an algorithm to prior estimation error (different $\beta$) under a nonstationary environment ($\alpha>0$). In comparison, Algorithm UGD and IGDP have significant better performance than FBP and O2O. This shows the effectiveness of the gradient-based dynamic update and distinguishes our algorithms of UGD and IGDP from static policies such as FBP and O2O. Specifically, the performances of FBP and O2O deteriorate quickly as the estimation error $\beta$ increases, while both gradient-based algorithms remain stable. Note that FBP computes a fixed dual price vector and O2O constructs a set of fixed dual vectors completely based on the prior estimates. However, as the estimation errors increase, the gap between the ``true'' dual vector and the pre-computed dual vector(s) in FBP or O2O will also increase, and the static nature of the two algorithms prevents a dynamic correction of the dual vector. As a result, the performances of FBP and O2O become worse as the estimation error increases. In contrast, the dynamic dual update of UGD and IGDP naturally combines the demand realization and the prior estimates (through parameters $\hat{\bm{\gamma}}$). Therefore, the performance of IGDP remains stable as the estimation errors increase. These numerical evidence reinforces our previous theoretical arguments on the sub-optimality of static policies in \Cref{sec:staticpolicy}.

In Figure \ref{numericalfigure1}, the case of varying $\alpha$ and $\beta=0$ corresponds to the setting of Section \ref{WBNB} where the value of $\alpha$ indicates the intensity of the non-stationarity for the underlying distribution $\mathcal{P}_t$'s. The performance of UGD deteriorates as $\alpha$ increases, and thus it validates the role of WBNB in characterizing the algorithm performance. The algorithms IGDP, O2O and FBP provide unsurprisingly better and more stable performance because all of them utilize the prior estimate (which is exactly the true distribution when $\beta=0$). This comparison highlights the effectiveness of WBNB in characterizing the learnability of a non-stationary environment when there is no prior knowledge available (as for UGD), and also it underscores the usefulness of prior knowledge.

Based on these experiments, we make the following remarks: First, both of FBP and O2O algorithms can deal with the non-stationarity to some extent (as in Figure \ref{numericalfigure1}(a)). However, their performances highly depends on the accuracy of the prior estimates. If the deviation of the prior estimates from the true distributions is large, both performances can become very poor. Second, the performance of UGD is robust to the deviation of prior estimate since it does not utilize any prior estimate. Meanwhile, the downside is that UGD is more sensitive to the intensity of non-stationarity compared to the other two algorithms. Third, the performance of IGDP is relatively more robust to both the non-stationarity and the deviation of the prior estimates. Specifically, IGDP obtains parameters $\bm{\gamma}_t$ for each $t$ to handle the non-stationarity and also utilizes the gradient descent updates in a dynamic way to hedge against the deviation. In this light, IGDP combines both the advantages of FBP, UGD and O2O.

\subsection{Experiment II: Network Revenue Management with Resolving Heuristics}
Our second numerical experiment is adapted from the network revenue management experiment in \cite{jasin2015performance}. We consider a hub-and-spoke model with $8$ cities, $14$ connecting flights, and $41$ itineraries. The detailed itinerary structure and the normalized capacities in our experiment are referred to Table $1$ and Table $2$ in \cite{jasin2015performance}, respectively. To model the non-stationarity, we add random noises to the arrival probabilities of the itineraries. Specifically, we denote $\mathcal{P}_0\in\mathbb{R}^{41}$ as the arrival probabilities in \cite{jasin2015performance} and we set $\alpha\in[0,1]$ as a parameter, which will be specified later. For each $t$, the arrival probabilities are set as
\[
\mathcal{P}_t=(\mathcal{P}_0+\alpha\cdot \text{Unif}(41,1))/\text{sum}(\mathcal{P}_0+\alpha\cdot \text{Unif}(41,1))
\]
where $\text{Unif}(41,1)$ denotes a $41$-dimensional vector of i.i.d. uniformly distributed random variables over $[0,1]$ and $\text{sum}(\cdot)$ denotes the summation of all the components of a vector. For each time $t$, an independent random noise vector will be generated. In our experiment, the probability vector $\mathcal{P}_t$ is only generated once at the very beginning, and it will remain the same in all the simulation trials. Intuitively, the value of $\alpha$ represents the magnitude of adjustment compared to the original probability $\mathcal{P}_0$, and thus it reflects the intensity of non-stationarity of the underlying distributions. Similarly, we introduce another parameter $\beta$ to reflect the deviation of the prior estimates $\hat{\mathcal{P}}_t$ from the true distribution $\mathcal{P}_t$. Let
\[
\hat{\mathcal{P}}_t=(\mathcal{P}_t+\beta\cdot \text{Unif}(41,1))/\text{sum}(\mathcal{P}_t+\beta\cdot \text{Unif}(41,1)).
\]
We report the algorithm performance for different $(\alpha, \beta )$ in Table \ref{numericaltable3}.

In the table, we also implement a re-solving version of IGD($\hat{\bm{\gamma}}$) which periodically  invokes a re-solving procedure based on the remaining resource and the prior estimate to the dual updates in IGD($\hat{\bm{\gamma}}$). Specifically, the re-solving heuristic re-solves the upper bound function $\hat{L}(\cdot)$ based on the current time period $t$ and the remaining budget $\bm{c}_t$ to obtain an updated $\bm{p}_t$ on a regular basis. For each time $t\in\mathcal{T}\subset\{1,...,T\},$ we solve the following problem
\[
\hat{\bm{p}}_t^*=\text{argmin}_{\bm{p}\geq0} \bm{c}_t^\top\bm{p}+\sum_{j=t}^{T}\hat{\mathcal{P}}_j h(\bm{p},\bm{\theta})
\]
and use its optimal solution as the dual price for the $t$-th time period. Also, we update $\hat{\bm{\gamma}}_t$'s accordingly. Here the set $\mathcal{T}$ contains the time periods that we will re-solve the problem. For each time $t\in\mathcal{T}$, the preceding dual price $\bm{p}_{t-1}$ will be discarded and after time period $t$, we will continue to implement the gradient-based update as in IGD($\hat{\bm{\gamma}}$) until the next re-solving time. The re-solving method has gain great popularity in both theory and application \citep{gallego2019revenue}, and in this experiment, we investigate how the technique can be used to further boost the performance of IGD($\hat{\bm{\gamma}}$).

Table \ref{numericaltable3} reports the performance of IGD($\hat{\bm{\gamma}}$) and its re-solving heuristics with frequency $k=1,50,100,200.$ First, we note that the original version of IGD($\hat{\bm{\gamma}}$) exhibits stably well performance across different combinations of $\alpha$ (the non-stationary intensity) and $\beta$ (the estimation error). Second, the re-solving heuristic further boosts the performance of IGD($\hat{\bm{\gamma}}$). We observe an improved performance even when the frequency $k=200$ (which means only re-solving for $5$ times throughout the horizon), but the performance improvement becomes marginal as we further increase the re-solving frequency. From a computational perspective, the re-solving heuristic for IGD($\hat{\bm{\gamma}}$) provides a good trade-off between computational efficiency and algorithm performance, especially for large-scale system. It naturally blends the computational advantage of IGD($\hat{\bm{\gamma}}$) and the algorithmic adaptivity of the re-solving technique. In recent years, the literature has developed a good understanding of both the advantages and drawbacks of the re-solving technique in a stationary environment (See \citep{cooper2002asymptotic, reiman2008asymptotically, jasin2012re, jasin2015performance, bumpensanti2020re} among others). In parallel, the experiment here raises an interesting but challenging future direction of understanding the re-solving technique in a non-stationary environment.

\DoubleSpacedXI
\begin{table}[ht!]
  \centering
  \begin{tabular}{|c|c|c|c|c|c|}
    \hline
     \multicolumn{2}{|c|}{}  & $\alpha=0$ & $\alpha=0.01$ & $\alpha=0.02$ & $\alpha=0.04$ \\
     \hline
     \multicolumn{2}{|c|}{Upper Bound} & $537000$ & $550158$ & $559358$ & $571775$\\
     \hline
    \multirow{5}{*}{$\beta=0$} & IGDP & $520173 (96.9\%)$ & $528884 (96.1\%)$  & $535678 (95.8\%)$ &  $542302 (94.9\%)$  \\
    \cline{2-6}
    & Re-solve ($200$) & $524233 (97.6\%)$ & $536703 (97.6\%)$ & $543961 (97.3\%)$ & $555291 (97.1\%)$ \\
    \cline{2-6}
    & Re-solve ($100$) & $523685 (97.5\%)$ & $536427 (97.5\%)$ & $544318 (97.3\%)$ & $556507 (97.3\%)$\\
    \cline{2-6}
    & Re-solve ($50$) & $523700 (97.5\%)$ & $537134 (97.6\%)$ & $544105 (97.3\%)$ & $555119 (97.1\%)$ \\
    \cline{2-6}
     & Re-solve ($1$) & $527839 (98.3\%)$  & $540176 (98.2\%)$ & $547443 (97.9\%)$  &  $558640 (97.7\%)$ \\
     \hline
    \hline
    \multirow{5}{*}{$\beta=0.01$} & IGDP & $518718 (96.6\%)$ &  $528611 (96.1\%)$ & $534221 (95.6\%)$  & $543138 (95.0\%)$  \\
    \cline{2-6}
    & Re-solve ($200$) & $524267 (97.6\%)$ & $536168 (97.6\%)$ & $545057 (97.5\%)$ & $555260 (97.1\%)$ \\
    \cline{2-6}
    & Re-solve ($100$) & $524438 (97.7\%)$ & $537389 (97.7\%)$ & $544832 (97.5\%)$ & $557295 (97.5\%)$\\
    \cline{2-6}
    & Re-solve ($50$) & $525178 (97.8\%)$ & $537462 (97.7\%)$ & $545845 (97.7\%)$ & $556276 (97.3\%)$ \\
    \cline{2-6}
     & Re-solve ($1$) & $529752 (98.7\%)$  & $541583 (98.5\%)$ & $549673 (98.3\%)$  & $560308 (98.0\%)$  \\
     \hline
    \hline
    \multirow{5}{*}{$\beta=0.02$} & IGDP & $518287 (96.5\%)$ & $527937 (96.0\%)$  & $533538 (95.5\%)$  & $545654 (95.5\%)$  \\
    \cline{2-6}
    & Re-solve ($200$) & $523689 (97.5\%)$ & $534737 (97.3\%)$ & $543426 (97.2\%)$ & $556498 (97.4\%)$ \\
    \cline{2-6}
    & Re-solve ($100$) & $525031 (97.8\%)$ & $537302 (97.7\%)$ & $545829 (97.7\%)$ & $555397 (97.3\%)$\\
    \cline{2-6}
    & Re-solve ($50$) & $525356 (97.8\%)$ & $537465 (97.8\%)$ & $544992 (97.5\%)$ & $555949 (97.3\%)$   \\
    \cline{2-6}
     & Re-solve ($1$) & $529732 (98.7\%)$ & $541779 (98.5\%)$ & $550519 (98.5\%)$ & $560719 (98.2\%)$ \\
     \hline
    \hline
    \multirow{5}{*}{$\beta=0.04$} & IGDP & $518405 (96.5\%)$ & $528083 (96.0\%)$ & $536737 (96.0\%)$ & $545405 (95.5\%)$ \\
    \cline{2-6}
    & Re-solve ($200$) & $524127 (97.6\%)$ & $535318 (97.4\%)$ & $544818 (97.5\%)$ & $554169 (97.0\%)$\\
    \cline{2-6}
    & Re-solve ($100$) & $523991 (97.6\%)$ & $536770 (97.6\%)$ & $544874 (97.5\%)$ & $556560 (97.5\%)$\\
    \cline{2-6}
    & Re-solve ($50$) & $525299 (97.8\%)$ & $536481 (97.6\%)$ & $545335 (97.6\%)$ & $556776 (97.5\%)$ \\
    \cline{2-6}
     & Re-solve ($1$) & $530349 (98.8\%)$ & $541633 (98.5\%)$ & $550063 (98.4\%)$ & $560365 (98.1\%)$ \\
     \hline
    \hline
  \end{tabular}
  \caption{The numerical experiment based on the network revenue management problem (\cite{jasin2015performance}). The results are obtained for $T=1000$ and $100$ trails. Re-solve ($a$) denotes the frequency of re-solving, namely, re-solving the upper bound problem to update dual variable and $\hat{\bm{\gamma}}_t$ for every $a$ periods. For each entry b(c) of the table, b denotes the expected reward collected by the algorithm and c denotes the percentage of the expected reward of the algorithm over the upper bound.}\label{numericaltable3}
\end{table}
\OneAndAHalfSpacedXI






\ \

\bibliographystyle{ormsv080} 
\bibliography{myreferences} 

\begin{APPENDIX}{Proofs of Lemmas, Propositions and Theorems}

\OneAndAHalfSpacedXI

\renewcommand{\thesubsection}{A\arabic{subsection}}

\section{Proofs of Section \ref{known_distr}}

\subsection{Proof of Lemma \ref{upper1}}

\begin{proof}{Proof:}
Given the realized parameters $\mathcal{H}=(\bm{\theta}_1, \bm{\theta}_2,\dots,\bm{\theta}_T)$, we can denote the offline optimum of \eqref{PCP} as a function of $\mathcal{H}$, namely, $\{\bm{x}^*_t(\mathcal{H})\}_{t=1}^T$. Let $$\tilde{\bm{x}}_t(\bm{\theta})=\mathbb{E}\left[\bm{x}^*_t(\mathcal{H})|\bm{\theta}_t=\bm{\theta}\right]$$
where the conditional expectation is taken with respect to $\mathcal{P}_j$ for $j\neq t$. We show that $\tilde{\bm{x}}_t(\bm{\theta})$ is a feasible solution to \eqref{PUB}. Specifically, note that for each $i=1,...,m$,
\[\begin{aligned}
c_i&\geq\mathbb{E}\left[\sum_{t=1}^{T}g_i(\bm{x}^*_t(\mathcal{H});\bm{\theta}_t)\right]=\sum_{t=1}^{T}\mathbb{E}_{\bm{\theta}_t\sim\mathcal{P}_t}\left[\mathbb{E}[g_i(\bm{x}^*_t(\mathcal{H});\bm{\theta}_t)|\bm{\theta}_t=\bm{\theta}] \right]\\
&\geq\sum_{t=1}^{T}\int_{\bm{\theta}\in\Theta}g_i(\tilde{\bm{x}}_t(\bm{\theta});\bm{\theta})d\mathcal{P}_t(\bm{\theta}) = \sum_{t=1}^T \mathcal{P}_t g_i(\bm{x}_t;\bm{\theta})
\end{aligned}\]
where the first inequality comes from the feasibility of the optimal solution $\bm{x}_t^*(\mathcal{H})$ and the second inequality follows from that the function $g_i(\cdot;\bm{\theta}_t)$ is a convex function for each $i$ and $\bm{\theta}_t\in\Theta$. Thus, $\{\tilde{\bm{x}}_t(\bm{\theta})\}$ is a feasible solution to \eqref{PUB}. Similarly, we can analyze the objective function
\[\begin{aligned}
\mathbb{E}[R^*_T]&=\mathbb{E}\left[\sum_{t=1}^{T}f(\bm{x}^*_t(\mathcal{H});\bm{\theta}_t)\right]=\sum_{t=1}^{T}\mathbb{E}_{\bm{\theta}_t\sim\mathcal{P}_t}\left[\mathbb{E}[f(\bm{x}^*_t(\mathcal{H});\bm{\theta}_t)|\bm{\theta}_t=\bm{\theta}] \right]\\
&\leq\sum_{t=1}^{T}\int_{\bm{\theta}\in\Theta}f(\tilde{\bm{x}}_t(\bm{\theta});\bm{\theta})d\mathcal{P}_t(\bm{\theta})\leq R^{\text{UB}}_T
\end{aligned}\]
where the first inequality follows from that the function $f(\cdot;\bm{\theta})$ is a concave function for any $\bm{\theta}\in\Theta$ and the last inequality comes from the optimality of $R^{\text{UB}}_T$. Thus we complete the proof.
\Halmos
\end{proof}

\subsection{Proof of Proposition \ref{uppernew2}}

\begin{proof}{Proof:}
We first prove that $\bm{p}^*$ is an optimal solution for $L_t$. Note that for each $t$, $L_t(\bm{p})$ is a convex function over $\bm{p}$ and
\[
\nabla L_t(\bm{p}^*)=\bm{\gamma}_t+\mathcal{P}_t\nabla h(\bm{p}^*;\bm{\theta}_t)=\bm{\gamma}_t-\mathcal{P}_t\bm{g}(\bm{x}^*(\bm{\theta}_t);\bm{\theta}_t)
\]
where $\bm{x}^*(\bm{\theta})=\text{argmax}_{\bm{x}\in\mathcal{X}}\{f(\bm{x};\bm{\theta})-(\bm{p}^*)^T\cdot\bm{g}(\bm{x};\bm{\theta})\}$. With the definition of $\bm{\gamma}_t$ in \eqref{new2007}, it follows immediately that
\[
\nabla L_t(\bm{p}^*)=0
\]
which implies that $\bm{p}^*$ is a minimizer of the function $L(\cdot)$ for each $t$. We then prove that $L(\bm{p}^*)=\sum_{t=1}^{T}L_t(\bm{p}^*)$. Define the set of binding constraints $\mathcal{I}_B=\{i: p_i^*>0, i=1,...,m\}$. From the convexity of the function $L(\bm{p})$ over $\bm{p}$, for each $i\in \mathcal{I}_B$, it holds that
\[
0=\nabla_i L(\bm{p}^*)=c_i-\sum_{t=1}^{T}\mathcal{P}_t\bm{g}(\bm{x}^*(\bm{\theta}_t);\bm{\theta}_t)=c_i-\sum_{t=1}^{T}\gamma_{t,i}
\]
Thus, we have that
\[
\bm{c}^\top\cdot\bm{p}^*=\sum_{t=1}^{T}\bm{\gamma}_t^\top\cdot\bm{p}^*
\]
It follows immediately that $L(\bm{p}^*)=\sum_{t=1}^{T}L_t(\bm{p}^*)$.
\Halmos
\end{proof}

\subsection{Discussions on the function $h$}
\label{discussion_h}
The definition of the function $h(\bm{p};\bm{\theta})$, described as follows for completeness, plays a critical role in deriving our results,
\begin{equation}\label{appen1}
h(\bm{p};\bm{\theta})=\max_{\bm{x}\in\mathcal{X}}\left\{ f(\bm{x};\bm{\theta})-\sum_{i=1}^{n} p_i\cdot g_i(\bm{x};\bm{\theta}) \right\}.
\end{equation}
Throughout the paper, we assume that the optimization problem in \eqref{appen1} can be solved efficiently so as to obtain both its optimal solution and optimal objective value. Here, we justify this assumption with a discussion of the computational aspect of solving \eqref{appen1}. Note that when the function $h$ is defined in the setting with random reward and consumption given in \Cref{sec:stochastic} where $f, \bm{g}$ are random functions, $f, \bm{g}$ can be simply replaced by their expectations $\hat{f}, \hat{\bm{g}}$ in the definition of $h$ in \eqref{appen1}. In the following discussion, we use the notations of $f, \bm{g}$ and $\hat{f}, \hat{\bm{g}}$ exchangeably for simplicity.

When \eqref{PCP} refers to the online linear programming problem or price-based network revenue management (NRM) problem as described in \Cref{sec:example}, the optimization problem in \eqref{appen1} is reduced to a simple convex optimization problem and it can indeed be solved in polynomial time by existing methods. We now consider \eqref{appen1} when \eqref{PCP} represents the choice-based NRM problem.

In the choice-based NRM problem, there are $n$ products and each product $i$ is associated with a revenue $r_i$. The definitions of $\bm{x}_t$ and the functions $f, \bm{g}$ are described in \Cref{sec:example}. Given the assortment $s$, the customer with the type ${\theta}_t$ chooses one product $i$ to purchase with a probability $\eta_i(s;{\theta}_t)$, where $\bm{\eta}$ is specified by the choice model of the customer and is assumed to be known to the decision maker. Here, the function $f$ refers to the expected revenue of the assortment $\bm{x}_t$ and the function $g_i$ refers to the probability that a product $i$ is purchased:
\[
f(\bm{x}_t;{\theta}_t)=\sum_{s\in\mathcal{S}}\sum_{j=1}^{n}x_{t,s}\cdot r_j\cdot\eta_j(s;{\theta}_t)\text{~~and~~}g_i(\bm{x}_t;{\theta}_t)=\sum_{s\in\mathcal{S}}x_{t,s}\cdot\eta_i(s;{\theta}_t)~~~\forall i
\]
Then, the optimization problem in \eqref{appen1} could be rewritten in the following equivalent formulation:
\begin{equation}\label{appen2}
\max_{s\in\mathcal{S}\subset\{0,1\}^n}\sum_{i=1}^{n}(r_i-p_i)\cdot\eta_i(s;{\theta}_t)
\end{equation}
With the following choice models to specify the function $\eta_i$'s, the optimization problem in \eqref{appen2} could all be solved efficiently.\\
\textbf{Multinomial logit model (NML)}: \cite{talluri2004revenue} show that without further constraints, it is optimal to sort the products according to a decreasing order $r_i-p_i$ and find an optimal revenue-order assortment among $\{1\}, \{1,2,\},\dots, \{1,2\dots,n\}$. \cite{rusmevichientong2010dynamic} further propose a simple polynomial-time algorithm to compute the optimal assortment where there is a capacity constraint. \cite{davis2013assortment} consider the assortment problem with totally unimodular constraints and show that the constrained problem could be solved as an equivalent linear program.\\
\textbf{Nested logit model}: \cite{li2015d} consider the assortment optimization problem under a $d$-level nested logit model and propose an algorithm to compute the optimal assortment in polynomial time. \cite{gallego2014constrained} consider the setting with cardinality constraint and show that the optimal assortment can be obtained efficiently by solving a linear program. \cite{gallego2014constrained} further consider the setting with a capacity constraint and propose an algorithm with constant competitiveness and good empirical performance.\\
\textbf{Markov chain based models}: \cite{blanchet2016markov} propose the Markov chain based model and develop polynomial-time solution algorithms. \cite{feldman2017revenue} propose a linear programming approach to obtain the optimal assortment under this model. \\
\textbf{General choice model}: There are also papers considering the assortment problem without exploiting the specific structures of the choice model. \cite{jagabathula2014assortment} studies a local search heuristic and shows its great empirical performance. Also, \cite{berbeglia2020assortment} analyze the performance of the revenue-order assortment heuristic under the general discrete choice model, which can be applied to solve \eqref{appen2} approximately.

\subsection{Proof of Lemma \ref{newupperlemma}}

\begin{proof}{Proof:}
Note that the following two properties are satisfied by the update rule \eqref{003}:
\begin{itemize}
  \item[(i).] If $\|\bm{p}_t\|_{\infty}\leq q$, then we must have $\|\bm{p}_{t+1}\|_{\infty}\leq q+1$ by noting that for each $i$, the $i$-th component of $\bm{p}_t$, denoted as $p_{t,i}$, is nonnegative and $g_i(\cdot,\bm{\theta}_t)$ is normalized within $[0,1]$.
  \item[(ii).] If there exists $i$ such that $p_{t,i}>q$, then we must have $p_{t+1,i}<p_{t,i}$. Specifically, when $p_{t,i}>q$, we must have that $g_i(\tilde{\bm{x}}_t;\bm{\theta}_t)=0$, otherwise we would have that
      \[
      f(\tilde{\bm{x}}_t;\bm{\theta}_t)-\bm{p}_t^\top\cdot \bm{g}(\tilde{\bm{x}}_t;\bm{\theta}_t)\leq f(\tilde{\bm{x}}_t;\bm{\theta}_t)-p_{t,i}\cdot g_i(\tilde{\bm{x}}_t;\bm{\theta}_t)<0
      \]
      which contradicts the definition of $\tilde{\bm{x}}_t$ in Algorithm \ref{alg:SOA} since we could always select $\bm{x}_t=\bm{0}$ to obtain a zero objective value as per Assumption \ref{assume}. Then from \eqref{003}, it holds that $p_{t+1,i}<p_{t,i}$.
\end{itemize}
Starting from $\bm{p}_1=\bm{0}$ and iteratively applying the above two property to control the increase of $\bm{p}_t$ from $t=1$ to $T$, we obtain that for the first time that one component of $\bm{p}_t$ exceeds the threshold $q$, it is upper bounded by $q+1$ and this component will continue to decrease until it falls below the threshold $q$. Thus, we have $\|\bm{p}_t\|_{\infty}\leq q+1$ with probability $1$ for each $t$.
\Halmos
\end{proof}

\subsection{Proof of Theorem \ref{newnonstationtheorem}}
\begin{proof}{Proof:}
In IGD($\bm{\gamma}$), the true action $\bm{x}_t$ taken by the decision maker differs from the virtual action $\tilde{\bm{x}}_t$ if and only if $\bm{c}_t$ cannot fully satisfy $\bm{g}(\tilde{\bm{x}}_t;\bm{\theta}_t)$. Thus, we have that
\[
f(\tilde{\bm{x}}_t;\bm{\theta}_t)-f(\bm{x}_t;\bm{\theta}_t)\leq f(\tilde{\bm{x}}_t;\bm{\theta}_t)\cdot \mathbb{I}\left\{\exists i: c_{t,i}<g_i(\tilde{\bm{x}}_t;\bm{\theta}_t)\right\}
\]
where $c_{t,i}$ denotes the $i$-th component of $\bm{c}_t$ and $\mathbb{I}\left\{\cdot\right\}$ denotes the indicator function. Moreover, we know
\[
\mathbb{I}\left\{\exists i: c_{t,i}<g_i(\tilde{\bm{x}}_t;\bm{\theta}_t)\right\}
\leq \sum_{i=1}^{m}\mathbb{I}\left\{\sum_{j=1}^t g_i(\tilde{\bm{x}}_j;\bm{\theta}_j)>c_i\right\}.
\]
Recall that the maximum reward generated by consuming per unit of budget of each constraint is upper bounded by $q$. We have
\[
f(\tilde{\bm{x}}_t;\bm{\theta}_t)\cdot \mathbb{I}\left\{\exists i: c_{t,i}<g_i(\tilde{\bm{x}}_t;\bm{\theta}_t)\right\}
\leq q \sum_{i=1}^{m} g_i(\tilde{\bm{x}}_t;\bm{\theta})\cdot\mathbb{I}\left\{\sum_{j=1}^t g_i(\tilde{\bm{x}}_j;\bm{\theta}_j)>c_i\right\}
\]
From the fact that $\bm{g}(\cdot;\bm{\theta}_t)\in[0,1]^m$,
\[\begin{aligned}
\sum_{t=1}^{T}f(\tilde{\bm{x}}_t;\bm{\theta}_t)-\sum_{t=1}^{T}f(\bm{x}_t;\bm{\theta}_t)&\leq q\cdot\sum_{i=1}^{m}\sum_{t=1}^{T}g_i(\tilde{\bm{x}}_t;\bm{\theta}_t)\cdot\mathbb{I}\left\{\sum_{j=1}^t g_i(\tilde{\bm{x}}_j;\bm{\theta}_j)>c_i\right\} \\
&\leq q\cdot\sum_{i=1}^{m}\left[\sum_{t=1}^{T}g_i(\tilde{\bm{x}}_t;\bm{\theta}_t)-(c_i-1) \right]^{+}
\end{aligned}\]
which related the total collected reward by the true action $\{\bm{x}_t\}_{t=1}^T$ and the virtual action $\{\tilde{\bm{x}}_t\}_{t=1}^T$. Further from Proposition \ref{uppernew2}, we have that
\[\begin{aligned}
\text{Reg}_T(\pi)\leq  \min_{\bm{p}\geq0}\sum_{t=1}^{T} L_t(\bm{p})-\mathbb{E}\left[\sum_{t=1}^{T}f(\bm{x}_t;\bm{\theta}_t) \right]&\leq \underbrace{\sum_{t=1}^{T} \min_{\bm{p}\geq0} L_t(\bm{p})-\mathbb{E}\left[\sum_{t=1}^{T}f(\tilde{\bm{x}}_t;\bm{\theta}_t)\right]}_{\text{I}}\\
&+\underbrace{q\cdot\mathbb{E}\left[\sum_{i=1}^{m}\left[\sum_{t=1}^{T}g_i(\tilde{\bm{x}}_t;\bm{\theta}_t)-(c_i-1) \right]^{+}\right]}_{\text{II}}
\end{aligned}\]
We then bound the term I and term II separately to derive our regret bound.\\
\textbf{Bound I}: 
Note that for each $t$, the distribution of $\bm{p}_t$ is independent from the distribution of $\bm{\theta}_{\tau}$ for any $\tau\leq t$, then we have that
\[
\min_{\bm{p}\geq\bm{0}}L_t(\bm{p})\leq \mathbb{E}_{\bm{p}_t}\left[ L_t(\bm{p}_t) \right] =\mathbb{E}_{\bm{p}_t}\left[  \bm{\gamma}_t^\top\bm{p}_t+\mathcal{P}_th(\bm{p}_t;\bm{\theta}_t)  \right]
\]
where the expectation is taken with respect to the randomness of the dual price $\bm{p}_t.$ Thus, we have
\[
\text{I}\leq \sum_{t=1}^{T}\mathbb{E}_{\bm{p}_t}\left[\bm{\gamma}_t^\top\bm{p}_t+\mathcal{P}_t\left\{h(\bm{p}_t;\bm{\theta}_t)-f(\tilde{\bm{x}}_t;\bm{\theta}_t) \right\} \right]
\]
From the definition of $\tilde{\bm{x}}_t$, we get that $h(\bm{p}_t;\bm{\theta}_t)-f(\tilde{\bm{x}}_t;\bm{\theta}_t)=-\bm{p}_t^\top\cdot \bm{g}(\tilde{\bm{x}}_t;\bm{\theta}_t)$,
which implies that
\[
\text{I}\leq \sum_{t=1}^{T}\mathbb{E}_{\bm{p}_t}\left[\bm{p}_t^\top\cdot\left( \bm{\gamma}_t-\mathcal{P}_t \bm{g}(\tilde{\bm{x}}_t;\bm{\theta}_t)\right)\right]
\]
Note that from the update rule \eqref{003}, we have that
\[
\|\bm{p}_{t+1}\|_2^2\leq\|\bm{p}_t\|_2^2+\frac{1}{T}\cdot\|\bm{g}(\tilde{\bm{x}}_t;\bm{\theta}_t)-\bm{\gamma}_t\|_2^2-\frac{2}{\sqrt{T}}\cdot \bm{p}_t^\top\cdot\left( \bm{\gamma}_t-\bm{g}(\tilde{\bm{x}}_t;\bm{\theta}_t)\right)
\]
which implies that
\[
\mathbb{E}_{\bm{p}_t}\left[\bm{p}_t^\top\cdot\left( \bm{\gamma}_t-\mathcal{P}_t \bm{g}(\tilde{\bm{x}}_t;\bm{\theta}_t)\right)\right]\leq \frac{\sqrt{T}}{2}\cdot\left(\mathbb{E}[\|\bm{p}_t\|_2^2]-\mathbb{E}[\|\bm{p}_{t+1}\|_2^2] \right)+\frac{m}{2\sqrt{T}}
\]
Thus, it holds that
\begin{equation}\label{newregret101}
\text{I}\leq \frac{m\sqrt{T}}{2}
\end{equation}
\textbf{Bound II}: Note that from the update rule \eqref{003}, we have that
\[
\sqrt{T}\cdot\bm{p}_{t+1}\geq \sqrt{T}\cdot\bm{p}_t+\bm{g}(\tilde{\bm{x}}_t;\bm{\theta}_t)-\bm{\gamma}_t
\]
which implies that
\[
\sum_{t=1}^{T}\bm{g}(\tilde{\bm{x}}_t;\bm{\theta}_t)-\bm{c}\leq \sum_{t=1}^{T}\bm{g}(\tilde{\bm{x}}_t;\bm{\theta}_t)-\sum_{t=1}^{T}\bm{\gamma}_t \leq \sqrt{T}\cdot\bm{p}_{T+1}
\]
Thus, it holds that
\begin{equation}\label{newregret102}
\text{II}=q\cdot\mathbb{E}\left[\sum_{i=1}^{m}\left[\sum_{t=1}^{T}g_i(\tilde{\bm{x}}_t;\bm{\theta}_t)-(c_i-1) \right]^{+}\right]\leq mq(q+1)\cdot\sqrt{T}+qm
\end{equation}
We obtain the $O(\sqrt{T})$ regret bound immediately by combining \eqref{newregret101} and \eqref{newregret102}.
\Halmos
\end{proof}

\subsection{Proof of \Cref{prop:infinitesupportlower}}
\begin{proof}{Proof:}
We construct the following distribution $\mathcal{P}$ over the parameter $\theta$. Suppose that there is a single resource. Denote the support set of $\mathcal{P}$ as $\{\theta^{(0)}, \theta^{(1)}, \theta^{(2)}, \dots, \theta^{(j)}, \dots\}$ and for each $j=1,2,\dots$, we have that
\[
\mathcal{P}(\theta=\theta^{(j)})=\frac{1}{2}\cdot \frac{1}{2^j}\text{~~and~~}\mathcal{P}(\theta=\theta^{(0)})=\frac{1}{2}
\]
Moreover, we have the following condition over the support set $\{\theta^{(1)}, \theta^{(2)}, \dots, \theta^{(j)}, \dots\}$ of $\mathcal{P}$. Denote sequences of constants $\{a_j\}$, $\{b_j\}$, where
\[
a_j=1+\frac{2j+1}{2j(j+1)}, ~b_j=1+1/(j+1)
\]
for $j=1,2,....$ Clearly, we have $1\leq\dots\leq a_{j+1}<b_j<a_{j}\dots\leq2$. The reward function satisfies
\[
0\leq\min_{0<x\leq 1}\frac{f(x;\theta^{(0)})}{x}\leq\max_{0<x\leq 1}\frac{f(x;\theta^{(0)})}{x}\leq 1\text{~and~}   b_j\leq\min_{0<x\leq 1}\frac{f(x;\theta^{(j)})}{x}\leq\max_{0<x\leq 1}\frac{f(x;\theta^{(j)})}{x}\leq a_j,~~\forall j=1,2,\dots
\]
There is a single resource and the resource consumption function satisfies $g(x;\theta)=x$ for each $\theta$. We are now ready to construct the example and show the lower bound.

Suppose there is a single resource with an initial capacity $\frac{T}{2}$, where $T$ is the total time periods. For a fixed $T$, there must exists an integer $k$ such that
\[
\frac{1}{2\sqrt{T}}\leq\frac{1}{2^{k+2}}\leq\frac{1}{\sqrt{T}}
\]
Now, we assume without loss of generality that $T$ is set such that
\[
\frac{1}{2^{k+2}}=\frac{1}{\sqrt{T}}
\]
We now divide the support set $\{\theta^{(0)}, \theta^{(1)}, \theta^{(2)}, \dots, \theta^{(j)}, \dots\}$ into three subsets:
\[
A_1=\{\theta^{(j)}: 1\leq j<k\}, ~A_2=\{\theta^{(k)}\}\text{~and~} A_3=\{\theta^{(0)}, \theta^{(j)}: j\ge k+1\}.
\]
Clearly, at each time period $t$, we have that
\[
P(\theta_t\in A_1)=\frac{1}{2}+\frac{1}{2^{k+1}}, P(\theta_t\in A_2)=\frac{1}{2^{k+1}}\text{~and~}P(\theta_t\in A_3)=\frac{1}{2}-\frac{2}{2^{k+1}}.
\]
Denote $\varepsilon=\frac{1}{2^{k+2}}$. We call that we encounter a type $q$ request whenever the event $\{\theta_t\in A_q\}$ happens, where $q=1,2,3$.
We now denote $Z_{q}^t$ as the number type $q$ requests in the first $t$ periods. We next introduce the event:
\[\begin{aligned}
&\mathcal{H}_1=\left\{ \frac{1}{\epsilon}\leq Z_2^{T}\leq \min\{ 2Z_2^{T/2}, \frac{2}{\epsilon} \} \right\}=\left\{ \frac{1}{2}\mathbb{E}[Z_2^T]\leq Z_2^T\leq \min\{2Z_2^{T/2},\mathbb{E}[Z_2^T]  \} \right\}\\
&\mathcal{H}_2=\left\{ Z_1^T\geq\frac{T}{2}+\frac{2}{\epsilon}   \right\}=\left\{ Z_1^T\geq\mathbb{E}[Z_1^T]+\frac{6}{\epsilon}  \right\}\\
&\mathcal{H}_3=\left\{ Z_1^T\leq\frac{T}{2}-\frac{4}{\epsilon} \right\}=\left\{ Z_1^T\leq \mathbb{E}[Z_1^T] \right\}
\end{aligned}\]
It is obvious that on event $\mathcal{H}_2$, particularly on the event $\mathcal{H}_2\cap\mathcal{H}_1$, all the resources will be consumed by type 1 requests by the offline optimum. That is, at each period $t$, the offline optimum will set the decision variable $x_t$ non-zero only when the event $\{\theta_t\in A_1\}$ happens. Similarly, on event $\mathcal{H}_1\cap\mathcal{H}_3$, the offline optimum will set the decision variable $x_t=1$ whenever the event $\{\theta_t\in A_1\}$ or $\{\theta_t\in A_2\}$ happens. Following the same argument as Arlloto and Gurvich (2019), we know that there exists a constant $\alpha_1>0$, independent of $\epsilon$, such that
\[
P(\mathcal{H}_1\cap\mathcal{H}_2)\geq\alpha_1 \text{~~and~~}P(\mathcal{H}_1\cap\mathcal{H}_3)\geq\alpha_1
\]
Next, we consider the difference between the offline optimum and the dynamic programming policy. We denote $S^{T/2}_2$ as the amount of resource consumed by type 2 request during the first $T/2$ periods, and we consider the event
\[
\mathcal{H}_4=\left\{ S^{T/2}_2\geq \frac{Z_2^{T}}{4} \right\}
\]
Now, on the event $\mathcal{H}_4^c\cap\mathcal{H}_1\cap\mathcal{H}_3$, the offline optimum will set $x_t=1$ for all type 1 and type 2 request. On the other hand, the optimal online policy consumes at most  $Z^{T}_2/4$ amount of resource using type 2 requests during the first $T/2$ periods. Note that as induced by the event $\mathcal{H}_1$, at most $Z^T_2/2$ number of type 2 request can arrive during the last $T/2$ periods. Thus, for the optimal online policy, type 2 request can consume at most $\frac{3}{4}\cdot Z^{T}_2$ amount of resource, while the offline optimum consume $Z^T _2$ amount of resource with type 2 request. Further note that the reward/size for type 2 request is at least $b_k$ and the reward/size for type 3 request is at most $a_{k+1}$. This will incur a regret at least
\[
\mathbb{E}[\text{Regret}]\geq \frac{b_{k}-a_{k+1}}{4}\cdot\mathbb{E}[Z^T_2\cdot 1\{ \mathcal{H}_4^c\cap\mathcal{H}_1\cap\mathcal{H}_3 \}]\geq \frac{b_{k}-a_{k+1}}{4\epsilon}\cdot P(\mathcal{H}_4^c\cap\mathcal{H}_1\cap\mathcal{H}_3)
\]
For each arrival sample path that falls in the set $\mathcal{H}_4\cap\mathcal{H}_1\cap\mathcal{H}_3$, we can find another sample path in $\mathcal{H}_4\cap\mathcal{H}_1\cap\mathcal{H}_2$ by keeping the first $T/2$ arrivals unchanged, and for the last $T/2$ periods, replacing at most $6/\epsilon$ type 3 requests with type 1 requests. We denote this resulting set of sample path as $\mathcal{L}$. Note that for each period, the arrival probability for type 1 request and type 3 request are both bounded away from 0. We know that there exists a constant $\alpha_2>0$, independent of $\epsilon$, such that
\[
P(\mathcal{L})\geq\alpha_2\cdot P(\mathcal{H}_4\cap\mathcal{H}_1\cap\mathcal{H}_3).
\]
Moreover, for each sample path in the set $\mathcal{L}$, since it is on the event $\mathcal{H}_4\cap\mathcal{H}_1\cap\mathcal{H}_2$, we know that the offline optimum will consume all the resource with type 1 request, while the optimal online policy consumes at least $\frac{Z_2^{T}}{4}$ amount of resource with type 2 requests. Note that the reward/size for type 1 is at least $b_{k-1}$, while the reward/size for type 2 is at most $a_k$. This will incur a regret at least
\[
\mathbb{E}[\text{regret}]\geq \frac{b_{k-1}-a_{k}}{4}\cdot \mathbb{E}[Z^T_2\cdot 1\{ \mathcal{L} \}]\geq \frac{b_{k-1}-a_{k}}{4\epsilon}\cdot P(\mathcal{L})\geq \alpha_2\cdot\frac{b_{k-1}-a_{k}}{4\epsilon}\cdot P(\mathcal{H}_4\cap\mathcal{H}_1\cap\mathcal{H}_3).
\]
Since
\[
P(\mathcal{H}_1\cap\mathcal{H}_3)=P(\mathcal{H}_4^c\cap\mathcal{H}_1\cap\mathcal{H}_3)+P(\mathcal{H}_4\cap\mathcal{H}_1\cap\mathcal{H}_3)\geq\alpha_1,
\]
we know that there exists a constant $\alpha_3>0$, independent of $\epsilon$, such that
\[
\mathbb{E}[\text{regret}]\geq \frac{\alpha_3}{\epsilon}\cdot\min\{ b_{k-1}-a_{k}, b_{k}-a_{k+1} \}.
\]
Recall that $\epsilon=1/\sqrt{T}$ and
\[
\min\{ b_{k-1}-a_{k}, b_{k}-a_{k+1} \}=\frac{1}{(k+1)(k+2)}
\]
and that $\frac{1}{2^{k+2}}=\frac{1}{\sqrt{T}}$, i.e., $k+2=\frac{\log T}{2}$. We have
\[
\mathbb{E}[\text{regret}]\geq \frac{\alpha_3}{4}\cdot\frac{\sqrt{T}}{(\log T)^2}
\]
which completes our proof.
\end{proof}

\section{Proofs of Section \ref{WBNB_P}}

\renewcommand{\thesubsection}{B\arabic{subsection}}

\subsection{Proof of Theorem \ref{newlowertheorem}}
\begin{proof}{Proof:}
It follows directly from \Cref{prop:infinitesupportlower} that for any policy $\pi$, we have $\text{Reg}_T(\pi)\geq\Omega(\sqrt{T})$. Thus, it is enough to consider the $\Omega(W_T)$ part in the lower bound. We consider the following estimated problem, where the true coefficients in \eqref{PCP} are replaced by the estimates:
\begin{align}
   \max \ \ &  x_1+...+x_{c}+x_{c+1}+...+x_{T}  \label{newappendixeg0} \\
    \text{s.t. }\ & x_1+...+x_{c}+x_{c+1}+...+x_{T} \le c\nonumber \\
    & 0 \le x_t \le 1\ \text{ for } t=1,...,T. \nonumber
\end{align}
where $c=\frac{T}{2}$ and the prior estimate $\hat{\mathcal{P}}_t$ is simply a one-point distribution for each $t$. Now we consider the following two possible true problems, the distributions of which are all one-point distributions and belong to the set $\Xi_P$ with variation budget $W_T$:
\begin{align}
   \max \ \ &  x_1+...+x_{c}+\left(1+\frac{W_T}{T}\right)x_{c+1}+...+\left(1+\frac{W_T}{T}\right)x_{T}  \label{newappendixeg3} \\
    \text{s.t. }\ & x_1+...+x_{c}+x_{c+1}+...+x_{T} \le c\nonumber \\
    & 0 \le x_t \le 1\ \text{ for } t=1,...,T. \nonumber \\
   \max \ \ &  x_1+...+x_{c} +\left(1-\frac{W_T}{T}\right)x_{c+1}+...+\left(1-\frac{W_T}{T}\right)x_{T} \label{newappendixeg4} \\
    \text{s.t. }\ & x_1+...+x_{c}+x_{c+1}+...+x_{T} \le c \nonumber\\
    & 0 \le x_t \le 1\ \text{ for } t=1,...,T.\nonumber
\end{align}
where $c=\frac{T}{2}$. Denote $x^1_t(\pi)$ as the decision of any policy $\pi$ at period $t$ for scenario \eqref{newappendixeg3} and denote $x^2_t(\pi)$ as the decision of policy $\pi$ at period $t$ for scenario \eqref{newappendixeg4}. Further define $T_1(\pi)$ (resp. $T_2(\pi)$) as the expected capacity consumption of policy $\pi$ on scenario \eqref{newappendixeg3} (resp. scenario \eqref{newappendixeg4}) during the first $\frac{T}{2}$ time periods:
\[
T_1(\pi)=\mathbb{E}\left[\sum_{t=1}^{\frac{T}{2}}x^1_t(\pi)\right] \text{~~~and~~~} T_2(\pi)=\mathbb{E}\left[\sum_{t=1}^{\frac{T}{2}}x^2_t(\pi)\right]
\]
Then, we have that
\[
R_T^1(\pi)=\frac{T+W_T}{2}-\frac{W_T}{T}\cdot T_1(\pi)\text{~~~and~~~}R_T^2(\pi)=\frac{T-W_T}{2}+\frac{W_T}{T}\cdot T_2(\pi)
\]
where $R_T^1(\pi)$ (resp. $R_T^2(\pi)$) denotes the expected reward collected by policy $\pi$ on scenario \eqref{newappendixeg3} (resp. scenario \eqref{newappendixeg4}). Thus, the regret of policy $\pi$ on scenario \eqref{newappendixeg3} and \eqref{newappendixeg4} are $\frac{W_T}{T}\cdot T_1(\pi)$ and $W_T-\frac{W_T}{T}\cdot T_2(\pi)$ respectively. Further note that since the implementation of policy $\pi$ at each time period should be only dependent on the historical information and the coefficients in the estimated problem \eqref{newappendixeg0}, we must have $T_1(\pi)=T_2(\pi)$. Thus, we have that
\[
\text{Reg}_T(\pi)\geq\max\left\{\frac{W_T}{T}\cdot T_1(\pi),W_T-\frac{W_T}{T}\cdot T_1(\pi) \right\}\geq\frac{W_T}{2}=\Omega(W_T)
\]
which completes our proof.
\Halmos
\end{proof}

\subsection{Proof of Lemma \ref{wasserlemma}}

\begin{proof}{Proof:}
Due to symmetry, it is sufficient to show that for every $\bm{p}$ such that $\bm{p}\in\Omega_{\bar{p}}$,
\[
L_{\mathcal{Q}_2}(\bm{p})-L_{\mathcal{Q}_1}(\bm{p})\leq \max\{1,\bar{p}\}\cdot W(\mathcal{Q}_1, \mathcal{Q}_2).
\]
Denote $\mathcal{Q}^*_{1,2}$ as the optimal coupling of the distribution $\mathcal{Q}_1$ and $\mathcal{Q}_2$, i.e., the optimal solution to \eqref{wasserstein}, and denote
$$\bm{x}^*(\bm{\theta})=\text{argmax}_{\bm{x}\in\mathcal{X}}\left\{f(\bm{x}; \bm{\theta})-\sum_{i=1}^{m}p_i\cdot g_i(\bm{x};\bm{\theta}) \right\}$$
Then for each $\bm{\theta}_1\in\Theta$, we define
$$
\bm{\hat{x}}(\bm{\theta}_1)=\int_{\bm{\theta}_2\in\Theta}\bm{x}^*(\bm{\theta}_2)\frac{d\mathcal{Q}^*_{1,2}(\bm{\theta}_1,\bm{\theta}_2)}{d\mathcal{Q}_1(\bm{\theta}_1)}
$$
where $\frac{d\mathcal{Q}^*_{1,2}(\bm{\theta}_1,\bm{\theta}_2)}{d\mathcal{Q}_1(\bm{\theta}_1)}$ is the Radon–Nikodym derivative of $\mathcal{Q}^*_{1,2}$ with respect $\mathcal{Q}_1$ and it can be interpreted as the conditional distribution of $\bm{\theta}_2$ given $\bm{\theta}_1.$ Note that from the definition of $\mathcal{Q}^*_{1,2}$, we have that $\int_{\bm{\theta}_2\in\Theta}\frac{d\mathcal{Q}^*_{1,2}(\bm{\theta}_1,\bm{\theta}_2)}{d\mathcal{Q}_1(\bm{\theta}_1)}=1$. Thus, $\bm{\hat{x}}(\bm{\theta}_1)$ is actually a convex combination of $\{\bm{x}^*(\bm{\theta}_2) \}_{\forall \bm{\theta}_2\in\Theta}$. Moreover, from the concavity of $f(\cdot;\bm{\theta}_1)$ and the convexity of $g_i(\cdot;\bm{\theta}_1)$ for each $i$, we have that
\[
f(\bm{\hat{x}}(\bm{\theta}_1);\bm{\theta}_1)\geq \int_{\bm{\theta}_2\in\Theta}f(\bm{x}^*(\bm{\theta}_2);\bm{\theta}_1)\cdot\frac{d\mathcal{Q}^*_{1,2}(\bm{\theta}_1,\bm{\theta}_2)}{d\mathcal{Q}_1(\bm{\theta}_1)}
\]
and
\[
g_i(\bm{\hat{x}}(\bm{\theta}_1);\bm{\theta}_1)\leq \int_{\bm{\theta}_2\in\Theta}g_i(\bm{x}^*(\bm{\theta}_2);\bm{\theta}_1)\cdot\frac{d\mathcal{Q}^*_{1,2}(\bm{\theta}_1,\bm{\theta}_2)}{d\mathcal{Q}_1(\bm{\theta}_1)}
\]
Thus, we have that
\[\begin{aligned}
L_{\mathcal{Q}_1}(\bm{p})&=
\int_{\bm{\theta}_1\in\Theta}\max_{\bm{x}\in\mathcal{X}}\left\{f(\bm{x}; \bm{\theta}_1)-\sum_{i=1}^{m}p_i\cdot g_i(\bm{x};\bm{\theta}_1) \right\}d\mathcal{Q}_1(\bm{\theta}_1)\\
&\geq \int_{\bm{\theta}_1\in\Theta}\left\{f(\bm{\hat{x}}(\bm{\theta}_1); \bm{\theta}_1)-\sum_{i=1}^{m}p_i\cdot g_i(\bm{\hat{x}}(\bm{\theta}_1);\bm{\theta}_1) \right\}d\mathcal{Q}_1(\bm{\theta}_1)\\
&\geq \int_{\bm{\theta}_1\in\Theta}\int_{\bm{\theta}_2\in\Theta}\left\{f(\bm{x}^*(\bm{\theta}_2); \bm{\theta}_1)-\sum_{i=1}^{m}p_i\cdot g_i(\bm{x}^*(\bm{\theta}_2);\bm{\theta}_1) \right\}d\mathcal{Q}^*_{1,2}(\bm{\theta}_1,\bm{\theta}_2)
\end{aligned}\]
Also, note that for any $\bm{\theta}_1, \bm{\theta}_2\in\Theta$, it holds that
\[
f(\bm{x}^*(\bm{\theta}_2); \bm{\theta}_1)-\sum_{i=1}^{m}p_i\cdot g_i(\bm{x}^*(\bm{\theta}_2);\bm{\theta}_1)\geq
f(\bm{x}^*(\bm{\theta}_2); \bm{\theta}_2)-\sum_{i=1}^{m}p_i\cdot g_i(\bm{x}^*(\bm{\theta}_2);\bm{\theta}_2) -\max\{1,\bar{p}\}\cdot(m+1)\rho(\bm{\theta}_1, \bm{\theta}_2)
\]
which follows the definition of $\rho(\bm{\theta}_1, \bm{\theta}_2)$ in \eqref{rho_dist}. Thus, we get that
\[\begin{aligned}
L_{\mathcal{Q}_1}(\bm{p})&\geq\int_{\bm{\theta}_1\in\Theta}\int_{\bm{\theta}_2\in\Theta}\left\{f(\bm{x}^*(\bm{\theta}_2); \bm{\theta}_2)-\sum_{i=1}^{m}p_i\cdot g_i(\bm{x}^*(\bm{\theta}_2);\bm{\theta}_2) \right\}d\mathcal{Q}^*_{1,2}(\bm{\theta}_1,\bm{\theta}_2)\\
&~~-\max\{1,\bar{p}\}\cdot(m+1)\int_{\bm{\theta}_1\in\Theta}\int_{\bm{\theta}_2\in\Theta}\rho(\bm{\theta}_1, \bm{\theta}_2)d\mathcal{Q}^*_{1,2}(\bm{\theta}_1,\bm{\theta}_2)\\
&=\int_{\bm{\theta}_2\in\Theta}\left\{f(\bm{x}^*(\bm{\theta}_2); \bm{\theta}_2)-\sum_{i=1}^{m}p_i\cdot g_i(\bm{x}^*(\bm{\theta}_2);\bm{\theta}_2) \right\}d\mathcal{Q}_2(\bm{\theta}_2)-\max\{1,\bar{p}\}\cdot (m+1) \mathcal{W}(\mathcal{Q}_1,\mathcal{Q}_2)\\
&=L_{\mathcal{Q}_2}(\bm{p})-\max\{1,\bar{p}\}\cdot(m+1) \mathcal{W}(\mathcal{Q}_1,\mathcal{Q}_2)
\end{aligned}\]
where the first equality holds by noting that $\int_{\bm{\theta}_1\in\Theta}d\mathcal{Q}^*_{1,2}(\bm{\theta}_1,\bm{\theta}_2)=d\mathcal{Q}_2(\bm{\theta}_2)$.
\Halmos
\end{proof}

As a remark, we note that the proof of \Cref{wasserlemma} will still go through even when the concavity and convexity of $f(\cdot;\bm{\theta})$ and $g_i(\cdot;\bm{\theta})$ do not hold. To see this, we use $F$ to denote a distribution over the action set $\mathcal{X}$ and accordingly,
\[
\hat{f}(F;\bm{\theta})=\int_{\bm{x}\in\mathcal{X}}\hat{f}(\bm{x};\bm{\theta})dF(\bm{x})\text{~and~}\hat{g}_i(F;\bm{\theta})=\int_{\bm{x}\in\mathcal{X}}g_i(\bm{x};\bm{\theta})dF(\bm{x}).
\]
Then, we denote
\[
\hat{h}(\bm{p};\bm{\theta}) \coloneqq \max_{F}\left\{\hat{f}(F; \bm{\theta})-\bm{p}^\top \hat{\bm{g}}(F;\bm{\theta}) \right\}, ~~\hat{L}_{\mathcal{Q}}(\bm{p})=\mathcal{Q}\hat{h}(\bm{p};\bm{\theta})
\]
and
\[
\hat{\rho}(\bm{\theta}, \bm{\theta}')=\sup_{F}\| (\hat{f}(F;\bm{\theta}), \hat{\bm{g}}(F;\bm{\theta}))-(\hat{f}(F;\bm{\theta}'), \hat{\bm{g}}(F;\bm{\theta}')) \|_{\infty},~\hat{\mathcal{W}}(\mathcal{Q}_1, \mathcal{Q}_2) \coloneqq \inf_{\mathcal{Q}_{1,2} \in \mathcal{J}(\mathcal{Q}_1, \mathcal{Q}_2)} \int \hat{\rho}(\bm{\theta_1},\bm{\theta_2}) d\mathcal{Q}_{1,2}(\bm{\theta_1},\bm{\theta_2}).
\]
Now that $\hat{f}(F;\bm{\theta})$ and $\hat{\bm{g}}(F;\bm{\theta})$ can be regarded as linear functions of $(dF(\bm{x}), \forall \bm{x}\in\mathcal{X})$, which fully characterizes the distribution $F$, we can apply the same procedure as the proof of \Cref{wasserlemma} to show that
\[
\sup_{\bm{p}\in\Omega_{\bar{p}}}\left|\hat{L}_{\mathcal{Q}_1}(\bm{p})-\hat{L}_{\mathcal{Q}_2}(\bm{p}) \right|\leq  \max\{1,\bar{p}\}\cdot (m+1)\hat{\mathcal{W}}(\mathcal{Q}_1, \mathcal{Q}_2).
\]
On the other hand, note that
\[
\hat{h}(\bm{p};\bm{\theta}) \coloneqq \max_{F}\left\{\hat{f}(F; \bm{\theta})-\bm{p}^\top \hat{\bm{g}}(F;\bm{\theta}) \right\}=\max_{\bm{x}\in\mathcal{X}}\left\{f(\bm{x}; \bm{\theta})-\bm{p}^\top \bm{g}(\bm{x};\bm{\theta}) \right\}=h(\bm{p};\bm{\theta})
\]
and $\hat{\rho}(\bm{\theta}, \bm{\theta}')=\rho(\bm{\theta}, \bm{\theta}')$. We know that
\[
\left|\hat{L}_{\mathcal{Q}_1}(\bm{p})-\hat{L}_{\mathcal{Q}_2}(\bm{p}) \right|=\left|L_{\mathcal{Q}_1}(\bm{p})-L_{\mathcal{Q}_2}(\bm{p}) \right|\text{~and~}\hat{\mathcal{W}}(\mathcal{Q}_1, \mathcal{Q}_2)=\mathcal{W}(\mathcal{Q}_1, \mathcal{Q}_2).
\]
Therefore, \eqref{02} can be proved to hold for general $f(\cdot;\bm{\theta})$ and $g_i(\cdot;\bm{\theta})$ without any convexity or concavity structure.

\subsection{Proof of Lemma \ref{upperlemmawp2}}
\begin{proof}{Proof:}
Note that the following two property is satisfied by the update rule \eqref{2003}:
\begin{itemize}
  \item[(i).] If $\|\bm{p}_t\|_{\infty}\leq q$, then we must have $\|\bm{p}_{t+1}\|_{\infty}\leq q+1$ by noting that for each $i$, the $i$-th component of $\bm{p}_t$, denoted as $p_{t,i}$, is nonnegative and $g_i(\cdot,\bm{\theta}_t)$ is normalized within $[0,1]$.
  \item[(ii).] If there exists $i$ such that $p_{t,i}>q$, then we must have $p_{t+1,i}<p_{t,i}$. Specifically, when $p_{t,i}>q$, we must have that $g_i(\tilde{\bm{x}}_t;\bm{\theta}_t)=0$, otherwise we would have that
      \[
      f(\tilde{\bm{x}}_t;\bm{\theta}_t)-\bm{p}_t^\top\cdot \bm{g}(\tilde{\bm{x}}_t;\bm{\theta}_t)\leq f(\tilde{\bm{x}}_t;\bm{\theta}_t)-p_{t,i}\cdot g_i(\tilde{\bm{x}}_t;\bm{\theta}_t)<0
      \]
      which contradicts the definition of $\tilde{\bm{x}}_t$ in IGD($\hat{\bm{\gamma}}$) since we could always select $\tilde{\bm{x}}_t=\bm{0}$ to obtain $0$ in the objective value. Then from the non-negativity of $\hat{\bm{c}}_t$, it holds that $p_{t+1,i}<p_{t,i}$ in \eqref{2003}.
\end{itemize}
Starting from $\bm{p}_1=0$ and iteratively applying the above two property to control the increase of $\bm{p}_t$ from $t=1$ to $T$, we obtain that for the first time that one component of $\bm{p}_t$ exceeds the threshold $q$, it is upper bounded by $q+1$ and this component will continue to decrease until it falls below the threshold $q$. Thus, it is obvious that we have $\|\bm{p}_t\|_{\infty}\leq q+1$ with probability $1$ for each $t$.
\Halmos
\end{proof}

\subsection{Proof of Theorem \ref{uppertheorem1}}
Similar to case of known distribution, we define the following function $\hat{L}_t(\cdot)$, based on the prior estimate $\hat{\mathcal{P}}_t$.
\begin{equation}\label{2008}
\hat{L}_t(\bm{p})\coloneqq \hat{\bm{\gamma}}_t^\top\bm{p}+\hat{\mathcal{P}}_t h(\bm{p};\bm{\theta}).
\end{equation}
Then we have the following relation between $\hat{L}(\cdot)$ and $\hat{L}_t(\cdot).$ As its analysis is identical to Proposition \ref{uppernew2}, we omit its proof for simplicity.

\begin{lemma}\label{upperlemmawp1}
For each $t=1,...,T$, it holds that
\begin{equation}\label{2010}
\hat{\bm{p}}^* \in \text{argmin}_{\bm{p}\geq0}\hat{L}_t(\bm{p})
\end{equation}
where $\hat{\bm{p}}^*$ is defined in \eqref{p_star} as the minimizer of the function $\hat{L}(\cdot)$. Moreover, it holds that
\begin{equation}\label{2011}
\hat{L}(\hat{\bm{p}}^*)=\sum_{t=1}^{T}\hat{L}_t(\hat{\bm{p}}^*).
\end{equation}
\end{lemma}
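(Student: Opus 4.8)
The plan is to copy the argument used for Proposition \ref{uppernew2} almost verbatim, under the substitution dictionary $(\mathcal{P}_t,\bm{\gamma}_t,\bm{p}^*,L,L_t,\bm{x}^*)\mapsto(\hat{\mathcal{P}}_t,\hat{\bm{\gamma}}_t,\hat{\bm{p}}^*,\hat{L},\hat{L}_t,\hat{\bm{x}})$. First I would establish \eqref{2010}. For each $t$ the function $\hat{L}_t(\bm{p})=\hat{\bm{\gamma}}_t^\top\bm{p}+\hat{\mathcal{P}}_t h(\bm{p};\bm{\theta})$ is convex in $\bm{p}$: $h(\bm{p};\bm{\theta})$ is a pointwise maximum of affine functions of $\bm{p}$, hence convex, and taking the expectation over $\bm{\theta}\sim\hat{\mathcal{P}}_t$ preserves convexity. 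Differentiating through the maximum (the envelope/Danskin step, exactly as in the proof of Proposition \ref{uppernew2}) gives $\nabla\hat{L}_t(\hat{\bm{p}}^*)=\hat{\bm{\gamma}}_t-\hat{\mathcal{P}}_t\bm{g}(\hat{\bm{x}}(\bm{\theta});\bm{\theta})$, where $\hat{\bm{x}}(\bm{\theta})$ is the maximizer defining $h(\hat{\bm{p}}^*;\bm{\theta})$ — precisely the selection appearing in \eqref{2007}. By the very definition \eqref{2007} of $\hat{\bm{\gamma}}_t$ this gradient is $\bm{0}$, and since $\hat{\bm{p}}^*\geq\bm{0}$, the first-order optimality condition for $\min_{\bm{p}\geq\bm{0}}\hat{L}_t(\bm{p})$ holds; convexity then upgrades this to $\hat{\bm{p}}^*\in\text{argmin}_{\bm{p}\geq\bm{0}}\hat{L}_t(\bm{p})$, which is \eqref{2010}.

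For \eqref{2011} I would sum the definitions over $t$ and compare with $\hat{L}(\hat{\bm{p}}^*)=\bm{c}^\top\hat{\bm{p}}^*+\sum_{t=1}^T\hat{\mathcal{P}}_t h(\hat{\bm{p}}^*;\bm{\theta})$. Since $\sum_{t=1}^T\hat{L}_t(\hat{\bm{p}}^*)=\big(\sum_{t=1}^T\hat{\bm{\gamma}}_t\big)^\top\hat{\bm{p}}^*+\sum_{t=1}^T\hat{\mathcal{P}}_t h(\hat{\bm{p}}^*;\bm{\theta})$, the two expressions coincide if and only if $\bm{c}^\top\hat{\bm{p}}^*=\big(\sum_{t=1}^T\hat{\bm{\gamma}}_t\big)^\top\hat{\bm{p}}^*$, so only the strictly positive coordinates of $\hat{\bm{p}}^*$ matter. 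Let $\mathcal{I}_B=\{i:\hat{p}^*_i>0\}$. For $i\in\mathcal{I}_B$, optimality of $\hat{\bm{p}}^*$ for $\min_{\bm{p}\geq\bm{0}}\hat{L}(\bm{p})$ forces $\nabla_i\hat{L}(\hat{\bm{p}}^*)=0$, i.e. $c_i=\sum_{t=1}^T\hat{\mathcal{P}}_t g_i(\hat{\bm{x}}(\bm{\theta});\bm{\theta})=\sum_{t=1}^T\hat{\gamma}_{t,i}$; for $i\notin\mathcal{I}_B$ both $\hat{p}^*_i$-weighted terms vanish. Hence $\bm{c}^\top\hat{\bm{p}}^*=\big(\sum_{t=1}^T\hat{\bm{\gamma}}_t\big)^\top\hat{\bm{p}}^*$, which yields \eqref{2011}.

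The only delicate point, and the main obstacle, is the differentiability bookkeeping: $h(\bm{p};\bm{\theta})$ is not differentiable in general, so the gradient identities above should be read in the subgradient sense, together with the existence of a measurable maximizer $\hat{\bm{x}}(\bm{\theta})$ (guaranteed by compactness of $\mathcal{X}$ and Assumption \ref{assume}(e)) so that $\hat{\bm{\gamma}}_t$ in \eqref{2007} is well defined and the envelope formula survives integration over $\bm{\theta}$. This is the identical technical step already dispatched in the proof of Proposition \ref{uppernew2}, so I would not reprove it here but simply invoke that argument; accordingly the proof of Lemma \ref{upperlemmawp1} can be stated in one line as ``identical to the proof of Proposition \ref{uppernew2} with $\hat{\mathcal{P}}_t,\hat{\bm{\gamma}}_t,\hat{\bm{p}}^*$ in place of $\mathcal{P}_t,\bm{\gamma}_t,\bm{p}^*$,'' which is exactly what the surrounding text already announces.
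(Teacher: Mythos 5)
Your proposal is correct and is exactly the paper's route: the paper itself omits the proof, stating it is identical to that of Proposition \ref{uppernew2}, and your hatted transcription (zero (sub)gradient of $\hat{L}_t$ at $\hat{\bm{p}}^*$ via the definition of $\hat{\bm{\gamma}}_t$ in \eqref{2007}, then complementary slackness on the binding coordinates of $\hat{\bm{p}}^*$ to get $\bm{c}^\top\hat{\bm{p}}^*=\big(\sum_{t}\hat{\bm{\gamma}}_t\big)^\top\hat{\bm{p}}^*$) reproduces that argument faithfully, including the appropriate subgradient/measurable-selection caveat.
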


\bigskip

Now we proof Theorem \ref{uppertheorem1} and the idea of proof is similar to Theorem \ref{newnonstationtheorem}.

\begin{proof}{Proof:} From the proof of Theorem \ref{newnonstationtheorem}, we have
\[\begin{aligned}
\sum_{t=1}^{T}f(\tilde{\bm{x}}_t;\bm{\theta}_t)-\sum_{t=1}^{T}f(\bm{x}_t;\bm{\theta}_t)&\leq q\cdot\sum_{i=1}^{m}\sum_{t=1}^{T}g_i(\tilde{\bm{x}}_t;\bm{\theta}_t)\cdot\mathbb{I}\left\{\sum_{j=1}^t g_i(\tilde{\bm{x}}_j;\bm{\theta}_j)>c_i\right\} \\
&\leq q\cdot\sum_{i=1}^{m}\left[\sum_{t=1}^{T}g_i(\tilde{\bm{x}}_t;\bm{\theta}_t)-(c_i-1) \right]^{+}
\end{aligned}\]
which related the total collected reward by the true action $\{\bm{x}_t\}_{t=1}^T$ and the virtual action $\{\tilde{\bm{x}}_t\}_{t=1}^T$ of the algorithm IGD($\hat{\bm{\gamma}}$). Further from Proposition \ref{uppernew2}, we have that
\[\begin{aligned}
\text{Reg}_T(\pi)\leq  \min_{\bm{p}\geq0} L(\bm{p})-\mathbb{E}\left[\sum_{t=1}^{T}f(\bm{x}_t;\bm{\theta}_t) \right]&\leq \underbrace{ \min_{\bm{p}\geq0} L(\bm{p})-\mathbb{E}\left[\sum_{t=1}^{T}f(\tilde{\bm{x}}_t;\bm{\theta}_t)\right]}_{\text{I}}\\
&+\underbrace{q\cdot\mathbb{E}\left[\sum_{i=1}^{m}\left[\sum_{t=1}^{T}g_i(\tilde{\bm{x}}_t;\bm{\theta}_t)-(c_i-1) \right]^{+}\right]}_{\text{II}}
\end{aligned}\]
We then bound the term I and term II separately to derive our regret bound.\\
\textbf{Bound I}: Note that $\|\hat{\bm{p}}^*\|_{\infty}\leq q$, it follows directly from Lemma \ref{wasserlemma} and Lemma \ref{upperlemmawp1} that
\[
 \min_{\bm{p}\geq\bm{0}} L(\bm{p})\leq L(\hat{\bm{p}}^*)\leq \hat{L}(\hat{\bm{p}}^*)+\max\{q,1\}\cdot(m+1)\cdot W_T=\sum_{t=1}^{T}\hat{L}_t(\hat{\bm{p}}^*)+\max\{q,1\}\cdot(m+1)\cdot W_T
\]
Note that from Lemma \ref{upperlemmawp2}, we have that for each $t$, $\|\bm{p}_t\|_{\infty}\leq (q+1)$ with probability $1$. Further note that for each $t$, the distribution of $\bm{p}_t$ is independent from the distribution of $\bm{\theta}_t$, then from Lemma \ref{wasserlemma} and Lemma \ref{upperlemmawp1}, we have that
\[
\hat{L}_t(\hat{\bm{p}}^*)=\min_{\bm{p}\geq\bm{0}}\hat{L}_t(\bm{p})\leq \mathbb{E}_{\bm{p}_t}\left[ \hat{L}_t(\bm{p}_t) \right] \leq \mathbb{E}_{\bm{p}_t}\left[ \hat{\bm{\gamma}}_t^\top\bm{p}_t+\mathcal{P}_t h(\bm{p}_t;\bm{\theta}_t) \right]+(q+1)(m+1)\cdot W(\mathcal{P}_t,\hat{\mathcal{P}})
\]
where the expectation is taken with respect to the randomness of the dual price $\bm{p}_t.$ Thus, we have
\[
\text{I}\leq \sum_{t=1}^{T}\mathbb{E}_{\bm{p}_t}\left[\hat{\bm{\gamma}}_t^\top\bm{p}_t+\mathcal{P}_t\left\{h(\bm{p}_t;\bm{\theta}_t)-f(\tilde{\bm{x}}_t;\bm{\theta}_t) \right\} \right]+2(q+1)(m+1)\cdot W_T.
\]
From the definition of $\tilde{\bm{x}}_t$, we get that $h(\bm{p}_t;\bm{\theta}_t)-f(\tilde{\bm{x}}_t;\bm{\theta}_t)=-\bm{p}_t^\top\cdot \bm{g}(\tilde{\bm{x}}_t;\bm{\theta}_t)$,
which implies that
\[
\text{I}\leq \sum_{t=1}^{T}\mathbb{E}_{\bm{p}_t}\left[\bm{p}_t^\top\cdot\left( \hat{\bm{\gamma}}_t-\mathcal{P}_t \bm{g}(\tilde{\bm{x}}_t;\bm{\theta}_t)\right)\right]+2(q+1)(m+1)\cdot W_T
\]
Note that from the update rule \eqref{2003}, we have that
\[
\|\bm{p}_{t+1}\|_2^2\leq\|\bm{p}_t\|_2^2+\frac{1}{T}\cdot\|\bm{g}(\tilde{\bm{x}}_t;\bm{\theta}_t)-\hat{\bm{\gamma}}_t\|_2^2-\frac{2}{\sqrt{T}}\cdot \bm{p}_t^\top\cdot\left( \hat{\bm{\gamma}}_t-\bm{g}(\tilde{\bm{x}}_t;\bm{\theta}_t)\right)
\]
which implies that
\[
\mathbb{E}_{\bm{p}_t}\left[\bm{p}_t^\top\cdot\left( \hat{\bm{\gamma}}_t-\mathcal{P}_t \bm{g}(\tilde{\bm{x}}_t;\bm{\theta}_t)\right)\right]\leq \frac{\sqrt{T}}{2}\cdot\left(\mathbb{E}[\|\bm{p}_t\|_2^2]-\mathbb{E}[\|\bm{p}_{t+1}\|_2^2] \right)+\frac{m}{2\sqrt{T}}
\]
Thus, it holds that
\begin{equation}\label{regret101}
\text{I}\leq \frac{m\sqrt{T}}{2}+2(q+1)(m+1)\cdot W_T
\end{equation}
\textbf{Bound II}: Note that from the update rule \eqref{2003}, we have that
\[
\sqrt{T}\cdot\bm{p}_{t+1}\geq \sqrt{T}\cdot\bm{p}_t+\bm{g}(\tilde{\bm{x}}_t;\bm{\theta}_t)-\hat{\bm{\gamma}}_t
\]
which implies that
\[
\sum_{t=1}^{T}\bm{g}(\tilde{\bm{x}}_t;\bm{\theta}_t)-\bm{c}\leq \sum_{t=1}^{T}\bm{g}(\tilde{\bm{x}}_t;\bm{\theta}_t)-\sum_{t=1}^{T}\hat{\bm{\gamma}}_t \leq \sqrt{T}\cdot\bm{p}_{T+1}
\]
Thus, it holds that
\begin{equation}\label{regret102}
\text{II}=q\cdot\mathbb{E}\left[\sum_{i=1}^{m}\left[\sum_{t=1}^{T}g_i(\tilde{\bm{x}}_t;\bm{\theta}_t)-(c_i-1) \right]^{+}\right]\leq mq(q+1)\cdot\sqrt{T}+qm
\end{equation}
We obtain the $O(\max\{\sqrt{T},W_T\})$ regret bound immediately by combining \eqref{regret101} and \eqref{regret102}.
\Halmos
\end{proof}

\section{Proofs of Section \ref{WBNB}}

\renewcommand{\thesubsection}{C\arabic{subsection}}

\subsection{Proof of Proposition \ref{worstCase}}
\begin{proof}{Proof:}
We consider the implementation of any online policy $\pi$ on the two scenarios \eqref{eg1} and \eqref{eg2} for $\kappa=1$, which is replicated as follows for completeness:
\begin{align}
   \max \ \ &  x_1+...+x_{c}+2x_{c+1}+...+2x_{T}  \label{appendixeg1} \\
    \text{s.t. }\ & x_1+...+x_{c}+x_{c+1}+...+x_{T} \le c\nonumber \\
    & 0 \le x_t \le 1\ \text{ for } t=1,...,T. \nonumber \\
   \max \ \ &  x_1+...+x_{c}  \label{appendixeg2} \\
    \text{s.t. }\ & x_1+...+x_{c}+x_{c+1}+...+x_{T} \le c \nonumber\\
    & 0 \le x_t \le 1\ \text{ for } t=1,...,T.\nonumber
\end{align}
where $c=\frac{T}{2}$. Denote $x^1_t(\pi)$ as the decision of policy $\pi$ at period $t$ for scenario \eqref{appendixeg1} and denote $x^2_t(\pi)$ as the decision of policy $\pi$ at period $t$ for scenario \eqref{appendixeg2}. Further define $T_1(\pi)$ (resp. $T_2(\pi)$) as the expected capacity consumption of policy $\pi$ on scenario \eqref{appendixeg1} (resp. scenario \eqref{appendixeg2}) during the first $\frac{T}{2}$ time periods:
\[
T_1(\pi)=\mathbb{E}\left[\sum_{t=1}^{\frac{T}{2}}x^1_t(\pi)\right] \text{~~~and~~~} T_2(\pi)=\mathbb{E}\left[\sum_{t=1}^{\frac{T}{2}}x^2_t(\pi)\right]
\]
Then, we have that
\[
R_T^1(\pi)=T-T_1(\pi)\text{~~~and~~~}R_T^2(\pi)=T_2(\pi)
\]
where $R_T^1(\pi)$ (resp. $R_T^2(\pi)$) denotes the expected reward collected by policy $\pi$ on scenario \eqref{appendixeg1} (resp. scenario \eqref{appendixeg2}). Thus, the regret of policy $\pi$ on scenario \eqref{appendixeg1} and \eqref{appendixeg2} are $T_1(\pi)$ and $T-T_2(\pi)$ respectively. Further note that since the implementation of policy $\pi$ at each time period should be independent of the future information, we must have $T_1(\pi)=T_2(\pi)$. Thus, we have that
\[
\text{Reg}_T(\pi)\geq\max\{T_1(\pi),T-T_1(\pi) \}\geq\frac{T}{2}=\Omega(T)
\]
which completes our proof.
\Halmos
\end{proof}

\subsection{Proof of Theorem \ref{lowertheorem}}
The proof of the theorem can be directly obtained from Theorem \ref{newlowertheorem}.

\subsection{Proof of Theorem \ref{uppertheorem}}
We first prove the following lemma, which implies that the dual variable updated in \eqref{new2003} is always bounded. It derivation is essentially the same as Lemma \ref{newupperlemma}, so we omit its proof for simplicity.

\begin{lemma}\label{upperlemma}
Under Assumption \ref{assume}, for each $t=1,2,\dots,T$, the dual price vector satisfies $\|\bm{p}_t\|_{\infty}\leq q+1$, where $\bm{p}_t$ is specified by \eqref{new2003} in Algorithm \ref{alg:SOAWOP} and the constant $q$ is defined in Assumption \ref{assume} (c).
\end{lemma}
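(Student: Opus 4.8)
The plan is to mirror the proof of Lemma \ref{newupperlemma} verbatim, observing that the UGD update \eqref{new2003} is exactly the IGD update \eqref{003} with the per-period target $\bm{\gamma}_t$ replaced by the constant vector $\bm{c}/T$. The only two structural facts about the target that the proof of Lemma \ref{newupperlemma} actually uses are (a) that every component of the target is nonnegative, and (b) that the positive increment of any coordinate per step is at most $1/\sqrt{T}$. Since $\bm{c}/T\ge \bm{0}$ componentwise by Assumption \ref{assume}(b), both facts carry over unchanged, so the same two monotonicity invariants will deliver $\|\bm{p}_t\|_{\infty}\le q+1$, with the identical threshold constant.

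Concretely, I would establish the two properties of the update map \eqref{new2003}. For property (i), suppose $\|\bm{p}_t\|_{\infty}\le q$. For each coordinate $i$, the increment before the projection onto the nonnegative orthant is $\frac{1}{\sqrt{T}}\left(g_i(\tilde{\bm{x}}_t;\bm{\theta}_t)-\frac{c_i}{T}\right)$, whose positive part is bounded by $\frac{1}{\sqrt{T}}\,g_i(\tilde{\bm{x}}_t;\bm{\theta}_t)\le \frac{1}{\sqrt{T}}\le 1$, because $g_i(\cdot;\bm{\theta}_t)\in[0,1]$ and the subtracted term $c_i/T\ge 0$ only lowers the update. Hence $\|\bm{p}_{t+1}\|_{\infty}\le q+1$. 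For property (ii), suppose some coordinate satisfies $p_{t,i}>q$. The identical contradiction argument applies: if $g_i(\tilde{\bm{x}}_t;\bm{\theta}_t)>0$, then
\[
f(\tilde{\bm{x}}_t;\bm{\theta}_t)-\bm{p}_t^\top \bm{g}(\tilde{\bm{x}}_t;\bm{\theta}_t)\le f(\tilde{\bm{x}}_t;\bm{\theta}_t)-p_{t,i}\,g_i(\tilde{\bm{x}}_t;\bm{\theta}_t)<0,
\]
where the first inequality drops the other nonnegative terms $p_{t,j}g_j\ge 0$ and the strict inequality uses $p_{t,i}>q$ together with Assumption \ref{assume}(c); this contradicts the optimality of $\tilde{\bm{x}}_t$ in Algorithm \ref{alg:SOAWOP}, since $\bm{x}=\bm{0}$ attains a nonnegative objective value (recall $g_i(\bm{0};\bm{\theta})=0$ by Assumption \ref{assume}(b)). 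Therefore $g_i(\tilde{\bm{x}}_t;\bm{\theta}_t)=0$, and the $i$-th increment equals $\frac{1}{\sqrt{T}}\left(0-\frac{c_i}{T}\right)\le 0$, so that $p_{t+1,i}\le p_{t,i}$.

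Starting from $\bm{p}_1=\bm{0}$ and iterating these two properties from $t=1$ to $T$ closes the argument: any coordinate that crosses the threshold $q$ does so via a jump of at most $1/\sqrt{T}\le 1$ from a value at most $q$, and therefore lands below $q+1$; property (ii) then guarantees it never increases again while it remains above $q$. Consequently $\|\bm{p}_t\|_{\infty}\le q+1$ for every $t$ with probability one. The one point that genuinely requires care — and precisely the reason the substitution of $\bm{c}/T$ for $\bm{\gamma}_t$ is benign — is the sign check in property (ii): it is the nonnegativity of the target vector (here $\bm{c}/T\ge\bm{0}$, exactly mirroring $\bm{\gamma}_t\ge\bm{0}$ in \eqref{new2007}) that forces the coordinate to be non-increasing once it exceeds $q$. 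Since no other feature of $\bm{\gamma}_t$ enters the boundedness argument, no new ingredient beyond Lemma \ref{newupperlemma} is needed, which justifies omitting the full derivation.
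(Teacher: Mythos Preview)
Your proposal is correct and follows exactly the approach the paper intends: the paper itself states that the derivation is essentially the same as Lemma \ref{newupperlemma} and omits the details, and your argument reproduces that proof verbatim with $\bm{\gamma}_t$ replaced by $\bm{c}/T$, correctly identifying that only the nonnegativity of the target vector and the $[0,1]$-boundedness of $g_i$ are used.
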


Now we proceed to prove Theorem \ref{uppertheorem}.

\begin{proof}{Proof:}
From the proof of Theorem \ref{newnonstationtheorem}, we have
\[\begin{aligned}
\sum_{t=1}^{T}f(\tilde{\bm{x}}_t;\bm{\theta}_t)-\sum_{t=1}^{T}f(\bm{x}_t;\bm{\theta}_t)
&\leq q\cdot\sum_{i=1}^{m}\left[\sum_{t=1}^{T}g_i(\tilde{\bm{x}}_t;\bm{\theta}_t)-(c_i-1) \right]^{+}
\end{aligned}\]
which relates the total collected reward by the true action $\{\bm{x}_t\}_{t=1}^T$ and the virtual action $\{\tilde{\bm{x}}_t\}_{t=1}^T$. Here $[\cdot]^+$ denotes the positive part function. Furthermore, from Proposition \ref{uppernew2} and the feasibility, we have that
\[\begin{aligned}
\text{Reg}_T(\pi)\leq  \min_{\bm{p}\geq\bm{0}} L(\bm{p})-\mathbb{E}\left[\sum_{t=1}^{T}f(\bm{x}_t;\bm{\theta}_t) \right]&\leq \underbrace{\min_{\bm{p}\geq\bm{0}} L(\bm{p})-\mathbb{E}\left[\sum_{t=1}^{T}f(\tilde{\bm{x}}_t;\bm{\theta}_t)\right]}_{\text{I}}\\
&+\underbrace{q\cdot\mathbb{E}\left[\sum_{i=1}^{m}\left[\sum_{t=1}^{T}g_i(\tilde{\bm{x}}_t;\bm{\theta}_t)-(c_i-1) \right]^{+}\right]}_{\text{II}}
\end{aligned}\]
Next, we bound the term I and term II separately to derive our regret bound.\\
\textbf{Bound I}: We first define the following function $\bar{L}(\cdot)$:
\[
\bar{L}(\bm{p})\coloneqq \frac{1}{T} \bm{p}^\top\bm{c}+\bar{\mathcal{P}}_Th(\bm{p};\bm{\theta})
\]
Note that $\hat{\mathcal{P}}_T=\frac{1}{T}\sum_{t=1}^{T}\mathcal{P}_t$, it holds that $L(\bm{p})=T\cdot\bar{L}(\bm{p})$ for any $\bm{p}$. From Lemma \ref{upperlemma}, we know that for each $t$, $\|\bm{p}_t\|_{\infty}\leq q+1$ with probability $1$. In addition, for each $t$, the distribution of $\bm{p}_t$ is independent from the distribution of $\bm{\theta}_t$, then from Lemma \ref{wasserlemma}, we have that
\begin{equation}
   \min_{\bm{p}\geq\bm{0}} \bar{L}(\bm{p})\leq \mathbb{E}_{\bm{p}_t}\left[ \bar{L}(\bm{p}_t) \right] \leq \mathbb{E}_{\bm{p}_t}\left[ \frac{1}{T} \bm{p}_t^\top\bm{c}+\mathcal{P}_th(\bm{p}_t;\bm{\theta}_t) \right]+(q+1)(m+1)\cdot \mathcal{W}(\mathcal{P}_t,\bar{\mathcal{P}}_T),
   \label{stepTmp0}
\end{equation}
where the expectation is taken with respect to $\bm{p}_t$ in a random realization of the algorithm.
Thus, we have the first term
\[
\text{I}\leq \sum_{t=1}^{T}\mathbb{E}_{\bm{p}_t}\left[\frac{1}{T}\bm{c}^\top\bm{p}_t+\mathcal{P}_t\left\{h(\bm{p}_t;\bm{\theta}_t)-f(\tilde{\bm{x}}_t;\bm{\theta}_t) \right\} \right]+(q+1)(m+1)\cdot \mathcal{W}(\mathcal{P}_t,\bar{\mathcal{P}}_T)
\]
which comes from combining \eqref{stepTmp0} with the relation $L(\bm{p})=T\cdot\bar{L}(\bm{p})$.
By the definition of $\tilde{\bm{x}}_t$, $h(\bm{p}_t;\bm{\theta}_t)-f(\tilde{\bm{x}}_t;\bm{\theta}_t)=-\bm{p}_t^\top\cdot \bm{g}(\tilde{\bm{x}}_t;\bm{\theta}_t)$,
which implies that
\begin{equation}
\text{I}\leq \sum_{t=1}^{T}\mathbb{E}_{\bm{p}_t}\left[\bm{p}_t^\top\cdot\left( \frac{\bm{c}}{T}-\mathcal{P}_t \bm{g}(\tilde{\bm{x}}_t;\bm{\theta}_t)\right)\right]+(q+1)(m+1)\cdot \mathcal{W}(\mathcal{P}_t,\bar{\mathcal{P}}_T)
\label{stepTmp1}
\end{equation}
Note that from the update rule \eqref{new2003}, we have that
\[
    \|\bm{p}_{t+1}\|_2^2\leq\|\bm{p}_t\|_2^2+\frac{1}{T}\cdot\|\bm{g}(\tilde{\bm{x}}_t;\bm{\theta}_t)-\frac{\bm{c}}{T}\|_2^2-\frac{2}{\sqrt{T}}\cdot \bm{p}_t^\top\cdot\left( \frac{\bm{c}}{T}-\bm{g}(\tilde{\bm{x}}_t;\bm{\theta}_t)\right).
\]
By taking expectation with respect to both sides,
\begin{equation}
   \mathbb{E}_{\bm{p}_t}\left[\bm{p}_t^\top\cdot\left( \frac{\bm{c}}{T}-\mathcal{P}_t \bm{g}(\tilde{\bm{x}}_t;\bm{\theta}_t)\right)\right]\leq \frac{\sqrt{T}}{2}\cdot\left(\mathbb{E}[\|\bm{p}_t\|_2^2]-\mathbb{E}[\|\bm{p}_{t+1}\|_2^2] \right)+\frac{m}{2\sqrt{T}}
\label{stepTmp2}
\end{equation}
Plugging \eqref{stepTmp2} into \eqref{stepTmp1}, we obtain an upper bound on Term I,
\begin{equation}\label{regret01}
\text{I}\leq \frac{m\sqrt{T}}{2}+(q+1)(m+1)\cdot W_T
\end{equation}
\textbf{Bound II}: Note that from the update rule \eqref{new2003}, we have
\[
\sqrt{T}\cdot\bm{p}_{t+1}\geq \sqrt{T}\cdot\bm{p}_t+\bm{g}(\tilde{\bm{x}}_t;\bm{\theta}_t)-\frac{\bm{c}}{T}
\]
Taking a summation with respect to both sides,
\[
\sum_{t=1}^{T}\bm{g}(\tilde{\bm{x}}_t;\bm{\theta}_t)-\bm{c}\leq \sqrt{T}\cdot\bm{p}_{T+1}
\]
Applying Lemma \ref{upperlemma} for a bound on $\bm{p}_{T+1},$ we obtain the upper bound for Term II,
\begin{equation}\label{regret02}
\text{II}=q\cdot\mathbb{E}\left[\sum_{i=1}^{m}\left[\sum_{t=1}^{T}g_i(\tilde{\bm{x}}_t;\bm{\theta}_t)-(c_i-1) \right]^{+}\right]\leq mq(q+1)\cdot\sqrt{T}+qm
\end{equation}
We obtain the desired regret bound by combining \eqref{regret01} and \eqref{regret02}.
\Halmos
\end{proof}

\renewcommand{\thesubsection}{D\arabic{subsection}}

\section{Proofs of Section \ref{sec_extension}}

\renewcommand{\thesubsection}{D\arabic{subsection}}

\subsection{Proof of \Cref{thm:finitesupport}}\label{pf:thmfinite}
\begin{proof}{Proof:}
Note that the regret can be denoted as
\begin{equation}\label{04}
\begin{aligned}
\text{Reg}_T(\mathcal{H}, \pi_{\text{Resolve}})&=\mathbb{E}_{\mathcal{H}\sim\mathcal{P}}[R^*_1(\mathbf{c}_1,\mathcal{H}(1))]-\mathbb{E}_{\mathcal{H}\sim\mathcal{P}}[\sum_{t=1}^{T}r_{t}\cdot x_t]\\
&=\sum_{t=1}^{T}\underbrace{\mathbb{E}_{\mathcal{H}\sim\mathcal{P}}[R^*_t(\mathbf{c}_t, \mathcal{H}(t))-R^*_{t+1}(\mathbf{c}_{t+1},\mathcal{H}(t+1))-r_{t}\cdot x_t]}_{\text{I}_t}
\end{aligned}
\end{equation}
Now we denote $\{x^*_j(\mathbf{c}_t,\mathcal{H}(t))\}$ as one optimal solution to $R^*_t(\mathbf{c}_t,\mathcal{H}(t))$. Clearly, we have the following: if $\bm{\theta}_t$ is realized as $\bm{\theta}^{(j)}$, i.e., $\mathcal{H}_j(t)=\mathcal{H}_j(t+1)+1$ and $\mathcal{H}_{j'}(t)=\mathcal{H}_{j'}(t+1)$ for other $j'$, it holds that
\begin{equation}\label{05}
\begin{aligned}
&R^*_t(\mathbf{c}_t,\mathcal{H}(t))=r_j+R^*_{t+1}(\mathbf{c}_t-\bm{a}_j, \mathcal{H}(t+1)),&&\text{if~}x^*_j(\mathbf{c}_t,\mathcal{H}(t))\geq1\\
&R^*_t(\mathbf{c}_t,\mathcal{H}(t))=R^*_{t+1}(\mathbf{c}_t, \mathcal{H}(t+1)),&&\text{if~}\mathcal{H}_j(t)-x^*_j(\mathbf{c}_t,\mathcal{H}(t))\geq1
\end{aligned}
\end{equation}
We also denote the event $\mathcal{A}_t=\{ x^*_j(\mathbf{c}_t,\mathcal{H}(t))\geq1 \}$ and the event $\mathcal{B}_t=\{ \mathcal{H}_j(t)-x^*_j(\mathbf{c}_t,\mathcal{H}(t))\geq1 \}$.

In the rest of the proof, we mainly analyze the term $\text{I}_{t}$ in \eqref{04}. We first focus on the case when $t\leq T+1-\frac{4(\alpha\beta_2+\beta_2)}{p_{\min}}\cdot W_T-\frac{4}{p_{\min}}$, where $W_T$ is the deviation budget from the estimates $\{\hat{\mathcal{P}}_t\}_{t=1}^T$ to the true distributions $\{\mathcal{P}_t\}_{t=1}^T$, $p_{\min}$ is given in \Cref{assump:finitesupport}, and $\alpha, \beta_2$ are parameters to be determined later.\\ 
\textbf{Case I:} $\hat{x}_{j_t}(\mathbf{c}_t)\geq\frac{1}{2}\cdot \mathbb{E}_{\mathcal{H}\sim\hat{\mathcal{P}}} [\mathcal{H}_{j_t}(t)]$.
Then we have $x_t=1$, and from \eqref{05}, we have
\[\begin{aligned}
\text{I}_t=&\mathbb{E}_{\mathcal{H}\sim\mathcal{P}}[R^*_t(\mathbf{c}_t,\mathcal{H}(t))-R^*_{t+1}(\mathbf{c}_{t+1},\mathcal{H}(t+1))-r_{t}\cdot x_t]\\
=&\mathbb{E}\left[\mathbb{E}[r_{j_t}-r_{j_t}\cdot x_t+R^*_{t+1}(\mathbf{c}_t-\bm{a}_{j_t},\mathcal{H}(t+1))-R^*_{t+1}(\mathbf{c}_t-\bm{a}_{j_t}\cdot x_t,\mathcal{H}(t+1))|\mathcal{A}_t]\right]\\
&+\mathbb{E}\left[\mathbb{E}[-r_{j_t}\cdot x_t+R^*_{t+1}(\mathbf{c}_t,\mathcal{H}(t+1))-R^*_{t+1}(\mathbf{c}_t-\bm{a}_{j_t}\cdot x_t,\mathcal{H}(t+1))|\mathcal{A}^c_t]\right]\\
=&\mathbb{E}\left[\mathbb{E}[-r_{j_t}\cdot x_t+R^*_{t+1}(\mathbf{c}_t,\mathcal{H}(t+1))-R^*_{t+1}(\mathbf{c}_t-\bm{a}_{j_t}\cdot x_t,\mathcal{H}(t+1))|\mathcal{A}^c_t]\right]\\
\leq& P(\mathcal{A}^c_t)
\end{aligned}\]
where $\mathcal{A}^c_t$ denotes the complementary event of $\mathcal{A}_t$. We now bound $P(\mathcal{A}^c_t)$.

From Theorem 2.4 in \citep{mangasarian1987lipschitz}, there exists a constant $\alpha>0$, which depends only on $\{(r_1,\bm{a}_1),\dots,(r_n,\bm{a}_n)\}$, such that
\[
\|\bm{x}^*(\mathbf{c}_t,\mathcal{H}(t))-\hat{\bm{x}}(\mathbf{c}_t)\|_{\infty}\leq\alpha\cdot\|\mathcal{H}(t)-\mathbb{E}_{\mathcal{H}\sim\hat{\mathcal{P}}}[\mathcal{H}(t)]\|_{\infty}.
\]
Moreover, from Hoeffdings' inequality, we know that
\[
P\left(\|\mathcal{H}(t)-\mathbb{E}_{\mathcal{H}\sim\hat{\mathcal{P}}}[\mathcal{H}(t)]\|_{\infty}\leq \frac{p_{\min}\cdot (T-t+1)}{4\alpha}\right)\geq 1-\exp(-\beta_1\cdot (T-t+1))
\]
for some constant $\beta_1>0$, and
\[
\|\mathbb{E}_{\mathcal{H}\sim\hat{\mathcal{P}}}[\mathcal{H}(t)]-\mathbb{E}_{\mathcal{H}\sim\mathcal{P}}[\mathcal{H}(t)]\|_{\infty}\leq \beta_2\cdot W_T.
\]
Thus, we have that
\[
P\left(\|x^*(\mathbf{c}_t,\mathcal{H}(t))-\hat{x}(\mathbf{c}_t)\|_{\infty}\leq \frac{p_{\min}\cdot (T-t+1)}{4}+\alpha\beta_2\cdot W_T\right)\geq1-\exp(-\beta_1\cdot (T-t+1))
\]
Since we have
\[
\hat{x}_{j_t}(\mathbf{c}_t)\geq\frac{1}{2}\cdot \mathbb{E}_{\mathcal{H}\sim\hat{\mathcal{P}}} [\mathcal{H}_{j_t}(t)]\geq \frac{1}{2}\cdot p_{\min}\cdot (T-t+1)-\beta_2\cdot W_T,
\]
we have that
\[
P\left(x^*_{j_t}(\mathbf{c}_t,\mathcal{H}(t))\geq \frac{p_{\min}\cdot (T-t+1)}{4}-(\alpha\beta_2+\beta_2)\cdot W_T\geq 1\right)\geq 1-\exp(-\beta_1\cdot (T-t+1)).
\]
when $t\leq T+1-\frac{4(\alpha\beta_2+\beta_2)}{p_{\min}}\cdot W_T-\frac{4}{p_{\min}}$.

Thus, when $t\leq T+1-\frac{4(\alpha+1)}{p_{\min}}\cdot W_T-\frac{4}{p_{\min}}$, we know that $P(\mathcal{A}^c_t)\leq \exp(-\beta_1\cdot (T-t+1))$.\\
\textbf{Case II:} $\hat{x}_{j_t}(\mathbf{c}_t)<\frac{1}{2}\cdot \mathbb{E}_{\mathcal{H}\sim\hat{\mathcal{P}}} [\mathcal{H}_{j_t}(t)]$. Then we have $x_t=0$, and from \eqref{05}, we have
\[\begin{aligned}
\text{I}_t=&\mathbb{E}[R^*_t(\mathbf{c}_t,\mathcal{H}(t))-R^*_{t+1}(\mathbf{c}_{t+1},\mathcal{H}(t+1))-r_{t}\cdot x_t]\\
=&\mathbb{E}\left[\mathbb{E}[-r_{j_t}\cdot x_t+R^*_{t+1}(\mathbf{c}_t,\mathcal{H}(t+1))-R^*_{t+1}(\mathbf{c}_t-\bm{a}_{j_t}\cdot x_t,\mathcal{H}(t+1))|\mathcal{B}_t]\right]\\
&+\mathbb{E}\left[\mathbb{E}[r_{j_t}-r_{j_t}\cdot x_t+R^*_{t+1}(\mathbf{c}_t-\bm{a}_{j_t},\mathcal{H}(t+1))-R^*_{t+1}(\mathbf{c}_t-\bm{a}_{j_t}\cdot x_t,\mathcal{H}(t+1))|\mathcal{B}_t^c]\right]\\
=&\mathbb{E}\left[\mathbb{E}[r_{j_t}+R^*_{t+1}(\mathbf{c}_t-\bm{a}_{j_t},\mathcal{H}(t+1))-R^*_{t+1}(\mathbf{c}_t,\mathcal{H}(t+1))|\mathcal{B}_t^c]\right]\\
\leq & P(\mathcal{B}^c_t)
\end{aligned}\]
Following the same approach, we have that $P(\mathcal{B}^c_t)\leq\exp(-\beta_1\cdot (T-t+1))$ when $t\leq T+1-\frac{4(\alpha\beta_2+\beta_2)}{p_{\min}}\cdot W_T-\frac{4}{p_{\min}}$.

Thus, on both cases, we conclude that
\[
\text{I}_t\leq \exp(-\beta_1\cdot (T-t+1))
\]
when $t\leq T+1-\frac{4(\alpha\beta_2+\beta_2)}{p_{\min}}\cdot W_T-\frac{4}{p_{\min}}$. Thus, from \eqref{04}, we have that
\[\begin{aligned}
\text{Reg}_T(\mathcal{H}, \pi_{\text{Resolve}})&\leq\sum_{t=1}^{T+1-\frac{4(\alpha\beta_2+\beta_2)}{p_{\min}}\cdot W_T-\frac{4}{p_{\min}}}\text{I}_t+ \frac{4(\alpha\beta_2+\beta_2)}{p_{\min}}\cdot W_T+\frac{4}{p_{\min}}\\
&\leq \sum_{t=1}^T\exp(-\beta_1\cdot(T-t+1))+\frac{4(\alpha\beta_2+\beta_2)}{p_{\min}}\cdot W_T+\frac{4}{p_{\min}}\\
&=\max\{ O(1), O(W_T) \}
\end{aligned}\]
which completes our proof.
\Halmos
\end{proof}

\textbf{Remark:} Note that \citet{vera2020bayesian,fruend2020a,fruend2020b} utilize the condition of $p_{\min}$ in Assumption \ref{assump:finitesupport} to control the non-stationarity of the distribution $\mathcal{P}_t$. To be specific, suppose the support set $\Theta=\{\bm{\theta}^{(1)},\dots,\bm{\theta}^{(n)}\}$ and for each $j=1,\dots,n$, we denote $H_j(t)$ as the number of times that $\bm{\theta}_{\tau}$ is realized as $\bm{\theta}_j$ for $\tau=t,\dots,T$. Denote by $\mathcal{H}(t)=(\mathcal{H}_1(t),\dots,\mathcal{H}_n(t))$. Then this concentration condition of $p_{\min}$ in \citet{vera2020bayesian,fruend2020a,fruend2020b} essentially requires that
\begin{equation}\label{eqn:concentration}
P\left(\|\mathcal{H}(t)-\mathbb{E}[\mathcal{H}(t)]\|\geq \frac{\mathbb{E}[\mathcal{H}_j(t)]}{2\kappa^1_j} \right)\leq\frac{c_j}{(T-t)^2},~~\forall t\leq T-\kappa_j^2, ~~\forall j=1,\dots,n
\end{equation}
for some constants $\kappa^1_j, \kappa^2_j$ and $c_j$.

We now show through the following example for which  our WBNB is sublinear in $T$ and the example can not covered by the above concentration condition. The example can be constructed as follows. For $t=1,\dots,T-\sqrt{T}$, we have $\mathcal{P}_t=\mathcal{P}_1$ and for $t=T-\sqrt{T}+1,\dots,T$, we have $\mathcal{P}_t=\mathcal{P}_2$, where $\mathcal{P}_1$ and $\mathcal{P}_2$ are two different distributions and it is satisfied that $\mathcal{P}_2(\bm{\theta}=\bm{\theta}_j')=0$ for a $j'$. Clearly, \eqref{eqn:concentration} is not satisfied for $j=j'$, thus, the concentration condition does not hold. However, it is direct to check that the WBNB can be upper bounded by $O(\sqrt{T})$, which is sublinear in $T$.

\subsection{Analysis of the sub-optimality of static policies}

\subsubsection{Proof of \Cref{prop:staticpolicy}}\label{pfprop:static}
\begin{proof}{Proof:}
{We first describe a problem instance. Consider a single resource with initial capacity $c=3T/4$. The parameter $\bm{\theta}_t=(r_t,1)$; the functions $f(x_t,\bm{\theta}_t)=r_t\cdot x_t$ and $g(x_t,\bm{\theta}_t)= x_t$, where $x_t\in[0,1]$ is the decision variable at time period $t$. Suppose that the prior estimate $\hat{\mathcal{P}}$ is given by
\[
r_t=\left\{\begin{aligned}
&1, &\text{w.p.} ~\frac{3}{4}\\
&\text{Unif}[0,1],& \text{w.p.} ~\frac{1}{4}
\end{aligned}\right.
\]
for each $t$. Here $\text{Unif}[0,1]$ denotes a uniform distribution over $[0,1]$. Clearly, the deterministic upper bound under prior estimate $\hat{\mathcal{P}}$ takes a value of $\frac{3T}{4}$. Furthermore, we denote $\mathbb{E}[h_t^{\pi}(r)]$ as the expectation of the decision variable under the static policy $\pi$ when the reward of period $t$ is realized as $r$. Then, we make following claim.\\
\textbf{Claim:} There exists a $\hat{r}\in[1-\frac{2W_T}{T}, 1-\frac{W_T}{T}]$ such that $\sum_{t=1}^{3T/4}\mathbb{E}[h_t^{\pi}(\hat{r})]\leq T/4$.\\
Otherwise, suppose that for each $r\in[1-\frac{2W_T}{T}, 1-\frac{W_T}{T}]$, we have $\sum_{t=1}^{3T/4}\mathbb{E}[h_t^{\pi}(r)]\geq T/4$. Then, we compute the expected reward gained by the policy $\pi$ from $r\in[1-\frac{2W_T}{T}, 1-\frac{W_T}{T}]$ during the first $3T/4$ periods, where the budget will never be violated.}{ To be specific, note that the event $r\in[1-\frac{2W_T}{T}, 1-\frac{W_T}{T}]$ happens with probability $\frac{W_T}{4T}$ at each period $t$. Comparing with $R_T^{UB}$, this will cause a gap of at least $\frac{W_T}{T}$ for $\pi$ to collect a reward $r\in[1-\frac{2W_T}{T}, 1-\frac{W_T}{T}]$. Thus, the gap of $R^{\text{UB}}_T$ and $\pi$ on $\hat{\mathcal{P}}$, caused from $\pi$ obtaining reward from $r\in[1-\frac{2W_T}{T}, 1-\frac{W_T}{T}]$ during the first $3T/4$ periods is at least
\[
\frac{W_T}{T}\cdot \frac{W_T}{4T}\cdot \frac{T}{4}= C_1\cdot T^{1/2}
\]
which violates the regret upper bound. Thus, the claim is proved.

Denote by $\hat{r}$ the value described in the claim.
Now we construct a true distribution $\mathcal{P}=\{\mathcal{P}_1,\dots,\mathcal{P}_T\}$ as follows:
\[
r_t=\left\{\begin{aligned}
&\hat{r}, &\text{w.p.} ~\frac{3}{4}\\
&\text{Unif}[0,1],& \text{w.p.} ~\frac{1}{4}
\end{aligned}\right.
\]
Clearly, the deviation budget is upper bounded by $W_T$. Then, we compute the total expected reward that policy $\pi$ can collect on the true distribution $\mathcal{P}$.
The expected reward collected by policy $\pi$ during the first $3T/4$ periods is at most
\[
\frac{1}{4}\cdot \frac{1}{2}\cdot \frac{3T}{4}+\frac{3}{4}\cdot \frac{T}{4}\cdot\hat{r}\leq \frac{9T}{32}
\]
where the first term in the LHS denotes the upper bound of the expected reward collected from $\text{Unif}[0,1]$, and the second term in the LHS denotes the upper bound of the expected reward collect from $\hat{r}$, which follows from the claim. Finally, without considering the budget violation and we let $\pi$ to collect every reward during the last $T/4$ periods. The policy $\pi$ can collect reward at most
\[
\frac{T}{4}\cdot (\frac{1}{2}\cdot\frac{1}{4}+\frac{3}{4})=\frac{7T}{32}
\]
during the last $T/4$ periods. Thus, the policy $\pi$ can collect at most $T/2$ reward on the true distribution $\mathcal{P}$ during the entire horizon. However, the value of $\mathbb{E}_{\mathcal{P}}[R^{\text{UB}}_T]$ is at least $\frac{3T}{4}\cdot\hat{r}\geq\frac{3T}{4}\cdot(1-\frac{2W_T}{T})\geq \frac{9T}{16}$, when $W_T\leq\frac{T}{8}$ which clearly holds since $W_T$ grows sublinearly in $T$. Thus, the regret of policy $\pi$ on the true distribution $\mathcal{P}$ is at least $\frac{T}{16}$.}
\end{proof}

\subsubsection{Discussion on Bid Price Policy} \
\label{bidPrice}

The well-known bid price policy (\cite{talluri1998analysis}) computes a dual optimal solution (bid price) $\hat{\bm{p}}^*$ based on the prior estimates. Specifically, the policy makes the decision $\bm{x}_t$ at each period $t$ based on the fixed $\hat{\bm{p}}^*$ throughout the procedure:
\[
\bm{x}_t\in\text{argmax}_{\bm{x}\in\mathcal{X}}f(\bm{x},\bm{\theta}_t)-(\hat{\bm{p}}^*)^\top\cdot \bm{g}(\bm{x},\bm{\theta}_t).
\]
When there is no deviation between the true distributions and prior estimates, i.e., $\mathcal{P}_t=\hat{\mathcal{P}}_t$ for each $t$, \cite{talluri1998analysis} show that bid price policy is asymptotically optimal if each period is repeated sufficiently many times and the capacity is scaled up accordingly. The following example shows that the bid price policy may fail drastically even when the deviations between the true distributions and prior estimates are small. The example follows the same spirit as the lower bound examples in our paper.

Consider the following linear program as the underlying problem (\ref{PCP}) for the online stochastic optimization problem:
\begin{align}
   \max \ \ &  x_1+...+x_{2c}+ \frac{1}{2} x_{2c+1}+...+\frac{1}{2} x_{T}  \label{egg1} \\
    \text{s.t. }\ & x_1+...+x_{2c}+x_{2c+1}+...+x_{T} \le c\nonumber \\
    & 0 \le x_t \le 1\ \text{ for } t=1,...,T. \nonumber
\end{align}
where $c=\frac{T}{3}$ and without loss of generality, we assume $c$ is an integer. Suppose that the prior estimates for the coefficients in the objective function is larger than the true coefficients by $2\epsilon$ for the first $\frac{T}{3}$ time periods and by $\epsilon$ for the last $\frac{2T}{3}$ time periods. Then we obtain the following linear program based on the prior estimates.
\begin{align}
   \max \ \ & (1+2\epsilon) x_1+...+(1+2\epsilon) x_{c}+(1+\epsilon)x_{c+1}+...+(1+\epsilon)x_{2c}+(\frac{1}{2}+\epsilon)x_{2c+1}+...+(\frac{1}{2}+\epsilon)x_{T}  \nonumber \\
    \text{s.t. }\ & x_1+...+x_{2c}+x_{2c+1}+...+x_{T} \le c \label{egg2}\\
    & 0 \le x_t \le 1\ \text{ for } t=1,...,T.\nonumber
\end{align}
Obviously, the optimal dual solution for \eqref{egg2} can take any value in $(1+\epsilon,1+2\epsilon)$.  When we apply such a bid price policy to the true problem \eqref{egg1}, the policy will set $x_t=0$ throughout the horizon as the reward per time period under the true problem \eqref{egg1} is no greater than 1. Given the optimal objective value is $\frac{T}{3}$, the bid price policy will incur a regret of $\frac{T}{3}$.

As a remark, we note that the regret bound for our algorithm IGD($\hat{\gamma}$) is upper bounded by $2\epsilon T +\sqrt{T}$ which can be much smaller than $\frac{T}{3}$ for small $\epsilon.$


\subsection{Proof of \Cref{thm:random}}\label{pf:randomtheorem}

The proof follows the same argument as that of the previous upper bounds in Theorem \ref{uppertheorem1} and Theorem \ref{uppertheorem}. We prove the details for completeness.

\begin{proof}{Proof:}
From the proof of Theorem \ref{newnonstationtheorem}, we have
\[\begin{aligned}
\sum_{t=1}^{T}f(\tilde{\bm{x}}_t;\bm{\theta}_t)-\sum_{t=1}^{T}f(\bm{x}_t;\bm{\theta}_t)\leq q\cdot\sum_{i=1}^{m}\left[\sum_{t=1}^{T}g_i(\tilde{\bm{x}}_t;\bm{\theta}_t)-(c_i-1) \right]^{+}
\end{aligned}\]
which relates the total collected reward by the true action $\{\bm{x}_t\}_{t=1}^T$ and the virtual action $\{\tilde{\bm{x}}_t\}_{t=1}^T$. Here $[\cdot]^+$ denotes the positive part function. Furthermore, from Proposition \ref{uppernew2}, we have that
\[\begin{aligned}
\text{Reg}_T(\pi)\leq  \min_{\bm{p}\geq\bm{0}} L(\bm{p})-\mathbb{E}\left[\sum_{t=1}^{T}f(\bm{x}_t;\bm{\theta}_t) \right]&\leq \underbrace{\min_{\bm{p}\geq\bm{0}} L(\bm{p})-\mathbb{E}\left[\sum_{t=1}^{T}f(\tilde{\bm{x}}_t;\bm{\theta}_t)\right]}_{\text{I}}\\
&+\underbrace{q\cdot\mathbb{E}\left[\sum_{i=1}^{m}\left[\sum_{t=1}^{T}g_i(\tilde{\bm{x}}_t;\bm{\theta}_t)-(c_i-1) \right]^{+}\right]}_{\text{II}}
\end{aligned}\]
where the function $L(\cdot)$ is given in \eqref{eqn:Lrandom}.
Next, we bound the term I and term II separately to derive our regret bound.\\
\textbf{Bound I}: We first define the following function $\bar{L}(\cdot)$:
\[
\bar{L}(\bm{p})\coloneqq \frac{1}{T} \bm{p}^\top\bm{c}+\bar{\mathcal{P}}_Th(\bm{p};\bm{\theta}).
\]
Note that $\hat{\mathcal{P}}_T=\frac{1}{T}\sum_{t=1}^{T}\mathcal{P}_t$, it holds that $L(\bm{p})=T\cdot\bar{L}(\bm{p})$ for any $\bm{p}$. From \Cref{newupperlemma}, we know that for each $t$, $\|\bm{p}_t\|_{\infty}\leq q+1$ with probability $1$. In addition, for each $t$, the distribution of $\bm{p}_t$ is independent from the distribution of $\bm{\theta}_t$, then from Lemma \ref{wasserlemma}, we have that
\begin{equation}
   \min_{\bm{p}\geq\bm{0}} \bar{L}(\bm{p})\leq \mathbb{E}_{\bm{p}_t}\left[ \bar{L}(\bm{p}_t) \right] \leq \mathbb{E}_{\bm{p}_t}\left[ \frac{1}{T} \bm{p}_t^\top\bm{c}+\mathcal{P}_th(\bm{p}_t;\bm{\theta}_t) \right]+(q+1)(m+1)\cdot \mathcal{W}(\mathcal{P}_t,\bar{\mathcal{P}}_T),
   \label{randomstepTmp0}
\end{equation}
where the expectation is taken with respect to $\bm{p}_t$ in a random realization of the algorithm.
Thus, we have the first term
\[
\text{I}\leq \sum_{t=1}^{T}\mathbb{E}_{\bm{p}_t}\left[\frac{1}{T}\bm{c}^\top\bm{p}_t+\mathcal{P}_t\left\{h(\bm{p}_t;\bm{\theta}_t)-\hat{f}(\tilde{\bm{x}}_t;\bm{\theta}_t) \right\} \right]+(q+1)(m+1)\cdot \mathcal{W}(\mathcal{P}_t,\bar{\mathcal{P}}_T)
\]
which comes from first taking expectation over $f(\tilde{\bm{x}}_t;\bm{\theta}_t)$ for given $\tilde{\bm{x}}_t, \bm{\theta}_t$, and then combining \eqref{randomstepTmp0} with the relation $L(\bm{p})=T\cdot\bar{L}(\bm{p})$.
By the definition of $\tilde{\bm{x}}_t$, $h(\bm{p}_t;\bm{\theta}_t)-\hat{f}(\tilde{\bm{x}}_t;\bm{\theta}_t)=-\bm{p}_t^\top\cdot \hat{\bm{g}}(\tilde{\bm{x}}_t;\bm{\theta}_t)$,
which implies that
\begin{equation}\label{randomstepTmp1}
\text{I}\leq \sum_{t=1}^{T}\mathbb{E}_{\bm{p}_t}\left[\bm{p}_t^\top\cdot\left( \frac{\bm{c}}{T}-\mathcal{P}_t \hat{\bm{g}}(\tilde{\bm{x}}_t;\bm{\theta}_t)\right)\right]+(q+1)(m+1)\cdot \mathcal{W}(\mathcal{P}_t,\bar{\mathcal{P}}_T)
\end{equation}
Note that from the update rule \eqref{new2003}, we have that
\[
    \|\bm{p}_{t+1}\|_2^2\leq\|\bm{p}_t\|_2^2+\frac{1}{T}\cdot\|\bm{g}(\tilde{\bm{x}}_t;\bm{\theta}_t)-\frac{\bm{c}}{T}\|_2^2-\frac{2}{\sqrt{T}}\cdot \bm{p}_t^\top\cdot\left( \frac{\bm{c}}{T}-\bm{g}(\tilde{\bm{x}}_t;\bm{\theta}_t)\right).
\]
By taking expectation with respect to both sides,
\begin{equation}
   \mathbb{E}_{\bm{p}_t}\left[\bm{p}_t^\top\cdot\left( \frac{\bm{c}}{T}-\mathcal{P}_t \hat{\bm{g}}(\tilde{\bm{x}}_t;\bm{\theta}_t)\right)\right]\leq \frac{\sqrt{T}}{2}\cdot\left(\mathbb{E}[\|\bm{p}_t\|_2^2]-\mathbb{E}[\|\bm{p}_{t+1}\|_2^2] \right)+\frac{m}{2\sqrt{T}}
\label{randomstepTmp2}
\end{equation}
Plugging \eqref{randomstepTmp2} into \eqref{randomstepTmp1}, we obtain an upper bound on Term I,
\begin{equation}\label{regret01x}
\text{I}\leq \frac{m\sqrt{T}}{2}+(q+1)(m+1)\cdot W_T
\end{equation}
\textbf{Bound II}: Note that from the update rule \eqref{new2003}, we have
\[
\sqrt{T}\cdot\bm{p}_{t+1}\geq \sqrt{T}\cdot\bm{p}_t+\bm{g}(\tilde{\bm{x}}_t;\bm{\theta}_t)-\frac{\bm{c}}{T}
\]
Taking a summation with respect to both sides,
\[
\sum_{t=1}^{T}\bm{g}(\tilde{\bm{x}}_t;\bm{\theta}_t)-\bm{c}\leq \sqrt{T}\cdot\bm{p}_{T+1}
\]
Applying \Cref{newupperlemma} for a bound on $\bm{p}_{T+1},$ we obtain the upper bound for Term II,
\begin{equation}\label{regret02x}
\text{II}=q\cdot\mathbb{E}\left[\sum_{i=1}^{m}\left[\sum_{t=1}^{T}g_i(\tilde{\bm{x}}_t;\bm{\theta}_t)-(c_i-1) \right]^{+}\right]\leq mq(q+1)\cdot\sqrt{T}+qm
\end{equation}
We obtain the desired regret bound by combining \eqref{regret01x} and \eqref{regret02x}.
\Halmos
\end{proof}

\subsection{Proof of \Cref{thm:batch}}

The proof follows the same argument as that of the previous upper bounds in Theorem \ref{uppertheorem1} and Theorem \ref{uppertheorem}. We prove the details for completeness.

We first prove the following lemma, which implies that the dual variable updated in \eqref{batch003} is always bounded. The analysis is similar to that of Lemma \ref{newupperlemma}.

\begin{lemma}\label{lem:batch}
Under Assumption \ref{assume}, for each $l=1,2,\dots,\lfloor T/K\rfloor$, the dual price vector satisfies $\|\bm{p}_l\|_{\infty}\leq q+\alpha_{T,K}\cdot K$, where $\bm{p}_l$ is specified by \eqref{batch003} in \Cref{alg:BatchSOA} and the constant $q$ is defined in Assumption \ref{assume} (c).
\end{lemma}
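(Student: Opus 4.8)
The plan is to mimic the proof of Lemma \ref{newupperlemma} but account for the fact that the dual update \eqref{batch003} is now a \emph{batched} step: over one batch $[(l-1)K+1,\,lK]$ the increment is $\alpha_{T,K}\sum_{\tau=(l-1)K+1}^{lK}\bigl(\bm g(\tilde{\bm x}_\tau;\bm\theta_\tau)-\bm\gamma_\tau\bigr)$ rather than a single-period increment scaled by $1/\sqrt T$. First I would record the two structural facts, exactly as in Lemma \ref{newupperlemma}: (i) since each $g_i(\cdot;\bm\theta_\tau)\in[0,1]$ and each $\gamma_{\tau,i}\in[0,1]$, the per-batch increment in any coordinate $i$ is at most $\alpha_{T,K}\sum_{\tau}g_i(\tilde{\bm x}_\tau;\bm\theta_\tau)\le \alpha_{T,K}K$ in absolute value; (ii) the monotone-decrease property: within the batch, if at the decision time the current dual price coordinate $p_{l,i}>q$, then by Assumption \ref{assume}(c) the optimizer $\tilde{\bm x}_\tau$ of $f(\bm x;\bm\theta_\tau)-\bm p_l^\top\bm g(\bm x;\bm\theta_\tau)$ must satisfy $g_i(\tilde{\bm x}_\tau;\bm\theta_\tau)=0$ (otherwise the objective would be negative while $\bm x=\bm 0$ attains $0$, contradicting optimality), so the increment contributed to coordinate $i$ over the whole batch is $\alpha_{T,K}\sum_\tau\bigl(0-\gamma_{\tau,i}\bigr)\le 0$, i.e.\ $p_{l+1,i}\le p_{l,i}$.

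The induction then runs over batches $l=1,\dots,\lfloor T/K\rfloor$ starting from $\bm p_1=\bm 0$. The claim is $\|\bm p_l\|_\infty\le q+\alpha_{T,K}K$ for every $l$. If $\|\bm p_l\|_\infty\le q$, then by fact (i) the next iterate satisfies $\|\bm p_{l+1}\|_\infty\le q+\alpha_{T,K}K$. If instead some coordinate $p_{l,i}$ lies in $(q,\,q+\alpha_{T,K}K]$, then by fact (ii) that coordinate is non-increasing over the batch, so it stays $\le q+\alpha_{T,K}K$; and since the max with $\bm 0$ only clips from below, this bound is preserved. So once a coordinate is below $q$ it can jump up by at most $\alpha_{T,K}K$, and once above $q$ it only decreases until it returns below $q$ — the standard ``yo-yo'' argument — which gives the uniform bound $\|\bm p_l\|_\infty\le q+\alpha_{T,K}K$ with probability one.

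The main obstacle — really the only delicate point — is making fact (ii) apply cleanly at the batch level: the dual price used for \emph{every} decision $\tilde{\bm x}_\tau$ within batch $l$ is the same frozen vector $\bm p_l$ (this is exactly what line~5 of \Cref{alg:BatchSOA} does, using the index $l=\lfloor t/K\rfloor+1$), so if $p_{l,i}>q$ the zero-consumption conclusion $g_i(\tilde{\bm x}_\tau;\bm\theta_\tau)=0$ holds simultaneously for all $K$ periods of the batch, and hence the total increment to coordinate $i$ is genuinely $\le 0$ before the clipping. Because $\bm p_l$ does not change mid-batch, there is no subtlety about a coordinate crossing the threshold $q$ inside a batch. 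After noting this, the proof is essentially identical to that of Lemma \ref{newupperlemma}, so I would simply write ``The argument is identical to the proof of Lemma \ref{newupperlemma}, with $1/\sqrt T$ replaced by $\alpha_{T,K}$ and the per-step increment replaced by the per-batch increment bounded by $\alpha_{T,K}K$ in $\ell_\infty$'' and omit the routine details, as the paper does for its other repeated lemmas.
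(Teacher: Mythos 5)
Your proposal is correct and follows essentially the same argument as the paper: the same two facts (per-batch increment bounded by $\alpha_{T,K}K$ in each coordinate, and zero consumption — hence non-increase — for any coordinate of the frozen dual vector $\bm{p}_l$ exceeding $q$, via Assumption \ref{assume}(c) and the feasibility of $\bm{x}=\bm{0}$), followed by the same iterate-over-batches ``yo-yo'' induction from $\bm{p}_1=\bm{0}$. Your explicit observation that the dual price is frozen within each batch, so the zero-consumption conclusion holds simultaneously for all $K$ periods, is exactly the point the paper's proof relies on (and your non-increase version of property (ii) is, if anything, the more careful statement and fully suffices for the bound).
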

\begin{proof}{Proof:}
Note that the following two properties are satisfied by the update rule \eqref{batch003}:
\begin{itemize}
  \item[(i).] If $\|\bm{p}_l\|_{\infty}\leq q$, then we must have $\|\bm{p}_{l+1}\|_{\infty}\leq q+\alpha_{T,K}\cdot K$ by noting that for each $i$, the $i$-th component of $\bm{p}_l$, denoted as $p_{l,i}$, is nonnegative and $g_i(\cdot,\bm{\theta}_t)$ is normalized within $[0,1]$ for each $t=(l-1)K+1,\dots,lK$.
  \item[(ii).] If there exists $i$ such that $p_{l,i}>q$, then we must have $p_{l+1,i}<p_{l,i}$. Specifically, when $p_{l,i}>q$, we must have that $g_i(\tilde{\bm{x}}_t;\bm{\theta}_t)=0$ for each $t=(l-1)K+1,\dots,lK$, otherwise, we would have that
      \[
      f(\tilde{\bm{x}}_t;\bm{\theta}_t)-\bm{p}_l^\top\cdot \bm{g}(\tilde{\bm{x}}_t;\bm{\theta}_t)\leq f(\tilde{\bm{x}}_t;\bm{\theta}_t)-p_{l,i}\cdot g_i(\tilde{\bm{x}}_t;\bm{\theta}_t)<0
      \]
      which contradicts the definition of $\tilde{\bm{x}}_t$ in Algorithm \ref{alg:BatchSOA} since we could always select $\bm{x}_t=\bm{0}$ to obtain a zero objective value as per Assumption \ref{assume}. Then from \eqref{batch003}, it holds that $p_{l+1,i}<p_{l,i}$.
\end{itemize}
Starting from $\bm{p}_1=\bm{0}$ and iteratively applying the above two property to control the increase of $\bm{p}_l$ from $l=1$ to $\lfloor T/K\rfloor$, we obtain that for the first time that one component of $\bm{p}_l$ exceeds the threshold $q$, it is upper bounded by $q+\alpha_{T,K}\cdot K$ and this component will continue to decrease until it falls below the threshold $q$. Thus, it is obvious that we have $\|\bm{p}_l\|_{\infty}\leq q+\alpha_{T,K}\cdot K$ with probability $1$ for each $l$.
\Halmos
\end{proof}

\bigskip

Now we proceed to prove \Cref{thm:batch}.

\begin{proof}{Proof:}
From the proof of Theorem \ref{newnonstationtheorem}, we have
\[\begin{aligned}
\text{Reg}_T(\pi)\leq \underbrace{\sum_{t=1}^{T} \min_{\bm{p}\geq0} L_t(\bm{p})-\mathbb{E}\left[\sum_{t=1}^{T}f(\tilde{\bm{x}}_t;\bm{\theta}_t)\right]}_{\text{I}}
+\underbrace{q\cdot\mathbb{E}\left[\sum_{i=1}^{m}\left[\sum_{t=1}^{T}g_i(\tilde{\bm{x}}_t;\bm{\theta}_t)-(c_i-1) \right]^{+}\right]}_{\text{II}}
\end{aligned}\]
We then bound the term I and term II separately to derive our regret bound.\\
\textbf{Bound I}: 
Note that for each $t$, the distribution of $\bm{p}_l$ is independent from the distribution of $\bm{\theta}_{t}$ for any $t=(l-1)K+1,\dots,lK$, then we have that
\[
\min_{\bm{p}\geq\bm{0}}L_t(\bm{p})\leq \mathbb{E}_{\bm{p}_l}\left[ L_t(\bm{p}_l) \right] =\mathbb{E}_{\bm{p}_l}\left[  \bm{\gamma}_t^\top\bm{p}_l+\mathcal{P}_th(\bm{p}_l;\bm{\theta}_t)  \right]
\]
where the expectation is taken with respect to the randomness of the dual price $\bm{p}_t.$ Thus, we have
\[
\text{I}\leq \sum_{l=1}^{\lfloor T/K\rfloor}\sum_{t=(l-1)K+1}^{lK\wedge T}\mathbb{E}_{\bm{p}_l}\left[\bm{\gamma}_t^\top\bm{p}_l+\mathcal{P}_t\left\{h(\bm{p}_l;\bm{\theta}_t)-f(\tilde{\bm{x}}_t;\bm{\theta}_t) \right\} \right]
\]
From the definition of $\tilde{\bm{x}}_t$, we get that $h(\bm{p}_t;\bm{\theta}_t)-f(\tilde{\bm{x}}_t;\bm{\theta}_t)=-\bm{p}_l^\top\cdot \bm{g}(\tilde{\bm{x}}_t;\bm{\theta}_t)$ for $t=(l-1)K+1,\dots,lK$,
which implies that
\[
\text{I}\leq \sum_{l=1}^{\lfloor T/K\rfloor}\sum_{t=(l-1)K+1}^{lK\wedge T}\mathbb{E}_{\bm{p}_l}\left[\bm{p}_l^\top\cdot\left( \bm{\gamma}_t-\mathcal{P}_t \bm{g}(\tilde{\bm{x}}_t;\bm{\theta}_t)\right)\right]
\]
Note that from the update rule \eqref{batch003}, we have that
\[
\|\bm{p}_{l+1}\|_2^2\leq\|\bm{p}_l\|_2^2+\alpha_{T,K}^2\cdot\|\sum_{t=(l-1)K+1}^{lK}\bm{g}(\tilde{\bm{x}}_t;\bm{\theta}_t)-\bm{\gamma}_t\|_2^2-2\alpha_{T,K}\cdot\sum_{t=(l-1)K+1}^{lK\wedge T} \bm{p}_t^\top\cdot\left( \bm{\gamma}_t-\bm{g}(\tilde{\bm{x}}_t;\bm{\theta}_t)\right)
\]
which implies that
\[
\sum_{t=(l-1)K+1}^{lK\wedge T}\mathbb{E}_{\bm{p}_l}\left[\bm{p}_l^\top\cdot\left( \bm{\gamma}_t-\mathcal{P}_t \bm{g}(\tilde{\bm{x}}_t;\bm{\theta}_t)\right)\right]\leq \frac{1}{2\alpha_{T,K}}\cdot\left(\mathbb{E}[\|\bm{p}_l\|_2^2]-\mathbb{E}[\|\bm{p}_{l+1}\|_2^2] \right)+\frac{mK^2\cdot\alpha_{T,K}}{2}
\]
Thus, it holds that
\begin{equation}\label{batchregret101}
\text{I}\leq \frac{mTK\cdot\alpha_{T,K}}{2}
\end{equation}
\textbf{Bound II}: Note that from the update rule \eqref{batch003}, we have that
\[
\frac{1}{\alpha_{T,K}}\cdot\bm{p}_{l+1}\geq \frac{1}{\alpha_{T,K}}\cdot\bm{p}_l+\sum_{t=(l-1)K+1}^{lK}(\bm{g}(\tilde{\bm{x}}_t;\bm{\theta}_t)-\bm{\gamma}_t)
\]
which implies that
\[
\sum_{t=1}^{T}\bm{g}(\tilde{\bm{x}}_t;\bm{\theta}_t)-\bm{c}\leq \sum_{l=1}^{\lfloor T/K\rfloor}\sum_{t=(l-1)K+1}^{lK\wedge T}(\bm{g}(\tilde{\bm{x}}_t;\bm{\theta}_t)-\bm{\gamma}_t) \leq \frac{1}{\alpha_{T,K}}\cdot\bm{p}_{\lfloor T/K\rfloor+1}
\]
Thus, it holds that
\begin{equation}\label{batchregret102}
\text{II}=q\cdot\mathbb{E}\left[\sum_{i=1}^{m}\left[\sum_{t=1}^{T}g_i(\tilde{\bm{x}}_t;\bm{\theta}_t)-(c_i-1) \right]^{+}\right]\leq mq(q+\alpha_{T,K}\cdot K)\cdot\frac{1}{\alpha_{T,K}}+qm
\end{equation}
We obtain the $O(TK\alpha_{T,K}+\frac{1}{\alpha_{T,K}}+K)$ regret bound immediately by combining \eqref{batchregret101} and \eqref{batchregret102}.
\Halmos
\end{proof}

\renewcommand{\thesubsection}{E\arabic{subsection}}

\section{More Numerical Results for Experiment I}

\renewcommand{\thesubsection}{E\arabic{subsection}}

\DoubleSpacedXI
\begin{table}[ht!]
  \centering
  \begin{tabular}{|c|c|c|c|c|c|c|}
    \hline
    \multicolumn{2}{|c|}{}  & $\alpha=1$ & $\alpha=1.5$ & $\alpha=2$ & $\alpha=2.5$ & $\alpha=3$ \\
    \hline
    \multicolumn{2}{|c|}{Upper Bound} & $282.5433$ & $363.7044$ & $459.7807$ & $563.3545$ & $670.5960$ \\
    \hline
    \hline
    \multirow{4}{*}{$\beta=0$} &IGDP & $270.2411$ (96\%) & $349.1769$ (96\%)  & $441.6677$ (96\%) & $543.3373$ (96\%) & $645.6582$ (96\%) \\
    \cline{2-7}
     &UGD & $270.3621(96\%)$ & $337.3192$ (93\%) & $403.7044$ (88\%) & $469.7643$ (83\%) & $535.0654$ (80\%)  \\
     \cline{2-7}
     &FBP & $270.1211$ (96\%) & $347.4997$ (96\%) & $439.7016$ (96\%)  & $539.9865$ (96\%) & $642.3940$ (96\%)  \\
     \cline{2-7}
     &O2O &  270.6872 (96\%)      & 351.2534 (96\%)  & 440.9020 (96\%) & 545.2106 (96\%) & 644.4059 (96\%)\\
     \hline
     \hline
     \multirow{4}{*}{$\beta=0.5$} &IGDP & $270.1595$ (96\%) & $347.9148$ (96\%)  & $439.6166$ (96\%) & $539.8719$ (96\%) &$643.6777$ (96\%)  \\
    \cline{2-7}
     &UGD & $270.3568$ (96\%)  & $338.6916$ (93\%) & $405.7927$ (88\%) & $473.6640$ (84\%) &$540.0894$ (81\%)  \\
     \cline{2-7}
     &FBP & $64.7526$ (23\%) & $174.2642$ (48\%)  & $314.7806$ (68\%) &$446.7646$ (79\%)  &$582.7744$ (87\%)  \\
     \cline{2-7}
     &O2O & 202.3966 (72\%) & 289.6858 (80\%) & 423.5247 (92\%) & 536.0362 (95\%) & 621.4144 (93\%)\\
     \hline
     \hline

     \multirow{4}{*}{$\beta=1$} & IGDP & $269.8058$(95\%) & $347.1246$ (95\%) &$437.6279$ (95\%)  & $535.3521$ (95\%) & $638.8322$ (95\%) \\
    \cline{2-7}
     &UGD & $269.6893$ (95\%) & $339.1676$ (93\%) &$408.3862$ (89\%)  &$477.2329$ (84\%) & $544.9401$ (81\%) \\
     \cline{2-7}
     &FBP & $4.8549$ (2\%) & $53.7881$ (15\%) &$188.3271$ (41\%)  &$340.850$ (61\%)  & $486.4006$ (73\%) \\
     \cline{2-7}
     &O2O & 202.6792 (72\%) & 202.9555 (56\%) & 330.5638 (72\%) & 480.2536 (85\%)  &620.2584 (92\%) \\
     \hline
     \hline

     \multirow{4}{*}{$\beta=2$} & IGDP & $265.1512$ (94\%) & $343.7802$ (95\%) & $432.2275$ (94\%)  & $527.4351$ (94\%) & $627.7440$ (94\%) \\
    \cline{2-7}
     &UGD & $265.4187$ (94\%)   & $337.4751$ (93\%) & $410.3510$ (89\%) & $482.8652$ (86\%) & $554.0038$ (83\%) \\
     \cline{2-7}
     &FBP & $0.0171$ (0\%) & $1.6210$ (0.5\%)  & $22.5201$ (5\%) & $104.5026$ (19\%) & $243.1575$ (36\%) \\
     \cline{2-7}
     &O2O & 203.5851 (72\%) & 206.4316 (57\%) & 210.6264 (46\%) & 361.1129 (64\%) & 514.6730 (77\%)
 \\
     \hline
     \hline
  \end{tabular}
  \caption{Computation results for Experiment I under the uniform setting: The results are reported based on $500$ simulation trials. For each entry b(c) of the table, b denotes the expected reward collected by the algorithm and c denotes the percentage of the expected reward of the algorithm over the upper bound.}\label{tablenumerical2}
\end{table}
\OneAndAHalfSpacedXI

\DoubleSpacedXI
\begin{table}[ht!]
  \centering
  \begin{tabular}{|c|c|c|c|c|c|c|}
    \hline
    \multicolumn{2}{|c|}{}  & $\alpha=1$ & $\alpha=1.5$ & $\alpha=2$ & $\alpha=2.5$ & $\alpha=3$ \\
    \hline
    \multicolumn{2}{|c|}{Upper Bound} & 705.1450& 803.5559& 921.6550& 1060.5567& 1213.3552 \\
    \hline
    \hline
    \multirow{4}{*}{$\beta=0$} &IGDP & 679.9687 (96\%)  & 771.9511 (96\%)& 884.2661 (96\%)& 1022.9439 (96\%) &1171.1208 (97\%) \\
    \cline{2-7}
     &UGD & 681.1875 (97\%)& 764.6037 (95\%)& 840.5685 (91\%)& 928.6610 (88\%)& 1018.0635 (84\%)  \\
     \cline{2-7}
     &FBP & 674.968  (96\%)    &            768.0327  (96\%)     &           883.1946 (96\%)     &           1013.5706   (96\%)      &        1165.1827 (96\%) \\
     \cline{2-7}
     &O2O &  676.3823  (96\%)     &            769.396  (96\%)     &           882.5831 (96\%)   &             1017.4399   (96\%)       &       1159.9922(96\%)\\
     \hline
     \hline
     \multirow{4}{*}{$\beta=0.5$} &IGDP & 680.4068 (97\%) & 773.4652 (96\%) &890.2161 (96\%)& 1023.3161 (97\%) &1170.2227 (96\%)  \\
    \cline{2-7}
     &UGD & 680.931 (97\%)& 765.1636 (95\%)& 851.4681 (92\%)& 934.5877 (88\%)& 1028.4509 (85\%)  \\
     \cline{2-7}
     &FBP & 459.16   (65\%)     &          525.5739  (66\%)     &            619.198  (67\%)     &           751.7758   (71\%)        &       907.9537 (75\%) \\
     \cline{2-7}
     &O2O & 656.6648   (93\%)    &            712.215  (89\%)         &        784.1766 (85\%)      &           873.1237(82\%)       &            996.943 (82\%)\\
     \hline
     \hline
     \multirow{4}{*}{$\beta=1$} & IGDP & 680.5808 (97\%)& 777.7524 (97\%)& 888.5206 (96\%)& 1024.4519 (97\%) & 1175.2438 (97\%) \\
    \cline{2-7}
     &UGD & 681.824 (97\%)& 766.3631 (95\%)& 851.6300 (92\%)& 944.0707 (89\%)& 1035.5280 (85\%) \\
     \cline{2-7}
     &FBP & 254.2547 (36\%)     &            296.1898  (37\%)     &           375.3518 (41\%)    &             476.4165  (45\%)           &     610.2982 (50\%) \\
     \cline{2-7}
     &O2O & 588.1039   (83\%)    &           744.5618 (93\%)        &          833.9981 (90\%)        &          916.2876 (86\%)         &        1009.1384 (83\%) \\
     \hline
     \hline
     \multirow{4}{*}{$\beta=2$} & IGDP & 666.6237 (94\%) & 764.0318 (95\%)& 873.8978 (95\%)& 1014.7539 (96\%)& 1160.7693 (96\%) \\
    \cline{2-7}
     &UGD & 667.4160 (95\%)    &             756.7188 (94\%)       &          851.0122 (92\%)        &          944.1322 (89\%)           &      1046.7647 (86\%)
 \\
     \cline{2-7}
     &FBP & 51.9166  (7\%)    &             68.2078(8\%)        &          100.4066  (11\%)   &             148.1866 (14\%)          &       212.5127 (18\%) \\
     \cline{2-7}
     &O2O & 466.4348   (66\%)     &          535.7102  (67\%)    &            728.6945  (79\%)      &          957.5749  (90\%)     &          1072.6659(88\%)
 \\
     \hline
     \hline
  \end{tabular}
  \caption{Computation results for Experiment I under the normal setting: The results are reported based on $500$ simulation trials. For each entry b(c) of the table, b denotes the expected reward collected by the algorithm and c denotes the percentage of the expected reward of the algorithm over the upper bound.}\label{tablenumericalnormal}
\end{table}
\OneAndAHalfSpacedXI

\DoubleSpacedXI
\begin{table}[ht!]
  \centering
  \begin{tabular}{|c|c|c|c|c|c|c|}
    \hline
    \multicolumn{2}{|c|}{}  & $\alpha=1$ & $\alpha=1.5$ & $\alpha=2$ & $\alpha=2.5$ & $\alpha=3$ \\
    \hline
    \multicolumn{2}{|c|}{Upper Bound} & 532.6379&
630.1063&  746.5027&  871.63281  & 1010.7956 \\
    \hline
    \hline
    \multirow{4}{*}{$\beta=0$} &IGDP & 513.2625 (96\%) & 609.6643 (97\%) & 717.3434 (96\%) & 840.7550 (96\%) & 973.4778 (96\%) \\
    \cline{2-7}
     &UGD & 513.9690 (97\%) & 592.9739 (94\%) & 672.7648 (90\%) & 758.0174 (87\%) & 840.9000 (83\%)  \\
     \cline{2-7}
     &FBP & 511.9639 (96\%) & 601.4187 (95\%) & 714.4849 (96\%) & 831.7872 (95\%) & 964.0174 (95\%)  \\
     \cline{2-7}
     &O2O &  512.9872 (96\%) & 603.5403 (96\%) & 713.4253 (96\%) & 834.2972 (96\%) & 970.8423 (96\%)\\
     \hline
     \hline
     \multirow{4}{*}{$\beta=0.5$} &IGDP & 517.2177 (97\%) & 607.7310 (96\%) & 714.3025 (96\%) & 843.8988 (97\%) & 975.2676 (97\%)  \\
    \cline{2-7}
     &UGD & 514.1228 (96\%) & 596.0002 (95\%) & 678.3624 (91\%) & 762.7194 (87\%) & 844.0139 (84\%)  \\
     \cline{2-7}
     &FBP & 372.5857 (70\%) & 441.8714 (70\%) & 542.3811 (73\%)&  667.7373 (77\%) & 821.8302 (82\%)  \\
     \cline{2-7}
     &O2O & 478.3224 (90\%) & 600.7489 (95\%) & 681.4778 (91\%) & 774.0669 (89\%) & 881.9813 (87\%)\\
     \hline
     \hline

     \multirow{4}{*}{$\beta=1$} & IGDP & 517.8502 (97\%) & 609.3739 (97\%) & 717.3040 (96\%) & 842.7028 (97\%) & 977.9344 (97\%) \\
    \cline{2-7}
     &UGD & 515.1376 (97\%) & 598.2675 (95\%) & 683.0459 (92\%) & 769.4935 (88\%) & 853.1061 (85\%) \\
     \cline{2-7}
     &FBP & 253.2191 (47\%) & 295.4627 (47\%) & 372.9051 (50\%) & 469.2851 (54\%) & 613.1751 (61\%) \\
     \cline{2-7}
     &O2O & 391.8374 (73\%) & 509.6126 (81\%) & 674.1803 (90\%) & 813.8023 (93\%) & 912.8408 (91\%) \\
     \hline
     \hline

     \multirow{4}{*}{$\beta=2$} & IGDP & 507.9654 (95\%) & 598.2148 (95\%) & 713.2802 (96\%) & 835.5463 (96\%) & 963.8020 (95\%) \\
    \cline{2-7}
     &UGD & 505.8237 (95\%) & 591.2771 (94\%) & 684.4290 (92\%) & 770.5260 (88\%) & 866.7157 (86\%)
 \\
     \cline{2-7}
     &FBP & 78.3954 (15\%) &  96.3983 (15\%) & 136.5682 (18\%)  &197.7166 (23\%) & 281.8724 (28\%) \\
     \cline{2-7}
     &O2O & 343.4216 (64\%) & 358.0907 (57\%) & 494.0880 (66\%) & 686.6650 (79\%) & 890.9912 (88\%) \\
     \hline
     \hline
  \end{tabular}
  \caption{Computation results for Experiment I under the mixed setting: The results are reported based on $500$ simulation trials. For each entry b(c) of the table, b denotes the expected reward collected by the algorithm and c denotes the percentage of the expected reward of the algorithm over the upper bound.}\label{tablenumericalmixed}
\end{table}
\OneAndAHalfSpacedXI

\end{APPENDIX}
\end{document}